\newif\ifdebug
\newcommand{\softmax}[1]{\text{softmax}\left(#1\right)}
\newtheorem{theorem}{Theorem}[section]
\newtheorem{definition}[theorem]{Definition}
\newtheorem{lemma}[theorem]{Lemma}
\newtheorem{remark}[theorem]{Remark}
\newtheorem{claim}[theorem]{Claim}
\newtheorem{condition}[theorem]{Condition}
\newcommand{\vect}[1]{\boldsymbol{\mathbf{#1}}}
\newcommand{\Var}[1]{\mathrm{Var}\left[#1\right]}
\newcommand{\muv}{\vect\mu}
\newcommand{\xiv}{\vect\xi}
\newcommand{\sv}{\vect s}
\newcommand{\vv}{\vect v}
\newcommand{\wv}{\vect w}
\newcommand{\xv}{\vect x}
\newcommand{\Wv}{\vect W}
\newcommand{\Xv}{\vect X}
\newcommand{\Eb}{\mathbb E}
\newcommand{\Pb}{\mathbb P}
\newcommand{\Rb}{\mathbb R}
\newcommand{\norm}[1]{\left\lVert#1\right\rVert}
\newcommand{\abs}[1]{\left|#1\right|}
\newcommand{\set}[1]{\left\{#1\right\}}
\newtheorem*{rep@theorem}{\rep@title}
\newcommand{\newreptheorem}[2]{%
	\newenvironment{rep#1}[1]{%
		\def\rep@title{#2 \ref{##1}}%
		\begin{rep@theorem}}%
		{\end{rep@theorem}}}
\DeclareMathOperator{\sgn}{sgn}
\DeclareMathOperator{\poly}{poly}
\DeclareMathOperator{\polylog}{polylog}
\DeclareMathOperator{\erf}{erf}
\DeclareMathOperator{\supp}{supp}
\newlist{myitems}{enumerate}{3}
\setlist[myitems, 1]
{label=\arabic{myitemsi}.,leftmargin=15pt,labelwidth=10pt,labelsep=5pt,
topsep=0pt,parsep=0pt,partopsep=0pt,noitemsep
}
\newcommand{\jianfei}[1]{\textcolor{red}{Jianfei: [#1]}}
\definecolor{ultramarine}{rgb}{0.07, 0.04, 0.56}
\newcommand{\zhanpeng}[1]{\textcolor{ultramarine}{Zhanpeng: [#1]}}
\newcommand{\bingrui}[1]{{\color{orange}{Bingrui: [#1]}}}
\title{
On the Optimization and Generalization of Two-layer Transformers with Sign Gradient Descent
}
\author{%
Bingrui Li$^{1}$, 
Wei Huang$^{2}$,
Andi Han$^{2}$,
Zhanpeng Zhou$^{4}$, \\
{\bf Taiji Suzuki}$^{3,2}${\bf,} {\bf Jun Zhu}$^1${\bf,} {\bf Jianfei Chen}$^1$  \\
$^1$Dept. of Comp. Sci. and Tech., Institute for AI, BNRist Center, THBI Lab, \\
Tsinghua-Bosch Joint ML Center, Tsinghua University \\
$^2$RIKEN AIP
$^3$University of Tokyo 
$^4$Shanghai Jiao Tong University \\
\texttt{lbr22@mails.tsinghua.edu.cn};
~\texttt{jianfeic@tsinghua.edu.cn}
}
\begin{document}

\maketitle

\begin{abstract}
    The Adam optimizer is widely used for transformer optimization in practice, which makes understanding the underlying optimization mechanisms an important problem.
    However, due to the Adam's complexity, theoretical analysis of how it optimizes transformers remains a challenging task. 
    Fortunately, Sign Gradient Descent (SignGD) serves as an effective surrogate for Adam.
    Despite its simplicity, theoretical understanding of how SignGD optimizes transformers still lags behind.
    In this work, we study how SignGD optimizes a two-layer transformer -- consisting of a softmax attention layer with trainable query-key parameterization followed by a linear layer -- on 
    a linearly separable noisy dataset.
    We identify four stages in the training dynamics, each exhibiting intriguing behaviors.
    Based on the training dynamics, we prove the fast convergence but poor generalization of the learned transformer on the noisy dataset.
    We also show that Adam behaves similarly to SignGD in terms of both optimization and generalization in this setting.
    Additionally, we find that the poor generalization of SignGD is not solely due to data noise,
    suggesting that both SignGD and Adam requires high-quality data for real-world tasks.
    Finally, experiments on synthetic and real-world datasets empirically support our theoretical results.

\end{abstract}

\vspace{-10pt}
\section{Introduction}
\vspace{-7pt}

The transformer architecture~\citep{VSP+17} has become ubiquitous across various domains, achieving state-of-the-art results in areas such as language modeling~\citep{DCLT19, BMR+20}, computer vision~\citep{DBK+21, PX23}, and reinforcement learning~\citep{CLR+21}.
Regardless of the specific task or data modality, the Adam optimizer~\citep{KB15} is typically employed to train large transformer models, making it the {\em de facto} choice in practice.
This widespread use highlights that understanding the inner mechanism on how Adam optimizes transformers is an important problem.
However, 
the complexity of Adam's formulation presents significant challenges for rigorous analysis. 
Many of the underlying mechanisms of how Adam optimizes transformers are still poorly understood.

Recent theoretical works~\citep{JSL22, TLTO23, TWCD23}  study the {\em training dynamics} of transformers across various datasets and objectives.
{\em Training dynamics analysis} allows us to trace the evolution of model parameters throughout the training process. In doing so, it enables a precise description of the optimization process, which can ultimately lead to new insights on convergence and generalization results.
However, analyzing the training dynamics of transformers presents many challenges.
The transformer architecture is inherently more complex than simpler models like MLPs~\citep{WM23, XD23} and CNNs~\citep{CCBG22, KCCG23}, making a detailed analysis of its training dynamics more challenging. 
To facilitate such analyses, researchers often introduced relaxed assumptions, such as using linear attention~\citep{ZFB23} or unrealistic initialization~\citep{LWLC23}. 
A more commonly employed assumption in theoretical works is reparameterizing the query and key matrices into a single joint attention matrix, as seen in many studies (e.g.,~\citet{TWCD23}). While this assumption simplifies the analysis, it remains unrealistic in practice. 
Moreover, existing analyses focus primarily on Gradient Descent (GD) or Stochastic Gradient Descent (SGD), with little attention paid to optimizers like Adam. The analysis of transformer training dynamics remains an active area of research.

Our work addresses a crucial gap by analyzing the training dynamics of Sign Gradient Descent (SignGD), which is an effective surrogate for understanding Adam.
SignGD is a simple gradient-based algorithm that updates parameters using only the sign of the gradient, discarding the gradient's magnitude.
Over the years, SignGD has been extensively studied ~\citep{BH18,  BPR20, BWAA18, BZAA19}, and has inspired the development of optimizers like Adam~\citep{KB15} and Lion~\citep{CLH+23}. 
More importantly, SignGD shares many similarities with Adam, making it an effective proxy for gaining insights into Adam’s optimization behavior~\citep{BH18, BWAA18, KCLS23, KYM+24, WZW+20, ZCLG23}.
For example,~\citet{KCLS23} has shown that while a performance gap between Adam and GD persists in the full-batch setting on transformers, SignGD can effectively bridge this gap, achieving performance closer to Adam.
Despite its simplicity, however, theoretical understanding of how SignGD optimizes transformers remains an open problem.

{\bf Our contributions.} In this work, we study the problem of how SignGD optimizes transformers in a binary classification task with linearly separable datasets with signal and noise.

\vspace{-5pt}
\begin{itemize}[leftmargin=10pt]
\item We provide a theoretical characterization of the entire training dynamics of SignGD.
Specifically, we identify four different stages in the training dynamics, each exhibiting unique behaviors, as summarized in Tab.~\ref{tab:four_stage_overview}.
This detailed four-stage analysis captures the complex yet systematic dynamics within the attention layer, 
and offers a precise description of how SignGD optimizes transformers in our setting. 
\item Based on the training dynamics, we prove the convergence and generalization results. On our noisy dataset, SignGD demonstrates {\em fast convergence but poor generalization}, achieving a linear convergence rate in training loss but maintaining a high constant test loss, leading to a sparse attention matrix through noise memorization. 
Additionally, we provide evidence that Adam exhibits similar behaviors to SignGD in terms of training dynamics, convergence, and generalization, suggesting that SignGD is a strong proxy for understanding Adam.
We also find that the poor generalization of SignGD is not solely due to data noise, but is also related to its inherent algorithmic properties, indicating that SignGD and Adam require higher data quality in practice compared to GD.
Our results and findings are further validated through experiments on both synthetic and real-world datasets. 
\end{itemize}

\vspace{-5pt}
\begin{table}[!ht]
    \centering
    \caption{Overview of the four-stage dynamics: corresponding behaviors and theoretical results.}
    \label{tab:four_stage_overview}
    \vspace{-5pt}
    \begin{tabular}{l|l|l}
    \hline\hline
        {\bf Stage~I} & {\em The {mean value noise} shifts early, then stabilizes.} &
        {\bf Lemma~\ref{lemma:main_stage_i}} \\\hline
        {\bf Stage~II} & {\em The {query \& key noise} align their sign to each other.} &
        {\bf Lemma~\ref{lemma:main_a_stage_ii},~\ref{lemma:main_b_stage_ii}} \\\hline
        {\bf Stage III} & {\em Majority voting determines the sign of {query \& key signals}. 
        } &
        {\bf Lemma~\ref{lemma:main_stage_iii}}  \\\hline
        \multirow{2}*{\bf Stage IV}
        & {\em The {noise-signal softmax outputs} decay fast exponentially,} &
        {\bf Lemma~\ref{lemma:main_softmax_decay_stage_iv}} \\
        & {\em then the {query \& key noise} align their sign to signals.} & {\bf Lemma~\ref{lemma:main_key_align_stage_iv},~\ref{lemma:main_query_align_stage_iv}} 
        \\\hline\hline
    \end{tabular}
    \vspace{-5pt}
\end{table}

\vspace{-5pt}
{\bf Technical novelties.}
We use the {\em feature learning} framework~\citep{AZL23,CCBG22,ZCLG23,huang2023graph} for our theoretical analysis. Our technical novelties include: Firstly, we analyze an softmax attention layer with trainable query-key parameterization, which is not carefully studied in the literature. Secondly, we perform a multi-stage analysis for transformers by breaking down the complex dynamics into simple sub-stages. In each sub-stage, only one or two key behaviors dominate. Finally, we cleverly combined SignGD and the sparse data model, greatly simplifying the analysis. 

In summary, our work investigates the training dynamics of transformers using SignGD. To the best of our knowledge, this is the first provable result characterizing the training dynamics of transformers with SignGD. Our findings offer valuable insights into the inner workings of both SignGD and Adam, advancing our theoretical understanding of transformers and their optimization.

\vspace{-7pt}
\section{Preliminaries}
\label{sec:main_preliminaries}
\vspace{-7pt}

\paragraph{Notations.}
We use lower case letters, lower case bold face letters, and upper case bold face letters to denote
scalars, vectors, and matrices respectively.
For a vector $\vv = [v_1, \dots, v_d]^\top$, we denote the $\ell_2$ and $\ell_1$ norm by $\norm{\vv}$ and $\norm{\vv}_1$, respectively.
For two fixed non-negative sequences $\set{x_n}$ and $\set{y_n}$, we denote $x_n = O(y_n)$ if there exist some 
absolute constant $C > 0$ and $N > 0$ such that $\abs{x_n} \leq C\abs{y_n}$ for all $n \geq N$. 
We say $x_n = \Omega(y_n)$ if $y_n = O(x_n)$, 
and say $x_n = \Theta(y_n)$ if $x_n = O(y_n)$ and $x_n = \Omega(y_n)$.
We use $\Tilde{O}(\cdot)$, $\Tilde{\Omega}(\cdot)$ and $\Tilde{\Theta}(\cdot)$ to hide logarithmic factors in these notations, respectively. 
Moreover, we denote $x_n = \poly(y_n)$ if $x_n = O(y^D_n)$ for some constant $D > 0$, 
and $x_n = \polylog(y_n)$ if $x_n = \poly(\log(y_n))$. 
We use $[d]$ to denote the set  $\set{1, 2, \dots,d}$.
We use $\sgn(x) = x/|x|$ when $x \neq 0$ and $\sgn(0) = 0$. 
We denote a $n$-dim all-ones vector by $\textbf{1}_n$.

\vspace{-7pt}
\paragraph{Data model.}

We consider a binary classification task where each data point contains signal vector and sparse noise vector. The data model is formally defined in Definition~\ref{def:data_model}.

\begin{definition}
\label{def:data_model}
Let $\muv \in \Rb^{d}$ be a fixed vector representing the signal contained in each data point. 
We assume $\muv = [1, 0, \dots, 0]^\top$.
For each data point $(\Xv, y)$, the predictor $\Xv = [\xv^{(1)}, \xv^{(2)}] \in \Rb^{d\times2}$ consists of two patches (or tokens, vectors),
where $\xv^{(1)}, \xv^{(2)} \in \Rb^{d}$, and the label $y$ is binary, i.e., $y \in \{\pm1\}$.
The data is generated from a distribution $\mathcal{D}$, which we specify as follows:
\begin{myitems}
    \item The label $y$ is generated as a Rademacher random variable.
    \item Randomly select $s$ coordinates from $[d]\backslash\set{1}$ uniformly, denoted as a vector $\sv \in \set{0, 1}^{d}$. Generate each coordinate in $\xiv$ from distribution $N(0, \sigma_p^2)$, and then mask off the first coordinate and other $d - s - 1$ coordinates, i.e., $\xiv = \xiv \odot \sv$.
    \item One of $\xv^{(1)}, \xv^{(2)}$ is randomly selected and then assigned as $y\muv$, representing the signal, while the other is designated as $\xiv$, representing noise.
\end{myitems}
\end{definition}

The design of the signal patch $y\muv$ and the noise patch $\xiv$ can be viewed as a simplification of real-world image classification problems where only certain patches contain useful features that are correlated with the label, e.g., the wheel of a car, while many other patches contain uninformative features or consist solely of noise, e.g., the background of the image.
Specifically, $y\muv$ represents useful, label-correlated features (referred to as signal), whereas $\xiv$ represents non-informative features or irrelevant noise (referred to as noise).

\vspace{-3pt}
{\bf Remarks on data assumptions.}
We make several assumptions regarding sparsity, orthogonality, and context length.
Specifically, we assume $\muv$ is 1-sparse (i.e., it has only one non-zero entry), $\xiv$ is $s$-sparse, and that $\muv$ and $\xiv$ are orthogonal.
The sparsity assumption is essential for analysing optimizers that are not invariant under orthogonal transformations, such as Adam and SignGD. 
Our results can be easily extended to any $C$-sparse signal vector $\muv$, where $C = O(1)$ is a constant, with non-zero entries in arbitrary positions and constant magnitude.
The orthogonality assumption holds with high probability under the sparsity assumption, which confirms its validity (see Lemma~\ref{lemma:sparsity_implies_orthogonality} for details).
We also assume a context length of $L = 2$ for technical simplification. 
With additional appropriate assumptions, our analysis can be extended to data with longer contexts (see discussion in Appendix~\ref{sec:discussion_longer_contexts}).
We empirically validate our theoretical results for non-sparse, non-orthogonal, and multi-patch data in Appendix~\ref{sec:more_exp}.
Data models comprised of signal and noise patches with similar assumptions have also been studied in recent works~\citep{AZL23, CCBG22, JSL22, ZCLG23,huang2023understanding,han2024feature,huang2024comparison}.

\vspace{-7pt}
\paragraph{Two-layer transformers.}
Motivated by vision transformers~\citep{DBK+21},
we consider a two-layer transformer, where the first layer is a single-head softmax attention layer and the second layer is a linear head layer.
The attention layer is a sequence-to-sequence mapping, of which the parameters are  $\Wv := \left(\Wv_{Q}, \Wv_{K}, \Wv_{V,j} \right)$, where $\Wv_{Q}, \Wv_{K} \in \Rb^{m_k \times d}$ and $\Wv_{V,j} \in \Rb^{m_v \times d}$ for $j \in \set{\pm 1}$. 
The parameters of the second layer are fixed as $1/m_v$ and $-1/m_v$ respectively. We also talk about learnable linear head in Appendix~\ref{sec:discussion_joint_training}. 
Then, the network can be written as $f(\Wv, \Xv) := F_{1}(\Wv, \Xv) - F_{-1}(\Wv, \Xv)$, 
where $F_{1}(\Wv, \Xv)$ and $F_{-1}(\Wv, \Xv)$ are defined as:
\begin{align*}
    F_j(\Wv, \Xv) 
    := \frac{1}{m_v}\sum_{l=1}^{L}\textbf{1}^\top_{m_v} 
    \Wv_{V, j}\Xv\softmax{\Xv^{\top}\Wv_{K}^{\top}\Wv_{Q}\xv^{(l)}}.
\end{align*}
Let $\wv_{Q, s} := \Wv_{Q, (\cdot, s)}^{\top} \in \Rb^{d}$, $\wv_{K, s} := \Wv_{K, (\cdot, s)}^{\top} \in \Rb^{d}$, $\wv_{V, j, r} := \Wv_{V,j, (\cdot, r)}^{\top} \in \Rb^{d}$ be the $s$-th or $r$-th row of the parameter $\Wv_Q$, $\Wv_K$, $\Wv_{V,j}$, respectively.
Let $\bar{\wv}_{V,j} := \sum_{r\in[m_v]} \wv_{V,j,r} / m_v$ be the {\em mean value in $F_j$}.
Let $\vv = \bar{\wv}_{V,1} - \bar{\wv}_{V,-1}$ be the 
{\em mean value}.
We can write the model in a simpler form:
\begin{align}
    \label{def:softmaxattn_linearact_2layer_model}
    F_j(\Wv, \Xv) 
    = \frac{1}{m_v}\sum_{r \in [m_v]} 
    \left[\left(s_{11} + s_{21}\right)
    \left\langle\wv_{V,j,r}, \xv^{(1)}\right\rangle
    + \left(s_{12} + s_{22}\right) 
    \left\langle\wv_{V,j,r}, \xv^{(2)}\right\rangle
    \right],
\end{align}
where $s_{la} := \softmax{z_{l1}, \dots z_{lL}}_{a}$, and 
$z_{la} 
:= \sum_{s \in [m_k]} \langle\wv_{Q,s}, \xv^{(l)}\rangle \left\langle\wv_{K,s}, \xv^{(a)}\right\rangle$. 
Unless otherwise specified, we set $L = 2$.

We refer to the softmax outputs with the noise vector as the query and the signal vector as the key as noise-signal softmax outputs. Formally, for data point $(\Xv_i, y_i)$, it is defined as
\begin{align*}
    s_{i,21}^{(t)} = \frac{
        \exp\left(\sum_{s\in[m_k]} 
        \langle \wv_{Q,s}^{(t)}, \xiv_i\rangle\langle\wv_{K,s}^{(t)}, y_i\muv\rangle
        \right)
        }{
        \exp\left(\sum_{s\in[m_k]}
        \langle \wv_{Q,s}^{(t)}, \xiv_i\rangle\langle\wv_{K,s}^{(t)}, y_i\muv\rangle
        \right) + 
        \exp\left(\sum_{s\in[m_k]}
        \langle \wv_{Q,s}^{(t)}, \xiv_i\rangle\langle\wv_{K,s}^{(t)}, \xiv_i\rangle
        \right)
        }.
\end{align*}
Similarly, we refer to {\small $s_{i,11}^{(t)}$, $s_{i,12}^{(t)}$, $s_{i,22}^{(t)}$} as signal-signal, signal-noise, and noise-noise softmax outputs, respectively. 
When $L = 2$, a key fact about the softmax function is that {\small $s_{i,l1}^{(t)} + s_{i,l2}^{(t)} \equiv 1$}, for {\small $l\in[2]$}. 
The subscript is used only to distinguish between signal and noise, without imposing any restrictions on the permutation of patches. Although we use the symbol $s$ for sparsity, softmax outputs, and the indices of query and key neurons simultaneously, the context clearly indicates which one is being referred to.

\vspace{-7pt}
\paragraph{Training algorithm.}
We train our transformer model by minimizing the empirical cross-entropy loss function
$L_{S}(\Wv) := \frac{1}{n}\sum_{i=1}^{n} \ell(y_i \cdot f(\Wv, \Xv_i))$,
where $\ell(x) := \log\left(1 + \exp(-x)\right)$ is the logistic loss function, and $S := \set{(\Xv_i, y_i)}_{i=1}^n$ is the training dataset. We further define the test loss 
$L_{\mathcal{D}}(\Wv) := \Eb_{(\Xv, y)\sim\mathcal{D}}[\ell(y\cdot f(\Wv, \Xv))]$.

We study SignGD starting from Gaussian initialization, where each entry of $\Wv_{Q}$, $\Wv_{K}$, $\Wv_{V,j}$ for $j\in\set{\pm1}$ is sampled from a Gaussian distribution $N(0, \sigma_0^2)$. 
The SignGD update for the parameters can be written as
\begin{align*}
    & \wv^{(t+1)}_{V,j,r} 
    = \wv^{(t)}_{V,j,r} - \eta\sgn(\nabla_{\wv_{V,j,r}} L_S(\Wv^{(t)})), \\
    &\wv^{(t+1)}_{Q,s} 
    = \wv^{(t)}_{Q,s} - \eta\sgn(\nabla_{\wv_{Q,s}} L_S(\Wv^{(t)})), ~~
    \wv^{(t+1)}_{K,s} 
    = \wv^{(t)}_{K,s} - \eta\sgn(\nabla_{\wv_{K,s}} L_S(\Wv^{(t)})), 
\end{align*}
for all $s\in[m_k]$, $j\in \set{\pm1}$ and $r\in[m_v]$.
The expanded gradient formulas and update rules can be seen in Appendix~\ref{sec:gradient}.

\newcommand{\circlednumber}[1]{\raisebox{.5pt}{\textcircled{\raisebox{-.9pt}{#1}}}}

\vspace{-7pt}
\section{Main Results}
\label{sec:main_results}
\vspace{-7pt}

In this section, we present our main results.
Firstly, we provide a detailed characterization on training dynamics with a four-stage analysis, each exhibiting different behaviors. Then, based on the training dynamics, we analyze the convergence and generalization at the end of the training.
We further give an evidence that Adam exhibits similar behaviors to SignGD, and provide new insights that SignGD and Adam requires higher data quality compared to GD.

Before presenting the main results, we state our main condition. All of our theoretical results are based on the following condition.

\begin{condition}
\label{cond:main_condition} 
Suppose that 
\begin{myitems}
    \item~{\bf [Data dimension and Sparsity]} Data dimension $d$ is sufficiently large with $d = \Omega(\poly(n))$. 
    Sparsity $s$ satisfies: $s = \Theta(d^{1/2}n^{-2})$.
    \item~{\bf [Noise strength]} The standard variance of noise $\sigma_p$ satisfies: $\sigma_p = \Omega(d^{-1/4}n^{3})$.
    \item~{\bf [Network width and initialization]} Network width of value $m_v$ and of query \& key $m_k$ satisfy: $m_k, m_v =\Omega(\polylog(d))$. Network initialization $\sigma_0$ satisfies: $\sigma_0 = o(\sigma_p^{-1}s^{-1}m_k^{-1/2})$.
    \item~{\bf [Training dataset size]} The training sample size $n$ satisfies: $n = \Omega(m_k^4)$.
    \item~{\bf [Learning rate]} The learning rate $\eta$ satisfies: $\eta = O(\poly(d^{-1}))$ is sufficiently small.
\end{myitems}
\end{condition}

{\bf Remarks on Condition~\ref{cond:main_condition}.}
Our Condition~\ref{cond:main_condition} is frequently used in the literature and realistic in practice~\citep{CCBG22, CL21, FCB22}.
The conditions on $d$ and $s$ make sure the different noise patches have disjoint support with high probability. 
The condition on $\sigma_p$ implies $\sigma_p\sqrt{s} = \Omega(n^2\norm{\muv})$, which indicates the noise in the dataset is strong. 
The conditions on $m_v$, $m_k$, $n$, $\eta$ are technical and mild.
$m_v$ and $m_k$ affect the convergence of mean value noise and softmax outputs, respectively.
The size of $n$ affects the concentration of $\norm{\xiv}_1$.
Finally, the conditions on $\sigma_0$ ensures the network weights are small enough at initialization, which makes the learning process fall into the {\em feature learning} regime.
Note if we set $m_k = \Theta(\polylog(d))$, then our condition become $\sigma_0 = \Theta(d^{-1/2})$, which is realistic in practice.

\vspace{-7pt}
\subsection{Training Dynamics Analysis: Four-stage Analysis}
\vspace{-7pt}

Based on Condition~\ref{cond:main_condition}, 
we aim to explore the underlying optimization mechanism of SignGD through training dynamics.
We identify four distinct stages in the training dynamics, where the primary behaviors and theoretical results for each stage are summarized in Tab.~\ref{tab:four_stage_overview}.
This four-stage analysis captures the complex yet systematic dynamics within the attention layer and provides a valuable tool for further analysis of convergence and generalization.
In this subsection, we informally describe the core phenomena in each stage, while the formal results are presented in Sec.~\ref{sec:main_characterization}.

To better understand the four-stage training dynamics, the dynamics of key quantities at key timesteps or during the entire dynamics are illustrated in the Fig.~\ref{fig:main_updated} and Tab.~\ref{tab:main_sign_aligment_query_key}.

\begin{figure}[t]
    \centering
    \includegraphics[width=1.0\linewidth]{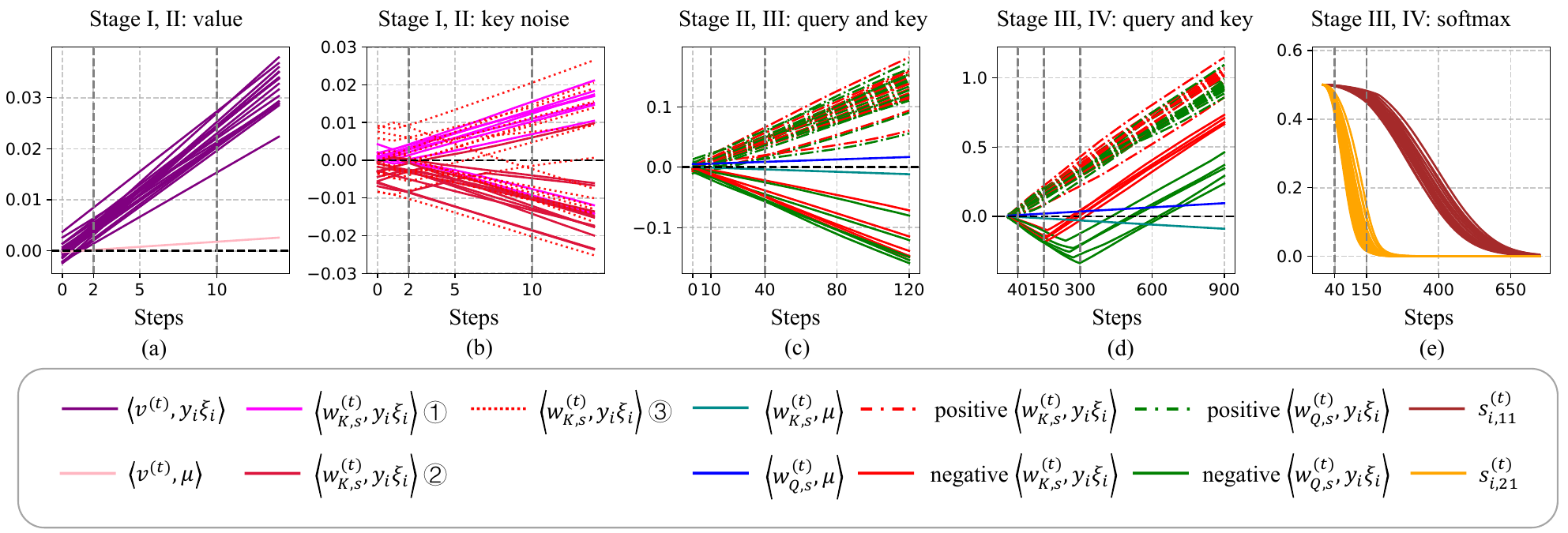}
    \caption{
    {\bf The training dynamics of two-layer transformers with SignGD.} 
    {\bf(a)} Dynamics of mean value noise and mean value signals in Stage I, and II.
    {\bf(b)} Dynamics of key noise in Stage I, and II. We mark different key noise in different colors. 
    \circlednumber{1}: {\small $ \wv_{K,s}^{(t)} \in S_{K+, Q+}^{(0)} := \{\wv_{K,s}^{(t)}: \langle \wv_{K,s}^{(0)}, y_i\xiv_i\rangle > 0, \langle \wv_{Q,s}^{(0)} y_i\xiv_i\rangle > 0\}$}.
    \circlednumber{2}: {\small $ \wv_{K,s}^{(t)} \in S_{K-, Q-}^{(0)} := \{\wv_{K,s}^{(t)}: \langle \wv_{K,s}^{(0)}, y_i\xiv_i\rangle < 0, \langle \wv_{Q,s}^{(0)} y_i\xiv_i\rangle < 0\}$}.
    \circlednumber{3}: {\small $ \wv_{K,s}^{(t)} \in (S_{K+, Q+}^{(0)} \cup S_{K-, Q-}^{(0)} )^c$}.
    {\bf(c)} Dynamics of query noise, key noise, query signals, key signals in Stage II and III. The dotted lines represent positive (query and key) noise at $t = 40$, and the solid lines represent negative noise at the same point.
    {\bf(d)} Dynamics of query noise, key noise, query signals, key signals in Stage III and IV. The dotted lines and solid lines have the same meanings in (c).
    {\bf(e)} Dynamics of softmax outputs in four stages.
    The dynamics over the whole time horizon is provided in Fig.~\ref{fig:full_dynamics_fig1}.
    An illustration explaining the behaviors of all quantities in all stages is provided in Fig.~\ref{fig:full_dynamics_illustration}.
    }
    \label{fig:main_updated}
    \vspace{-20pt}
\end{figure}

{ 
From $t = 0$ to $t = 2$ is the {\bf Stage I} of the dynamics,
}
which shows the early shift and stabilization of {\em mean value noise}, i.e., 
{\small $\langle\vv^{(t)}, y_i\xiv_i\rangle$}.
The mean value noise increases monotonically from the random initialization and becomes positive, then stabilizes into a linear relationship with $t$ by the stage's end {(Fig.~\ref{fig:main_updated} (a))}.
This period is so rapid that other quantities, including value signals and query \& key noise all remain close to their initialization {(Fig.~\ref{fig:main_updated} (b) illustrates key noise as an example)}.  
This linear behavior of mean value noise make it ordered in the gradients of query and key noise. 

{ 
From $t = 2$ to $t = 10$ is the {\bf Stage II} of the dynamics,}
which illustrates the sign alignment between {\em query \& key noise}, i.e., {\small $\langle\wv_{Q,s}^{(t)}, y_i\xiv_i\rangle$} and
{\small $\langle\wv_{K,s}^{(t)}, y_i\xiv_i\rangle$}.
At initialization, the signs of the {query \& key noise} are independent. By the end of Stage II, however, the signs of the noise for each neuron align, becoming either jointly positive or jointly negative, and continuous to grow subsequently. Additionally, the number of positive and negative neurons is nearly equal. 
{Fig.~\ref{fig:main_updated} (b) shows how key noise
aligns their signs. Tab.~\ref{tab:main_sign_aligment_query_key} provides statistics on the signs of query and key noise at initialization and at the end of Stage II ($t = 10$).
}

{ 
From $t = 10$ to $t = 40$ is the {\bf Stage III} of the dynamics,}
which shows how the sign of {\em query \& key signals}, i.e. {\small $\langle\wv_{Q,s}^{(t)}, \muv\rangle$} and {\small $\langle\wv_{K,s}^{(t)}, \muv\rangle$}, are determined by majority voting.
{Before Stage III, query and key signals remain close to the initialization, and their gradients are disordered.} 
However, at the start of Stage III, the sum of {key noise} {\small $\sum_{i=1}^{n}\langle\wv_{K,s}^{(t)}, y_i\xiv_i\rangle$} dominates the gradients of {query signals}, making the update direction of {query signals} aligned with its sign.
For a given neuron $s$, {key noise} is nearly uniform across all samples, thus this mechanism effectively act as majority voting. 
The sign of the key signals is determined symmetrically by the sum of the query noise,
{ with an opposite sign to the query signals. The behaviors in Stage III are shown in Fig.~\ref{fig:main_updated} (c).}

{ 
From $t = 40$ to $t = 2000$ is the {\bf Stage IV} of the dynamics.}
During this stage, {\em noise-signal softmax outputs}, i.e. {\small $s_{i,21}^{(t)}$} decay exponentially fast in Stage IV.
The duration from Stage I to Stage III can be relatively short, with all softmax outputs 
{\small $s_{i,11}^{(t)}$, $s_{i,21}^{(t)}$} concentrated at $1/2$. 
However, in Stage IV, {\small $s_{i,21}^{(t)}$} decreases {\em exponentially} to zero,
while {\small $s_{i,11}^{(t)}$} remains stuck at $1/2$. 
{The concentration and fast decay are shown in Fig.~\ref{fig:main_updated} (e).} 
The difference between {\small $s_{i,11}^{(t)}$} and {\small $s_{i,21}^{(t)}$} is due to the varying rates of increase between query signals and query noise.

Furthermore, 
``negative''\footnote{To be more accurate, the ``negative'' key noise means all the key noise with the sign opposite of query signals. When we focus on a neuron with a positive {query signal}, the ``negative'' key noise is exactly negative key noise.} query and key noise align their sign to signals in Stage IV.
We focus on a single neuron with a positive {query signal}, as shown in Fig.\ref{fig:main_updated} (d), though this applies to all neurons.
Before the {noise-signal softmax outputs} decay to zero, the dynamics of all signals and noise remain unchanged. 
At a critical point {($t = 150$ in Fig.\ref{fig:main_updated} (d))}, all negative {key noise} begins aligning with the positive {query signal}, gradually decreasing in magnitude and crossing zero. 
Once the negative {key noise} approaches zero and fully aligns {(around $t = 300$ in Fig.\ref{fig:main_updated} (d))}, the negative {query noise} begins aligning, eventually becoming positive by the end of the stage. 
From that point on, the sign of all signals and noise remains unchanged. 
{The sign alignment of negative key and query noise is shown in Fig.~\ref{fig:main_updated} (d).}

Overall, the dynamics exhibit complex and intriguing sign alignment behaviors within the attention layer, and they ultimately reach stabilization.

\begin{table}[t]
    \centering
    \caption{
    {
    Sign alignment between query and key noise in Stage II. 
    {\small $S_{K+, Q+}^{(t)}$}, defined as {\small $S_{K+, Q+}^{(t)} := \{(s,i) \in [m_k] \times [n]: \langle \wv_{K,s}^{(t)}, y_i\xiv_i\rangle > 0, \langle \wv_{Q,s}^{(t)}, y_i\xiv_i\rangle > 0\}$}, represents the number of neurons and samples having positive query noise and positive key noise.
    The definitions for {\small $S_{K+, Q-}^{(t)}$, $S_{K-, Q+}^{(t)}$, $S_{K-, Q-}^{(t)}$} are similar.
    Each element in the middle of the table represents the size of the intersection of the corresponding row set and the corresponding column set. 
    For example, {\small $|S_{K+,Q+}^{(0)} \cap S_{K+,Q+}^{(t)}| = 486$}.
    The signs of query and key noise are independent at initialization but aligned at $t=10$, 
    which can be seen as an estimate of {\small $T_{2}^{\text{SGN}}$}.
    }}
    \label{tab:main_sign_aligment_query_key}
    \begin{tabular}{c|cccc|c}
        \toprule
        init($t=0$)$\backslash t=10$ & {\small $|S_{K+,Q+}^{(t)}|$} & {\small $|S_{K+,Q-}^{(t)}|$} & {\small $|S_{K-,Q+}^{(t)}|$} & {\small $|S_{K-,Q-}^{(t)}|$} 
        & Row sum \\
        \midrule
        {\small $|S_{K+,Q+}^{(0)}|$} & 486 & 1 & 0 & 25 & 512 \\
        {\small $|S_{K+,Q-}^{(0)}|$} & 244 & 4 & 9 & 250 & 507 \\
        {\small $|S_{K-,Q+}^{(0)}|$} & 223 & 10 & 4 & 221 & 458 \\
        {\small $|S_{K-,Q-}^{(0)}|$} & 37 & 2 & 3 & 481 & 523 \\
        \midrule
        Column sum & 990 & 17 & 16 & 977 & 2000 \\
        \bottomrule
    \end{tabular}
    \vspace{-10pt}
\end{table}

\vspace{-7pt}
\subsection{Convergence and Generalization Analysis: Fast Convergence but Poor Generalization}
\label{sec:destructive_convergence}
\vspace{-4pt}

Beyond the training dynamics, we characterize the convergence and generalization result at the end of the training.
Additionally, we provide evidence that Adam exhibits similar behaviors to SignGD in optimization and generalization, 
and suggest that SignGD and Adam requires high data quality.

\begin{theorem}
\label{thm:main_results_generalization}
For any $\epsilon > 0$, under Condition~\ref{cond:main_condition}, with probability at least $1-n^{-1/3}$,
there exists 
$T =  O(\log(\epsilon^{-1})\eta^{-1}\sigma_p^{-1}s^{-1})$ ,
and $T_{\mathrm{attn}} = \Tilde{O}(\eta^{-1}m_k^{-1/2}\sigma_p^{-1/2}s^{-1/2}\norm{\muv}^{-1/2})$
such that 
\begin{myitems}
    \item {\bf [Training loss]} The training loss converges to $\epsilon$: $L_S(\Wv^{(T)}) \leq \epsilon$.
    \item {\bf [Test loss]} The trained transformer has a constant order test loss: $L_{\mathcal{D}}(\Wv^{(T)}) = \Theta(1)$.
    \item {\bf [Noise memorization of value]} The value matrix in attention layer memorizes noises in the training data: 
    For all $i\in[n]$,
    $|\langle \vv^{(T)}, \xiv_i\rangle| = \Omega(1)$, 
    $|\langle \vv^{(T)}, \muv\rangle| = \Tilde{O}(\sigma_p^{-1}s^{-1})$.
    \item {\bf [Noise memorization of query \& key]} The softmax outputs of attention layer attends all the weights to the noise patch in the training data: 
    For all $i\in[n]$,
    $s_{i,11}^{(T_{\mathrm{attn}})} = o(1)$, $s_{i,21}^{(T_{\mathrm{attn}})} = o(1)$.
\end{myitems}
\end{theorem}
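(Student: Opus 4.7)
The plan is to deduce all four items from the four-stage dynamics already characterized in Lemmas~\ref{lemma:main_stage_i}--\ref{lemma:main_query_align_stage_iv}, then combine the resulting late-time picture with a high-probability independence argument between training noises and fresh test noises. First I would fix a good event on which (a) the sparsity supports of $\xiv_1,\dots,\xiv_n$ and a fresh test noise $\xiv$ are pairwise disjoint (this is where $d=\Omega(\poly(n))$ is used), (b) $\norm{\xiv_i}_1 = \Theta(\sigma_p s)$ concentrates, and (c) the initialization bounds $|\langle \wv^{(0)}_{\cdot,s},\muv\rangle|,|\langle \wv^{(0)}_{\cdot,s},\xiv_i\rangle|$ hold uniformly. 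Conditioning on this event, everything else is deterministic consequences of the Stage I--IV lemmas.

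For the noise memorization of query \& key (item 4), I would directly invoke the Stage~IV softmax-decay lemma (Lemma~\ref{lemma:main_softmax_decay_stage_iv}) which says $s^{(t)}_{i,21}$ decays exponentially once query/key signals have been polarized by majority voting in Stage~III, while $s^{(t)}_{i,11}$ remains stuck near $1/2$ since signal-signal and signal-noise softmax entries do not benefit from the same asymmetric growth. Solving the exponential decay for $s_{i,21}^{(t)}=o(1)$ yields the claimed timescale $T_{\mathrm{attn}} = \tilde O(\eta^{-1}m_k^{-1/2}\sigma_p^{-1/2}s^{-1/2}\norm{\muv}^{-1/2})$. At this point the model effectively reduces, on each training point, to $y_i\cdot f(\Wv^{(t)},\Xv_i)\approx \langle \vv^{(t)},y_i\xiv_i\rangle$ plus a vanishing correction from the signal channel.

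For the value memorization (item 3) and training convergence (item 1), I would track $\langle\vv^{(t)},y_i\xiv_i\rangle$ using the Stage~I lemma extended beyond Stage~IV. Because SignGD gives a step of size $\eta$ per coordinate in the sign direction of the aggregated gradient, and after Stage~I the sign of $\nabla_{\wv_{V,j,r}}L_S$ on the noise coordinates of $\xiv_i$ is fixed (the term $-\ell_i' y_i j (s^{(t)}_{i,12}+s^{(t)}_{i,22})\xiv_i$ dominates and $s^{(t)}_{i,12}+s^{(t)}_{i,22}\to 1$ in Stage~IV), the mean value noise grows linearly: $\langle \vv^{(t)},y_i\xiv_i\rangle = \Theta(\eta\sigma_p s\, t)$. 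In contrast, the signal coordinate of $\vv^{(t)}$ only receives a step of size $\eta$ per iteration, so $\langle\vv^{(t)},\muv\rangle=O(\eta t \norm{\muv})$, which, using $\norm{\muv}=1\ll \sigma_p s$, gives the $\tilde O(\sigma_p^{-1}s^{-1})$ bound relative to the noise. Picking $T=\Theta(\log(\epsilon^{-1})\eta^{-1}\sigma_p^{-1}s^{-1})$ makes $y_if(\Wv^{(T)},\Xv_i)=\Omega(\log(\epsilon^{-1}))$ for every training point, which drives $\ell(\cdot)$ below $\epsilon$ and delivers item 1 and the $\Omega(1)$ lower bound in item 3.

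For the test loss (item 2) I would use the orthogonality on the good event: for a fresh $(\Xv,y)\sim\mathcal D$, $\supp(\xiv)$ is disjoint from $\supp(\xiv_i)$ for all $i$ with high probability, so $\langle \vv^{(T)},\xiv\rangle$ is a sum over coordinates that received no gradient signal from the noise-dominated updates and therefore stays at initialization scale $\tilde O(\sigma_0\sigma_p\sqrt s)=o(1)$. Combined with $\langle \vv^{(T)},\muv\rangle=\tilde O(\sigma_p^{-1}s^{-1})=o(1)$ and $s_{\cdot,l1}+s_{\cdot,l2}=1$, the predicted margin $y f(\Wv^{(T)},\Xv)=o(1)$, so $\ell(yf)=\Theta(1)$ and item 2 follows by taking expectation over the good event (the bad event contributes $O(n^{-1/3})$). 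The main obstacle will be the second part of item 4: proving $s_{i,11}^{(T_{\mathrm{attn}})}=o(1)$ alongside $s_{i,21}^{(T_{\mathrm{attn}})}=o(1)$, since $s^{(t)}_{i,11}+s^{(t)}_{i,12}=1$ forces one of signal-signal or signal-noise softmax outputs to be $\Omega(1)$; this must be reconciled with the Stage~IV statement by carefully interpreting the lemma (presumably $s^{(T_{\mathrm{attn}})}_{i,l2}\to 1$ for both rows, i.e.\ the attention places all mass on the noise token regardless of the query), which in turn requires extending the noise-signal decay argument to rule out signal-signal domination via the majority-voting-induced sign of the query/key signals from Stage~III.
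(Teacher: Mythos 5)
Your proposal is essentially the same argument as the paper's: items 1--3 come from the linear growth of $\langle \vv^{(t)}, y_i\xiv_i\rangle$ plus the disjoint-support argument for fresh test noise, and item 4 comes from the end-of-Stage-IV sign structure. This matches the paper's decomposition into Lemma~\ref{lemma:loss} and Lemma~\ref{lemma:final_attention}.

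There is one misattribution worth correcting. You write that ``solving the exponential decay for $s_{i,21}^{(t)}=o(1)$ yields the claimed timescale $T_{\mathrm{attn}}$'' and, a sentence earlier, that ``$s_{i,11}^{(t)}$ remains stuck near $1/2$.'' Both statements are wrong at the relevant time: $s_{i,21}^{(t)}=o(1)$ already holds at $T_4^- = \tilde\Theta(\eta^{-1}m_k^{-1/2}\sigma_p^{-1}s^{-1})$, which is strictly earlier than $T_{\mathrm{attn}}$ since $\sigma_p s \gg \norm{\muv}$, and Lemma~\ref{lemma:main_softmax_decay_stage_iv} only keeps $s_{i,11}^{(t)}$ near $1/2$ up to $T_4$, not at $T_{\mathrm{attn}}$. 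The timescale $T_{\mathrm{attn}}$ is determined by the decay of $s_{i,11}^{(t)}$, which you correctly realize must happen but describe as an ``obstacle.'' The mechanism is exactly the one you sketch at the end: by the end of Stage IV all query noise, key noise and query signals share a sign (and key signals take the opposite sign), so $\sum_{s}\langle\wv_{Q,s}^{(t)},\muv\rangle\langle\wv_{K,s}^{(t)}, y_i\xiv_i\rangle > 0$ grows like $m_k t^2\eta^2\sigma_p s\norm{\muv}$ while $\sum_{s}\langle\wv_{Q,s}^{(t)},\muv\rangle\langle\wv_{K,s}^{(t)},\muv\rangle$ is negative and of size $O(m_k t^2\eta^2\norm{\muv}^2)$; requiring the former to be $\tilde\Omega(1)$ gives $T_{\mathrm{attn}}\sim\eta^{-1}m_k^{-1/2}(\sigma_p s\norm{\muv})^{-1/2}$, which is exactly Lemma~\ref{lemma:final_attention}. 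Replace the ``stuck at $1/2$'' claim with this and the proof is sound.
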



\vspace{-3pt}
Theorem~\ref{thm:main_results_generalization} outlines the {\em training and test loss} at the end of training. In this setting, SignGD achieves a fast linear convergence rate in training loss, but test loss remains high, summarizing the behavior as {\em fast convergence but poor generalization}.
Theorem~\ref{thm:main_results_generalization} also presents new results on {\em noise memorization in the attention layer}.
The post-softmax attention matrix concentrates all its weights on the noise patch, resulting in a sparse matrix, which is consistent with previous analyses~\citep{TWCD23, TWZ+24, LWLC23}.
In contrast to prior works~\citep{CCBG22, ZCLG23} that focus on noise memorization in linear or convolutional layers, our results address the more complex issue of noise memorization in the attention layer.
The proof of Theorem~\ref{thm:main_results_generalization} is in Appendix~\ref{sec:proof_training_and_test_loss}.

{\bf Remark.}
Theorem~\ref{thm:main_results_generalization} focus on the logistic test loss. We further give a result about final 0-1 test loss in Appendix~\ref{sec:discussion_0_1_test_loss} since bad logistic loss doesn't necessarily imply bad 0-1 loss in binary classification task.
Interestingly, the size of 0-1 test loss depends on the network initialization $\sigma_0$. 

{\bf Fast convergence but poor generalization contradcits the `train faster, generalize better' argument.} 
Algorithmic stability~\citep{BE02, HRS16} is a widely used technique in generalization analysis. 
One typical argument of algorithmic stability is ``train faster, generalize better''~\citep{HRS16}.
Our results provide a counterexample to this argument by showing SignGD trains fast but generalizes poorly.
Notably, \citet{TZL+23} introduced a measure explaining why SGD trains slower but generalizes better than GD, aligning with our viewpoint.


{\bf Adam exhibits similar behaviors to SignGD in optimization and generalization.}
We conducted experiments with Adam on both synthetic and real-world datasets, tracing the dynamics on synthetic data and measuring test loss on noisy MNIST dataset.
On the synthetic data, Adam follows a similar four-stage dynamics as SignGD, with Stage III and Stage IV shown in Fig.~\ref{fig:main2} (a),(b).
Further similarities in other stages, and the results across different $\beta_1$ values are given in Appendix~\ref{app:comprasion with adam and gd}. 
In the noisy MNIST data, Adam also shows high test loss, like SignGD, especially under strong noise. 
This indicates that Adam shares key similarities with SignGD in training dynamics, convergence, and generalization, further supporting the use of SignGD as a proxy for understanding Adam.
However, when $\beta_1$ and $\beta_2$ in Adam are close to 1, which is commonly used in practice, SignGD does not always behave like Adam. In our experiments, Adam did not exhibit the sign alignment of {query noise} with {query signals} in Stage IV. We suspect this difference is due to the momentum in Adam.

{\bf SignGD and Adam require higher data quality than GD.}
We compare SignGD and GD from both theoretical and empirical perspectives.
Theoretically, our results reveal that SignGD achieves a linear convergence rate, while GD typically converges more slowly with a sublinear rate in learning CNNs for the same task~\citep{CCBG22, KCCG23} or transformers for different tasks~\citep{NDL24, HWCL24}.
Empirically, our experiments demonstrate that SignGD trains faster than GD (Fig.~\ref{fig:main2} (c)), but GD generalizes better consistently across different levels of data noise (Fig.~\ref{fig:main2} (d)). 
Furthermore, our experiments show that Adam, like SignGD, is also sensitive to data noise, underperforming in generalization compared to GD in noisy conditions.
This indicates that GD is better at learning true useful features from noisy data, while SignGD and Adam are more sensitive to noise.
Therefore, {\em the poor generalization of SignGD, as indicated by our theoretical results, is not solely due to data noise but is also related to its inherent algorithmic properties.}
This highlights that both SignGD and Adam require higher data quality than GD.
We recommend that practitioners using SignGD or Adam as optimizers consider improving data quality to mitigate their sensitivity to noise.
Experiment details and additional results can be found in  Appendix~\ref{sec:more_exp}.



\begin{figure}[t]
    \centering
    \includegraphics[width=0.95\textwidth]{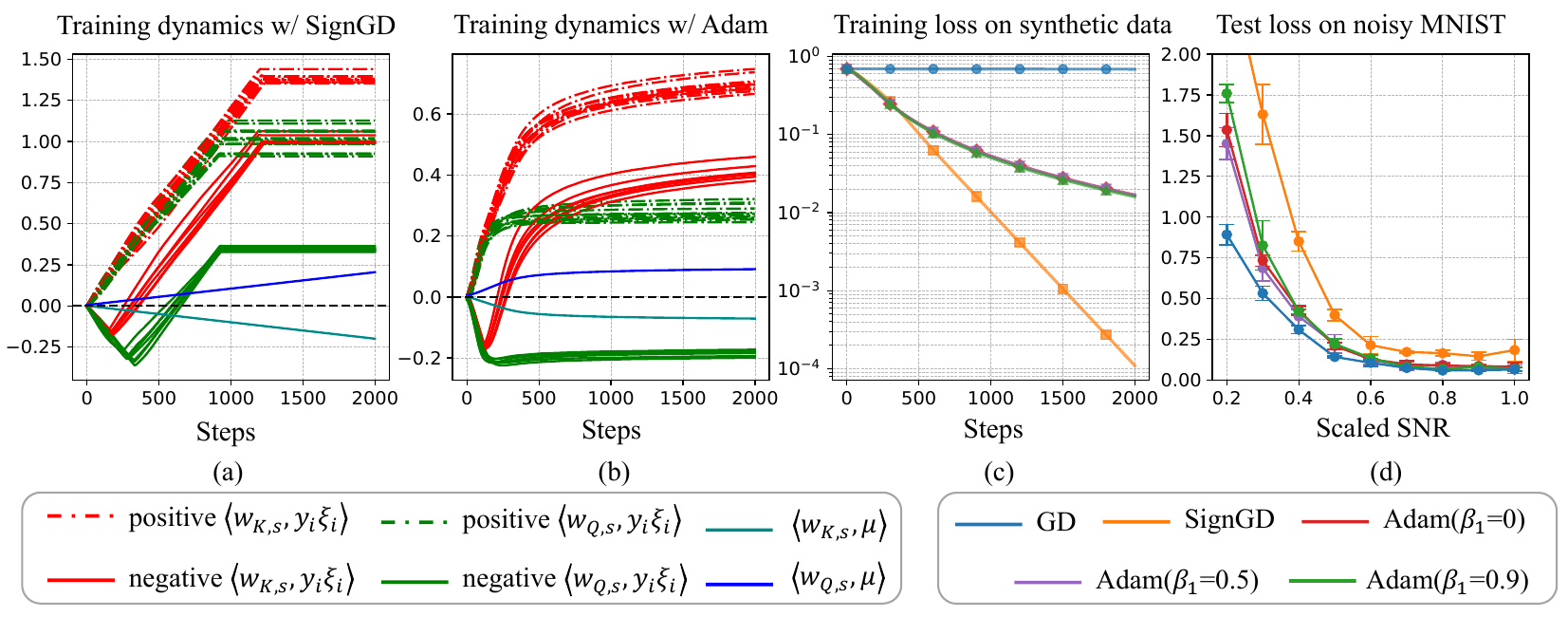}
    \caption{
    {
    {\bf Comparison of SignGD with Adam and GD on synthetic and real-world datasets.}
    {\bf (a)} Dynamics of query noise, key noise, query
signals, and key signals with SignGD on the synthetic dataset. 
    {\bf (b)} Dynamics of the same quantities with Adam({\small $\beta_1=0.9$}).
    {\bf (c)} Training loss curve (log scale) on the synthetic data for different optimizers. The training loss with SignGD decays exponentially. 
    Note that the training losses for Adam($\beta_1=0.9$), Adam($\beta_1=0.5$), Adam($\beta_1=0.0$) overlap.
    {\bf (d)} Test loss on the noisy MNIST dataset across varying noise levels. A larger scaled SNR indicates less noise in the dataset.
    }}
    \label{fig:main2}
    \vspace{-15pt}
\end{figure}





In summary, we prove that SignGD achieves fast convergence but poor generalization on a noisy dataset. 
We provide empirical evidence that Adam exhibits similar behaviors to SignGD in terms of training dynamics, convergence, and generalization, offering new insights into understanding Adam through the lens of SignGD.
By comparing it with GD,
we find that the poor generalization of SignGD is not solely due to data noise but is related to the inherent properties of SignGD, suggesting that both SignGD and Adam require higher data quality in practice.

\vspace{-10pt}
\section{Proof Sketch}
\label{sec:main_characterization}
\vspace{-10pt}


In this section, we present the proof sketch of the training dynamics, which can subsequently easily lead to the convergence and generalization results. 

We use the {\em feature learning} framework~\citep{AZL23, CCBG22}, which studies the dynamics of parameter-data inner product. This inner product shows a simpler and clearer pattern compared with parameter itself.
Specifically, these quantities are {\small $\langle\wv_{Q,s}, \muv\rangle$, $\langle\wv_{Q,s}, y_i\xiv_i\rangle$, $\langle\wv_{K,s}, \muv\rangle$, $\langle\wv_{K,s}, y_i\xiv_i\rangle$,
$\langle\wv_{V,j,r}, \muv\rangle$, $\langle\wv_{V,j,r}, y_i\xiv_i\rangle$}, and we name them by {query signals}, {query noise}, {key signals}, {key noise},{value signals}, {value noise}, respectively. The {mean value signals (noise)} is the mean of the {value signals (noise)}, denoted by 
{\small $\langle\vv, \muv\rangle$} ({\small$\langle\vv, y_i\xiv_i\rangle$}).
Conditional on these quantities, the output {\small $f(\Wv, \Xv_i)$} is independent of parameters {\small $\Wv$}.

\vspace{-7pt}
\paragraph{Technical Novelties.}
We summarize our technical novelties in three points as below.\newline
{\bf Firstly}, we analyze an softmax attention layer with trainable query-key parameterization. In this case, the dynamics of query and key parameters are strongly correlated, and we need to carefully consider the inner interaction between query and key.
This is a quite challenging setting, and a detailed comparison with previous works is given in Appendix~\ref{sec:more_related_work}.

{\bf Secondly}, 
we perform a multi-stage analysis for transformers by breaking down the complex dynamics into sub-stages. In each stage, only one or two key behaviors dominate, while other patterns remain mostly unchanged or have minimal impact. This breakdown works because of the varying rates of change in key quantities, which is influenced by the parameters $\sigma_p$, $\sigma_0$, $s$, etc.

{\bf Thirdly}, we cleverly combined SignGD and the sparse data model by observing that the magnitude of the update of any inner products of interest remains constant across all iterations. Formally, for all $s$, $j$, $r$, and $i$, with high probability, we have:
\begin{align*}
    &
    \abs{\langle \wv_{V,j,r}^{(t+1)}-\wv_{V,j,r}^{(t)}, \muv\rangle},
    \abs{\langle \wv_{K, s}^{(t+1)}-\wv_{K, s}^{(t)}, \muv\rangle}, \abs{\langle \wv_{Q, s}^{(t+1)}-\wv_{Q, s}^{(t)}, \muv\rangle} = \eta\norm{\muv}, \\
    & 
    \abs{\langle \wv_{V,j,r}^{(t+1)}-\wv_{V,j,r}^{(t)}, y_i\xiv_i\rangle},
    \abs{\langle \wv_{K, s}^{(t+1)}-\wv_{K, s}^{(t)}, y_i\xiv_i\rangle}, \abs{\langle \wv_{Q, s}^{(t+1)}-\wv_{Q, s}^{(t)}, y_i\xiv_i\rangle} = \eta\norm{\xiv_i}_1.
\end{align*}
This property implies that each iteration's update is determined solely by its sign, allowing us to simplify the analysis by focusing only on the update direction's sign.

In the following four subsections, we present the key theoretical results for each stage, respectively.


\vspace{-7pt}
\subsection{Stage I. Mean Value Noise Early Shifts and Stabilizes.}
\vspace{-7pt}

In {\bf Stage I}, we focus on {\em mean value noise \& signals}, i.e. {\small $\langle \vv^{(t)}, y_i\xiv_i \rangle$} and {\small $\langle \vv^{(t)}, \muv \rangle$}.
Let {\small $
\beta_{\xiv} := \max_{i,s,j,r}\{
    |\langle \wv_{Q,s}^{(0)}, \xiv_i\rangle|,
    |\langle \wv_{K,s}^{(0)}, \xiv_i\rangle|,
    |\langle \wv_{V,j,r}^{(0)}, \xiv_i\rangle|
\}
$} and 
{\small $T_1 := 4\beta_{\xiv}m_v^{-1/2}\eta^{-1}\sigma_p^{-1}s^{-1} = \Tilde{\Theta}(\sigma_0m_v^{-1/2}\eta^{-1}s^{-1/2})$}, we call $t \in [0, T_{1}]$ Stage I.
The following lemma is the main result in Stage I.
\begin{lemma}[Stage I]
\label{lemma:main_stage_i}
We have 
\textbf{(1)} (Magnitude). For all $i\in[n]$ and $t \geq T_1$,
$\langle \vv^{(t)}, y_i\xiv_i \rangle 
= \Theta(t\eta\sigma_ps)$.
\textbf{(2)} (Negligibility of mean value signals). For all $i\in[n]$ and $t \geq T_1$, 
$
\langle \vv^{(t)}, \muv\rangle = o(\langle \vv^{(t)}, y_i\xiv_i\rangle).
$
\textbf{(3)} (Query \& Key noise barely move). For all $s\in[m_k]$, $i\in[n]$ and $t \leq T_1$,
$\langle \wv_{Q,s}^{(t)}, \xiv_i\rangle = \langle \wv_{Q,s}^{(0)}, \xiv_i\rangle \cdot (1\pm o(1))$,
$\langle \wv_{K,s}^{(t)}, \xiv_i\rangle = \langle \wv_{K,s}^{(0)}, \xiv_i\rangle \cdot (1\pm o(1))$.
    

    
\end{lemma}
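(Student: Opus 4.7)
The plan is to exploit the SignGD update's sign-based simplicity together with the disjoint-support structure of the sparse noise vectors. First, I would compute the per-step update of $\langle\wv_{V,j,r}^{(t)}, y_i\xiv_i\rangle$. The gradient with respect to $\wv_{V,j,r}$ is $\frac{1}{nm_v}\sum_k \ell_k'y_kj\bigl[(s_{k,11}+s_{k,21})y_k\muv+(s_{k,12}+s_{k,22})\xiv_k\bigr]$. With high probability the sparsity condition $s=\Theta(d^{1/2}n^{-2})$ makes $\supp(\xiv_1),\dots,\supp(\xiv_n)$ pairwise disjoint and disjoint from $\supp(\muv)$, so on coordinates $k\in\supp(\xiv_i)$ only the $i$-th term contributes. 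Since $\ell_i'<0$ and $s_{i,12}+s_{i,22}>0$, the sign of the gradient at such $k$ equals $-y_ij\,\sgn(\xiv_{i,k})$, crucially \emph{independent of $r$ and of $t$}. Hence the SignGD step gives $\langle\wv_{V,j,r}^{(t+1)}-\wv_{V,j,r}^{(t)}, y_i\xiv_i\rangle = \eta j\|\xiv_i\|_1$ for every $r,t$. Averaging over $r$ and differencing $j=\pm1$ yields $\langle\vv^{(t+1)}-\vv^{(t)}, y_i\xiv_i\rangle = 2\eta\|\xiv_i\|_1$, while a symmetric argument on coordinate~$1$ gives a signal-side update of magnitude $2\eta\|\muv\|$ per step.

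Second, I would combine this per-step analysis with concentration at initialization. Since $\vv$ averages $2m_v$ i.i.d.\ Gaussian projections, $|\langle\vv^{(0)}, y_i\xiv_i\rangle| = \tilde O(\beta_{\xiv}/\sqrt{m_v})$ and $|\langle\vv^{(0)},\muv\rangle| = \tilde O(\sigma_0/\sqrt{m_v})$; moreover $\|\xiv_i\|_1 = \Theta(\sigma_p s)$ uniformly in $i$ by a Bernstein-type bound. The accumulated linear growth $2t\eta\|\xiv_i\|_1$ overtakes the initialization exactly at $t=T_1$, giving Part~(1): $\langle\vv^{(t)}, y_i\xiv_i\rangle = \Theta(t\eta\sigma_p s)$ for $t\ge T_1$. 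For Part~(2), the value-signal magnitude is bounded by $2t\eta\|\muv\|+\tilde O(\sigma_0/\sqrt{m_v})$, which is $o(t\eta\sigma_p s)$ because Condition~\ref{cond:main_condition} forces $\sigma_p s = \Omega(d^{1/4}n)\to\infty$. For Part~(3), the deterministic inequality $|\langle\wv_{Q/K,s}^{(t+1)}-\wv_{Q/K,s}^{(t)},\xiv_i\rangle|\le\eta\|\xiv_i\|_1$ (the sign vector has $\pm1$ entries) sums to a total drift of at most $T_1\eta\|\xiv_i\|_1 = O(\beta_{\xiv}/\sqrt{m_v})$. This is $o(|\langle\wv_{Q/K,s}^{(0)},\xiv_i\rangle|)$ once Gaussian anti-concentration gives $|\langle\wv_{Q/K,s}^{(0)},\xiv_i\rangle| = \tilde\Theta(\sigma_0\sigma_p\sqrt s)$ for all relevant $(s,i)$ with high probability, delivering the multiplicative $(1\pm o(1))$ stability.

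The main obstacle is to make the sign-consistency argument work uniformly over all $t\le T_1$. The key observation is that on $\supp(\xiv_i)$ the sign of the gradient depends only on $\sgn(\xiv_{i,k})$, $y_i$, $j$, and $\sgn(\ell_i')$; the first three are frozen by the data, and $\ell_i'<0$ is intrinsic to the logistic loss regardless of the evolving network output. The softmax factor $s_{i,12}+s_{i,22}$ can shift in magnitude but never in sign, so the update direction is preserved throughout Stage~I. The remaining bookkeeping---pairwise disjoint noise supports across $[n]$, Gaussian concentration of the initial inner products indexed by $(s,i,j,r)$, and Bernstein concentration of $\|\xiv_i\|_1$---is standard but must be combined under a single union bound so that Lemma~\ref{lemma:main_stage_i} holds with the probability required in Theorem~\ref{thm:main_results_generalization}.
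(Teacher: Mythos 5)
Your proposal follows essentially the same route as the paper: exploit that the SignGD update of each inner product has fixed step size $\eta\|\xiv_i\|_1$ (resp. $\eta\|\muv\|$), argue that the sign of the gradient restricted to $\supp(\xiv_i)$ depends only on frozen quantities via the disjoint-support lemma, derive the exact per-step update formulas for $\langle\vv^{(t)},y_i\xiv_i\rangle$ and $\langle\vv^{(t)},\muv\rangle$, and then compare accumulated linear growth against initialization magnitudes at $T_1$.

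One step deserves a correction, however. For Part~(3) you invoke Gaussian anti-concentration to claim that $|\langle\wv_{Q,s}^{(0)},\xiv_i\rangle| = \tilde\Theta(\sigma_0\sigma_p\sqrt{s})$ \emph{uniformly over all} $(s,i)$ with high probability. That lower bound cannot hold: for a single Gaussian of standard deviation $\approx\sigma_0\sigma_p\sqrt{s}$, the probability of falling within a window $c$ of zero is $\Theta(c/(\sigma_0\sigma_p\sqrt{s}))$, so a union bound over $m_k n$ pairs forces the uniform lower bound down to $\Theta(\sigma_0\sigma_p\sqrt{s}\cdot\delta/(m_k n))$, which is polynomially (not merely logarithmically) below $\sigma_0\sigma_p\sqrt{s}$ when $n$ can grow. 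The paper's Lemma~\ref{lemma:network-initialization-query-key-noise-anti-concentration-vs-mean-value-noise} is more careful: it proves only $|\langle\wv_{Q,s}^{(0)},\xiv_i\rangle|\ge\beta_{\xiv}m_v^{-1/4}$, under a condition on $m_v$ that absorbs the $m_k n$ dependence. Your drift bound $T_1\eta\|\xiv_i\|_1 = O(\beta_{\xiv}m_v^{-1/2})$ is correct, and the argument still closes because $T_1$ was deliberately built with a $m_v^{-1/2}$ factor so that the ratio is $O(m_v^{-1/4})=o(1)$; but you should replace the claimed $\tilde\Theta(\sigma_0\sigma_p\sqrt{s})$ lower bound with the weaker $\beta_{\xiv}m_v^{-1/4}$ one and explicitly register that the $m_v$-dependence of $T_1$ is what makes the comparison go through. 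Aside from this, your derivations of Parts~(1) and (2) — including the observation that $\sigma_p s\to\infty$ under Condition~\ref{cond:main_condition} so that $2t\eta\|\muv\|=o(t\eta\sigma_p s)$ — match the paper.
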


\vspace{-5pt}
Lemma~\ref{lemma:main_stage_i} {\bf (1)} shows the 
{mean value noise} quickly stabilize into a linear relationship with $t$.
Lemma~\ref{lemma:main_stage_i} {\bf (2)} allows us to disregard the influence of the {mean value signals} on the query and key gradients, as it is negligible compared to the impact of the {mean value noise}.
Lemma~\ref{lemma:main_stage_i} {\bf(3)} states that the {query \& key noise} remain close to their initialization before $T_1$, allowing us to approximate them as still being at their initial values at $T_1$.
Lemma~\ref{lemma:main_stage_i} {\bf (1)(3)} jointly show the early shift and stabilization of {mean value noise}.
The proof of this section is in Appendix~\ref{sec:proof_stage_i}.

\vspace{-11pt}
\subsection{Stage II. Query and Key Noise Align Their Signs to Each Other}
\vspace{-7pt}

In {\bf Stage II}, we focus on {\em query and key noise}, i.e., {\small $\langle\wv_{Q,s}^{(t)}, y_i\xiv_i \rangle$} and {\small $\langle \wv_{K,s}^{(t)}, y_i\xiv_i \rangle$}. 
Let {\small $T_{2} := 50\sqrt{2}n\beta_{\xiv}\eta^{-1}\sigma_p^{-1}s^{-1} = \Tilde{\Theta}(\sigma_0n\eta^{-1}s^{-1/2})$}, we call $t \in [T_{1}, T_{2}]$ Stage II.

\begin{lemma}[Sign Alignment Between Query and Key Noise]
\label{lemma:main_a_stage_ii}
Let {\small $T_{2}^{\text{SGN}} := 3\sqrt{2}\beta_{\xiv}\eta^{-1}\sigma_p^{-1}s^{-1}$}.
Then, with probability $1-\delta$, {\small $\sgn(\langle \wv_{Q,s}^{(T_{2}^{\text{SGN}})}, y_i\xiv_i\rangle) = \sgn(\langle \wv_{K,s}^{(T_{2}^{\text{SGN}})}, y_i\xiv_i\rangle)$}.
Specifically:
\textbf{(1)}
If {\small $\sgn(\langle \wv_{Q,s}^{(0)}, y_i\xiv_i\rangle) = \sgn(\langle \wv_{K,s}^{(0)}, y_i\xiv_i\rangle)$}, 
then {\small $\sgn(\langle \wv_{Q,s}^{(T_{2}^{\text{SGN}})}, y_i\xiv_i\rangle) = \sgn(\langle \wv_{Q,s}^{(0)}, y_i\xiv_i\rangle)$}.
\textbf{(2)}
If {\small $\sgn(\langle \wv_{Q,s}^{(0)}, y_i\xiv_i\rangle) = -\sgn(\langle \wv_{K,s}^{(0)}, y_i\xiv_i\rangle)$}, 
then the conditional probability of the event {\small $\{\sgn(\langle \wv_{Q,s}^{(T_{2}^{\text{SGN}})}, y_i\xiv_i\rangle) = j\}$} are at least $1/2 - O(\delta)$~for $j\in\set{\pm1}$.
\end{lemma}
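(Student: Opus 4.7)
The plan is to reduce the coupled evolution of $\langle\wv_{Q,s}^{(t)}, y_i\xiv_i\rangle$ and $\langle\wv_{K,s}^{(t)}, y_i\xiv_i\rangle$ under SignGD to a scalar sign-driven recursion, and then read off the claim from the symmetry of Gaussian initialization. The two inputs are Lemma~\ref{lemma:main_stage_i} (which guarantees that at the start of Stage~II the mean value noise $\langle\vv^{(t)}, y_i\xiv_i\rangle$ is positive and dominates $\langle\vv^{(t)},\muv\rangle$, and that the query/key noise are still essentially at initialization) and the disjoint-support structure for $\muv,\xiv_1,\dots,\xiv_n$ ensured by the sparsity condition $s=\Theta(d^{1/2}n^{-2})$.

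The first step is to derive the one-step update of the noise inner products. At any coordinate $k\in\supp(\xiv_i)$, disjoint supports make the $i$-th summand dominant in $\nabla_{\wv_{Q,s}}L_S$, giving an entry proportional to $\ell_i^{\prime(t)}\,y_i\,\langle\vv^{(t)},y_i\muv-\xiv_i\rangle\,\langle\wv_{K,s}^{(t)},y_i\muv-\xiv_i\rangle\,s_{i,21}^{(t)}s_{i,22}^{(t)}\,\xi_{i,k}$. Lemma~\ref{lemma:main_stage_i} reduces this prefactor in sign to $-\sgn(\langle\wv_{K,s}^{(t)}, y_i\xiv_i\rangle)\cdot\sgn(\xi_{i,k})$, so summing the SignGD step over $k\in\supp(\xiv_i)$ yields
\begin{align*}
\langle\wv_{Q,s}^{(t+1)}-\wv_{Q,s}^{(t)},\,y_i\xiv_i\rangle = \eta\,\norm{\xiv_i}_1\,\sgn\!\left(\langle\wv_{K,s}^{(t)}, y_i\xiv_i\rangle\right),
\end{align*}
with the symmetric identity for the key noise. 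Writing $q_t,k_t$ for the two inner products and $\alpha:=\eta\norm{\xiv_i}_1=\Theta(\eta\sigma_p s)$ (by standard concentration of $\norm{\xiv_i}_1$), the analysis collapses to the scalar recursion $q_{t+1}=q_t+\alpha\sgn(k_t)$, $k_{t+1}=k_t+\alpha\sgn(q_t)$.

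The second step is a direct analysis of this recursion. In case (1), when $\sgn(q_0)=\sgn(k_0)$, an immediate induction shows both signs persist and both magnitudes grow by $\alpha$ each step. In case (2), WLOG $q_0>0>k_0$, so $q$ decreases and $k$ increases by $\alpha$ per step until the smaller of $|q_0|,|k_0|$ crosses zero, after which the other locks both signs onto its direction. Conditional on $\xiv_i$, the values $\langle\wv_{Q,s}^{(0)}, y_i\xiv_i\rangle$ and $\langle\wv_{K,s}^{(0)}, y_i\xiv_i\rangle$ are independent centred Gaussians of equal variance $\sigma_0^2\norm{\xiv_i}^2$, so $|q_0|$ and $|k_0|$ are i.i.d.\ half-normals and $\Pr[|q_0|>|k_0|\mid \sgn(q_0)\neq\sgn(k_0)]=1/2$, giving conditional probability $1/2-O(\delta)$ to each final sign after absorbing the $\delta$ failure probability of the high-probability events. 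The alignment time is bounded by $\max(|q_0|,|k_0|)/\alpha\leq\beta_{\xiv}/\alpha$, and with $\norm{\xiv_i}_1\geq\sigma_p s/\sqrt{2}$ w.h.p.\ this fits inside $T_{2}^{\text{SGN}}=3\sqrt{2}\beta_{\xiv}\eta^{-1}\sigma_p^{-1}s^{-1}$.

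The main obstacle is verifying \emph{persistently} on $[T_1,T_{2}^{\text{SGN}}]$ that the $i$-th summand dominates the gradient at every coordinate of $\supp(\xiv_i)$. This requires a concurrent induction controlling: (a) cross-sample contributions $\langle\wv_{Q,s}^{(t)},\xiv_{i'}\rangle$, $\langle\wv_{K,s}^{(t)},\xiv_{i'}\rangle$ for $i'\neq i$, which stay near their $O(\sigma_0\sigma_p\sqrt{s})$ initialization with at most an $O(t\eta)$ accumulation; (b) the growth of query/key signals $\langle\wv_{Q,s}^{(t)},\muv\rangle,\langle\wv_{K,s}^{(t)},\muv\rangle$, each moving by at most $\eta\norm{\muv}$ per step and hence staying far below $\langle\wv_{K,s}^{(t)},\xiv_i\rangle$ on Stage~II; and (c) the continued dominance of $\langle\vv^{(t)},y_i\xiv_i\rangle$ over $\langle\vv^{(t)},\muv\rangle$, which follows by extending Lemma~\ref{lemma:main_stage_i} across the longer horizon. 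The bounds on $\sigma_p,\sigma_0,n$ in Condition~\ref{cond:main_condition} are precisely what render all three error sources negligible on Stage~II.
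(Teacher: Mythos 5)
Your overall strategy matches the paper's: reduce the coupled query/key noise dynamics to a sign-driven scalar recursion, deduce sign alignment and its timing from that recursion, and read off the probability in case~(2) from the symmetry of the Gaussian initialization. The constant-magnitude SignGD step and the disjoint-support reduction to the $i$-th summand are also exactly the paper's tools. However, there is one substantive gap in the reduction step. The paper's one-step sign formulas are
\begin{align*}
\langle \wv_{Q,s}^{(t+1)}-\wv_{Q,s}^{(t)}, y_i\xiv_i\rangle
&=_{\sgn} \langle \wv^{(t)}_{K,s}, y_{i}\xiv_i\rangle - \langle \wv^{(t)}_{K,s}, \muv\rangle,\\
\langle \wv_{K,s}^{(t+1)}-\wv_{K,s}^{(t)}, y_i\xiv_i\rangle
&=_{\sgn} s_{i,11}^{(t)}s_{i,12}^{(t)}\langle \wv^{(t)}_{Q,s}, \muv\rangle + s_{i,21}^{(t)}s_{i,22}^{(t)}\langle \wv^{(t)}_{Q,s}, y_{i}\xiv_i\rangle,
\end{align*}
so the $\muv$ terms enter with \emph{opposite} signs and, on the key side, with softmax weighting: the two update rules are not symmetric, and the drop from $\sgn(\langle\wv_{K,s}^{(t)},y_i\xiv_i\rangle - \langle\wv_{K,s}^{(t)},\muv\rangle)$ to $\sgn(\langle\wv_{K,s}^{(t)},y_i\xiv_i\rangle)$ is valid only while $|\langle\wv_{K,s}^{(t)},y_i\xiv_i\rangle| \gg |\langle\wv_{K,s}^{(t)},\muv\rangle|$. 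This dominance is guaranteed at $t=0$ by anti-concentration (Lemma~\ref{lemma:network_initialization_query_key_noise_anti_concentration_vs_qk_feature}), but in case~(2) the entire mechanism is that one of the two noise terms \emph{crosses zero}, and at that moment the $\muv$ term is no longer negligible. Your item~(b) ("signals stay far below $\langle\wv_{K,s}^{(t)},\xiv_i\rangle$") does not address this: the issue is the noise becoming small, not the signal becoming large.

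The paper's fix is not to erase the $\muv$ terms but to carry them along: it partitions the $\sgn(q_0)\neq\sgn(k_0)$ event into two sub-cases via Lemma~\ref{lemma:query_and_key_not_too_close}, comparing $-\langle\wv_{K,s}^{(0)},y_i\xiv_i\rangle + \langle\wv_{K,s}^{(0)},\muv\rangle$ against $\frac{\sigma_p s \mp \norm{\muv}}{\sigma_p s \pm \norm{\muv}}\bigl(\langle\wv_{Q,s}^{(0)},y_i\xiv_i\rangle + \langle\wv_{Q,s}^{(0)},\muv\rangle\bigr)$. The slack factor $\frac{\sigma_p s - \norm{\muv}}{\sigma_p s + \norm{\muv}}$ is what guarantees that, throughout the whole transient, the winner of the comparison stays the winner of the \emph{shifted} comparison, so the induction that locks both signs onto one direction actually closes; the middle region where the comparison is too close is shown to have measure $O(nd^{-1/5})$, which is where the $O(\delta)$ slack in the lemma comes from. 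Your half-normal symmetry argument for the $1/2$ probability is fine once this sub-case split is in place (the paper reaches the same number via an odd-function symmetry in $\mu_s:=\langle\wv_{K,s}^{(0)}-\wv_{Q,s}^{(0)},\muv\rangle$ inside Lemma~\ref{lemma:proba_fact_about_X_s,i}), but your proof as written would not let you run the crossing-time induction rigorously. To repair it, replace the bare recursion $q_{t+1}=q_t+\alpha\sgn(k_t)$, $k_{t+1}=k_t+\alpha\sgn(q_t)$ with the $\muv$-shifted version and add the two-sided anti-concentration step that excludes the boundary region.
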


\vspace{-5pt}
Lemma~\ref{lemma:main_a_stage_ii} indicates that all {query and key noise} can be divided into two groups. 
If they initially have the same sign, 
they reinforce each other and grow from initialization.
However, if not, they first both decay toward zero, evolving in opposite directions. 
As they approach zero, the smaller value crosses zero, and the larger one flips direction, aligning their signs and growing in the same direction.


\begin{lemma}[End of Stage II]
\label{lemma:main_b_stage_ii}
Let {\small $
\beta_{\muv} := \max_{s,j,r}\{
|\langle \wv_{Q,s}^{(0)}, \muv\rangle|,
|\langle \wv_{K,s}^{(0)}, \muv\rangle|,
|\langle \wv_{V,j,r}^{(0)}, \muv\rangle|
\}
$}. 
We have:
\textbf{(1)} (Magnitude) 
At~$T_{2}$, for all $s\in[m_k]$ and $i\in[n]$, 
$|\langle \wv^{(t)}_{Q,s}, y_i\xiv_i\rangle|, |\langle \wv^{(t)}_{K,s}, y_i\xiv_i\rangle| = \Theta(t\eta\sigma_ps)$.
\textbf{(2)} (Concentration of softmax)
$s_{i,11}^{(t)} = 1/2 \pm o(1)$, $s_{i,21}^{(t)} = 1/2 \pm o(1)$ for all $i\in[n]$ and $t \leq T_{2}$.
\textbf{(3)} (Sum of noise)
With probability at least $1-\delta$, for all $s\in[m_k]$, 
$|\sum_{i=1}^{n} \langle \wv_{Q,s}^{(T_2)}, y_i\xiv_i\rangle|,
|\sum_{i=1}^{n} \langle \wv_{K,s}^{(T_2)}, y_i\xiv_i\rangle|
\geq 2n\beta_{\muv}$.
\textbf{(4)} For all $s\in[m_k]$, and $t \leq T_{2}$,
$\langle \wv_{Q,s}^{(t)}, \muv\rangle = \langle \wv_{Q,s}^{(0)}, \muv\rangle \cdot (1\pm o(1))$,
$\langle \wv_{K,s}^{(t)}, \muv\rangle = \langle \wv_{K,s}^{(0)}, \muv\rangle \cdot (1\pm o(1))$.
\end{lemma}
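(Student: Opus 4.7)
The plan is to prove the four claims in the order (4), (2), (1), (3), each building on the previous. The guiding observation from the end of Section~\ref{sec:main_characterization} is that, because noise vectors have disjoint sparse supports and $\muv$ is orthogonal to all of them, every per-step update of $\langle \wv_{Q,s}, y_i\xiv_i\rangle$ and $\langle \wv_{K,s}, y_i\xiv_i\rangle$ has \emph{fixed} magnitude $\eta\|\xiv_i\|_1 = \Theta(\eta\sigma_p s)$, while every per-step update of $\langle \wv_{Q,s}, \muv\rangle$ and $\langle \wv_{K,s}, \muv\rangle$ has magnitude $\eta\|\muv\|$. Thus the dynamics of each inner product reduce to tracking its per-step sign, and the entire lemma is really a statement about sign patterns rather than magnitudes.

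For Part (4), the cumulative drift over $T_2 = \Theta(n\beta_{\xiv}\eta^{-1}\sigma_p^{-1}s^{-1})$ iterations is at most $T_2\eta\|\muv\| = \Theta(n\sigma_0 s^{-1/2})$, which is $o(\sigma_0) = o(\beta_\muv)$ by Condition~\ref{cond:main_condition} (using $s = \Theta(d^{1/2}n^{-2})$ with $d = \Omega(\poly(n))$). Gaussian anti-concentration together with a union bound over $s$ ensures $|\langle\wv_{Q,s}^{(0)},\muv\rangle|$ is $\Theta(\sigma_0)$ up to polylog factors uniformly in $s$, so the additive drift is $o(1)$ relative to the initialization. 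Part (4) then unlocks Part (2): writing $z_{21}-z_{22} = \sum_s \langle\wv_{Q,s}, \xiv_i\rangle(y_i\langle\wv_{K,s},\muv\rangle - \langle\wv_{K,s},\xiv_i\rangle)$ and using $|\langle\wv_{Q,s}, \xiv_i\rangle|, |\langle\wv_{K,s}, \xiv_i\rangle| \le O(n\beta_{\xiv})$ from the per-step update bound gives $|z_{21}-z_{22}| = O(m_k n^2\beta_{\xiv}^2)$; analogously $|z_{11}-z_{12}| = O(m_k n\beta_\muv\beta_{\xiv})$. Substituting $\beta_{\xiv} = \Tilde{\Theta}(\sigma_0\sigma_p\sqrt{s})$ and $\sigma_0 = o(\sigma_p^{-1}s^{-1}m_k^{-1/2})$ makes both quantities $o(1)$, so the Lipschitz continuity of softmax at $(z,z)$ yields $s_{i,11}^{(t)}, s_{i,21}^{(t)} = 1/2 \pm o(1)$ throughout Stage~II.

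For Part (1), after $T_2^{\text{SGN}}$ Lemma~\ref{lemma:main_a_stage_ii} gives $\sgn(\langle\wv_{Q,s}, y_i\xiv_i\rangle) = \sgn(\langle\wv_{K,s}, y_i\xiv_i\rangle)$. Because noise supports are disjoint, only sample $i$ contributes to the gradient on $\mathrm{supp}(\xiv_i)$, and projecting gives
\[
\langle \wv_{Q,s}^{(t+1)} - \wv_{Q,s}^{(t)}, y_i\xiv_i\rangle = -\eta\|\xiv_i\|_1 \cdot \sgn\bigl(\ell'_i\, y_i\, \langle\vv,y_i\muv - \xiv_i\rangle\, \langle\wv_{K,s},y_i\muv - \xiv_i\rangle\, s_{i,21}s_{i,22}\bigr).
\]
Here $\sgn(\ell'_i)=-1$ and $s_{i,21}s_{i,22}>0$; Lemma~\ref{lemma:main_stage_i}(1)(2) gives $\sgn(\langle\vv, y_i\muv - \xiv_i\rangle) = -y_i$ (mean value noise dominates mean value signals); and because $\beta_{\xiv} \gg \beta_\muv$ (since $\sigma_p\sqrt{s}\gg 1$ under Condition~\ref{cond:main_condition}) the key-noise term dominates in $\langle\wv_{K,s}, y_i\muv - \xiv_i\rangle$, so its sign is $-y_i\sgn(\langle\wv_{K,s}, y_i\xiv_i\rangle)$. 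Multiplying all five signs and invoking the alignment between query and key noise shows the update direction equals $\sgn(\langle\wv_{Q,s}^{(t)}, y_i\xiv_i\rangle)$, so the magnitude grows monotonically at rate $\Theta(\eta\sigma_p s)$ per step. Integrating from $T_2^{\text{SGN}} = \Theta(T_2/n)$ yields $|\langle\wv_{Q,s}^{(T_2)}, y_i\xiv_i\rangle| = \Theta(T_2\eta\sigma_p s)$; the symmetric argument handles $\wv_{K,s}$.

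For Part (3), by Part (1) and Lemma~\ref{lemma:main_a_stage_ii}, for each $(s,i)$ the quantity $\langle\wv_{Q,s}^{(T_2)}, y_i\xiv_i\rangle = \epsilon_{s,i}\cdot \Theta(n\beta_{\xiv})$, where $\epsilon_{s,i}\in\{\pm1\}$ is the sign at $T_2^{\text{SGN}}$ — either a deterministic function of initial signs (case 1 of Lemma~\ref{lemma:main_a_stage_ii}) or an approximately fair Bernoulli (case 2). Conditioning on the initialization and partitioning the samples into these two cases, within each case the signs $\{\epsilon_{s,i}\}_i$ are approximately independent and symmetric, so a Berry--Esseen / Paley--Zygmund anti-concentration inequality together with a union bound over $s \in [m_k]$ gives $|\sum_i \epsilon_{s,i}| \ge c\sqrt{n}$ with probability $\ge 1-\delta$. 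Multiplying by $\Theta(n\beta_{\xiv})$ yields $|\sum_i \langle\wv_{Q,s}^{(T_2)}, y_i\xiv_i\rangle| = \Omega(n^{3/2}\beta_{\xiv}) \gg n\beta_\muv$, using the inequality $\sqrt{n}\beta_{\xiv} \gg \beta_\muv$ implied by Condition~\ref{cond:main_condition}. The main obstacle is the inductive maintenance required for Part (1): the five-sign dominance argument relies on the softmax staying near $1/2$ (Part (2)), the mean value noise remaining positive (Lemma~\ref{lemma:main_stage_i}), key noise dominating key signal, and query--key sign alignment, and these must all be simultaneously bootstrapped over the entire interval $[T_2^{\text{SGN}}, T_2]$ rather than verified only at its endpoints — the bookkeeping of a coupled induction on $t$ is the most delicate component of the proof.
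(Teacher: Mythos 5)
Your decomposition into the four parts and the core observation — constant-magnitude per-step updates force a sign analysis — matches the paper. Parts (1) and (2) are in the right spirit, though the paper's version of Part (1) needs a more careful case split (aligned vs.\ misaligned neurons, with a longer waiting time $12T_2'$ before the linear growth estimate kicks in for the misaligned ones) that you elide into "after $T_2^{\text{SGN}}$". The coupled-induction concern you raise at the end is exactly the crux, and the paper discharges it via an explicit system of inductive claims.

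There are two substantive gaps. In Part (4), you assert that $|\langle\wv_{Q,s}^{(0)},\muv\rangle| = \Theta(\sigma_0)$ uniformly over $s\in[m_k]$ "up to polylog factors" by Gaussian anti-concentration. That is not achievable: over $m_k$ i.i.d.\ Gaussians the uniform lower bound degrades as roughly $\sigma_0\delta/m_k$, not $\sigma_0/\polylog$. The paper instead proves (Lemma~\ref{lemma:network-initialization-query-key-feature-anti-concentration}) a lower bound of $\Omega(\sigma_0 n s^{-1/3}\|\muv\|)$, and the conclusion survives because the drift $T_2\eta\|\muv\| = \Tilde\Theta(\sigma_0 n s^{-1/2})$ is smaller by a factor $s^{-1/6}=o(1)$. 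Your calculation reaches the right conclusion but only because the actual drift is overwhelmingly small, not because the stated anti-concentration bound holds.

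The more serious gap is Part (3). You invoke Berry--Esseen/Paley--Zygmund to get $|\sum_i\epsilon_{s,i}|\geq c\sqrt{n}$ with probability $\geq 1-\delta$, but a normal random variable has \emph{constant} probability $\approx c\sqrt{2/\pi}$ of landing in $[-c\sqrt{n},c\sqrt{n}]$ after normalization, so this fails for any fixed constant $c$ and small $\delta$; you would need $c=\Theta(\delta/m_k)$ to survive the union bound, which you do not account for. Moreover, the claim that the $\epsilon_{s,i}$ are "approximately independent and symmetric" is itself delicate — the paper shows (Lemma~\ref{lemma:proba_fact_about_X_s,i}) that the $X_{s,i}$ are independent \emph{only conditionally} on the key-minus-query feature inner product $\mu_s$, with conditional success probability $p_\mu$ that is generally far from $1/2$. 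The paper's actual argument is far more elementary and avoids CLT anti-concentration entirely: since $\sum_i\epsilon_{s,i}$ is an integer, $\sum_i\epsilon_{s,i}\neq 0$ already forces $|\sum_i\epsilon_{s,i}|\geq 1$, and $\Pb[\sum_i X_{s,i}=0]\leq\binom{n}{n/2}2^{-n}=O(n^{-1/2})$. Combined with Lemma~\ref{lemma:n-noise-vs-n+1-noise-magnitude} (a $\pm 1$ count gap yields a $\Omega(\sigma_p s)$ weighted gap because the $\|\xiv_i\|_1$ concentrate), each step contributes $\geq\eta\sigma_p s/\sqrt{2}$ to the partial sum, and the horizon $T_2=50nT_2'$ is deliberately chosen long enough that this minimal per-step increment overcomes both the initial slack and the target $2n\beta_\muv$. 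Your $\sqrt{n}$ route can be repaired, but it is neither necessary nor what the paper does.
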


\vspace{-5pt}
Lemma~\ref{lemma:main_b_stage_ii} \textbf{(1)} shows that 
{query \& key noise} become linear with $t$ at $T_2$.
However, according to Lemma~\ref{lemma:main_b_stage_ii} \textbf{(2)}, the {softmax outputs} of attention layer remain concentrated around $1/2$ at $T_2$.
Lemma~\ref{lemma:main_b_stage_ii} \textbf{(3)} estimates the magnitude of the {sum of the query \& key noise}, i.e., {\small $\sum_{i=1}^{n}\langle \wv^{(t)}_{Q,s}, y_i\xiv_i\rangle$} and {\small $\sum_{i=1}^{n}\langle \wv^{(t)}_{K,s}, y_i\xiv_i\rangle$} at $T_2$, 
which plays a role in Stage III.
Lastly, Lemma~\ref{lemma:main_b_stage_ii} {\bf(4)} states that the {query \& key signals} remain near their initialization before $T_2$.
The proof of this section is in Appendix~\ref{sec:proof_stage_ii}.



\vspace{-7pt}
\subsection{Stage III. Majority Voting Determines the Sign of Query and Key Signals.}
\vspace{-7pt}

In {\bf Stage III}, we focus on {\em query \& key signals}, i.e., {\small $\langle\wv_{Q,s}^{(t)}, \muv\rangle$} and {\small $\langle \wv_{K,s}^{(t)}, \muv\rangle$}. 
Recall {\small $
\beta_{\muv} := \max_{s,j,r}\{
|\langle \wv_{Q,s}^{(0)}, \muv\rangle|,
|\langle \wv_{K,s}^{(0)}, \muv\rangle|,
|\langle \wv_{V,j,r}^{(0)}, \muv\rangle|
\}
$}.
Let {\small $T_{3} := 3\beta_{\muv}\eta^{-1}\norm{\muv}^{-1} = \Tilde{\Theta}(\sigma_0\eta^{-1})$}, 
we call $t \in [T_{2}, T_{3}]$ Stage III.

\begin{lemma}[Stage III]
\label{lemma:main_stage_iii}
We have:
\textbf{(1)} (Noise determines the sign of signals by majority voting)
For all $s\in[m_k]$ and $t\in[T_2, T_{3}]$,
{\small $\sgn(\langle \wv_{Q,s}^{(t+1)}, \muv\rangle - \langle \wv_{Q,s}^{(t)}, \muv\rangle)
= \sgn(\sum_{i\in[n]}\langle \wv_{K,s}^{(t)}, y_i\xiv\rangle)$}, 
and 
{\small $\sgn(\langle \wv_{K,s}^{(t+1)}, \muv\rangle - \langle \wv_{K,s}^{(t)}, \muv\rangle)
= -\sgn(\sum_{i\in[n]}\langle \wv_{Q,s}^{(t)}, y_i\xiv\rangle)$}.
\textbf{(2)} (Magnitude)
At $T_3$, for all $s\in[m_k]$, 
{\small $\langle\wv_{Q,s}^{(T_3)}, \muv\rangle = \sgn(\sum_{i} \langle \wv_{Q,s}^{(T_3)}, y_i\xiv_i\rangle) \cdot\Theta(T_3\eta\norm{\muv})$},
and 
{\small $\langle\wv_{K,s}^{(T_3)}, \muv\rangle = \sgn(-\sum_{i} \langle \wv_{Q,s}^{(T_3)}, y_i\xiv_i\rangle)\cdot\Theta(T_3\eta\norm{\muv})$}.
\textbf{(3)} (Concentration of softmax)
$s_{i,11}^{(t)} = 1/2 \pm o(1)$, $s_{i,21}^{(t)} = 1/2 \pm o(1)$ for all $i\in[n]$ and $t\in[T_2, T_3]$.
\end{lemma}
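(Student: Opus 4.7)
My plan is to leverage the setup inherited from Lemma~\ref{lemma:main_b_stage_ii} and push it forward through the interval $[T_2, T_3]$, by tracking only the signs of the directional gradients and showing they remain stable. The starting point is that at $t = T_2$ we already know: the mean value noise $\langle \vv^{(t)}, y_i\xiv_i\rangle = \Theta(t\eta\sigma_ps) > 0$ with $\langle \vv^{(t)},\muv\rangle = o(\langle \vv^{(t)}, y_i\xiv_i\rangle)$; the query/key noise are sign-aligned within each neuron with $|\langle \wv_{Q,s}^{(t)}, y_i\xiv_i\rangle|, |\langle\wv_{K,s}^{(t)}, y_i\xiv_i\rangle| = \Theta(t\eta\sigma_ps)$; the query/key signals are still $\tilde O(\sigma_0)$; and $|\sum_i \langle \wv_{K,s}^{(T_2)}, y_i\xiv_i\rangle|, |\sum_i \langle \wv_{Q,s}^{(T_2)}, y_i\xiv_i\rangle| \geq 2n\beta_{\muv}$. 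I will run an induction over $t \in [T_2, T_3]$ whose hypothesis includes all of these relative orderings, together with the softmax concentration $s_{i,11}^{(t)}, s_{i,21}^{(t)} = 1/2 \pm o(1)$.

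The softmax concentration (part (3)) will be the easiest piece to maintain: the four pre-softmax scores $z_{la} = \sum_s \langle \wv_{Q,s}, \xv^{(l)}\rangle \langle \wv_{K,s}, \xv^{(a)}\rangle$ can be upper-bounded using the current magnitudes. Even at the end of Stage III the largest of them, $|z_{22}|$, is $\tilde O(m_k (T_3\eta\sigma_p s)^2) = \tilde O(m_k\sigma_0^2\sigma_p^2 s^2)$, which is $o(1)$ by Condition~\ref{cond:main_condition}'s bound $\sigma_0 = o(\sigma_p^{-1}s^{-1}m_k^{-1/2})$; the cross terms $z_{11},z_{12},z_{21}$ are no larger. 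Hence $s_{11}s_{12}, s_{21}s_{22} = 1/4 \pm o(1)$ throughout.

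For the majority-voting claim (part (1)), I will expand $\langle \nabla_{\wv_{Q,s}} L_S(\Wv^{(t)}),\muv\rangle$ using the gradient formulas from Appendix~\ref{sec:gradient} and the orthogonality $\langle \muv,\xiv_i\rangle = 0$. The resulting expression is
\[
\langle \nabla_{\wv_{Q,s}} L_S,\muv\rangle = \frac{\norm{\muv}^2}{n}\sum_{i} \ell_i' y_i^2 \, s_{i,11}^{(t)}s_{i,12}^{(t)} \langle \vv^{(t)}, y_i\muv - \xiv_i\rangle \langle \wv_{K,s}^{(t)}, y_i\muv - \xiv_i\rangle.
\]
Substituting $\langle \vv^{(t)}, y_i\muv - \xiv_i\rangle = -y_i\langle \vv^{(t)}, y_i\xiv_i\rangle(1 \pm o(1))$ and $\langle \wv_{K,s}^{(t)}, y_i\muv - \xiv_i\rangle = -y_i\langle \wv_{K,s}^{(t)}, y_i\xiv_i\rangle(1\pm o(1))$ (both obtained by showing the signal components are negligible compared to the noise components using the induction hypothesis and Lemma~\ref{lemma:main_b_stage_ii}(4)), I get
\[
\langle \nabla_{\wv_{Q,s}} L_S,\muv\rangle = \frac{\norm{\muv}^2}{n}(1 \pm o(1))\sum_{i} \ell_i' s_{i,11}^{(t)}s_{i,12}^{(t)} \langle \vv^{(t)}, y_i\xiv_i\rangle \langle \wv_{K,s}^{(t)}, y_i\xiv_i\rangle.
\]
Because $f(\Wv^{(t)}, \Xv_i)$ stays near $0$ throughout Stage III (so $-\ell_i'\in(1/3,1)$), $s_{i,11}^{(t)}s_{i,12}^{(t)} = 1/4 \pm o(1)$, and $\langle \vv^{(t)}, y_i\xiv_i\rangle = \Theta(t\eta\sigma_ps)$ uniformly in $i$, all the per-sample weights $-\ell_i' s_{i,11}^{(t)}s_{i,12}^{(t)}\langle \vv^{(t)}, y_i\xiv_i\rangle$ are positive and of the same order; the sign of the weighted sum therefore matches the sign of the unweighted sum $\sum_i \langle \wv_{K,s}^{(t)}, y_i\xiv_i\rangle$. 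Applying $\wv_{Q,s}^{(t+1)} - \wv_{Q,s}^{(t)} = -\eta\sgn(\nabla_{\wv_{Q,s}} L_S)$ and using that $\muv$ is a $1$-sparse indicator yields the first line of (1); the symmetric calculation for the key gradient (where $\langle \wv_{Q,s}^{(t)}, y_i\muv-\xiv_i\rangle$ supplies the extra minus sign) gives the second.

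The main obstacle I expect is closing the induction: I must ensure (a) that $\sum_i \langle \wv_{K,s}^{(t)}, y_i\xiv_i\rangle$ does not change sign for any $s$ over all of $[T_2, T_3]$, and (b) that the value, query, and key noise continue to grow in the same signs established in Stage II so the negligibility substitutions above remain valid. For (b), I will repeat the Stage-II-style calculation of $\langle \nabla_{\wv_{Q,s}} L_S, y_i\xiv_i\rangle$ and $\langle \nabla_{\wv_{K,s}} L_S, y_i\xiv_i\rangle$, using the sparsity argument (disjoint supports of the $\xiv_i$ with high probability) so that SignGD's third novelty applies and each update satisfies $|\langle \wv_{Q,s}^{(t+1)}-\wv_{Q,s}^{(t)}, y_i\xiv_i\rangle| = \eta\norm{\xiv_i}_1 = \Theta(\eta\sigma_p s)$ with sign matching $\sgn(\langle \wv_{K,s}^{(t)}, y_i\xiv_i\rangle)$; since these signs agree at $T_2$ (sign alignment), the noises keep growing monotonically. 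For (a), because each sign-aligned noise keeps its sign and grows by $\Theta(\eta\sigma_p s)$ per step, the sum $|\sum_i\langle \wv_{K,s}^{(t)}, y_i\xiv_i\rangle|$ is nondecreasing (up to lower-order terms) and in particular stays bounded below by $2n\beta_{\muv}$; this locks in the majority-vote direction. Once the sign of the update to $\langle\wv_{Q,s}^{(t)},\muv\rangle$ is shown to be fixed over all $t\in[T_2, T_3]$, the magnitude claim (2) follows immediately: $\langle \wv_{Q,s}^{(T_3)},\muv\rangle = \langle \wv_{Q,s}^{(T_2)},\muv\rangle \pm (T_3-T_2)\eta\norm{\muv}$ and since $T_3\eta\norm{\muv} = 3\beta_{\muv}$ dominates the $\tilde O(\sigma_0) = O(\beta_\muv)$ initial value, $\langle \wv_{Q,s}^{(T_3)},\muv\rangle = \sgn(\sum_i\langle \wv_{K,s}^{(T_3)}, y_i\xiv_i\rangle)\cdot \Theta(T_3\eta\norm{\muv})$, with the analogous expression for the key signals.
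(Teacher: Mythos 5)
Your overall strategy is the same as the paper's: an induction over $[T_2, T_3]$ that tracks the signs of the directional SignGD updates, expands $\langle\nabla_{\wv_{Q,s}}L_S,\muv\rangle$ via the explicit gradient formula, drops the negligible signal terms, and reduces the sign of the query-signal update to the sign of a weighted sum of key noise. Part (3) (softmax concentration via an $o(1)$ bound on the pre-softmax scores) and part (2) (magnitude bounds via the constant-size update once the update direction is fixed) also mirror the paper's Lemma~\ref{lemma:softmax_concentrate_on_1/2}, Lemma~\ref{lemma:loss_prime_concentrate_on_1/2}, Lemma~\ref{lemma:dynamics_of_query_key_feature_from_T_2_to_T_3^+}, and Lemma~\ref{lemma:upper_and_lower_bound_for_query_key_feature_for_T_3_to_T_3^+}.

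There is, however, a genuine gap in the central step for part (1). You justify
\[
\sgn\Bigl(\textstyle\sum_i \bigl(-\ell_i^{\prime}\bigr)\, s_{i,11}^{(t)}s_{i,12}^{(t)}\,\langle\vv^{(t)},y_i\xiv_i\rangle\,\langle\wv_{K,s}^{(t)},y_i\xiv_i\rangle\Bigr)
\;=\;
\sgn\Bigl(\textstyle\sum_i\langle\wv_{K,s}^{(t)},y_i\xiv_i\rangle\Bigr)
\]
by arguing that the per-sample weights are ``positive and of the same order,'' citing $-\ell_i'\in(1/3,1)$ and $\langle\vv^{(t)},y_i\xiv_i\rangle=\Theta(t\eta\sigma_ps)$ uniformly in $i$. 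Being of the same order (within constant factors) is not enough. The unweighted sum $\sum_i\langle\wv_{K,s}^{(t)},y_i\xiv_i\rangle$ carries a small relative margin: the individual terms are each $\Theta(t\eta\sigma_ps)$ and about half of them are positive and half negative, so $|\sum_i\langle\wv_{K,s}^{(t)},y_i\xiv_i\rangle|$ can be as small as $\Theta(1/n)$ of $\sum_i|\langle\wv_{K,s}^{(t)},y_i\xiv_i\rangle|$. A factor-of-three spread in the weights (which $-\ell_i'\in(1/3,1)$ permits) can therefore flip the sign. The paper avoids this by invoking the much tighter $(1\pm o(1))$ concentration of each weight factor: Lemma~\ref{lemma:magnitude_of_mean_value_noise_after_T_2} gives $\langle\vv^{(t)},y_i\xiv_i\rangle=2\sqrt{2/\pi}\,t\eta\sigma_ps\bigl(1+\tilde O(s^{-1/2})\bigr)$ (a sample-uniform estimate, not merely a $\Theta$ bound), and Lemmas~\ref{lemma:softmax_concentrate_on_1/2} and~\ref{lemma:loss_prime_concentrate_on_1/2} give $s_{i,11}^{(t)}s_{i,12}^{(t)}=1/4\pm o(1)$ and $-\ell_i^{\prime(t)}=1/2\pm o(1)$. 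To close your argument you need to replace both of your constant-factor claims with these $(1\pm o(1))$ estimates and then explicitly compare the resulting perturbation, which is at most $o(1)\cdot\sum_i|\langle\wv_{K,s}^{(t)},y_i\xiv_i\rangle|$, against the $\Omega(1/n)$ margin of the unweighted sum; this is what the paper's chain of $=_{\sgn}$ relations in the proof of Lemma~\ref{lemma:dynamics_of_query_key_feature_from_T_2_to_T_3^+} is implicitly appealing to.
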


\vspace{-5pt}
Lemma~\ref{lemma:main_stage_iii} {\bf (1)} states that the update direction during Stage III and the final sign of the {query \& key signals} are determined by {sum of key \& query noise}, respectively, which remains dominant in their gradients throughout this stage.
Since {query \& key noise} have roughly the same magnitude for all $i\in[n]$, the sign dictation can be viewed as a majority voting mechanism.
Lemma~\ref{lemma:main_stage_iii} {\bf (2)} shows that {query \& key signals} become linear with $t$ at $T_3$.
Although much smaller than {query \& key noise}, they will become dominant in Stage IV when {noise-signal softmax outputs} approach zero.
Lemma~\ref{lemma:main_stage_iii} {\bf (3)} also states that the {softmax outputs} remain around $1/2$ at this point, indicating that Stages I-III can occur within a short time in practice.
Additionally, all {query \& key noise} continues to grow throughout Stage III. The proof of this section is in Appendix~\ref{sec:proof_stage_iii}.

\vspace{-7pt}
\subsection{Stage IV. 
Query and Key Noise Align Their Sign to Signals by
by the Fast Decay of Noise-Signal Softmax Outputs}
\vspace{-7pt}

In {\bf Stage IV}, we focus on the {\em noise-signal softmax outputs} $s_{i,21}^{(t)}$ and 
all {query \& key signals and noise} that could be affected by $s_{i,21}^{(t)}$. 
Let {\small $T_{4} := C_3\log(
C_3\sigma_ps\norm{\muv}^{-1})
\eta^{-1}m_k^{-1/2}\sigma_p^{-1}s^{-1}
    = \Tilde{\Theta}(m_k^{-1/2}\eta^{-1}\sigma_p^{-1}s^{-1})
    $}, 
where $C_3 = \Theta(1)$ is a large constant,
we call $t \in [T_{3}, T_{4}]$ Stage IV.


\begin{lemma}[Exponentially Fast Decay of Noise-Signal Softmax Outputs]
\label{lemma:main_softmax_decay_stage_iv}
Let $T_{4}^{-} \geq T_{3}$ be the last time such that for all $t\in[T_{3}, T_{4}^{-}]$, $s\in[m_k]$ and $i\in[n]$, 
{\small $\abs{\sum_{i=1}^{n} s_{i,21}^{(t)}s_{i,22}^{(t)} \langle \wv^{(t)}_{Q,s}, y_i\xiv_i\rangle}\geq
\frac{1}{2}n \abs{\langle \wv^{(t)}_{Q,s}, \muv\rangle}$}
and {\small $s_{i,21}^{(t)}s_{i,22}^{(t)}\abs{\langle \wv^{(t)}_{Q,s}, y_i\xiv_i\rangle}
    \geq
    2s_{i,11}^{(t)}s_{i,12}^{(t)} \abs{\langle \wv^{(t)}_{Q,s}, \muv\rangle}$}.
Then, we have
$T_{4}^{-} = \Tilde{\Theta}(\eta^{-1}m_{k}^{-1/2}\sigma_p^{-1}s^{-1})$, and for all $i\in[n]$:
\textbf{(1)} 
$s_{i,21}^{(t)} = \exp(-O(m_kt^2\eta^2\sigma_p^2s^2))$ for $t\in[T_3, T_{4}^{-}]$ and 
$s_{i,21}^{(T_{4}^{-})} = \exp(O(\log(n\norm{\muv}/\sigma_ps))) = o(1)$.
\textbf{(2)} 
$s_{i,11}^{(t)} = 1/2 \pm o(1)$, for $t\in[T_3, T_{4}^{-}]$.
\end{lemma}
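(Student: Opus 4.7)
The plan is a bootstrap induction over $t \in [T_{3}, T_{4}^{-}]$ maintaining three invariants: (a) the query and key noise $\langle\wv_{Q,s}^{(t)}, y_i\xiv_i\rangle$ and $\langle\wv_{K,s}^{(t)}, y_i\xiv_i\rangle$ retain the signs established at the end of Stage~II (Lemma~\ref{lemma:main_a_stage_ii}) and continue to grow linearly at rate $\eta\sigma_p s$, so that by Lemma~\ref{lemma:main_b_stage_ii}(1) their magnitudes are $\Theta(t\eta\sigma_p s)$; (b) the query and key signals $\langle\wv_{Q,s}^{(t)}, \muv\rangle$ and $\langle\wv_{K,s}^{(t)}, \muv\rangle$ preserve the opposite signs fixed by the majority vote in Stage~III (Lemma~\ref{lemma:main_stage_iii}) and grow at rate $\eta\|\muv\|$; and (c) $s_{i,11}^{(t)} = 1/2 \pm o(1)$ while $s_{i,21}^{(t)}$ obeys the claimed exponential bound. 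The two inequalities defining $T_{4}^{-}$ are precisely what we need to guarantee that, for every query/key coordinate $s$ and every sample $i$, the noise-signal interaction term in the sign-gradient dominates the signal term. Combined with the constant per-step update magnitude of sign-GD noted in the Technical Novelties, this ensures (a) and (b) propagate one step at a time.

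Granted the invariants, I estimate the logit gap $z_{i,22}^{(t)} - z_{i,21}^{(t)} = \sum_{s\in[m_k]}\langle\wv_{Q,s}^{(t)},\xiv_i\rangle\langle\wv_{K,s}^{(t)},\xiv_i - y_i\muv\rangle$. The Stage~II sign alignment forces every summand of $z_{i,22}^{(t)} = \sum_s \langle\wv_{Q,s}^{(t)},\xiv_i\rangle\langle\wv_{K,s}^{(t)},\xiv_i\rangle$ to be a product of two equal-sign factors, hence strictly positive of order $t^2\eta^2\sigma_p^2 s^2$, yielding $z_{i,22}^{(t)} = \Theta(m_k t^2\eta^2\sigma_p^2 s^2)$. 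The cross term $z_{i,21}^{(t)} = \sum_s \langle\wv_{Q,s}^{(t)}, y_i\xiv_i\rangle\langle\wv_{K,s}^{(t)},\muv\rangle$ is a sum whose signs across $s$ are effectively random (the Stage~II noise sign times the Stage~III signal sign), so Hoeffding gives $|z_{i,21}^{(t)}| = \tilde O(\sqrt{m_k}\, t^2\eta^2\sigma_p s\|\muv\|)$, strictly of lower order because Condition~\ref{cond:main_condition} implies $\sigma_p s \gg \sqrt{m_k}\|\muv\|$. Hence $z_{i,22}^{(t)}-z_{i,21}^{(t)} = \Theta(m_k t^2\eta^2\sigma_p^2 s^2)$, and since $s_{i,21}^{(t)} = 1/\bigl(1+\exp(z_{i,22}^{(t)}-z_{i,21}^{(t)})\bigr)$ the exponential decay $s_{i,21}^{(t)} = \exp(-\Theta(m_k t^2\eta^2\sigma_p^2 s^2))$ drops out.

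An analogous but simpler computation controls $z_{i,11}^{(t)}-z_{i,12}^{(t)}$: each term carries exactly one factor of a query or key signal of size $O(t\eta\|\muv\|)$, so both $z_{i,11}^{(t)}$ and $z_{i,12}^{(t)}$ are $o(1)$ throughout $[T_{3}, T_{4}^{-}]$ under Condition~\ref{cond:main_condition}, yielding $s_{i,11}^{(t)} = 1/2 \pm o(1)$. The stopping time $T_{4}^{-}$ is then identified as the first $t$ at which either of its defining inequalities saturates; substituting the exponential bound for $s_{i,21}^{(t)}$, using $s_{i,22}^{(t)} \to 1$, and the $\sqrt{n}$-type random-sign concentration of $\sum_{i'}\sgn(\langle\wv_{Q,s}^{(t)}, y_{i'}\xiv_{i'}\rangle)$ on the left-hand side, the condition saturates when $m_k (T_{4}^{-})^2 \eta^2\sigma_p^2 s^2 \asymp \log(\sigma_p s/(n\|\muv\|))$, giving $T_{4}^{-} = \Tilde{\Theta}(\eta^{-1}m_{k}^{-1/2}\sigma_p^{-1}s^{-1})$ and $s_{i,21}^{(T_{4}^{-})} = \exp(O(\log(n\|\muv\|/\sigma_p s)))$ as claimed.

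The main obstacle is closing the bootstrap loop: the decay of $s_{i,21}^{(t)}$ is what keeps the noise-driven sign-gradient dominant, which in turn sustains the linear growth of the noise, which in turn accelerates the decay. To break this circular dependency I need uniform-in-$(s,i)$ versions of all the concentration bounds above -- in particular, $z_{i,22}^{(t)}$ must concentrate across $i$ so that no sample's softmax output lags behind and triggers the second condition of $T_{4}^{-}$ prematurely, and the random-sign sums defining $z_{i,21}^{(t)}$ and the first condition of $T_{4}^{-}$ must concentrate simultaneously over all $m_k$ neurons via a high-probability union bound. Once these uniform estimates are in place, the exponential decay and the order of $T_{4}^{-}$ follow by direct computation.
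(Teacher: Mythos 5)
Your proposal tracks the paper's route closely: the same bootstrap induction over $t \in [T_3, T_4^-]$ maintaining sign/magnitude invariants for query and key noise and signals, the same logit-gap estimate $z_{i,22}^{(t)} - z_{i,21}^{(t)} = \Theta(m_k t^2\eta^2\sigma_p^2 s^2)$ driven by the Stage~II sign alignment making every summand of $z_{i,22}^{(t)}$ positive and of the same order, the same $\norm{\xiv_i}_1$-concentration argument to make $z_{i,22}^{(t)}$ uniform across samples so no softmax output lags, and the same saturation argument to pin down $T_4^-$ up to logarithmic factors.

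The one genuine deviation is your treatment of the cross term $z_{i,21}^{(t)} = \sum_s \langle\wv_{Q,s}^{(t)}, y_i\xiv_i\rangle\langle\wv_{K,s}^{(t)},\muv\rangle$. You invoke a Hoeffding-type cancellation over the $m_k$ neurons, arguing the signs are effectively Rademacher because each neuron's contribution depends on whether sample $i$ lands in that neuron's majority. That concentration is valid (it is the same phenomenon the paper quantifies via $|E_i^{(T_3)}| \approx m_k/2$ in its Lemma on $E_i^{(T_3)}$), and it does give the stronger $\tilde O(\sqrt{m_k})$ factor. But the paper does not need it: during $[T_3, T_4^-]$ the query/key signals are smaller than the noise by a factor $\norm{\muv}/(\sigma_p s) = o(1)$, so the crude triangle-inequality bound $|z_{i,21}^{(t)}| \leq m_k \cdot O(t\eta\sigma_p s) \cdot O(t\eta\norm{\muv}\sigma_p^{-1}s^{-1} \cdot \sigma_p s / \norm{\muv})$ already yields $o(1)$ absolutely, so the numerator $\exp(z_{i,21}^{(t)}) = 1 \pm o(1)$ without any sign cancellation. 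Your route works but introduces an unneeded dependence on independence of the sign pattern across neurons. Also a small imprecision: for $z_{i,11}^{(t)}$ both factors are signals, not ``exactly one,'' though this only makes that term smaller and does not affect the conclusion that $s_{i,11}^{(t)} = 1/2 \pm o(1)$.
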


\vspace{-5pt}
Lemma~\ref{lemma:main_softmax_decay_stage_iv} states that the {noise-signal softmax outputs} decay exponentially and approach zero during {\small $[T_3, T_{4}^{-}]$}, while other softmax outputs stay around {\small $1/2$}.
All signals and noise continue to grow as in Stage III until just before {\small $T_{4}^{-}$}. 
Shortly after {\small $T_{4}^{-}$}, the final sign alignment of query and key noise begins.


\begin{lemma}[Sign Alignment of Key Noise] 
\label{lemma:main_key_align_stage_iv}
There exists a small constant $\theta_c \in (0, 1)$ such that for all $s\in[m_k]$ and $i\in[n]$, we have:
\textbf{(1)}
$\sgn(\langle \wv_{K,s}^{(t+1)}, y_i\xiv\rangle - \langle \wv_{K,s}^{(t)}, y_i\xiv\rangle)
= \sgn(\langle \wv_{K,s}^{(t)}, y_i\xiv\rangle)$, for $t\in[T_3, T_{4}^{-}]$.
\textbf{(2)}
$\sgn(\langle \wv_{K,s}^{(t+1)}, y_i\xiv\rangle - \langle \wv_{K,s}^{(t)}, y_i\xiv\rangle)
= \sgn(\langle \wv_{Q,s}^{(T_{3})}, \muv\rangle)$, for $t \geq (1+\theta_c)T_{4}^{-}$.
\end{lemma}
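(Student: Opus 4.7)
The plan is to compute $\sgn(\langle \wv_{K,s}^{(t+1)} - \wv_{K,s}^{(t)}, y_i\xiv_i\rangle)$ in closed form directly from the SignGD update rule, and then invoke the quantitative bounds already established in Lemmas~\ref{lemma:main_stage_i}, \ref{lemma:main_a_stage_ii}, \ref{lemma:main_b_stage_ii}, \ref{lemma:main_stage_iii}, and \ref{lemma:main_softmax_decay_stage_iv} to identify that sign in each regime. First I would expand $\nabla_{\wv_{K,s}} L_S(\Wv^{(t)})$ and use the data assumptions -- disjoint supports of distinct noise patches, $\langle \muv, \xiv_i\rangle = 0$, and $\supp(\muv) = \{1\}$ -- to isolate the $i$-th summand on the support of $\xiv_i$. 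A short sign calculation, using $\ell_i^{\prime(t)} < 0$ and $\sgn(\langle \vv^{(t)}, \xiv_i\rangle) = y_i$ (from Lemma~\ref{lemma:main_stage_i}(1,2)), collapses the SignGD update to
\[
\sgn\bigl(\langle \wv_{K,s}^{(t+1)} - \wv_{K,s}^{(t)}, y_i\xiv_i\rangle\bigr) \;=\; \sgn\bigl(s_{i,11}^{(t)}s_{i,12}^{(t)}\langle \wv_{Q,s}^{(t)}, \muv\rangle \;+\; s_{i,21}^{(t)}s_{i,22}^{(t)}\langle \wv_{Q,s}^{(t)}, y_i\xiv_i\rangle\bigr).
\]

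For part (1), the second inequality defining $T_4^-$ in Lemma~\ref{lemma:main_softmax_decay_stage_iv} is precisely the assertion that the noise term dominates the signal term in the bracket above for $t \in [T_3, T_4^-]$, so the displayed sign reduces to $\sgn(\langle \wv_{Q,s}^{(t)}, y_i\xiv_i\rangle)$. Combining the query--key noise sign alignment from Lemma~\ref{lemma:main_a_stage_ii} with the monotone growth of query and key noise across Stages~II and~III (Lemmas~\ref{lemma:main_b_stage_ii}(1) and \ref{lemma:main_stage_iii}), this sign coincides with $\sgn(\langle \wv_{K,s}^{(t)}, y_i\xiv_i\rangle)$ throughout $[T_3, T_4^-]$, which is exactly part~(1). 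As a byproduct, an induction on $t$ then also shows that the sign of $\langle \wv_{K,s}^{(t)}, y_i\xiv_i\rangle$ itself is preserved on that interval.

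For part (2), I would show that the bracket flips dominance once $t \geq (1+\theta_c)T_4^-$. Using the exponential decay $s_{i,21}^{(t)} \leq \exp(-\Omega(m_k t^2 \eta^2 \sigma_p^2 s^2))$ from Lemma~\ref{lemma:main_softmax_decay_stage_iv}(1), together with the upper bound $|\langle \wv_{Q,s}^{(t)}, y_i\xiv_i\rangle| = O(t\eta\sigma_p s)$ and the lower bound $|\langle \wv_{Q,s}^{(t)}, \muv\rangle| = \Omega(T_3 \eta \norm{\muv})$ inherited from Lemma~\ref{lemma:main_stage_iii}(2), a direct calculation shows that an absolute constant $\theta_c \in (0,1)$ can be chosen so that $s_{i,21}^{(t)}s_{i,22}^{(t)}|\langle \wv_{Q,s}^{(t)}, y_i\xiv_i\rangle| \ll s_{i,11}^{(t)}s_{i,12}^{(t)}|\langle \wv_{Q,s}^{(t)}, \muv\rangle|$ whenever $t \geq (1+\theta_c)T_4^-$. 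The displayed sign then reduces to $\sgn(\langle \wv_{Q,s}^{(t)}, \muv\rangle)$, and an analogous sign computation applied to $\langle \wv_{Q,s}^{(t+1)} - \wv_{Q,s}^{(t)}, \muv\rangle$ shows that the majority-voting mechanism of Lemma~\ref{lemma:main_stage_iii}(1) persists on $[T_3, (1+\theta_c)T_4^-]$, so $\sgn(\langle \wv_{Q,s}^{(t)}, \muv\rangle) = \sgn(\langle \wv_{Q,s}^{(T_3)}, \muv\rangle)$ throughout that window, giving the claim.

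The main obstacle is the bootstrap across the narrow boundary $[T_4^-, (1+\theta_c)T_4^-]$: one needs the signal-versus-noise dominance inside the bracket to flip \emph{before} the underlying query signals, query noise, or key noise have time to change sign themselves. This requires simultaneously tracking the exponential decay rate of $s_{i,21}^{(t)}$ against the polynomial growth of $|\langle \wv_{Q,s}^{(t)}, y_i\xiv_i\rangle|$, and calibrating $\theta_c$ via a careful coupled induction that keeps the query signal, query noise, and key noise all frozen in their Stage~III signs throughout the crossover.
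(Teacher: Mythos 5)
Your proposal follows the paper's overall strategy: the sign of the key-noise update reduces to $\sgn\!\bigl(s_{i,11}^{(t)}s_{i,12}^{(t)}\langle \wv_{Q,s}^{(t)}, \muv\rangle + s_{i,21}^{(t)}s_{i,22}^{(t)}\langle \wv_{Q,s}^{(t)}, y_i\xiv_i\rangle\bigr)$, part~(1) is read off directly from the second defining inequality for $T_4^-$, and part~(2) requires the bracket's dominance to flip. That is exactly Claim~\ref{claim:F(t)=>E(t)_from_T_4_to_T_2^-} in Stage~IV.a, Lemma~\ref{lemma:upper_bound_for_T_4^+} in Stage~IV.b, and Claim~\ref{claim:D(t)_F(t)_from_1+theta_c_T_2^-to_T_2^++} in Stage~IV.c.

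For part (2), however, the step you summarize as ``a direct calculation shows that an absolute constant $\theta_c$ can be chosen'' has a genuine gap. You invoke the decay bound $s_{i,21}^{(t)} \leq \exp(-\Omega(m_k t^2\eta^2\sigma_p^2s^2))$, but Lemma~\ref{lemma:main_softmax_decay_stage_iv} only establishes this on $[T_3, T_4^-]$; it rests on all query and key noise growing linearly with $t$ uniformly across $s$ and $i$. Beyond $T_4^-$ that premise fails: for $s$ with $i\notin E_s^{(T_3)}$ the key noise starts moving toward zero, so the product $\langle\wv_{Q,s}^{(t)},\xiv_i\rangle\langle\wv_{K,s}^{(t)},\xiv_i\rangle$ begins to \emph{shrink} for those neurons, and a naive extrapolation does not give you the needed upper bound on $s_{i,21}^{(t)}$ at $(1+\theta_c)T_4^-$. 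The paper closes this by splitting $\sum_{s\in[m_k]}\langle\wv_{Q,s}^{(t)},\xiv_i\rangle\langle\wv_{K,s}^{(t)},\xiv_i\rangle$ over $s\in E_i^{(T_3)}$ (growing like $t^2$) versus $s\notin E_i^{(T_3)}$ (decaying like $t(2T_4^--t)$) and then invoking Lemma~\ref{lemma:concentration_for_E_i^T_3}, which requires $n=\Omega(m_k^4)$ to give $|E_i^{(T_3)}|=m_k/2\pm\tilde O(\sqrt{m_k})$ so that the aligned half dominates; the calibration constraint $0.99(1+\theta_c)-O(\theta_c/\sqrt{m_k})>1$ is what pins down $\theta_c$. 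Likewise, the claim holds for \emph{all} $t\geq(1+\theta_c)T_4^-$, not merely at $(1+\theta_c)T_4^-$, and carrying the dominance condition forward through the window where unaligned key noise actually crosses zero is the separate induction of Stage~IV.c (Eqs.~\ref{eq:query_key_noise_sum_tricky_equal}--\ref{eq:upper_bound_for_query_key_noise_sum_at_2+3theta_T_4^-}); your sketch does not say how that is maintained once signs start flipping. These are the two concrete pieces you would need to supply.
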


\vspace{-5pt}
\textbf{Negative key noise alignment with signals.}
Consider a single neuron with a positive query signal.
Lemma~\ref{lemma:main_key_align_stage_iv} states that
after {\small $(1+\theta_c)T_{4}^{-}$}, all negative key noise flip direction and begin aligning with query signal.
Before {\small $T_{4}^{-}$}, the dominant gradient terms for key noise are related to both query noise and noise-signal softmax outputs. 
However, after 
{\small $(1+\theta_c)T_{4}^{-}$}, due to the exponential decay of {\small $s_{i,21}^{(t)}$}, the query signal term becomes dominant.

\begin{lemma}[Delayed Sign Alignment of Query Noise]
\label{lemma:main_query_align_stage_iv}
With the same $\theta_c$ in Lemma~\ref{lemma:main_key_align_stage_iv}, for all $s\in[m_k]$ and $i\in[n]$, we have:
\textbf{(1)}
$\sgn(\langle \wv_{Q,s}^{(t+1)}, y_i\xiv\rangle - \langle \wv_{Q,s}^{(t)}, y_i\xiv\rangle)
= \sgn(\langle \wv_{Q,s}^{(t)}, y_i\xiv\rangle)$, for $t\in[T_3, (2-\theta_c)T_{4}^{-}]$.
\textbf{(2)}
$\sgn(\langle \wv_{Q,s}^{(t+1)}, y_i\xiv\rangle - \langle \wv_{Q,s}^{(t)}, y_i\xiv\rangle)
= \sgn(\langle \wv_{Q,s}^{(T_{3})}, \muv\rangle)$, for $t \geq (2+3\theta_c)T_{4}^{-}$.
\end{lemma}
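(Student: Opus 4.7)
The plan is to derive a clean expression for the sign of each query-noise increment from the SignGD rule together with the sparsity/orthogonality of the data, and then read off its behavior directly from Lemma~\ref{lemma:main_key_align_stage_iv}, which has already pinned down the sign and magnitude dynamics of the key noise.

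Using the gradient formula in Appendix~\ref{sec:gradient}, the fact that $\{\supp(\xiv_j)\}_{j\in[n]}$ are pairwise disjoint with high probability, and $\langle\muv,\xiv_i\rangle=0$, every coordinate $k\in\supp(\xiv_i)\setminus\{1\}$ of $\nabla_{\wv_{Q,s}}L_S$ is dominated by the $i$-th summand $\tfrac{1}{n}\ell_{i}'\,y_i\langle\vv^{(t)},y_i\muv-\xiv_i\rangle\langle\wv_{K,s}^{(t)},y_i\muv-\xiv_i\rangle s_{i,21}^{(t)}s_{i,22}^{(t)}\xiv_{i,k}$. Plugging into the SignGD update and pairing with $y_i\xiv_i$, the per-coordinate signs of $\xiv_{i,k}$ collapse via $\sgn(\xiv_{i,k})\xiv_{i,k}=|\xiv_{i,k}|$, yielding
\begin{align*}
\langle\wv_{Q,s}^{(t+1)}-\wv_{Q,s}^{(t)},y_i\xiv_i\rangle=\eta\|\xiv_i\|_1\cdot\sgn\!\bigl(\langle\vv^{(t)},\muv\rangle-\langle\vv^{(t)},y_i\xiv_i\rangle\bigr)\cdot\sgn\!\bigl(\langle\wv_{K,s}^{(t)},\muv\rangle-\langle\wv_{K,s}^{(t)},y_i\xiv_i\rangle\bigr).
\end{align*}
Lemma~\ref{lemma:main_stage_i}(1)(2) forces the first sign factor to $-1$ throughout Stage~IV. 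Combined with Lemma~\ref{lemma:main_b_stage_ii}(4) and the bound $|\langle\wv_{K,s}^{(t)},\muv\rangle|=\tilde O(T_3\eta\|\muv\|)\ll|\langle\wv_{K,s}^{(t)},y_i\xiv_i\rangle|$ throughout Stage~IV, the second factor simplifies to $-\sgn(\langle\wv_{K,s}^{(t)},y_i\xiv_i\rangle)$. The whole update sign therefore collapses to $\sgn(\langle\wv_{K,s}^{(t)},y_i\xiv_i\rangle)$, reducing the lemma to a statement about when the key-noise \emph{value} changes sign.

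For Part~(1), $(s,i)$ pairs whose key noise already started in the $\sgn(\langle\wv_{Q,s}^{(T_3)},\muv\rangle)$ direction stay there throughout by Lemma~\ref{lemma:main_key_align_stage_iv}, so the update keeps tracking the current query-noise sign via Stage~II alignment (Lemma~\ref{lemma:main_a_stage_ii}). For ``opposite'' pairs, the critical window is $[(1+\theta_c)T_{4}^{-},(2-\theta_c)T_{4}^{-}]$: Lemma~\ref{lemma:main_key_align_stage_iv}(2) has reversed the key-noise \emph{increment} at $(1+\theta_c)T_{4}^{-}$, but driving a key-noise value of magnitude $\Theta((1+\theta_c)T_{4}^{-}\eta\sigma_p s)$ to zero at the per-step rate $\eta\|\xiv_i\|_1=\Theta(\eta\sigma_p s)$ requires $\approx(1+\theta_c)T_{4}^{-}$ additional steps, whereas only $(1-2\theta_c)T_{4}^{-}$ have elapsed; hence the key noise still carries its Stage~II sign, matching $\sgn(\langle\wv_{Q,s}^{(t)},y_i\xiv_i\rangle)$. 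For Part~(2), by $t\geq(2+3\theta_c)T_{4}^{-}$ the zero-crossing near $2(1+\theta_c)T_{4}^{-}$ has occurred and at least $\Theta(\theta_c T_{4}^{-})$ subsequent reversed increments have accumulated, so the key noise has magnitude $\Omega(\theta_c T_{4}^{-}\eta\sigma_p s)$ in the direction $\sgn(\langle\wv_{Q,s}^{(T_3)},\muv\rangle)$ prescribed by Lemma~\ref{lemma:main_key_align_stage_iv}(2), still dominating the tiny key signal; the reduction above delivers the claimed update sign $\sgn(\langle\wv_{Q,s}^{(T_3)},\muv\rangle)$.

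The main obstacle is uniform control across all $(s,i)$ of the precise zero-crossing time, since both $\|\xiv_i\|_1$ and the starting key-noise magnitudes at $(1+\theta_c)T_{4}^{-}$ vary across samples and neurons. One needs two-sided high-probability concentration of $\|\xiv_i\|_1=(1\pm o(1))\sqrt{2/\pi}\,\sigma_p s$ and of the key-noise magnitudes so that all zero-crossings cluster in a window of width $O(\theta_c)T_{4}^{-}$ around $2(1+\theta_c)T_{4}^{-}$; this is precisely why the statement builds in a $\theta_c$-sized safety margin on the lower side ($(2-\theta_c)T_{4}^{-}$) and a $3\theta_c$-sized margin on the upper side ($(2+3\theta_c)T_{4}^{-}$), the asymmetry absorbing both the spread of crossings and the extra time needed for the regrown key noise to again dominate the key signal. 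A secondary subtlety is bounding the residual gradient contributions from other samples (via small $\langle\xiv_j,\xiv_i\rangle$ on rare overlapping coordinates) and from coordinate~$1$; these remain subleading under Condition~\ref{cond:main_condition}'s sparsity $s=\Theta(d^{1/2}n^{-2})$ and sample size $n=\Omega(m_k^4)$, so they never flip the bracket sign in the argument above.
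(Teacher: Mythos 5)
Your derivation of the per-step increment
\begin{align*}
\langle\wv_{Q,s}^{(t+1)}-\wv_{Q,s}^{(t)},y_i\xiv_i\rangle
=\eta\norm{\xiv_i}_1\,\sgn\!\left(\langle\vv^{(t)},\muv\rangle-\langle\vv^{(t)},y_i\xiv_i\rangle\right)\sgn\!\left(\langle\wv_{K,s}^{(t)},\muv\rangle-\langle\wv_{K,s}^{(t)},y_i\xiv_i\rangle\right)
\end{align*}
is exactly the paper's Eq.~\eqref{eq:qxi_update_sign}, and your reduction to tracking the key-noise sign, followed by time-accounting around the key-noise zero crossing, is the same mechanism the paper uses for claim $\mathcal{C}(t)$ in Stages~IV.b--c. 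So the approach is right. Two imprecisions are worth flagging. First, the magnitude comparison $|\langle\wv_{K,s}^{(t)},\muv\rangle|\ll|\langle\wv_{K,s}^{(t)},y_i\xiv_i\rangle|$ is claimed ``throughout Stage~IV,'' but this fails in the window where the unaligned key noise passes through zero; there the key signal is the larger term. The paper avoids this by using the \emph{sign} of the key-signal contribution rather than its smallness: Stage~III established $\sgn\langle\wv_{K,s}^{(t)},\muv\rangle=-\sgn\langle\wv_{Q,s}^{(T_3)},\muv\rangle$, so for a neuron with positive query signal and negative unaligned key noise, $-\langle\wv_{K,s}^{(t)},\muv\rangle$ is positive and simply \emph{adds} to the post-flip sign; thus the bracket bound $-\langle\wv_{K,s}^{(t)},\muv\rangle+\langle\wv_{K,s}^{(t)},y_i\xiv_i\rangle\geq\langle\wv_{K,s}^{(t)},y_i\xiv_i\rangle$ does all the work in Part~(2) without any magnitude comparison. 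Your argument still goes through because the window where the bracket and key-noise crossings differ has width only $O(T_4^-\norm{\muv}/(\sigma_p s))=o(\theta_c T_4^-)$, which the lemma's margins absorb, but the sign-based step is the cleaner route and is what the paper does. Second, $|\langle\wv_{K,s}^{(t)},\muv\rangle|$ is not $\tilde O(T_3\eta\norm{\muv})$ throughout Stage~IV; by Lemma~\ref{lemma:main_stage_iii}(2) and the constant per-step update it grows as $\Theta(t\eta\norm{\muv})$ out to $T_4$, so at $t\approx T_4$ it is of order $T_4\eta\norm{\muv}$. The conclusion you draw still holds since $\norm{\muv}\ll\sigma_p s$, but the stated bound is off by the (large) ratio $T_4/T_3$.
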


\vspace{-5pt}
\textbf{Different alignment times for negative query and key noise.}
Lemma~\ref{lemma:main_query_align_stage_iv} indicates a time gap between the alignment of negative key noise and negative query noise.
Negative query noise continues to grow until {\small $(2-\theta_c)T_{4}^{-}$}  
since {\small $s_{i,21}^{(t)}$} does not directly influence its gradient.
However, as key noise approaches or crosses zero,
the query signal term begins to dominate the gradient of query noise due to their correlation.
Intuitively, {\small $(2-\theta_c)T_{4}^{-}$} is the point where key noise still has significant magnitude,
while {\small $(2+3\theta_c)T_{4}^{-}$} indicates the completion of key noise alignment.

\begin{lemma}[End of Stage IV]
\label{lemma:main_end_stage_iv}
\textbf{(1)}
For all $t\geq T_{4}$, $s\in[m_k]$, and $i\in[n]$ $\sgn(\langle \wv_{Q,s}^{(t)}, y_i\xiv_i\rangle) = \sgn(\langle \wv_{K,s}^{(t)}, y_i\xiv_i\rangle) = \sgn(\langle \wv_{Q,s}^{(t)}, \muv\rangle)$.
\textbf{(2)}
Loss does not converge. $L_{S}(\Wv^{(T_{4})}) = \Theta(1)$.
\end{lemma}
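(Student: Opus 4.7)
}

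The plan is to assemble part~(1) directly from the update-direction results of the previous three lemmas of Stage~IV, and then to compute the loss in part~(2) using the softmax concentration together with the scaling of the mean value noise inherited from Stage~I. First, I would fix a neuron $s\in[m_k]$, a sample $i\in[n]$, and argue about the three quantities $\langle \wv_{Q,s}^{(t)}, y_i\xiv_i\rangle$, $\langle \wv_{K,s}^{(t)}, y_i\xiv_i\rangle$, and $\langle \wv_{Q,s}^{(t)}, \muv\rangle$. For query/key noise whose sign at $T_3$ already agrees with $\sgn(\langle \wv_{Q,s}^{(T_3)}, \muv\rangle)$, Lemmas~\ref{lemma:main_key_align_stage_iv}(1) and \ref{lemma:main_query_align_stage_iv}(1) show the magnitudes only grow in that aligned direction throughout $[T_3, T_4]$, so the sign is preserved. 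For noise with the opposite sign at $T_3$, Lemmas~\ref{lemma:main_key_align_stage_iv}(2) and \ref{lemma:main_query_align_stage_iv}(2) give update directions equal to $\sgn(\langle \wv_{Q,s}^{(T_3)}, \muv\rangle)$ after times $(1+\theta_c)T_4^-$ and $(2+3\theta_c)T_4^-$, respectively; combining these constant-size SignGD increments $\Theta(\eta\sigma_p s)$ with the magnitude estimate $|\langle \wv_{K,s}^{(\cdot)}, y_i\xiv_i\rangle|,|\langle \wv_{Q,s}^{(\cdot)}, y_i\xiv_i\rangle|=O(T_4^-\eta\sigma_p s)$ at the start of the alignment phase implies that the remaining $\Theta(T_4^-)$ steps up to $T_4$ are enough for those noise quantities to cross zero and take the sign of the query signal. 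By the choice of $C_3$ in $T_4$, sufficient margin is obtained so that the alignment is strict at $T_4$.

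For $t\ge T_4$, I would run an inductive argument: once all query noise, key noise, and query signals have the common sign $\sigma:=\sgn(\langle \wv_{Q,s}^{(T_3)}, \muv\rangle)$, the dominant gradient terms identified in Stage~IV continue to act in the direction $\sigma$, so the SignGD updates $\pm\eta\sgn(\cdot)$ keep pushing each inner product further along $\sigma$. Coupled with the constant-step magnitude property $|\langle \wv^{(t+1)}-\wv^{(t)}, y_i\xiv_i\rangle|=\eta\|\xiv_i\|_1$ and $|\langle \wv^{(t+1)}-\wv^{(t)}, \muv\rangle|=\eta\|\muv\|$ noted as a technical novelty, the three signs are preserved for every subsequent iteration; this completes part~(1).

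For part~(2), I would plug $t=T_4$ into the simplified form of the model in \eqref{def:softmaxattn_linearact_2layer_model}, giving $y_i f(\Wv^{(T_4)},\Xv_i)=(s_{i,11}+s_{i,21})\langle \vv^{(T_4)},\muv\rangle+(s_{i,12}+s_{i,22})\langle \vv^{(T_4)}, y_i\xiv_i\rangle$. By Lemma~\ref{lemma:main_softmax_decay_stage_iv} together with a short extension from $T_4^-$ to $T_4$ (the decay rate only gets sharper as time progresses since the underlying inequalities remain in force), $s_{i,21}^{(T_4)}=o(1)$ and $s_{i,11}^{(T_4)}=1/2\pm o(1)$, hence the noise coefficient $s_{i,12}+s_{i,22}=3/2\pm o(1)$. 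By Lemma~\ref{lemma:main_stage_i}(1) the mean value noise grows linearly in $t$, extended through Stages II--IV (the extension is straightforward because the update of $\vv$ is again a $\pm$-sign step and the signs of the relevant summands do not change sign across the stages), so $\langle \vv^{(T_4)}, y_i\xiv_i\rangle=\Theta(T_4\eta\sigma_p s)=\Theta(m_k^{-1/2}\log(\sigma_p s/\|\muv\|))=\Theta(1)$ under Condition~\ref{cond:main_condition}. Lemma~\ref{lemma:main_stage_i}(2) makes the signal contribution negligible, so $y_i f(\Wv^{(T_4)},\Xv_i)=\Theta(1)$ with a positive value, and therefore $\ell(y_i f(\Wv^{(T_4)},\Xv_i))=\Theta(1)$; averaging over $i\in[n]$ gives $L_S(\Wv^{(T_4)})=\Theta(1)$.

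The main obstacle I foresee is the bookkeeping needed to justify the sign-crossing step in part~(1): we need to show that between $(1+\theta_c)T_4^-$ (resp.\ $(2+3\theta_c)T_4^-$) and $T_4$ the accumulated $\sigma$-aligned updates of the originally misaligned noise exceed the noise magnitude at the start of the alignment phase, which in turn requires a tight upper bound on that magnitude and a lower bound on the number of consistent-sign updates, both derived from the previous stage lemmas. A secondary technicality is extending the linear growth of $\langle \vv^{(t)},y_i\xiv_i\rangle$ beyond Stage~I up to $T_4$: one must verify that the sign of the SignGD update for $\wv_{V,j,r}$ remains $-y_i j \sgn(\cdot)$ with the same pattern throughout Stages II--IV, so that the $\Theta(t\eta\sigma_p s)$ growth rate survives the softmax reweighting. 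Once these two items are in place, both conclusions of the lemma follow cleanly.
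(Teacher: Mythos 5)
Part (1) of your proposal follows essentially the same route as the paper's: the Stage IV lemmas establish the alignment of negative key/query noise around $(1+\theta_c)T_4^-$ and $(2+3\theta_c)T_4^-$, the paper's Remark~\ref{remark:final_dynamics} verifies completion by $(4+7\theta_c)T_4^-$, and then an induction preserves the common sign for $t\geq T_4$. That part is fine.

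Part (2) has a genuine computational error. You claim $\langle \vv^{(T_4)}, y_i\xiv_i\rangle = \Theta(T_4\eta\sigma_p s) = \Theta(m_k^{-1/2}\log(\sigma_p s/\|\muv\|)) = \Theta(1)$. But under Condition~\ref{cond:main_condition}, $m_k = \Omega(\polylog(d))$, and the paper's own analysis (in the proof of Lemma~\ref{lemma:loss_prime_concentrate_on_1/2}) establishes $\langle \vv^{(T_4)}, y_i\xiv_i\rangle = \Tilde{O}(m_k^{-1/2}) = o(1)$, not $\Theta(1)$. The factor $m_k^{-1/2}\log(\sigma_p s/\|\muv\|)$ is only $\Theta(1)$ for the critical choice $m_k = \Theta(\log^2 d)$; for the range the condition actually allows it is generically $o(1)$ and the paper explicitly treats it as such. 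Consequently $y_i f(\Wv^{(T_4)},\Xv_i) = o(1)$ rather than $\Theta(1)$. Your final conclusion $L_S(\Wv^{(T_4)}) = \Theta(1)$ still goes through — because $\ell(o(1)) = \log 2 + o(1) = \Theta(1)$ — but the route you took to it is wrong. The paper reaches the same endpoint more directly by invoking the already-proved concentration $\ell_i^{\prime(t)} = 1/2 \pm o(1)$ for all $t \leq T_4$ (Lemma~\ref{lemma:loss_prime_concentrate_on_1/2}), which immediately forces $y_i f(\Wv^{(T_4)},\Xv_i)$ to be $o(1)$ and hence each per-sample loss to be $\log 2 + o(1)$. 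You should correct the estimate of the mean value noise at $T_4$ and, if you wish, replace the ad-hoc recomputation of $y_i f$ with an appeal to the loss-derivative concentration, which is both cleaner and closes the gap.
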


\vspace{-5pt}
Lemma~\ref{lemma:main_end_stage_iv} shows that final sign alignment completes, but the training loss does not convergence at the end of Stage IV. 
After Stage IV, the dynamics simplify, with all quantities growing continuously, and the training loss decreasing exponentially. 
This simplified behavior makes it easier to prove convergence and generalization results.
The proof of this section is in Appendix~\ref{sec:proof_stage_iv}.


\vspace{-7pt}
\section{Conclusion and Limitations}
\vspace{-7pt}

In conclusion, we present a theoretical analysis of the training dynamics of a two-layer transformers using SignGD for a binary classification task involving a dataset with both signal and noise.
We identify four distinct stages in the training dynamics, characterizing the complex and intriguing behaviors within the attention layer. 
Our results demonstrate the fast convergence but poor generalization for SignGD.
Additionally, we provide new insights into the similarities between SignGD and Adam, suggesting that both require higher data quality compared to GD in practical applications.
We hope that our work contributes to a deeper theoretical understanding and aids in the design of more efficient optimization methods for transformers.

{\bf Limitations.} 
Several relaxed assumptions were made to simplify the theoretical analysis. Specifically, the datasets we considered are linearly separable, signal and noise vectors are sparse, and the context length is limited to 2 in our main theory, creating a significant gap from real-world datasets. Furthermore, our data model is motivated by image data, leaving open the question of how SignGD optimizes transformers on language data, which could be a promising direction for future research. The transformer model we analyzed includes only a single-head self-attention layer, whereas deeper transformers and multi-head attention are more commonly used in practice. 
We discuss some potential extensions 
in Appendix~\ref{app:discussion_on_extensions},
and discuss differences from real-world setup in Appendix~\ref{app:more_limitations}.

\section*{Acknowledgment}
This work was supported by the NSFC Project (No.~62376131), Tsinghua Institute for Guo Qiang, and the High Performance Computing Center, Tsinghua University. J.Z is also supported by the XPlorer Prize.

\bibliography{refs}
\bibliographystyle{iclr2025_conference}

\newpage
\appendix

\begin{center}
	\LARGE \bf {Appendix}
\end{center}

\etocdepthtag.toc{mtappendix}
\etocsettagdepth{mtchapter}{none}
\etocsettagdepth{mtappendix}{subsubsection}
\tableofcontents
\clearpage

\newpage
\section{Detailed Related Work}
\label{sec:more_related_work}

\paragraph{Training dynamics of transformers.} 

Currently, the theoretical analysis of transformers training dynamics is still in a stage of flourishing, and there is no fixed research paradigm.
Several recent works studied the training dynamics of transformers with different data, models and focus. 
\citet{JSL22} showed how a one-layer single-head attention model with only position embedding learns the spatial structures in the data by GD. 
\citet{TWCD23} studied the training dynamics of a one-layer single-head attention model with a data model for next-token prediction trained with SGD, and~\citet{TWZ+24} analyzed the joint dynamics of an attention and MLP layer.
\citet{LLR23} studied the training dynamics of a single-head attention layer on the $\ell_2$ loss with the data modeled by topic modeling.
\citet{LWLC23} studied a three-layer vision transformer with query and key parameterization trained by SGD on the hinge loss.
\citet{ORST23} studied a single-head attention layer on the prompt-tuning setting, where the query and key parameters in the attention are fixed during the training process.
\citet{TLTO23, TLZO23} studied the dynamics of a single-head attention layer and a tunable token, and connects the training dynamics to a certain SVM problem.
While~\citet{TLTO23, TLZO23} only presented an
asymptotic convergence result, 
\citet{VDT24} showed the global convergence, provided a convergence rate $t^{-3/4}$ for GD, as well as removing the restriction on a fixed linear head. 
Also extending~\citet{TLTO23, TLZO23},~\citet{SCWZ24} studied this problem with a query-key parameterized transformer, which gives different implicit regularization compared with the single attention matrix parameterization. 
\citet{HWCL24} studied the GD dynamics of a single-head attention layer with a self-supervised learning objective.
\citet{NDL24} studied the GD dynamics of a disentangled two-layer attention-only transformer on random sequences with causal structure 
and proved that it can learn the causal structure in the first attention layer.
\citet{WWHL24} studied the data model introduced by~\citet{SHT23}, showing that an one-layer transformer can efficiently learn this task via GD, while fully-connected networks cannot express the task.
\citet{JHZ+24} 
studied the GD dynamics of a two-layer transformer with a single-head self-attention layer and a fixed linear head on a dataset with signal and noise, which has the similar data and model setting to our work. However,~\citet{JHZ+24} targets on the benign overfitting of GD, while our work focus on the optimization and generalization of SignGD.


Recently, there has been some works studying the behaviors of transformers on in-context learning tasks to understand the powerful in-context learning abilities of large language models.
We mainly focus on works with convergence guarantee via training dynamics analysis.
~\citet{ACDS23, MHM24} showed that a one-layer
transformer implements single-step gradient descent to minimize the pre-training loss for an in-context learning task.
~\citet{ZFB23} studied the Gradient Flow (GF) of a single-head linear attention layer on in-context linear regression task.
~\citet{HCL23} studied how a single-head attention layer trained by GD solves an in-context linear regression task where the the input data are orthogonal. 
~\citet{CSWY24} studied the GF for an one-layer multi-head attention model for an in-context linear regression task, where the groundtruth function admits a multi-task structure.
~\citet{KS24} studied the mean-field dynamics of a transformer with one linear attention layer and one MLP layer on an in-context regression task.
~\citet{LML+24} studied the SGD dynamics of one-layer transformer with a single-head self-attention layer and a two-layer MLP on an in-context classification task.


{\bf Comparison with other works using query-key parameterization.}
Practically, the modern transformer architecture uses query-key parameterization in the attention module~\citep{VSP+17, DFE+22, ZHZ+24, ZWZ+25, ZXH+25}.
However, most of works mentioned above take simplifications that replaces the query and key matrices with a single attention matrix. 
Among those works, only~\citet{LWLC23, LML+24, SCWZ24, JHZ+24} studied the softmax attention with trainable query-key parameterization, but there are also some limitations.~\citet{LWLC23, LML+24} introduced relaxed assumptions about initialization, which are too stringent and make softmax outputs not concentrated at $1/2$ at initialization anymore.~\citet{SCWZ24} started from diagonal query and key matrices and used a data-correlated "Alignment Property" assumption for the general query and key initialization, which seems hard to verify whether it holds practically.
Compared with those works, we study query and key matrix from Gaussian initialization which is commonly used in practice. 
Finally,~\citet{JHZ+24} studied trainable query and key with Gaussian initialization. However, their initializations for the queries, keys, and values differ, whereas our initialization for the queries, keys, and values is the same.

Additionally, all of these studies analyzed the dynamics of (S)GD or GF, while we focus on SignGD.


\paragraph{Understanding of Adam on transformers.}
While Adam may fail to converge in convex objective~\citep{RKK18}, it performs so well on transformers and 
is better than SGD, which means Adam converges faster and achieves lower training loss~\citep{ACS+24, JML23, KCLS23, KYM+24, PL23, ZCD+24, ZKV+20}.

Previous works tried to give an explanation about this fact from different perspectives. 
~\citet{LLG+20} observed unbalanced gradients in transformers and Adam can give uniform parameter update.
~\citet{ZKV+20} suggested that heavy-tailed stochastic gradient noise in language data on transformers compared with image data on CNN models is the main cause that adaptive methods are good.
However,~\citet{KCLS23} showed that the heavy-tailed stochastic gradient noise may not be the main factor. They compared the performance of deterministic Adam and GD in full-batch settings, and observed that Adam is still better than GD.
~\citet{JML23} showed that Adam could bias the trajectories towards regions
where Hessian has relatively more uniform diagonals while SGD cannot.
~\citet{ZCD+24} also studied from the perspective of Hessian. They showed the distances between Hessian spectrum of different parameter blocks are large in transformers which may hamper SGD but can be handled by Adam. 
~\citet{ZCL+25} and~\citet{LCZ23} found that second moment of the Adam optimizer is not sensitive which indicates the advantage over SGD is robust.
~\citet{PL23} showed that Adam can lead to smaller directional smoothness values which may imply better optimization.
~\citet{KYM+24} showed that heavy-tailed class imbalance in language modeling tasks is a  difficulty for GD but Adam and SignGD do not suffer from this problem.

Furthermore, many works have focused on proving the convergence rate of Adam in the framework of classical convergence analysis.
~\citet{ZHSJ20} and~\citet{CLO+22} sought alternative relaxed assumptions for Adam.
~\citet{LRJ23} and~\citet{WFZ+23} improved the analysis of Adam under those assumptions.

\newpage
\section{Experimental Details and More Experiments}
\label{sec:more_exp}

\subsection{Experimental Settings}

We perform numerical experiments on the synthetic and real-world datasets to verify our main results. 

{\bf Experimental setting for synthetic dataset.} The synthetic dataset is generated according to our Definition~\ref{def:data_model}. 
For data hyerparameters, they can be uniquely determined by one row in Tab.~\ref{tab:experimental_settings} and the value of $d$. In the Fig.~\ref{fig:main_updated} and Tab.~\ref{tab:main_sign_aligment_query_key} of main text, we use $\textbf{(a)}$ with $d=2000$.
We always use 500 samples for computing test loss.

For optimizers, we use following default hyperparameters.
For sign gradient descent, we use the learning rate $\eta=\text{1e-4}$.
For Adam, we use the learning rate $\eta=\text{1e-4}$, $\beta_1=0.9$, $\beta_2 = 0.999$, and $\epsilon = \text{1e-15}$.
For gradient descent, we use the learning rate $\eta=\text{1e-1}$.

Also, we use neuron $s=0$ in Fig.~\ref{fig:main_updated} by default and in following experiments. 
We use sign gradient descent with the learning rate $\eta=\text{1e-7}$ in 2000 iterations to simulate 
sign gradient descent with the learning rate $\eta=\text{1e-4}$ in 2 iterations in Fig.~\ref{fig:main_updated} (a),(b) and in following experiments.
We use a learning rate of 1e-4 for all optimizers in Fig.~\ref{fig:main2} (c) for a fair comparison.

\begin{table}[h]
\centering
\caption{
Experimental settings of data model. 
$n$ is the training sample size and there are always equal samples in both classes.
$s$ is the noise sparsity level.
$\sigma_p$ is the standard deviation of noise.
$\sigma_0$ is the network initialization standard deviation.
$m_v$ is the value dimension.
$m_k$ is the query and key dimension.
`orthogonal' means whether the noise patch and signal patch are orthogonal. If they are not orthogonal, the $s$ coordinates are selcted form $[d]$ instead of $[d]\backslash\set{1}$.
`iters' is the total iteration/epoch number in one run.
Signal patch $\muv$ is always $[1,0,\dots,0]^{\top}$.
}
\label{tab:experimental_settings}
\begin{tabular}{ccccccccc}
\toprule
             & $n$ & $s$ & $\sigma_p$ & orthogonal & $\sigma_0$ & $m_v$ & $m_k$ & iters \\
\midrule
\textbf{(a)} & $0.01d$ & $0.04d$ & $2.0/\sqrt{s}$ & True  & $0.1/\sqrt{d}$ & $0.01d$ & $0.05d$ & $2000$ \\
\textbf{(b)} & $0.01d$ & $0.04d$  & $2.0/\sqrt{s}$ & False & $0.1/\sqrt{d}$ & $0.01d$ & $0.05d$ & $2000$ \\
\textbf{(c)} & $0.01d$ & $0.4d$  & $2.0/\sqrt{s}$ & True & $0.1/\sqrt{d}$ & $0.01d$ & $0.05d$ & $2000$ \\
\textbf{(d)} & $0.01d$ & $0.4d$  & $2.0/\sqrt{s}$ & False & $0.1/\sqrt{d}$ & $0.01d$ & $0.05d$ & $2000$ \\
\bottomrule
\end{tabular}
\end{table}

{\bf Experimental setting for real-world dataset.}
We conduct real-world experiments on the MNIST dataset.
We introduce the noise to the dataset in the following way.
For each image, we first multiply each pixel in the image with a factor $\lambda$, which we call ``scaled SNR'', and then add gaussian random noises with standard deviation $1-\lambda$ to the outer regions with a width of 7. 
Additionally, we only the class 3 and 7 for classification, to make a binary classification task which is consistent with our theoretical settings. To input the data into the transformers, we patchify the data with a size of $7 \times 7$.

We train a two-layer transformer model consistent with our theoretical setting. We set $d = 49$, $m_v=m_k=10$, $\sigma_0 = 0.1/\sqrt{d}$.
We only use 2000 training data points and use deterministic optimizers to train the network. 
{For SignGD and Adam across different values of $\beta_1$, we use learning rate $\eta=1e-2$, and train the models for 200 epochs. 
For GD, we use different learning rates for different SNR since one learning rate of GD cannot adapt to data with noise in different magnitude. Specifically, we use $\eta=1e-1$ for SNR in [0.9, 1.0], $\eta=3e-1$ for SNR in [0.7, 0.8], $\eta=6e-1$ for SNR in [0.5, 0.6], $\eta=1e0$ for SNR in [0.2, 0.3, 0.4]. We train the model for 500 epochs for GD.
}
In all these settings, our training setup can guarantee a training loss smaller than 0.05. 
We calculate the test losses on the entire test datasets (of the class 3 and 7).
Finally, we conduct three runs for each training setup and report the mean and standard deviation.

\subsection{Comparison with Non-sparse and/or Non-orthogonal Data}

In Fig.~\ref{fig:main_overview_dynamics},
we run the experiments with the same setting as Fig.~\ref{fig:main_updated}, but use a different legend.
In Fig.~\ref{fig:overview_d2000_n20_s80_orthoFalse},~\ref{fig:overview_d2000_n20_s800_orthoTrue},~\ref{fig:overview_d2000_n20_s800_orthoFalse}, 
we run the experiments with non-orthogonal sparse data, orthogonal non-sparse data, and non-orthogonal non-sparse data, respectively.
The legends follow Fig.~\ref{fig:main_overview_dynamics}.
The figures show that our theoretical results hold empirically in those data settings.

\begin{figure}[h]
    \centering
    \includegraphics[width=1.0\textwidth]{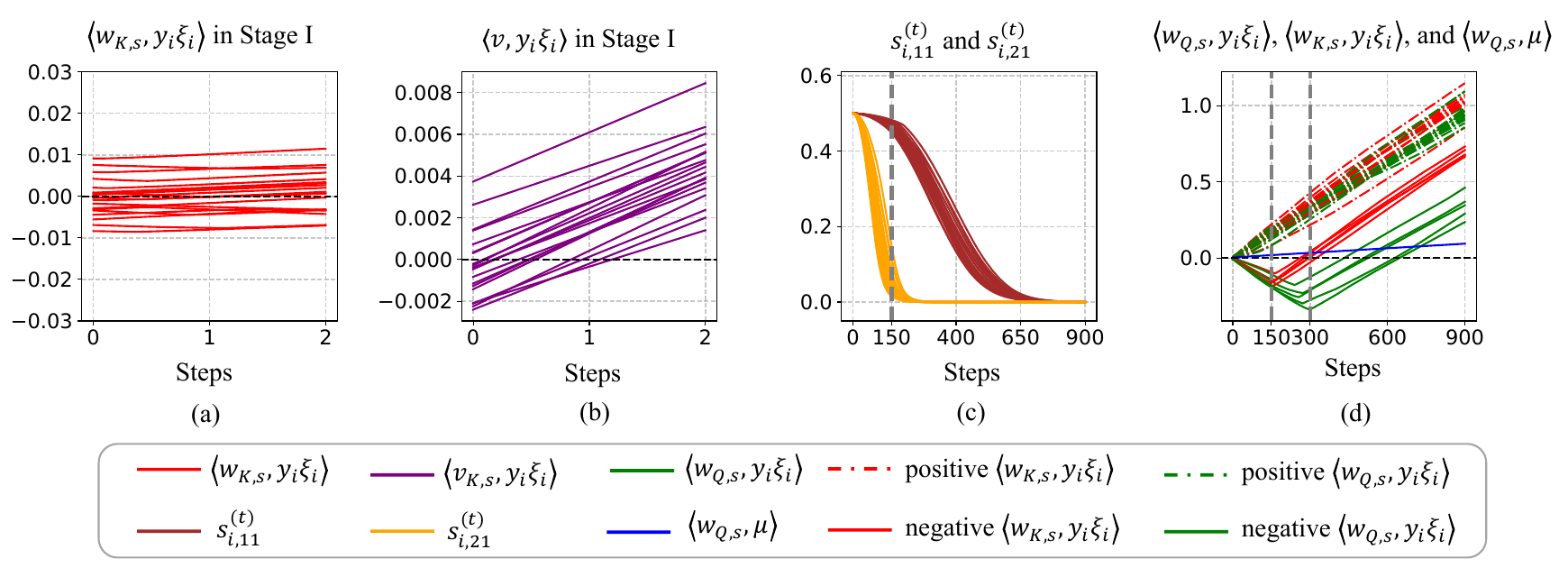}
    \caption{
    {\bf Data setting \textbf{(a)} in Tab.~\ref{tab:experimental_settings} with $d=2000$.}
    {\bf (a)} Key noise dynamics over {\small $t=0$} to {\small $t=2$}.
    {\bf (b)} Mean value noise dynamics over {\small $t=0$} to {\small $t=2$}.
    While mean value noise stabilizes into a linear relationship with $t$ early, key noise remains close to initialization.
    {\bf (c)} Softmax output dynamics over {\small $t=0$} to {\small $t=900$}. 
    The softmax outputs decay exponentially. At {\small $t=150$}, {\small $s_{i,21}^{(t)}$} approaches zero,  
    while {\small $s_{i,11}^{(t)}$} remains close to {\small $1/2$}.
    {\bf (d)} Dynamics of query noise, key noise, and query signals over {\small $t=0$} to {\small $t=900$}:
    The dotted lines represent positive query and key noise at {\small $t=100$}, and the solid lines represent negative noise at the same point.
    By Stage III, the majority of positive noise makes the query signal positive through majority voting. In Stage IV, sign alignment of key noise starts at
    about {\small $t=150$}, coinciding with {\small $s_{i,21}^{(t)}$} approaching zero,
    while delayed sign alignment of query noise begins around {\small $t=300$}, about twice as late as the key noise.
    }
    \label{fig:main_overview_dynamics}
\end{figure}

\begin{figure}[h]
    \centering
    \includegraphics[width=1.0\textwidth]{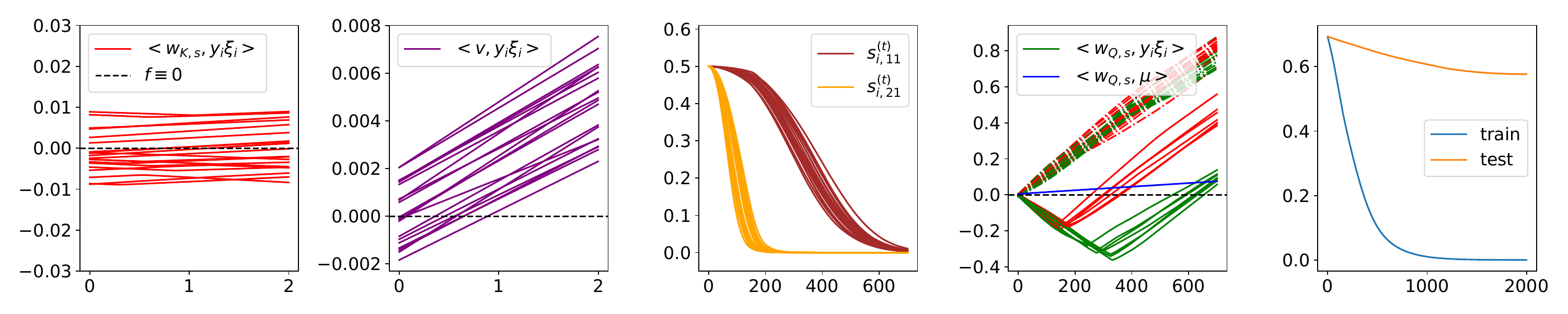}
    \caption{
    Data setting \textbf{(b)} in Tab.~\ref{tab:experimental_settings} with $d=2000$.
    }
    \label{fig:overview_d2000_n20_s80_orthoFalse}
\end{figure}

\begin{figure}[h]
    \centering
    \includegraphics[width=1.0\textwidth]{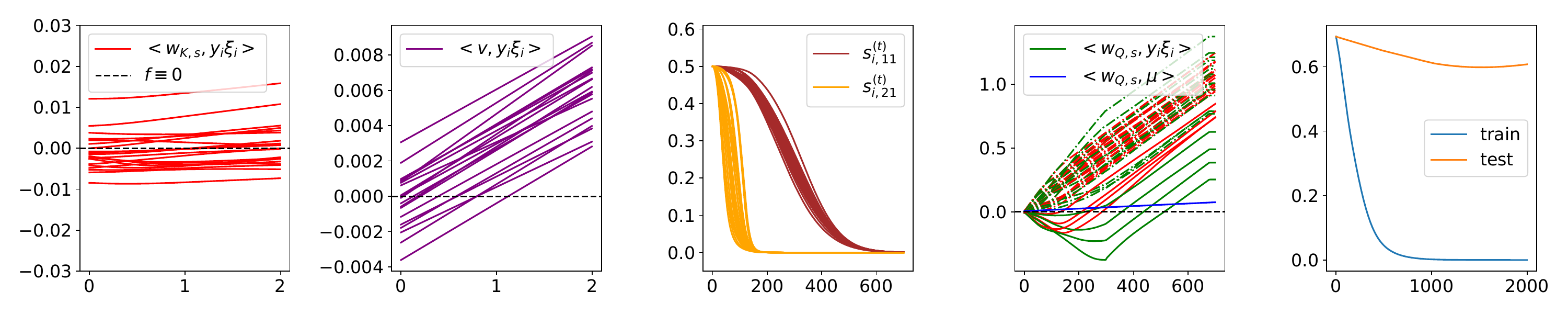}
    \caption{
    Data setting \textbf{(c)} in Tab.~\ref{tab:experimental_settings} with $d=2000$.
    }
    \label{fig:overview_d2000_n20_s800_orthoTrue}
\end{figure}

\begin{figure}[htbp]
    \centering
    \includegraphics[width=1.0\textwidth]{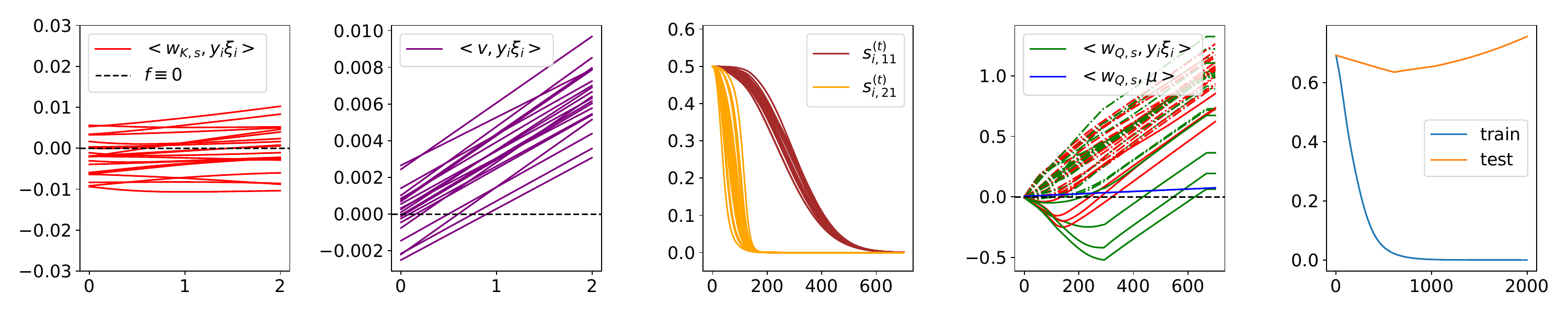}
    \caption{
    Data setting \textbf{(d)} in Tab.~\ref{tab:experimental_settings} with $d=2000$.
    }
    \label{fig:overview_d2000_n20_s800_orthoFalse}
\end{figure}

\subsection{Comparison with Adam and Gradient Descent}
\label{app:comprasion with adam and gd}

In Fig.~\ref{fig:optimizers_stage_iv},~\ref{fig:optimizers_softmax},~\ref{fig:optimizers_loss}, 
we plot the dynamics of softmax outputs, query signal, query and key noise, and training and test loss with different optimizers.
The figures show that our theoretical results almost hold empirically in Adam except the sign alignment of  negative query noise in the Stage IV. This is due to Adam utilizes the information of history gradients to modify the current update. 
We give an explanation about this. When the key signal becomes dominant in the gradients of query noise in sign gradient descent, it is actually very small in magnitude, which implies the small gradients can be easily dominated by the momentum in Adam.
But otherwise, symbolic gradient descent is almost similar to how Adam behaves under different hyperparameters.

Also, we can observe that the convergence of softmax outputs in sign gradient descent and Adam is much faster than that in gradient descent. At around $t=600$, the softmax outputs just start to leave $1/2$ in gradient descent but is almost converged in sign gradient descent and Adam.
It is also noted that gradient descent can leads to small generalization gap in this setting.

\begin{figure}[htbp]
    \centering
    \includegraphics[width=1.0\textwidth]{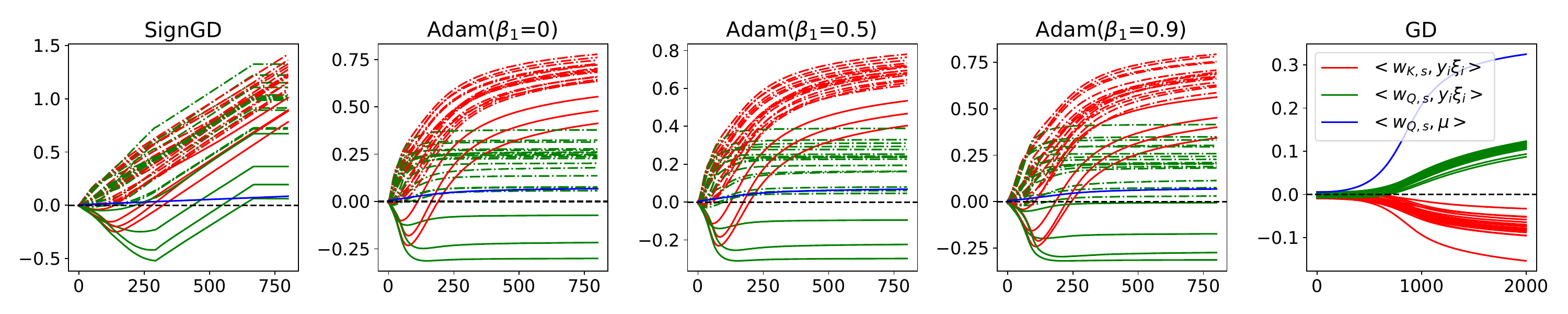}
    \caption{
    Sign alignment of signal to noise by majority voting in Stage III and 
    sign alignment of negative noise to query signal by decay of noise-signal softmax outputs in Stage IV.
    We use data setting \textbf{(d)} in Tab.~\ref{tab:experimental_settings} with $d=2000$.
    We always use $\beta_2 = 0.99$ and $\epsilon=\text{1e-15}$ in Adam.
    }
    \label{fig:optimizers_stage_iv}
\end{figure}

\begin{figure}[htbp]
    \centering
    \includegraphics[width=1.0\textwidth]{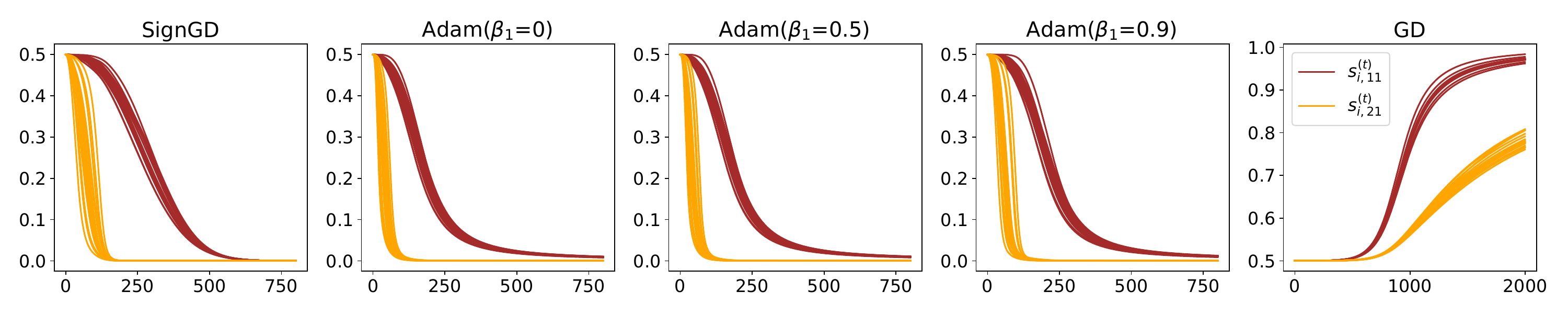}
    \caption{
    The dynamics of softmax outputs.
    We use data setting \textbf{(d)} in Tab.~\ref{tab:experimental_settings} with $d=2000$.
    We always use $\beta_2 = 0.99$ and $\epsilon=\text{1e-15}$ in Adam.
    }
    \label{fig:optimizers_softmax}
\end{figure}

\begin{figure}[htbp]
    \centering
    \includegraphics[width=1.0\textwidth]{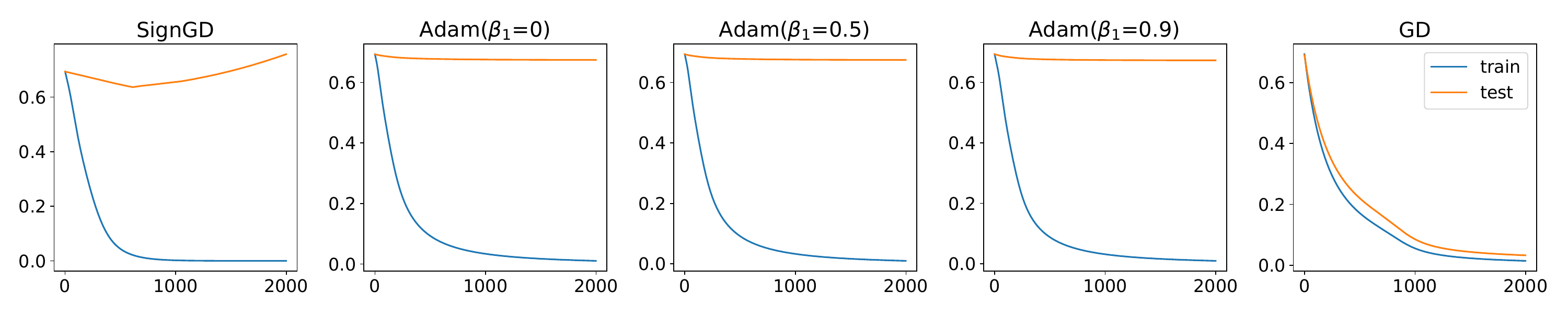}
    \caption{
    Training and test loss.
    We use data setting \textbf{(d)} in Tab.~\ref{tab:experimental_settings} with $d=2000$.
    We always use $\beta_2 = 0.99$ and $\epsilon=\text{1e-15}$ in Adam.
    }
    \label{fig:optimizers_loss}
\end{figure}

\subsection{Comparison with Greater Context Length}

In this section, we investigate when will happen when context length is greater than 2, i.e., $L > 2$. At this time, the model is defined as 
\begin{align*}
    F_j(\Wv, \Xv) 
    := \frac{1}{m_v}\sum_{l=1}^{L}\textbf{1}^\top_{m_v} 
    \Wv_{V, j}\Xv\softmax{\Xv^{\top}\Wv_{K}^{\top}\Wv_{Q}\xv^{(l)}}.
\end{align*}
For each data point $(\Xv, y)$, predictor $\Xv = [\xv^{(1)}, \xv^{(2)}, \dots, \xv^{(L)}] \in \Rb^{d\times L}$ have $L$ patches (or tokens),
where $\xv^{(1)}, \xv^{(2)}, \dots, \xv^{(L)} \in \Rb^{d}$, and label $y$ is binary, i.e., $y \in \set{\pm1}$.
The data generation is similar to the $L = 2$ case, except that we randomly select $L/2$ patches and assign them by $y\muv$ as signal patches, while the remaining $L/2$ patches are noise patches. The noise patches in one data sample are mutually independent. 

In our experiments, we use $L = 10$, and Fig.~\ref{fig:context_length_stage_iv},~\ref{fig:context_length_softmax},~\ref{fig:context_length_loss} plot the dynamics of softmax outputs, query signal, query and key noise, and training and test loss with different optimizers.
In those figures, for all $i\in[n]$, $a_i$ is defined as 
\begin{align*}
    a_i = \arg\max_{l\in[L]} \{s_{i,1l}^{(T)}\}.
\end{align*}
We empirically observe that for all data point, only one element in each line of $L \times L$ post-softmax attention matrix is activated, while other elements in the line are almost zero, which means that the attention attends to only one patch.
This patch for each line is also uniform in different lines and therefore uniquely corresponds to one data point, which is exactly $a_i$. In the $L=2$ case, we have $a_i = 2$ for all $i\in[n]$. 
But when $L > 2$ and there are many noise patches in one data sample, $a_i$ can be varied across samples but $X^{(a_i)}$ must be a noise patch.

The greater context length case is consistent with the $L=2$ case, and thus consistent with our theoretical results in the sense that 
(1) For all $i\in[n]$, the dominated query noise $\langle\wv_{Q,s}, y_iX_i^{(a_i)}\rangle$ and key noise $\langle\wv_{K,s}, y_iX_i^{(a_i)}\rangle$ are the same in sign with the query signal $\langle\wv_{Q,s}, \muv\rangle$.
(2) The convergence of noise-signal/noise softmax outputs is very fast and faster than signal-signal/noise softmax outputs.
For gradient descent, while the loss fast converges, but the query and key parameters are basically not learned, even with learning rate $\eta=1$.

\begin{figure}[htbp]
    \centering
    \includegraphics[width=1.0\textwidth]{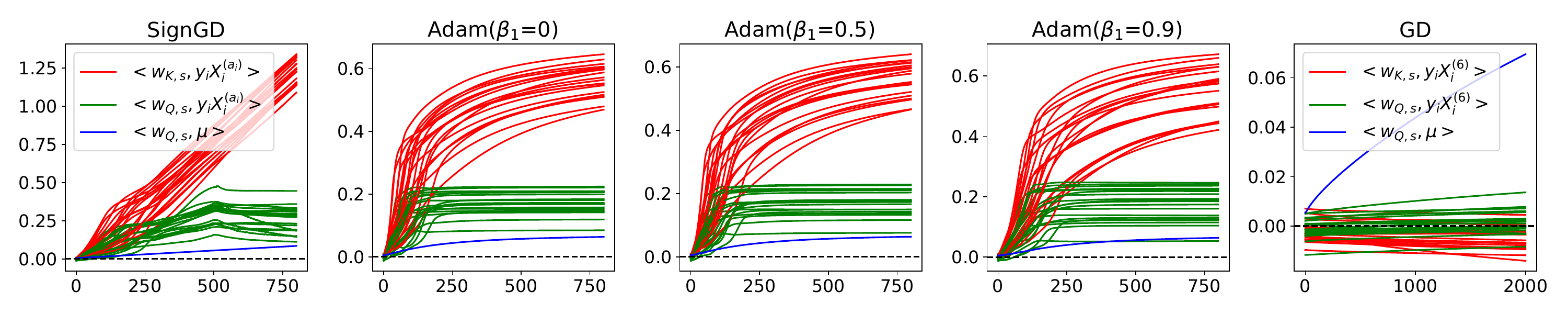}
    \caption{
    The dynamics of query noise, key noise and query signal. 
    We use data setting \textbf{(d)} in Tab.~\ref{tab:experimental_settings} with $d=2000$.
    We use $\beta_2 = 0.99$ and $\epsilon=\text{1e-15}$ in Adam and we use $\eta=1.0$ for gradient descent.
    }
    \label{fig:context_length_stage_iv}
\end{figure}

\begin{figure}[htbp]
    \centering
    \includegraphics[width=1.0\textwidth]{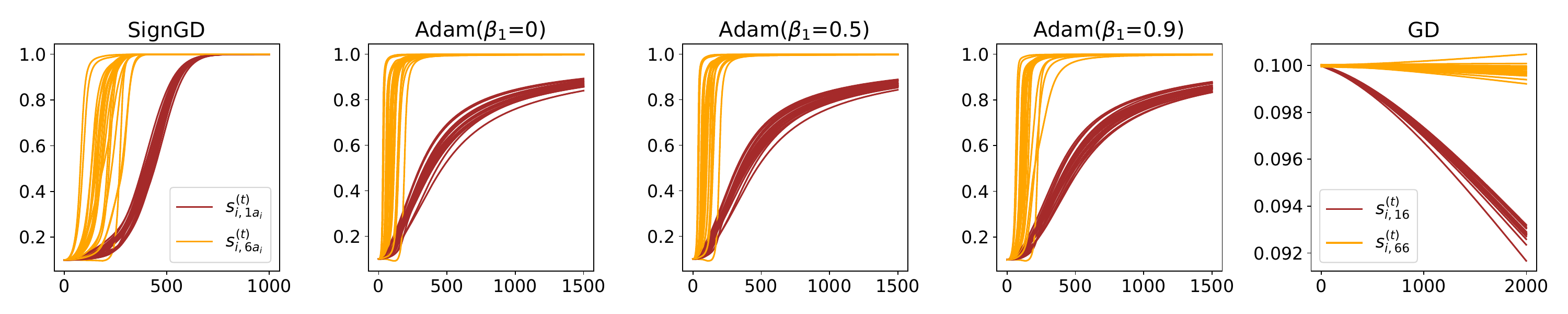}
    \caption{
    The dynamics of softmax outputs. 
    We use data setting \textbf{(d)} in Tab.~\ref{tab:experimental_settings} with $d=2000$.
    We use $\beta_2 = 0.99$ and $\epsilon=\text{1e-15}$ in Adam and we use $\eta=1.0$ for gradient descent.
    }
    \label{fig:context_length_softmax}
\end{figure}

\begin{figure}[htbp]
    \centering
    \includegraphics[width=1.0\textwidth]{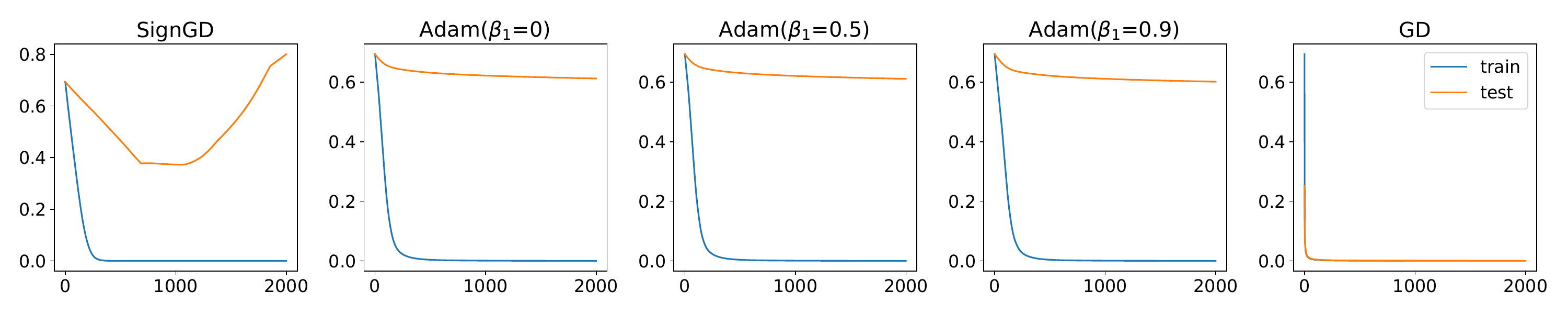}
    \caption{
    Training and test loss.
    We use data setting \textbf{(d)} in Tab.~\ref{tab:experimental_settings} with $d=2000$.
    We use $\beta_2 = 0.99$ and $\epsilon=\text{1e-15}$ in Adam and we use $\eta=1.0$ for gradient descent.
    In this case, gradient descent converges much faster than sign gradient descent and Adam since we use a large learning rate $\eta=1.0$, which is \text{1e4} times of the learning rate used in sign gradient descent and Adam.
    }
    \label{fig:context_length_loss}
\end{figure}

{

\newpage
\subsection{Comparison with Multi-head Attention}

See Fig.~\ref{fig:full_dynamics_4head} for the full dynamics of a simplified transformer model with 4 attention heads using SignGD. We observe that the softmax outputs of each head exhibit the same behaviors, and all dynamics are consistent with the single-head case.

\begin{figure}[H]
    \centering
    \includegraphics[width=1.0\linewidth]{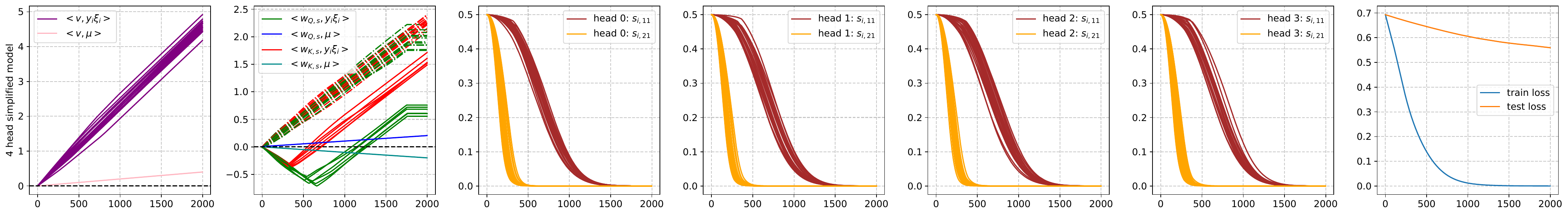}
    \caption{Equivalent of Fig.~\ref{fig:full_dynamics_fig1} but using 4 head in the simplified transformer model.
    The subfigure 3-6 shows the dynamics of softmax outputs in each attention head.
    } 
    \label{fig:full_dynamics_4head}
\end{figure}

\newpage
\subsection{Comparison with More Complex Attention Models}

We have conducted additional experiments on deeper transformers using our synthetic dataset with SignGD, exploring various settings. Specifically, we extend our analysis to models with additional attention layers, MLP layers, and residual connections, which are essential components of modern transformer architectures. Since our theory primarily predicts the behavior of data-parameter inner products, for transformers with multiple attention layers, we focus on the dynamics of the first layer.

To examine how well the key behaviors identified by our theory persist in more complex models, we performed an ablation study. We provide the full dynamics of all relevant quantities in Fig.~\ref{fig:full_dynamics_more_models_w-res} and Fig.~\ref{fig:full_dynamics_more_models_wo-res} and augment these results with Tables~\ref{tab:2L, w/o MLP, w/ residual}-\ref{tab:3L, w/ MLP, w/o residual}, which illustrate the sign alignment behavior during Stage II.

{\bf Transformers with Residual Connections.}
Firstly, on transformers with residual connections, across all model configurations we tested—including 2-layer transformers without MLPs, 3-layer transformers without MLPs, 2-layer transformers with MLPs, and 3-layer transformers with MLPs—we observe the following behaviors, consistent with our theoretical predictions:
\begin{itemize}
\item 
Stage I: Value noise increases faster than query and key noise, and the value signal remains small relative to the value noise. 
\item
Stage II: Query and key noise exhibit sign alignment behavior early in training.
\item
Stage III: The query and key signals have opposite signs, determined by query (and key) noise via a majority-voting mechanism.
\item
Stage IV: Noise-feature softmax outputs firstly decay and decay exponentially, and both negative query and key noise align with the query signal.
\end{itemize}
However, we remark that in more complex models, the final alignment observed in Stage IV—i.e., the flip of negative query and key noise—often halts midway. This phenomenon becomes more pronounced with the addition of MLP layers, where the final alignment stops earlier. We attribute this behavior to the \textit{rapid shrinking of query and key gradients}. This is partly driven by the decay of softmax outputs (as shown in Lemma~\ref{lemma:qk_sign_udpate}). Furthermore, as the number of layers increases and/or MLP layers are introduced, additional layers significantly contribute to this gradient shrinkage, as illustrated in the last column of Figure~\ref{fig:full_dynamics_more_models_w-res}.
It is worth noting that this gradient shrinking is a numerical precision issue unrelated to our theory. In theory, the sign operation maps gradients to ±1 regardless of their magnitude. However, in practice, extremely small gradients are rounded to zero, disrupting the alignment process. Despite this, we conclude that the key behaviors predicted by our theory persist in deeper transformers with residual connections.

{\bf Transformers without Residual Connections.}
On the other hand, in deeper transformers lacking residual connections, the dynamics become erratic. While some short-term behaviors (e.g., sign alignment between query and key noise in Stage II, and the opposing signs between query and key signal) are preserved (see Tables~\ref{tab:2L, w/o MLP, w/o residual}-\ref{tab:3L, w/ MLP, w/o residual}, and Figure~\ref{fig:full_dynamics_more_models_wo-res}), long-term behaviors deviate significantly from theoretical predictions. For instance:
\begin{itemize}
    \item 
Feature-feature softmax outputs start to increase instead of decreasing.
\item 
The dynamics of positive key noise become non-monotonic.
\item 
Value noise exhibits irregular patterns rather than increasing consistently.
\end{itemize}
Additionally, we remark that the training dynamics of transformers without residual connections are less stable and more irregular compared to those with residual connections. This instability may be linked to the phenomenon of rank collapse in transformers, as discussed in prior works~\citep{DCL21, NAB+22}.

Based on these findings, we conclude that the key behaviors predicted by our theory persist in deeper transformers with residual connections. Without the residual connections, the key behaviors outlined in our theory are only partially preserved. Understanding the behaviors in subsequent layers and why deeper models make the gradient of first layer shrink faster could be a future direction.

\begin{figure}[H]
    \centering
    \includegraphics[width=1.0\textwidth,height=1.0\textheight,keepaspectratio]{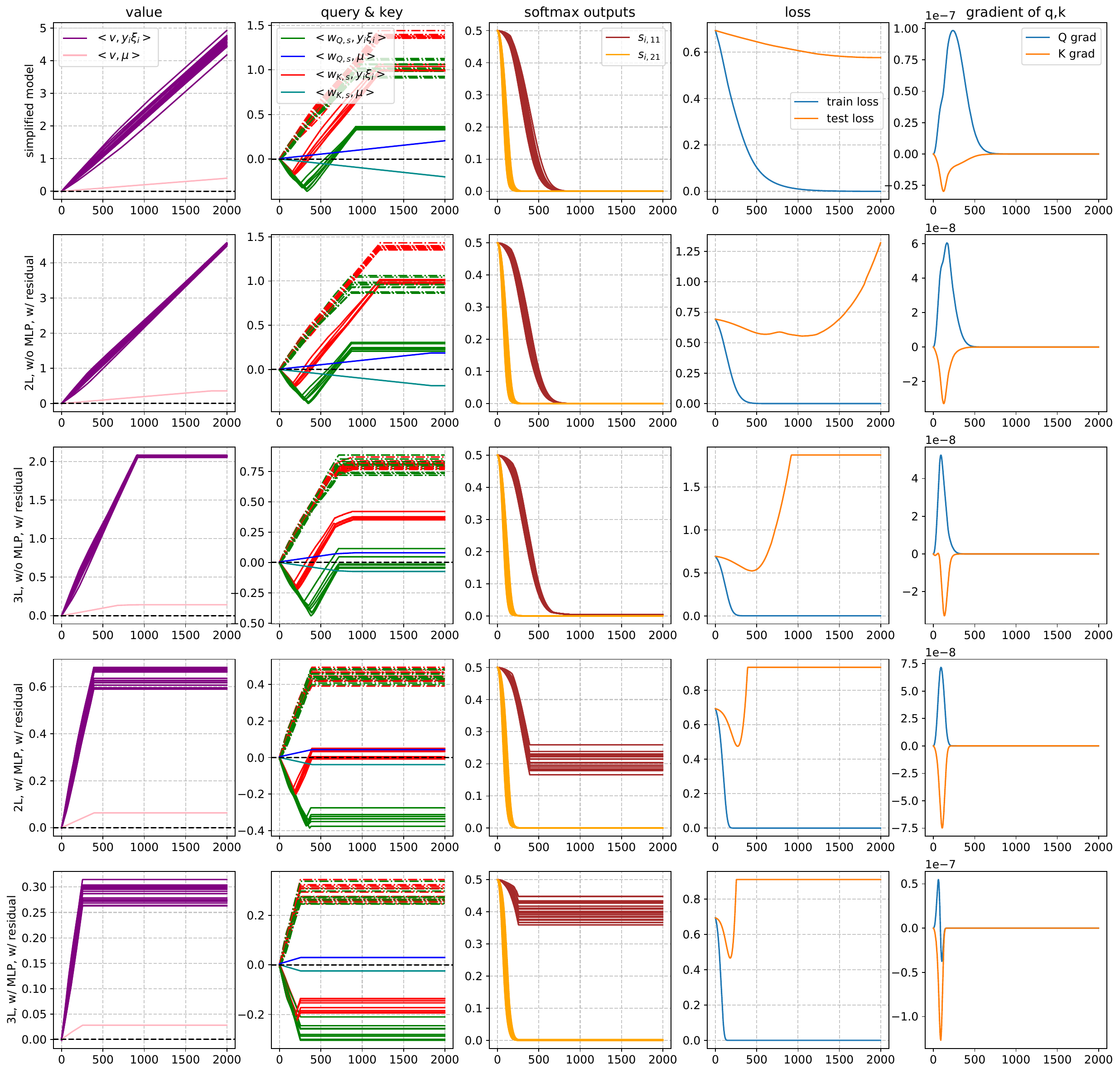}
    \caption{
        Equivalent of Fig.~\ref{fig:full_dynamics_fig1} but with four additional configurations for transformer models: 
        1) 2 layer attention-only transformer blocks with residual connections; 
        2) 3 layer attention-only transformer blocks with residual connections; 
        3) 2 layer attention+MLP transformer blocks with residual connections; 
        4) 3 layer attention+MLP transformer blocks with residual connections; 
    }
    \label{fig:full_dynamics_more_models_w-res}
\end{figure}

\begin{figure}[H]
    \centering
    \includegraphics[width=1.0\textwidth,height=1.0\textheight,keepaspectratio]{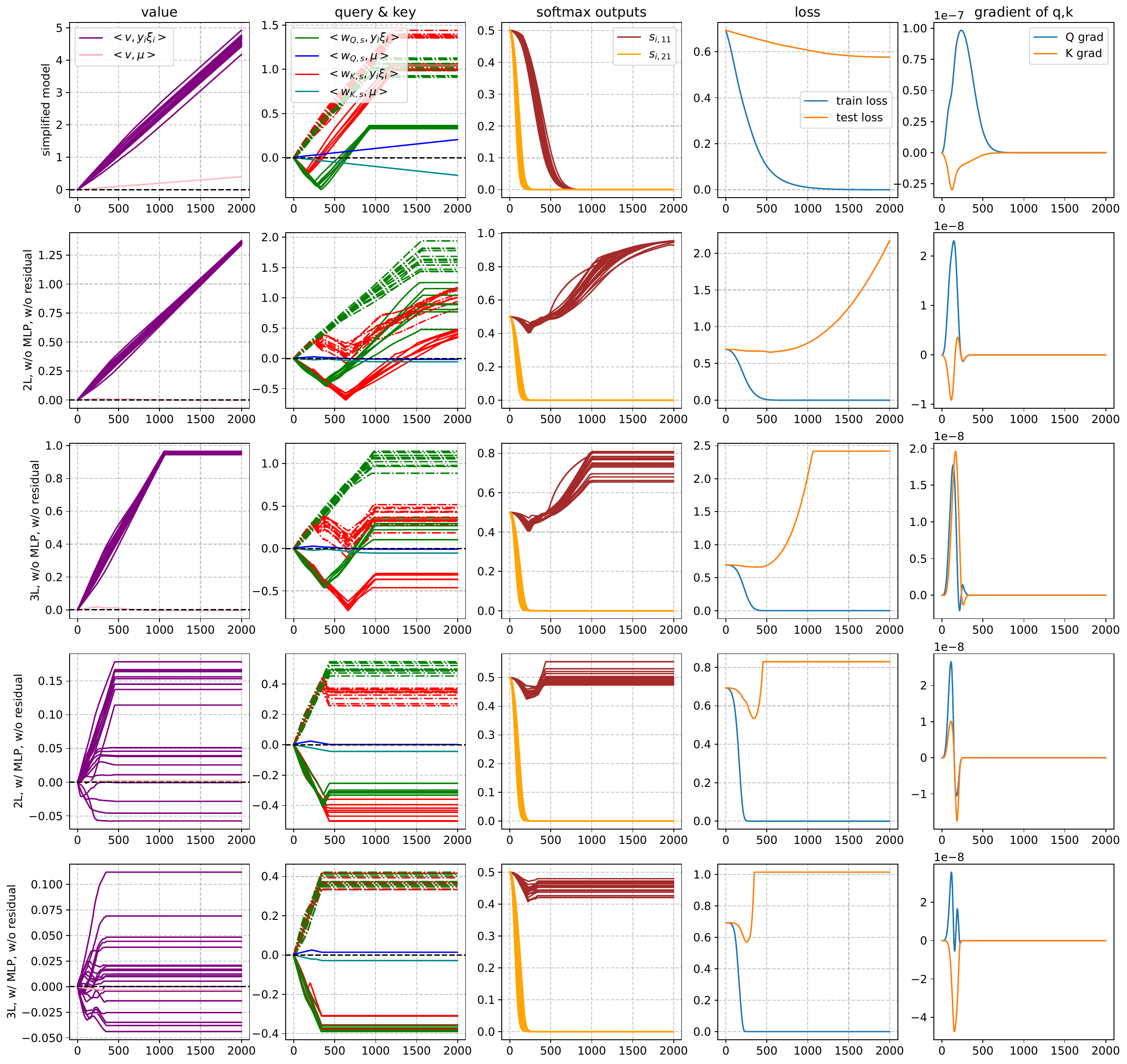}
    \caption{
        Equivalent of Fig.~\ref{fig:full_dynamics_fig1} but with four additional configurations for transformer models: 
        1) 2 layer attention-only transformer blocks without residual connections; 
        2) 3 layer attention-only transformer blocks without residual connections; 
        3) 2 layer attention+MLP transformer blocks without residual connections; 
        4) 3 layer attention+MLP transformer blocks without residual connections; 
        The last line represents the mean gradient of $\Wv_Q$ and $\Wv_K$ parameters in each iteration.
    }
    \label{fig:full_dynamics_more_models_wo-res}
\end{figure}

\begin{table}[htbp]
    \centering
    \caption{
    {
    Sign alignment between query and key noise in Stage II for model: 2L, w/o MLP, w/ residual.
    The notation in this table and following tables is the same as Tab.~\ref{tab:main_sign_aligment_query_key}.
    } }
    \label{tab:2L, w/o MLP, w/ residual}
    \begin{tabular}{c|cccc|c}
        \toprule
        init($t=0$)$\backslash t=10$ & {\small $|S_{K+,Q+}^{(t)}|$} & {\small $|S_{K+,Q-}^{(t)}|$} & {\small $|S_{K-,Q+}^{(t)}|$} & {\small $|S_{K-,Q-}^{(t)}|$} & Row sum \\
        \midrule
        {\small $|S_{K+,Q+}^{(0)}|$} & 483 & 2 & 1 & 26 & 512 \\
        {\small $|S_{K+,Q-}^{(0)}|$} & 242 & 3 & 9 & 253 & 507 \\
        {\small $|S_{K-,Q+}^{(0)}|$} & 224 & 10 & 3 & 221 & 458 \\
        {\small $|S_{K-,Q-}^{(0)}|$} & 37 & 3 & 3 & 480 & 523 \\

        \midrule
        Column sum & 986 & 18 & 16 & 980 & 2000 \\
        \bottomrule
    \end{tabular}
\end{table}

\begin{table}[htbp]
    \centering
    \caption{
    {
    Sign alignment between query and key noise in Stage II for model: 3L, w/o MLP, w/ residual.
    } }
    \label{tab:3L, w/o MLP, w/ residual}
    \begin{tabular}{c|cccc|c}
        \toprule
        init($t=0$)$\backslash t=10$ & {\small $|S_{K+,Q+}^{(t)}|$} & {\small $|S_{K+,Q-}^{(t)}|$} & {\small $|S_{K-,Q+}^{(t)}|$} & {\small $|S_{K-,Q-}^{(t)}|$} & Row sum \\
        \midrule
        {\small $|S_{K+,Q+}^{(0)}|$} & 482 & 2 & 0 & 28 & 512 \\
        {\small $|S_{K+,Q-}^{(0)}|$} & 243 & 4 & 7 & 253 & 507 \\
        {\small $|S_{K-,Q+}^{(0)}|$} & 222 & 12 & 3 & 221 & 458 \\
        {\small $|S_{K-,Q-}^{(0)}|$} & 39 & 1 & 3 & 480 & 523 \\

        \midrule
        Column sum & 986 & 19 & 13 & 982 & 2000 \\
        \bottomrule
    \end{tabular}
\end{table}

\begin{table}[htbp]
    \centering
    \caption{
    {
    Sign alignment between query and key noise in Stage II for model: 2L, w/ MLP, w/ residual.
    } }
    \label{tab:2L, w/ MLP, w/ residual}
    \begin{tabular}{c|cccc|c}
        \toprule
        init($t=0$)$\backslash t=10$ & {\small $|S_{K+,Q+}^{(t)}|$} & {\small $|S_{K+,Q-}^{(t)}|$} & {\small $|S_{K-,Q+}^{(t)}|$} & {\small $|S_{K-,Q-}^{(t)}|$} & Row sum \\
        \midrule
        {\small $|S_{K+,Q+}^{(0)}|$} & 479 & 4 & 0 & 29 & 512 \\
        {\small $|S_{K+,Q-}^{(0)}|$} & 241 & 14 & 6 & 246 & 507 \\
        {\small $|S_{K-,Q+}^{(0)}|$} & 217 & 9 & 11 & 221 & 458 \\
        {\small $|S_{K-,Q-}^{(0)}|$} & 39 & 3 & 2 & 479 & 523 \\

        \midrule
        Column sum & 976 & 30 & 19 & 975 & 2000 \\
        \bottomrule
    \end{tabular}
\end{table}

\begin{table}[htbp]
    \centering
    \caption{
    {
    Sign alignment between query and key noise in Stage II for model: 3L, w/ MLP, w/ residual.
    } }
    \label{tab:3L, w/ MLP, w/ residual}
    \begin{tabular}{c|cccc|c}
        \toprule
        init($t=0$)$\backslash t=10$ & {\small $|S_{K+,Q+}^{(t)}|$} & {\small $|S_{K+,Q-}^{(t)}|$} & {\small $|S_{K-,Q+}^{(t)}|$} & {\small $|S_{K-,Q-}^{(t)}|$} & Row sum \\
        \midrule
        {\small $|S_{K+,Q+}^{(0)}|$} & 481 & 3 & 1 & 27 & 512 \\
        {\small $|S_{K+,Q-}^{(0)}|$} & 232 & 25 & 16 & 234 & 507 \\
        {\small $|S_{K-,Q+}^{(0)}|$} & 213 & 9 & 20 & 216 & 458 \\
        {\small $|S_{K-,Q-}^{(0)}|$} & 36 & 3 & 1 & 483 & 523 \\

        \midrule
        Column sum & 962 & 40 & 38 & 960 & 2000 \\
        \bottomrule
    \end{tabular}
\end{table}

\begin{table}[htbp]
    \centering
    \caption{
    {
    Sign alignment between query and key noise in Stage II for model: 2L, w/o MLP, w/o residual.
    } }
    \label{tab:2L, w/o MLP, w/o residual}
    \begin{tabular}{c|cccc|c}
        \toprule
        init($t=0$)$\backslash t=10$ & {\small $|S_{K+,Q+}^{(t)}|$} & {\small $|S_{K+,Q-}^{(t)}|$} & {\small $|S_{K-,Q+}^{(t)}|$} & {\small $|S_{K-,Q-}^{(t)}|$} & Row sum \\
        \midrule
        {\small $|S_{K+,Q+}^{(0)}|$} & 486 & 2 & 0 & 24 & 512 \\
        {\small $|S_{K+,Q-}^{(0)}|$} & 249 & 4 & 4 & 250 & 507 \\
        {\small $|S_{K-,Q+}^{(0)}|$} & 224 & 7 & 4 & 223 & 458 \\
        {\small $|S_{K-,Q-}^{(0)}|$} & 33 & 2 & 1 & 487 & 523 \\

        \midrule
        Column sum & 992 & 15 & 9 & 984 & 2000 \\
        \bottomrule
    \end{tabular}
\end{table}

\begin{table}[htbp]
    \centering
    \caption{
    {
    Sign alignment between query and key noise in Stage II for model: 3L, w/o MLP, w/o residual.
    } }
    \label{tab:3L, w/o MLP, w/o residual}
    \begin{tabular}{c|cccc|c}
        \toprule
        init($t=0$)$\backslash t=10$ & {\small $|S_{K+,Q+}^{(t)}|$} & {\small $|S_{K+,Q-}^{(t)}|$} & {\small $|S_{K-,Q+}^{(t)}|$} & {\small $|S_{K-,Q-}^{(t)}|$} & Row sum \\
        \midrule
        {\small $|S_{K+,Q+}^{(0)}|$} & 483 & 1 & 2 & 26 & 512 \\
        {\small $|S_{K+,Q-}^{(0)}|$} & 238 & 9 & 13 & 247 & 507 \\
        {\small $|S_{K-,Q+}^{(0)}|$} & 222 & 8 & 9 & 219 & 458 \\
        {\small $|S_{K-,Q-}^{(0)}|$} & 34 & 4 & 5 & 480 & 523 \\

        \midrule
        Column sum & 977 & 22 & 29 & 972 & 2000 \\
        \bottomrule
    \end{tabular}
\end{table}

\begin{table}[htbp]
    \centering
    \caption{
    {
    Sign alignment between query and key noise in Stage II for model: 2L, w/ MLP, w/o residual.
    } }
    \label{tab:2L, w/ MLP, w/o residual}
    \begin{tabular}{c|cccc|c}
        \toprule
        init($t=0$)$\backslash t=10$ & {\small $|S_{K+,Q+}^{(t)}|$} & {\small $|S_{K+,Q-}^{(t)}|$} & {\small $|S_{K-,Q+}^{(t)}|$} & {\small $|S_{K-,Q-}^{(t)}|$} & Row sum \\
        \midrule
        {\small $|S_{K+,Q+}^{(0)}|$} & 484 & 1 & 0 & 27 & 512 \\
        {\small $|S_{K+,Q-}^{(0)}|$} & 245 & 5 & 12 & 245 & 507 \\
        {\small $|S_{K-,Q+}^{(0)}|$} & 213 & 12 & 5 & 228 & 458 \\
        {\small $|S_{K-,Q-}^{(0)}|$} & 41 & 0 & 2 & 480 & 523 \\

        \midrule
        Column sum & 983 & 18 & 19 & 980 & 2000 \\
        \bottomrule
    \end{tabular}
\end{table}

\begin{table}[htbp]
    \centering
    \caption{
    {
    Sign alignment between query and key noise in Stage II for model: 3L, w/ MLP, w/o residual.
    } }
    \label{tab:3L, w/ MLP, w/o residual}
    \begin{tabular}{c|cccc|c}
        \toprule
        init($t=0$)$\backslash t=10$ & {\small $|S_{K+,Q+}^{(t)}|$} & {\small $|S_{K+,Q-}^{(t)}|$} & {\small $|S_{K-,Q+}^{(t)}|$} & {\small $|S_{K-,Q-}^{(t)}|$} & Row sum \\
        \midrule
        {\small $|S_{K+,Q+}^{(0)}|$} & 473 & 4 & 3 & 32 & 512 \\
        {\small $|S_{K+,Q-}^{(0)}|$} & 233 & 12 & 14 & 248 & 507 \\
        {\small $|S_{K-,Q+}^{(0)}|$} & 211 & 14 & 9 & 224 & 458 \\
        {\small $|S_{K-,Q-}^{(0)}|$} & 36 & 2 & 6 & 479 & 523 \\

        \midrule
        Column sum & 953 & 32 & 32 & 983 & 2000 \\
        \bottomrule
    \end{tabular}
\end{table}

\subsection{Explanations for Differences between SignGD and Adam}

Although SignGD can serve as a proxy for understanding Adam, our experiments reveal notable differences between the two. In Figure~\ref{fig:main2}(a) and (b), SignGD causes the negative query to eventually become positive, whereas it remains negative with Adam. Additionally, in Figure~\ref{fig:main2}(c), the training loss of SignGD converges linearly, while Adam exhibits sublinear convergence. While we previously suggested that these differences might arise from Adam’s momentum term, we did not provide detailed evidence. Here, we try to explain these differences in terms of training dynamics and convergence rates.

To investigate factors influencing Adam’s behavior, we vary its $\beta$ parameters and conduct experiments under the same model and dataset as in Figure~\ref{fig:main2}. In Figure~\ref{fig:main2}, we observe that $\beta_1$ values ranging from 0 (no first moment) to 0.9 (commonly used in practice) do not significantly impact training speed. Similarly, in Figure~\ref{fig:optimizers_stage_iv}, changes in $\beta_1$ have little effect on training dynamics. Thus, our focus shifts to the role of $\beta_2$.

{\bf Convergence rate.} 
In Figure~\ref{fig:training_loss_adam_beta2},we observe that when $\beta_2 > 0.9$, the training loss exhibits a sublinear convergence rate. We remark that when the $\beta_2 < 0.9$, the loss curve closely resembles that of SignGD, thus we use a range of [0.9, 0.999] for $\beta_2$.
Since the training loss convergence is primarily driven by the growth of mean value noise, we believe this behavior can be approximated through the analysis of a linear model fitting the noise.

{\bf Training Dynamics.}
Figure~\ref{fig:full_dynamics_adam_beta2} (first row) shows that only small values of $\beta_2$ prevent the negative query noise from turning positive. As $\beta_2$ increases, the dynamics become smoother, and the evolution of query noise halts earlier.

To understand this, we examine the mean gradient and update magnitude in the second and last rows of Figure~\ref{fig:full_dynamics_adam_beta2}. Unlike deeper transformers, the query and key gradients do not shrink faster. Instead, Adam’s update magnitude for query parameters decays to zero before the gradients approach zero. This early decay of the update magnitude (or effective step size) can be attributed to $\beta_2$. 
As $\beta_2$ increases, the update magnitude decreases earlier, while the gradient shrinkage occurs at the same point.

These observations suggest that $\beta_2$ plays a crucial role in both the convergence rate and training dynamics of Adam, highlighting key differences from SignGD.

\begin{figure}[htb]
    \centering
    \includegraphics[width=1.0\linewidth]{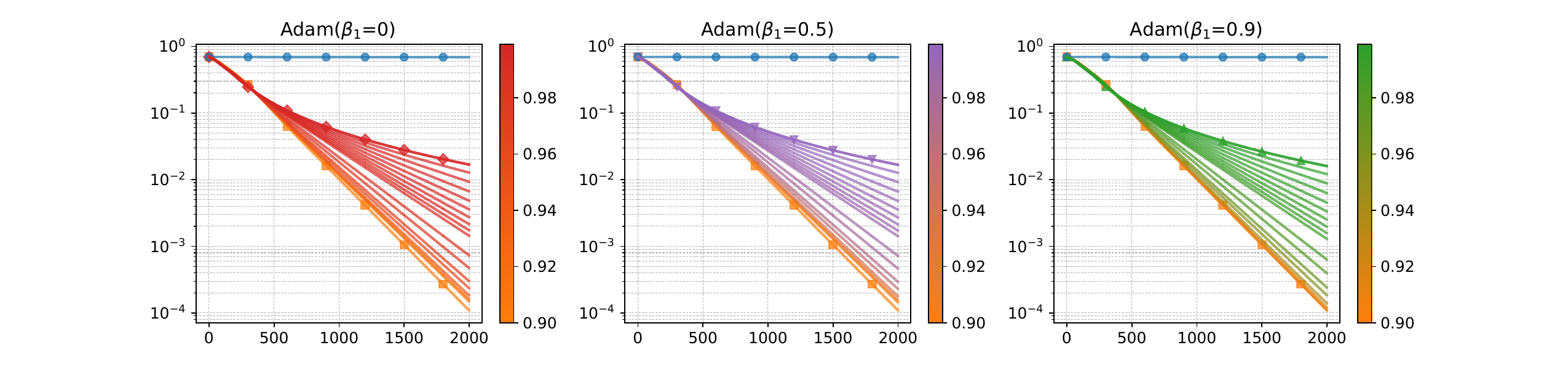}
    \caption{Training loss curve (log scale) on the synthetic dataset for different optimizers. In each plot, $\beta_1$ is fixed to 0, 0.5, or 0.9, respectively, while $\beta_2$ varies from 0.9 to 0.999. The colorbar next to each plot represents the value of $\beta_2$. This range for $\beta_2$ is chosen because we observe that the training loss of Adam with $\beta_2$ values below 0.9 is very similar to that of SignGD.}
    \label{fig:training_loss_adam_beta2}
\end{figure}

\begin{figure}[htb]
    \centering
    \includegraphics[width=1.0\linewidth]{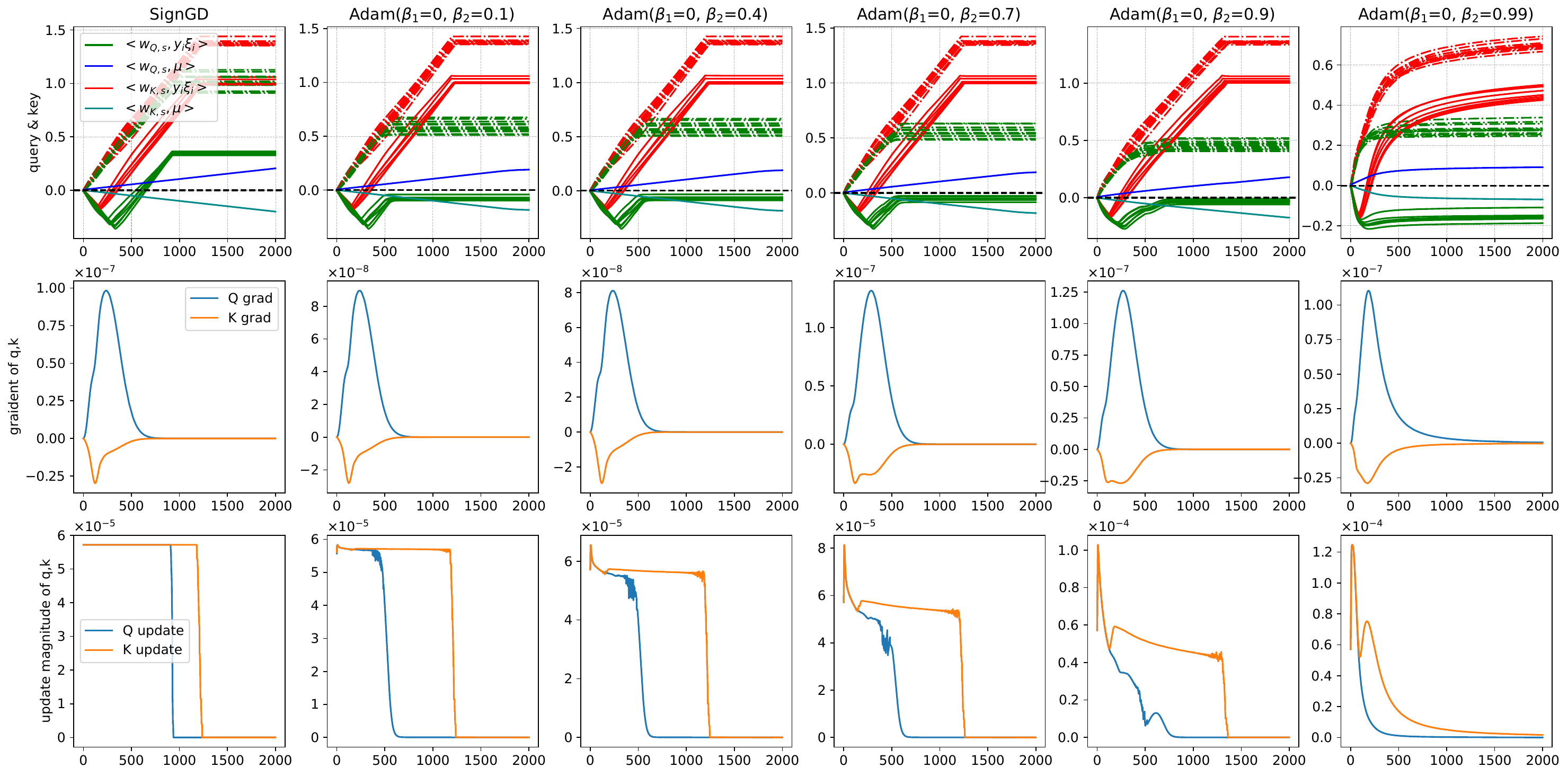}
    \caption{
    Equivalent of Fig.~\ref{fig:full_dynamics_fig1} but using Adam with $\beta_1 = 0$ and varying $\beta_2$. 
    The last row represents the mean update magnitude, i.e., $\ell_1$ norm between next the current iterates, of parameters $\Wv_Q$, $\Wv_K$ at all iterations.
    }
    \label{fig:full_dynamics_adam_beta2}
\end{figure}

}

\newpage
\section{Preliminary Lemmas}
\label{app:Preliminary-Lemmas}

The following lemma studies non-overlap support property in sparse data model.
\begin{lemma}[Non-overlapping support of noise, Lemma C.1 in~\cite{ZCLG23}]
\label{lemma:sparse_data_disjoint_support}
Suppose $s = \Omega(d^{1/2}n^{-2})$.
Let $\set{(\Xv_i, y_i)}_{i=1,\dots n}$
be the training dataset sampled according
to Definition~\ref{def:data_model}. 
Moreover, let $\mathcal{B}_i = \supp(\xiv_i)$ be the support of $\xiv_i$.
Then with probability at least $1-n^{-2}$, $\mathcal{B}_i \cap \mathcal{B}_j = \emptyset$ for all $i,j\in[n]$.
\end{lemma}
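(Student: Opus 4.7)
The plan is a straightforward union bound over the $\binom{n}{2}$ pairs of noise patches, using the fact that each $\mathcal{B}_i$ is a uniformly random size-$s$ subset of $[d]\setminus\{1\}$. The first step is to control the pairwise overlap probability: for any fixed $i \neq j$, I will condition on $\mathcal{B}_i$ (any fixed set of size $s$ in a universe of size $d-1$) and use the fact that $\mathcal{B}_j$ is drawn uniformly at random from the size-$s$ subsets of $[d]\setminus\{1\}$, independently of $\mathcal{B}_i$. By a simple union bound over the $s$ elements of $\mathcal{B}_i$, each of which lies in $\mathcal{B}_j$ with probability $s/(d-1)$, I obtain the estimate
\[
\Pr(\mathcal{B}_i \cap \mathcal{B}_j \neq \emptyset) \leq \frac{s^2}{d-1}.
\]

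The second step is a union bound over all $\binom{n}{2} \le n^2/2$ unordered pairs:
\[
\Pr\bigl(\exists\, i\neq j: \mathcal{B}_i\cap \mathcal{B}_j\neq \emptyset\bigr)
\;\leq\; \binom{n}{2}\frac{s^2}{d-1}
\;\leq\; \frac{n^2 s^2}{2(d-1)}.
\]
The final step is to substitute the sparsity condition. Under $s = O(d^{1/2} n^{-2})$ (the range relevant to Condition~\ref{cond:main_condition}, where $s = \Theta(d^{1/2}n^{-2})$), we get $n^2 s^2 /(d-1) = O(n^{-2})$, so for $d$ sufficiently large this upper bound is at most $n^{-2}$. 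Taking complements yields that with probability at least $1 - n^{-2}$, the supports $\mathcal{B}_1,\dots,\mathcal{B}_n$ are pairwise disjoint.

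There is no serious obstacle here; the only subtlety is the correct counting of the pairwise overlap probability (one could equally well use the hypergeometric identity $\Pr(\mathcal{B}_i\cap\mathcal{B}_j=\emptyset) = \binom{d-1-s}{s}/\binom{d-1}{s}$ and bound $1$ minus that quantity by $s^2/(d-1)$), together with checking that the factor $s^2/d$ is small enough to absorb the $n^2$ loss from the pairwise union bound. Everything else is bookkeeping, and the result is used only as a convenient structural fact so that the noise directions $\xiv_i$ have mutually orthogonal supports throughout the subsequent feature-learning analysis.
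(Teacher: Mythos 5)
Your approach is correct and is the standard one (union bound over pairs, marginal containment probability $s/(d-1)$), which is exactly how Lemma C.1 in the cited reference proceeds; the paper itself just imports the statement without reproving it. You also rightly observe that the lemma's hypothesis is effectively used as an upper bound $s = O(d^{1/2}n^{-2})$ even though it is typeset as $\Omega$; the operative condition comes from Condition~\ref{cond:main_condition}'s $\Theta$. One small inaccuracy: the final step ``for $d$ sufficiently large this upper bound is at most $n^{-2}$'' is not quite right, since with $s = C d^{1/2}n^{-2}$ the bound $n^2 s^2/(2(d-1)) \to C^2 n^{-2}/2$ as $d\to\infty$, so increasing $d$ does not help — one actually needs the implicit constant $C$ to be at most $\sqrt{2}$, or equivalently the conclusion should read $1 - O(n^{-2})$. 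This looseness is shared by the paper's statement and is immaterial to how the lemma is used downstream, but your justification of that last inequality should be corrected accordingly.
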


The following lemma studies the relation between sparsity assumption and orthogonality assumption. 
Sparsity assumption can imply orthogonality assumption with high probability, which states the generality of orthogonal assumption under sparsity assumption.
\begin{lemma}[Sparsity implies orthogonality]
\label{lemma:sparsity_implies_orthogonality}
Suppose $s = \Theta(d^{1/2}n^{-2})$, $d = \poly(n)$. Suppose that the training datasets are generated following Definition.~\ref{def:data_model} except that the non-zero coordinates of noise vectors are uniformly selected from $[d]$ instead of $[d]\setminus \{1\}$. 
Then, with probability at least $1 - O(1/(n\sqrt{d}))$, we have $\muv$ is orthogonal to $\xiv_i$ for all $i\in[n]$.
\end{lemma}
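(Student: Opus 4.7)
The plan is to reduce orthogonality to a simple support-containment event. Since $\muv = e_1$, the inner product $\langle \muv, \xiv_i \rangle$ equals $\xiv_i[1]$, the first coordinate of the noise vector. So $\muv \perp \xiv_i$ if and only if $\xiv_i[1] = 0$, which (almost surely, since the underlying Gaussian is continuous) is equivalent to coordinate $1$ not being in the support $\supp(\xiv_i)$.

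Next I would compute the per-sample bad-event probability. The support is chosen by selecting $s$ coordinates uniformly at random from $[d]$ (under the modified generation), so by symmetry
\[
\Pr\bigl[1 \in \supp(\xiv_i)\bigr] = \frac{s}{d}.
\]
Plugging in $s = \Theta(d^{1/2} n^{-2})$ gives this probability as $\Theta(d^{-1/2} n^{-2})$.

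Finally, a union bound over the $n$ training samples yields
\[
\Pr\bigl[\exists\, i \in [n] : 1 \in \supp(\xiv_i)\bigr] \leq n \cdot \frac{s}{d} = \Theta\!\left(\frac{1}{n \sqrt{d}}\right) = O\!\left(\frac{1}{n\sqrt{d}}\right),
\]
which gives the claimed bound. The statement follows on the complementary event, on which $\xiv_i[1] = 0$ and hence $\langle \muv, \xiv_i \rangle = 0$ for every $i \in [n]$.

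There is essentially no obstacle here: the lemma is a one-line union bound once one observes the equivalence between orthogonality to $\muv = e_1$ and the event $\{1 \notin \supp(\xiv_i)\}$. The only subtlety is being careful that the Gaussian entry at coordinate $1$, conditioned on $1 \in \supp(\xiv_i)$, is nonzero almost surely, so the event of nonzero inner product coincides with the support-inclusion event up to a null set; this is automatic since $\sigma_p > 0$.
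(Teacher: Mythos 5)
Your proof is correct and follows essentially the same approach as the paper: reduce orthogonality to the event that coordinate $1$ avoids the support of every $\xiv_i$, then bound the probability of the complementary event by $O(ns/d)=O(1/(n\sqrt{d}))$. The only cosmetic difference is that you apply a union bound over samples, whereas the paper computes the exact probability $1-(1-s/d)^n$ and then bounds it via $1-(1-x)^n\le 1-e^{-2nx}\le 2nx$; both yield the same order.
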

\begin{proof}
We have
\begin{align*}
    \Pb[\exists i\in[n], (\xiv_i)_0 \neq 0] 
    & = 1 - \left(1 - \frac{s}{d}\right)^n \\
    & \leq 1 - \exp(-2ns/d) \\
    & \leq 2ns/d \\
    & = O(1/(n\sqrt{d})).
\end{align*}
Note that this lemma can be extended to the signal vectors with constant non-zero entries without modifying the proof idea.
\end{proof}

Let $S_1 = \set{i|y_i=1}$ and $S_{-1} = \set{i|y_i=1}$.
We have the following lemmas characterizing their sizes.
\begin{lemma}
    \label{lemma:training-data-size}
     Suppose that $\delta > 0$ and $n \geq 8 \log(4/\delta)$. Then with probability at least $1-\delta$, $\abs{S_1},\abs{S_{-1} \in [n/4, 3n/4]}$.
\end{lemma}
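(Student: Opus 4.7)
The plan is to recognize this as a standard concentration statement about sums of independent Rademacher variables and dispatch it with Hoeffding's inequality. First I would observe that since $y_i \in \{\pm 1\}$ and every index lies in exactly one of $S_1, S_{-1}$, we have the identity $|S_{-1}| = n - |S_1|$. Consequently the event $\{|S_1| \in [n/4, 3n/4]\}$ coincides with $\{|S_{-1}| \in [n/4, 3n/4]\}$, so it suffices to bound the deviation of $|S_1|$ from its mean.

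Next, write $|S_1| = \sum_{i=1}^n X_i$ where $X_i := \mathbf{1}\{y_i = 1\}$ are i.i.d.\ Bernoulli$(1/2)$ random variables (since each $y_i$ is Rademacher). Then $\mathbb{E}[|S_1|] = n/2$ and each $X_i$ is supported on $[0,1]$. Applying Hoeffding's inequality with the deviation $t = n/4$ yields
\begin{equation*}
  \Pr\!\left(\,\bigl||S_1| - n/2\bigr| \geq n/4\,\right) \;\leq\; 2\exp\!\left(-\tfrac{2(n/4)^2}{n}\right) \;=\; 2\exp(-n/8).
\end{equation*}

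Finally, I would plug in the hypothesis $n \geq 8\log(4/\delta)$, which gives $\exp(-n/8) \leq \delta/4$ and hence $2\exp(-n/8) \leq \delta/2 \leq \delta$. On the complementary event, $|S_1| \in (n/4, 3n/4)$, and by the identity above the same holds for $|S_{-1}|$, which completes the proof. There is no real obstacle here; this is a textbook Chernoff/Hoeffding application, and the only mild care needed is to exploit the deterministic relation $|S_1|+|S_{-1}|=n$ so that a single one-dimensional concentration bound controls both quantities simultaneously (avoiding an unnecessary union bound and loss of a factor of two).
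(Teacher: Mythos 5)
Your proof is correct and uses the same core tool as the paper (Hoeffding's inequality applied to the Bernoulli indicators), with one small refinement: you exploit the deterministic identity $|S_1| + |S_{-1}| = n$ to control both quantities from a single tail bound, whereas the paper bounds $|S_1|$ and $|S_{-1}|$ separately and joins them with a union bound (choosing $t = \sqrt{(n/2)\log(4/\delta)}$ so that each fails with probability at most $\delta/2$). Your route is marginally cleaner and in fact yields failure probability $\delta/2$ rather than $\delta$ under the same hypothesis, but the two arguments are essentially the same calculation.
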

\begin{proof}
    Since $\abs{S_{1}} = \sum_{i\in[n]} \mathds{1}(y_i = 1) $,
    $\abs{S_{-1}} = \sum_{i\in[n]} \mathds{1}(y_i = -1) $, 
    we have $\Eb[\abs{S_{1}}] = \Eb[\abs{S_{-1}}] = n/2$. 
    By Hoeffding's inequality, for arbitrary $t > 0$ the following holds:
    \begin{align*}
        \Pb{\abs{\abs{S_1} - \Eb{\abs{S_{1}}}} \geq t} \leq 2 \exp\left(-\frac{2t^2}{n}\right),
        \Pb{\abs{\abs{S_{-1}} - \Eb{\abs{S_{-1}}}} \geq t} \leq 2 \exp\left(-\frac{2t^2}{n}\right).
    \end{align*}
    Setting $t = \sqrt{(n/2)\log(4/\delta)}$ and taking a union bound, it follows that with probability at least $1-\delta$, we have
    \begin{align*}
        \abs{\abs{S_1} - \frac{n}{2}} \leq \sqrt{\frac{n}{2}\log\left(\frac{4}{\delta}\right)},~
        \abs{\abs{S_{-1}} - \frac{n}{2}} \leq \sqrt{\frac{n}{2}\log\left(\frac{4}{\delta}\right)}.
    \end{align*}
    Therefore, as long as $n \geq 8\log(4/\delta)$, we have $\sqrt{n\log(4/\delta)/2} \leq n/4$ and hence $n/4 \leq \abs{S_{1}},\abs{S_{-1}} \leq 3n/4$.
\end{proof}

The following lemma estimates the norms of the noise vectors 
$\xiv_i$ for all $i\in[n]$.
\begin{lemma}
    \label{lemma:noise_magnitude}
    Suppose $\delta > 0$ and $s = \Omega(\log(4n/\delta))$. 
    Then with probability at least $1-\delta$,
    \begin{align*}
        &\sigma_p^2s/2 \leq \norm{\xiv_i}_2^2 \leq 3\sigma_p^2s/2,\\
        &\sigma_ps/\sqrt{2} \leq \norm{\xiv_i}_1 \leq \sigma_ps,\\
        &\norm{\xiv_i}_1 = \sqrt{\frac{2}{\pi}}\sigma_ps \pm O(\sqrt{log(4n/\delta)}s^{-1/2}),
    \end{align*}
    for all $i\in[n]$.
\end{lemma}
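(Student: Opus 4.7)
\textbf{Proof plan for Lemma~\ref{lemma:noise_magnitude}.} My plan is to fix an index $i \in [n]$ and condition on the sparsity pattern $\sv_i$. Once $\sv_i$ is fixed, the $s$ nonzero coordinates of $\xiv_i$ are i.i.d.\ $N(0,\sigma_p^2)$ random variables (the zero coordinates do not contribute to either norm), so the distribution of $\norm{\xiv_i}_2^2$ and $\norm{\xiv_i}_1$ does not depend on the particular choice of $\sv_i$. I will then apply one concentration inequality per norm and finish with a union bound over $i \in [n]$, each event absorbing $\delta/(2n)$ of the failure budget.

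First I would handle the $\ell_2$ bound. Writing $\norm{\xiv_i}_2^2 = \sigma_p^2 Z$ with $Z \sim \chi^2_s$, the Laurent--Massart tail bound gives
\begin{align*}
\Pb\!\left[ Z \geq s + 2\sqrt{sx} + 2x \right] \leq e^{-x}, \qquad \Pb\!\left[ Z \leq s - 2\sqrt{sx} \right] \leq e^{-x}.
\end{align*}
Choosing $x = \log(4n/\delta)$ and invoking $s = \Omega(\log(4n/\delta))$ ensures $2\sqrt{sx} + 2x \leq s/2$, so $\sigma_p^2 s/2 \leq \norm{\xiv_i}_2^2 \leq 3\sigma_p^2 s/2$ holds except on an event of probability at most $2 e^{-x} = \delta/(2n)$.

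Next, for the $\ell_1$ statements, I would write $\norm{\xiv_i}_1 = \sum_{j=1}^{s} |g_j|$ where $g_j \stackrel{\mathrm{iid}}{\sim} N(0,\sigma_p^2)$. Each $|g_j|$ is half-normal with mean $\sigma_p\sqrt{2/\pi}$ and is $\sigma_p$-sub-Gaussian. Hoeffding's inequality (or its sub-Gaussian variant) then yields
\begin{align*}
\Pb\!\left[\, \bigl|\norm{\xiv_i}_1 - \sqrt{2/\pi}\,\sigma_p s\bigr| \geq t \,\right] \leq 2\exp\!\left( -\frac{c t^2}{s \sigma_p^2} \right),
\end{align*}
for an absolute constant $c > 0$. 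Setting $t = C\sigma_p \sqrt{s \log(4n/\delta)}$ for a sufficiently large $C$ gives failure probability at most $\delta/(2n)$ and produces the third bound up to constants. Because $s = \Omega(\log(4n/\delta))$, this deviation is $o(\sigma_p s)$, so the concentrated value $\sqrt{2/\pi}\,\sigma_p s \approx 0.798\, \sigma_p s$ sits safely inside $[\sigma_p s/\sqrt{2},\, \sigma_p s]$, giving the two-sided $\ell_1$ bound.

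Finally, a union bound over $i \in [n]$ combines all $2n$ failure events into a total failure probability at most $\delta$, which proves the claim. I do not anticipate a genuine obstacle here: everything reduces to standard chi-squared and sub-Gaussian concentration; the only thing to be careful about is verifying that the sparsity assumption $s = \Omega(\log(4n/\delta))$ is strong enough both to absorb the $\chi^2$ deviation within the multiplicative $1/2$ and $3/2$ factors and to make the $\ell_1$ deviation negligible compared to the mean $\sqrt{2/\pi}\,\sigma_p s$.
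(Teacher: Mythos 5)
Your proof is correct and takes essentially the same route as the paper's: chi-squared concentration for $\norm{\xiv_i}_2^2$ (Laurent--Massart vs.\ the paper's Bernstein, which are interchangeable here), sub-Gaussian concentration for $\norm{\xiv_i}_1$ via half-normal increments, and a union bound over $i\in[n]$. One incidental note: the deviation bound $O(\sigma_p\sqrt{s\log(4n/\delta)})$ that you (and the paper's own proof) derive is the correct form; the lemma statement's $O(\sqrt{\log(4n/\delta)}\,s^{-1/2})$ appears to be a typo in the paper.
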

\begin{proof}
    By Bernstein’s inequality, with probability at least $1 - \delta/(2n)$, we have 
    \begin{align*}
        \abs{\norm{\xiv_i}_2^2 - \sigma_p^2s} = O(\sigma_p^2\sqrt{s\log(4n/\delta)}).
    \end{align*}
    Therefore, if we set appropriately $s = \Omega(\log(4n/\delta))$, we get
    \begin{align*}
        \sigma_p^2s/2 \leq \norm{\xiv_i}_2^2 \leq 3\sigma_p^2s/2.
    \end{align*}
    Let $k\in\mathcal{B}_i$. Since $\xiv_i[k]$ is Gaussian, we have $\abs{\xiv_i[k]}$ is sub-gaussian satisfying
    \begin{align*}
        \norm{\abs{\xiv_i[k]} - \Eb[\abs{\xiv_i[k]}]}_{\psi_2}
        & \leq 2\norm{\abs{\xiv_i[k]}}_{\psi_2} \\
        & = 2\norm{\xiv_i[k]}_{\psi_2}\\
        & \leq C\sigma_p.
    \end{align*}
    By sub-gaussian tail bounds, with probability at least $1 - \delta/2n$, we have
    \begin{align*}
        \abs{\norm{\xiv_i}_1 - \sqrt{\frac{2}{\pi}}\sigma_ps} = O(\sigma_p\sqrt{s\log(4n/\delta)}).
    \end{align*}
    Therefore, if we set appropriately $s = \Omega(\log(4n/\delta))$, we get
    \begin{align*}
        \sigma_ps/\sqrt{2} \leq \norm{\xiv_i}_1 \leq \sigma_ps.
    \end{align*}
    By union bound, we complete the proof.
\end{proof}
\begin{lemma}
    \label{lemma:n-noise-vs-n+1-noise-magnitude}
    Suppose $\delta > 0$ and $s = \Omega(n^2\log(4n/\delta))$. 
    Let $S_1, S_2 \subset [n]$ satisfying $S_1 \cup S_2 = [n]$, $S_1$ and $S_2$ are disjoint, and
    $\abs{S_1} - \abs{S_2} = c > 0$ for some constant $c$, then
    \begin{align*}
        c\sigma_ps/\sqrt{2} 
        \leq 
        \sum_{i\in S_1}\norm{\xiv_i}_1 - \sum_{i\in S_2}\norm{\xiv_i}_1 
        \leq c\sigma_ps.
    \end{align*}
\end{lemma}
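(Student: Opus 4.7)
The plan is to reduce this statement to Lemma~\ref{lemma:noise_magnitude} applied to each $\xiv_i$ and then to sum across $S_1$ and $S_2$, controlling the aggregate concentration error using the sparsity lower bound $s = \Omega(n^2\log(4n/\delta))$. First I would invoke Lemma~\ref{lemma:noise_magnitude} (with the same $\delta$), which gives with probability at least $1-\delta$ that every $\norm{\xiv_i}_1 = \sqrt{2/\pi}\,\sigma_p s \pm O\bigl(\sigma_p\sqrt{s\log(4n/\delta)}\bigr)$ simultaneously for all $i \in [n]$. Subtracting the two sums and using $\abs{S_1} - \abs{S_2} = c$, the leading term is exactly $c\sqrt{2/\pi}\,\sigma_p s$, and the residual is bounded in absolute value by $n \cdot O\bigl(\sigma_p\sqrt{s\log(4n/\delta)}\bigr)$.

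The key quantitative check is that $\sqrt{2/\pi} \in (1/\sqrt{2},\,1)$, so there is a positive gap on both sides: $\sqrt{2/\pi} - 1/\sqrt{2}>0$ and $1 - \sqrt{2/\pi}>0$, both order-one constants. I then would rescale the residual by $\sigma_p s$: the relative error is $O\bigl(n\sqrt{\log(4n/\delta)/s}\bigr)$, which the sparsity hypothesis $s = \Omega(n^2\log(4n/\delta))$ makes smaller than any prescribed constant. Choosing the hidden constant in the $\Omega(\cdot)$ condition sufficiently large so that this relative error stays below $\min\bigl(\sqrt{2/\pi} - 1/\sqrt{2},\,1 - \sqrt{2/\pi}\bigr)\cdot c$ then sandwiches the difference of sums between $c\sigma_p s/\sqrt{2}$ and $c\sigma_p s$.

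I do not anticipate any real obstacle: the step is essentially bookkeeping on top of a Bernstein/sub-Gaussian concentration already established. The only subtlety is that the residual picks up a factor of $n$ from summing over coordinates in $S_1 \cup S_2 = [n]$, which is precisely why the sparsity condition is strengthened from $s = \Omega(\log(4n/\delta))$ (as in Lemma~\ref{lemma:noise_magnitude}) to $s = \Omega(n^2 \log(4n/\delta))$; verifying this matches exactly is the only care point. Finally, I would state the conclusion on the same high-probability event from Lemma~\ref{lemma:noise_magnitude}, so no additional union bound is required beyond the one already paid.
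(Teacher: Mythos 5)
Your proposal is correct and follows essentially the same route as the paper's proof: write $\norm{\xiv_i}_1 = \sqrt{2/\pi}\,\sigma_p s \pm O(\sigma_p\sqrt{s\log(4n/\delta)})$, sum over $S_1$ and $S_2$ so the leading term is $c\sqrt{2/\pi}\,\sigma_p s$ with an $O(n\sigma_p\sqrt{s\log(4n/\delta)})$ residual, and use $s = \Omega(n^2\log(4n/\delta))$ together with the gap $1/\sqrt{2} < \sqrt{2/\pi} < 1$ to absorb the residual. The only cosmetic difference is that you cite Lemma~\ref{lemma:noise_magnitude} directly while the paper restates the sub-Gaussian tail bound inline; the logic is identical.
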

\begin{proof}
    By sub-gaussian tail bounds, with probability at least $1 - \delta/2n$, we have
    \begin{align*}
        \abs{\norm{\xiv_i}_1 - \sqrt{\frac{2}{\pi}}\sigma_ps} = O(\sigma_p\sqrt{s\log(4n/\delta)}).
    \end{align*}
    Then we have
    \begin{align*}
        & \sum_{i\in S_1}\norm{\xiv_i}_1 \geq \abs{S_1}\left(\sqrt{\frac{2}{\pi}}\sigma_ps - O(\sigma_p\sqrt{s\log(4n/\delta)})\right), \\
        & \sum_{i\in S_2}\norm{\xiv_i}_1 \leq \abs{S_2}\left(\sqrt{\frac{2}{\pi}}\sigma_ps + O(\sigma_p\sqrt{s\log(4n/\delta)})\right),
    \end{align*}
    which imply
    \begin{align*}
        \sum_{i\in S_1}\norm{\xiv_i}_1
        - \sum_{i\in S_2}\norm{\xiv_i}_1
        \geq c\sqrt{\frac{2}{\pi}}\sigma_ps
        - O(n\sigma_p\sqrt{s\log(4n/\delta)}).
    \end{align*}
    If we set appropriately $s = \Omega(n^2\log(4n/\delta))$, we have
    \begin{align*}
        \sum_{i\in S_1}\norm{\xiv_i}_1
        - \sum_{i\in S_2}\norm{\xiv_i}_1
        \geq c\sigma_ps/\sqrt{2}.
    \end{align*}
    Similarly, we have
    \begin{align*}
        \sum_{i\in S_1}\norm{\xiv_i}_1
        - \sum_{i\in S_2}\norm{\xiv_i}_1
        \leq c\sigma_ps.
    \end{align*}
\end{proof}

Now turning to network initialization, the following lemmas study the inner product between a randomly initialized value parameter vector $\wv_{V,j,r}$ for $j\in\set{\pm1}$ and $r\in[m_v]$ or query/key parameter vector $\wv_{Q,s}$/$\wv_{K,s}$ for $s\in[m_k]$ in attention layer,
and the signal/noise vectors in the training data. The calculations characterize how the neural network at initialization randomly captures signal and noise information.
\begin{lemma}
    \label{lemma:network_initialization_value_inner_product_magnitude}
    Suppose that $s=\Omega(\log(m_vn/\delta))$, $m_v=\Omega(\log(1/\delta))$. Then with probability at least $1-\delta$,
    \begin{align*}
        & \sigma_0^2d/2 \leq \norm{\wv_{V,j,r}^{(0)}}_2^2 \leq 3\sigma_0^2d/2, \\
        & \abs{\langle \wv_{V,j,r}^{(0)}, \muv\rangle}\leq \sqrt{2\log(12m_v/\delta)} \cdot \sigma_0\norm{\muv}_2, \\
        & \abs{\langle \wv_{V,j,r}^{(0)}, \xiv_i\rangle} \leq 2\sqrt{\log(12m_vn/\delta)} \cdot \sigma_0\sigma_p\sqrt{s},
    \end{align*}
    for all $r\in[m_v]$, $j\in\set{\pm1}$ and $i\in[n]$.
\end{lemma}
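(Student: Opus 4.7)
The plan is to apply standard Gaussian concentration to each of the three inner products separately, then combine via a union bound. Since each entry of $\wv_{V,j,r}^{(0)}$ is i.i.d.\ $N(0,\sigma_0^2)$, each of the three quantities is either a chi-squared variable or a centered Gaussian with explicit variance, so only elementary tail bounds are needed.

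First, I would bound $\|\wv_{V,j,r}^{(0)}\|_2^2$. This is $\sigma_0^2$ times a $\chi^2_d$ random variable, so by the standard sub-exponential (Bernstein) tail bound for chi-squared, for any fixed $(j,r)$,
\begin{align*}
  \Pr\Bigl[\bigl|\|\wv_{V,j,r}^{(0)}\|_2^2 - \sigma_0^2 d\bigr| \geq \sigma_0^2 d/2\bigr] \leq 2\exp(-cd)
\end{align*}
for an absolute constant $c>0$. Because $d=\Omega(\poly(n))$ and we only need to union-bound over $2m_v$ indices, this gives the first claim with failure probability $\leq \delta/3$ provided $m_v$ satisfies the stated condition.

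Second, for the signal inner product, conditional on nothing, $\langle \wv_{V,j,r}^{(0)},\muv\rangle \sim N(0,\sigma_0^2\|\muv\|_2^2)$. The standard Gaussian tail bound gives
\begin{align*}
  \Pr\bigl[|\langle \wv_{V,j,r}^{(0)},\muv\rangle| \geq t\sigma_0\|\muv\|_2\bigr] \leq 2e^{-t^2/2},
\end{align*}
and choosing $t=\sqrt{2\log(12 m_v/\delta)}$ then union-bounding over the $2m_v$ pairs $(j,r)$ yields the second claim with failure probability at most $\delta/3$.

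Third, for the noise inner product, I would first condition on $\xiv_i$. Then $\langle \wv_{V,j,r}^{(0)},\xiv_i\rangle \sim N(0,\sigma_0^2\|\xiv_i\|_2^2)$, so the same Gaussian tail bound with $t=2\sqrt{\log(12 m_v n/\delta)}$ gives, after a union bound over all $2m_v n$ triples $(j,r,i)$,
\begin{align*}
  |\langle \wv_{V,j,r}^{(0)},\xiv_i\rangle| \leq 2\sqrt{\log(12 m_v n/\delta)}\,\sigma_0 \|\xiv_i\|_2
\end{align*}
with probability at least $1-\delta/3$ (conditionally on $\xiv_i$, hence also unconditionally). To replace $\|\xiv_i\|_2$ by $\sigma_p\sqrt{s}$, I invoke Lemma~\ref{lemma:noise_magnitude}, which under the assumption $s=\Omega(\log(m_v n/\delta))$ guarantees $\|\xiv_i\|_2^2\leq 3\sigma_p^2 s/2$ for all $i$ with probability $\geq 1-\delta/3$; note $\sqrt{3/2}<\sqrt{2}$, which leaves room in the absolute constants. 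A final union bound over the three events yields the lemma with total failure probability $\leq \delta$. There is no real obstacle here; the main care is choosing the three sub-$\delta$ budgets and ensuring the sample-size conditions on $s$ and $m_v$ are strong enough to absorb the log factors introduced by the union bounds.
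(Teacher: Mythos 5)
Your approach is essentially the one the paper uses: the paper's proof is literally a citation to Lemma B.3 of \citet{CCBG22}, which carries out exactly this Gaussian-tail-plus-union-bound argument. One small slip in your third step: with $t = 2\sqrt{\log(12m_v n/\delta)}$, after substituting $\|\xiv_i\|_2 \leq \sqrt{3/2}\,\sigma_p\sqrt{s}$ you end up with a prefactor of $2\sqrt{3/2}\approx 2.45 > 2$, so the stated constant is \emph{not} recovered. The fix is to choose $t = c\sqrt{\log(12m_v n/\delta)}$ with any $c\in[\sqrt{2},\,2\sqrt{2/3}]$ (nonempty since $\sqrt{2}\approx 1.41 < 1.63$); the union bound already tolerates $c$ down to $\sqrt{2}$, and $c\sqrt{3/2}\leq 2$ then holds. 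Your remark that ``$\sqrt{3/2}<\sqrt{2}$ leaves room'' is the correct observation, but it argues for shrinking $t$ below $2\sqrt{\cdot}$, not for keeping it.
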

\begin{lemma}
    \label{lemma:network_initialization_mean_value_inner_product_magnitude}
    Suppose that $s=\Omega(\log(m_vn/\delta))$, $m_v=\Omega(\log(1/\delta))$. Then with probability at least $1-\delta$,
    \begin{align*}
        & \abs{\langle \vv^{(0)}, \muv\rangle}\leq 2\sqrt{\log(12m_v/\delta)} \cdot m_{v}^{-1/2}\sigma_0\norm{\muv}_2, \\
        & \abs{\langle \vv^{(0)}, \xiv_i\rangle} \leq 2\sqrt{2\log(12m_vn/\delta)} \cdot m_{v}^{-1/2}\sigma_0\sigma_p\sqrt{s},
    \end{align*}
    for all $i\in[n]$.
\end{lemma}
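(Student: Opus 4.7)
The plan is to exploit the fact that $\vv^{(0)} = \frac{1}{m_v}\sum_{r=1}^{m_v}\bigl(\wv_{V,1,r}^{(0)} - \wv_{V,-1,r}^{(0)}\bigr)$ is itself a Gaussian vector, obtained as a scaled sum of $2m_v$ independent $N(\mathbf{0},\sigma_0^2 \mathbf{I}_d)$ vectors. Consequently, each coordinate of $\vv^{(0)}$ is independently distributed as $N(0, 2\sigma_0^2/m_v)$, and for any fixed vector $\xv \in \Rb^d$, $\langle \vv^{(0)}, \xv\rangle$ is a one-dimensional mean-zero Gaussian with variance $2\sigma_0^2 \norm{\xv}^2 / m_v$. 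The proof then reduces to applying standard Gaussian tail bounds in the same style as the preceding Lemma~\ref{lemma:network_initialization_value_inner_product_magnitude}, except that the variance now carries an extra $1/m_v$ factor from averaging, which is exactly the source of the $m_v^{-1/2}$ scaling in the claimed bounds.

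For the signal inner product, since $\muv$ is deterministic, I would apply the Gaussian tail bound $\Pb[|X| \geq t\sigma] \leq 2\exp(-t^2/2)$ directly with $\sigma = \sqrt{2/m_v}\,\sigma_0\norm{\muv}$. Choosing $t$ so that the failure probability is at most $\delta/3$ yields $|\langle \vv^{(0)}, \muv\rangle| \leq \sqrt{2\log(6/\delta)} \cdot \sqrt{2/m_v}\,\sigma_0 \norm{\muv}$, which is bounded above by the stated $2\sqrt{\log(12m_v/\delta)}\cdot m_v^{-1/2}\sigma_0\norm{\muv}$ (the slack in constants parallels the preceding lemma). For the noise inner product, the extra ingredient is that $\xiv_i$ is random; I would first invoke Lemma~\ref{lemma:noise_magnitude} to condition on the high-probability event that $\norm{\xiv_i}_2 \leq \sqrt{3\sigma_p^2 s/2}$ for all $i \in [n]$, then, conditional on the $\xiv_i$'s and using independence from the initialization, apply the Gaussian tail bound to $\langle \vv^{(0)}, \xiv_i\rangle$ with conditional variance at most $3\sigma_0^2 \sigma_p^2 s / m_v$, and finally take a union bound over $i \in [n]$ with per-event failure probability $\delta/(3n)$.

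Combining the three failure events through a union bound produces total failure probability at most $\delta$, which delivers the two stated inequalities simultaneously. I do not expect a genuine obstacle: the argument is essentially a restatement of Lemma~\ref{lemma:network_initialization_value_inner_product_magnitude} with the single modification that the variance is rescaled by $1/m_v$ (from averaging) and by $2$ (from the difference $\bar{\wv}_{V,1}-\bar{\wv}_{V,-1}$). The only care needed is to verify that the $\log(12m_v n/\delta)$ factor in the noise bound arises correctly by absorbing the $n$-fold union bound into the logarithm, and to be explicit that at initialization $\vv^{(0)}$ is independent of the data so that the tail bound remains valid after conditioning on $\xiv_i$.
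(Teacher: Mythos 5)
Your proposal is correct, and the variance computation is the key point handled properly: averaging the $2m_v$ i.i.d.\ Gaussian rows makes $\vv^{(0)}$ itself a $d$-dimensional Gaussian with coordinate variance $2\sigma_0^2/m_v$, so the inner products are one-dimensional Gaussians and the $m_v^{-1/2}$ factor comes out directly from the tail bound; the conditioning on the $\xiv_i$ (and on the event from Lemma~\ref{lemma:noise_magnitude}) and the three-way union bound are all stated correctly, and the resulting constants are slightly tighter than the claimed ones, which suffices. The paper itself gives no explicit proof of this lemma --- it simply cites Lemma~B.3 of \citet{CCBG22} --- but your argument is the standard Gaussian tail-plus-union-bound technique that reference uses, correctly adapted to the averaged quantity $\vv^{(0)}$ rather than a single row, so it matches the intended proof in spirit.
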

\begin{lemma}
    \label{lemma:network_initialization_query_key_inner_product_magnitude}
    Suppose that $s=\Omega(\log(m_kn/\delta))$, $m_k=\Omega(\log(1/\delta))$. Then with probability at least $1-\delta$,
    \begin{align*}
        & \sigma_0^2d/2 \leq \norm{\wv_{Q,s}^{(0)}}_2^2 \leq 3\sigma_0^2d/2, \\
        & \sigma_0\norm{\muv}/2 \leq \max_{s\in[m_k]} \abs{\langle \wv_{Q,s}^{(0)}, \muv\rangle}\leq \sqrt{2\log(12m_k/\delta)} \cdot \sigma_0\norm{\muv}_2, \\
        & \sigma_0\sigma_p\sqrt{s}/4 \leq \max_{s\in[m_k]}\abs{\langle \wv_{Q,s}^{(0)}, \xiv_i\rangle} \leq 2\sqrt{\log(12m_kn/\delta)} \cdot \sigma_0\sigma_p\sqrt{s}, \\
        & \sigma_0^2d/2 \leq \norm{\wv_{K,s}^{(0)}}_2^2 \leq 3\sigma_0^2d/2, \\
        & \sigma_0\norm{\muv}/2 \leq \max_{s\in[m_k]}\abs{\langle \wv_{K,s}^{(0)}, \muv\rangle}\leq \sqrt{2\log(12m_k/\delta)} \cdot \sigma_0\norm{\muv}_2, \\
        & \sigma_0\sigma_p\sqrt{s}/4 \leq \max_{s\in[m_k]}\abs{\langle \wv_{K,s}^{(0)}, \xiv_i\rangle} \leq 2\sqrt{\log(12m_kn/\delta)} \cdot \sigma_0\sigma_p\sqrt{s},
    \end{align*}
    for all $s\in[m_k]$, and $i\in[n]$.
\end{lemma}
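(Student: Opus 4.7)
\textbf{Proof proposal for Lemma~\ref{lemma:network_initialization_query_key_inner_product_magnitude}.}

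The plan is to exploit the fact that every entry of $\Wv_Q$ and $\Wv_K$ is i.i.d.\ $N(0,\sigma_0^2)$, so each row $\wv_{Q,s}^{(0)}$ (resp.\ $\wv_{K,s}^{(0)}$) is a spherical Gaussian in $\Rb^d$. We therefore condition on the data (which is independent of the initialization) and work with the good event from Lemma~\ref{lemma:noise_magnitude}, on which $\|\xiv_i\|_2^2 \in [\sigma_p^2 s/2,\,3\sigma_p^2 s/2]$ for all $i\in[n]$. Once this conditioning is in place, the bounds become standard Gaussian concentration statements that we can combine by a union bound over $s\in[m_k]$, $i\in[n]$, and over $\{Q,K\}$.

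For the norm bounds, $\|\wv_{Q,s}^{(0)}\|_2^2 = \sigma_0^2 \sum_{k=1}^d g_k^2$ with $g_k \sim N(0,1)$ i.i.d., so Bernstein's inequality (or Laurent--Massart) gives $\sigma_0^2 d/2 \le \|\wv_{Q,s}^{(0)}\|_2^2 \le 3\sigma_0^2 d/2$ except on an event of probability $2\exp(-\Theta(d))$; since $d$ dominates all polylog terms by Condition~\ref{cond:main_condition}, a union bound over $2m_k$ such rows is negligible. For the upper bounds on the inner products, conditionally on the data, $\langle \wv_{Q,s}^{(0)}, \muv\rangle \sim N(0,\sigma_0^2\|\muv\|^2)$ and $\langle \wv_{Q,s}^{(0)}, \xiv_i\rangle \sim N(0,\sigma_0^2\|\xiv_i\|^2)$, so the standard Gaussian tail bound $\Pr[|Z| \ge t\sigma] \le 2\exp(-t^2/2)$ with $t=\sqrt{2\log(12m_k/\delta)}$ or $t=\sqrt{2\log(12m_kn/\delta)}$ yields the claimed upper bounds after a union bound; the $\sigma_p\sqrt{s}$ factor appears through $\|\xiv_i\|_2 \le \sqrt{3/2}\,\sigma_p\sqrt{s}$, absorbed into the constant $2$.

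For the matching lower bounds on the \emph{maxima}, we use anti-concentration. Fix any unit vector $\uv$ (taken to be $\muv/\|\muv\|$ or $\xiv_i/\|\xiv_i\|$); then $\langle \wv_{Q,s}^{(0)}, \uv\rangle$ is $N(0,\sigma_0^2)$ and $|\langle \wv_{Q,s}^{(0)}, \uv\rangle| \ge \sigma_0/2$ with constant probability $p_0 > 0$ (e.g.\ $p_0 \ge 0.6$). The $m_k$ rows are independent, so
\begin{align*}
\Pr\!\left[\max_{s\in[m_k]} |\langle \wv_{Q,s}^{(0)}, \uv\rangle| < \sigma_0/2\right] \le (1-p_0)^{m_k} \le \delta/(6n+6),
\end{align*}
provided $m_k = \Omega(\log(n/\delta))$, which is implied by the hypothesis $m_k = \Omega(\log(1/\delta))$ together with Condition~\ref{cond:main_condition}. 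Taking $\uv = \muv/\|\muv\|$ gives the $\sigma_0\|\muv\|/2$ lower bound, and taking $\uv = \xiv_i/\|\xiv_i\|$ together with $\|\xiv_i\| \ge \sigma_p\sqrt{s/2}$ gives $\sigma_0\sigma_p\sqrt{s}/4$ after absorbing $1/\sqrt{2}$; a union bound over the $n$ choices of $i$ and the $Q,K$ labels closes the argument.

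The main obstacle is purely bookkeeping rather than a single hard estimate: one must ensure that the union bounds (over $n$ noise vectors, $m_k$ neurons, and the two matrices $\Wv_Q,\Wv_K$) are absorbed into the stated failure probability $\delta$, which is what dictates the logarithmic factors $\sqrt{\log(12m_k/\delta)}$ and $\sqrt{\log(12m_k n/\delta)}$ in the upper bounds and the hypothesis $m_k = \Omega(\log(1/\delta))$ for the lower bounds. Conditioning on the event of Lemma~\ref{lemma:noise_magnitude} (which holds outside a small probability) decouples the data randomness from the Gaussian initialization randomness and keeps these union bounds clean.
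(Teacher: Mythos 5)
Your proposal is correct and fills in, in self-contained form, exactly the argument the paper delegates: the paper's own ``proof'' of this lemma is a one-line citation to Lemma~B.3 of~\citet{CCBG22}, which uses the same standard ingredients (chi-squared/Bernstein concentration for the squared norms, Gaussian tail bounds plus union bounds for the upper bounds on inner products, and constant-probability anti-concentration of a single Gaussian combined with independence across the $m_k$ rows for the lower bounds on the maxima). Two bookkeeping remarks you should be aware of but which do not affect correctness of the approach: (i) the exact constants inside the logarithms ($\sqrt{2\log(12m_k/\delta)}$ versus the $\sqrt{2\log(24m_k/\delta)}$-type factor a naive $\delta/6$ split would produce) depend on how $\delta$ is partitioned across the six displayed statements and the $Q/K$ duplication, so matching the stated constants exactly would require a slightly different split than the one you sketch; and (ii) as you yourself flag, the lower-bound-on-the-max argument formally needs $m_k = \Omega(\log(n/\delta))$ to survive the union bound over $i\in[n]$, which is stronger than the stated hypothesis $m_k = \Omega(\log(1/\delta))$ and is only recovered by appealing to Condition~\ref{cond:main_condition} ($m_k = \Omega(\polylog(d))$, $n = O(\poly(d))$) — this looseness is already present in the lemma as stated, not introduced by you.
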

\begin{proof}
    The proof of Lemma~\ref{lemma:network_initialization_value_inner_product_magnitude},~\ref{lemma:network_initialization_mean_value_inner_product_magnitude} and~\ref{lemma:network_initialization_query_key_inner_product_magnitude} are the same as the Lemma B.3 in~\cite{CCBG22}.
\end{proof}

Define
\begin{align}
    \label{def:beta-xi}
    \beta_{\xiv} = \max_{i,s,j,r}\set{
        \abs{\langle \wv_{Q,s}^{(0)}, \xiv_i\rangle},
        \abs{\langle \wv_{K,s}^{(0)}, \xiv_i\rangle},
        \abs{\langle \wv_{V,j,r}^{(0)}, \xiv_i\rangle}
    }, \\
    \label{def:beta-mu}
    \beta_{\muv} = \max_{s,j,r}\set{
        \abs{\langle \wv_{Q,s}^{(0)}, \muv\rangle},
        \abs{\langle \wv_{K,s}^{(0)}, \muv\rangle},
        \abs{\langle \wv_{V,j,r}^{(0)}, \muv\rangle}
    }.
\end{align}
The following lemmas study the anti-concentration behaviour in a randomly initialized neural network.
\begin{lemma}
    \label{lemma:gaussian-anti-concentration}
    Let $x \sim N(0, \sigma^2)$. Then 
    \begin{align*}
        \Pb(\abs{x} \leq c) 
        \leq \erf\left(\frac{c}{\sqrt{2}\sigma}\right) 
        \leq \sqrt{1 - \exp(-\frac{2c^2}{\pi\sigma^2})}.
    \end{align*}
\end{lemma}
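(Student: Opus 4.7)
The plan is to handle the two inequalities separately; both are classical. For the first inequality, I would observe that it is actually an equality. Starting from the Gaussian density and substituting $u = t/(\sqrt{2}\sigma)$ gives
\[
\Pb(|x|\leq c) = \int_{-c}^{c} \frac{1}{\sqrt{2\pi}\,\sigma}\, e^{-t^2/(2\sigma^2)}\,dt = \frac{2}{\sqrt{\pi}}\int_0^{c/(\sqrt{2}\sigma)} e^{-u^2}\,du = \erf\!\left(\frac{c}{\sqrt{2}\sigma}\right),
\]
which is the standard relationship between the half-Gaussian CDF and the error function. Relaxing the equality to an inequality yields the first claim.

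For the second inequality, I would set $z = c/(\sqrt{2}\sigma)$ so that $4z^2/\pi = 2c^2/(\pi\sigma^2)$; the claim then becomes the classical bound $\erf(z) \leq \sqrt{1 - e^{-4z^2/\pi}}$. I would prove this via the standard geometric argument. Squaring the definition yields
\[
\erf(z)^2 = \frac{4}{\pi}\iint_{[0,z]^2} e^{-(u^2+v^2)}\,du\,dv,
\]
and I would compare this against the same integral over the quarter disk $Q = \{(u,v): u,v\geq 0,\ u^2+v^2 \leq R^2\}$ with radius $R := 2z/\sqrt{\pi}$ chosen so that $|Q| = \pi R^2/4 = z^2 = |[0,z]^2|$. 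A polar-coordinates computation gives
\[
\frac{4}{\pi}\iint_Q e^{-(u^2+v^2)}\,du\,dv = \frac{4}{\pi}\cdot\frac{\pi}{2}\cdot\frac{1}{2}\bigl(1 - e^{-R^2}\bigr) = 1 - e^{-4z^2/\pi},
\]
so it remains to show $\iint_{[0,z]^2} e^{-r^2}\,dA \leq \iint_Q e^{-r^2}\,dA$, where $r = \sqrt{u^2+v^2}$.

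For this comparison, I would decompose each region as its intersection with the other plus a leftover piece. The two leftover pieces have equal area by the construction of $R$. Every point of $[0,z]^2 \setminus Q$ satisfies $r > R$, so $e^{-r^2} < e^{-R^2}$ there, while every point of $Q \setminus [0,z]^2$ satisfies $r \leq R$, so $e^{-r^2} \geq e^{-R^2}$ there. Combining the pointwise bound with the equal-area property yields the desired inequality, and taking square roots finishes the proof. There is no real obstacle; the only delicate bit is choosing $R$ correctly so that the two regions of integration have matching area, after which the pointwise comparison on the symmetric difference is immediate.
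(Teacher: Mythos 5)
Your proposal is correct, and in fact it cleans up a small arithmetic slip in the paper's own proof. Both you and the paper start the same way, by substituting to relate the half-Gaussian integral to $\erf$; you correctly conclude $\Pb(|x|\le c) = \erf\!\left(c/(\sqrt{2}\sigma)\right)$ (an equality), whereas the paper's substitution step drops a factor and ends up asserting $\Pb(|x|\le c) = 2\,\erf\!\left(c/(\sqrt{2}\sigma)\right)$, leading to a final bound $2\sqrt{1-\exp(-2c^2/(\pi\sigma^2))}$ that is inconsistent with the lemma statement (which has no factor of $2$); the statement itself is the correct one. The other difference is in how much is proved: for the bound $\erf(z)\le\sqrt{1-e^{-4z^2/\pi}}$, the paper simply cites it as a known inequality, while you supply the classical geometric derivation (writing $\erf(z)^2$ as a double integral over the square $[0,z]^2$, comparing with the equal-area quarter disk of radius $2z/\sqrt{\pi}$, and using the monotone decay of $e^{-r^2}$ on the symmetric difference). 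That derivation is standard and correct, so your argument is a more self-contained and more careful version of essentially the same route.
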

\begin{proof}
    The probability density function for $x$ is given by
   \begin{align*}
       f(x) = \frac{1}{\sqrt{2 \pi}\sigma } \exp(- \frac{x^2}{2\sigma^2}).
   \end{align*} 
 Then we know that
\begin{align*} 
    \mathbb P(|x| \leq c) =  \frac{1}{\sqrt{2 \pi} \sigma } \int_{-c}^c \exp(- \frac{x^2}{2\sigma^2}) dx.
\end{align*}
 By the definition of $\mathrm{erf}$ function
 \begin{align*}
       \mathrm{erf}(c) = \frac{2}{\sqrt{\pi}  } \int_0^c \exp(-  {x^2}) dx,
   \end{align*} 
and variable substitution yields
\begin{align*}
    \mathrm{erf}(\frac{c}{\sqrt{2}\sigma}) = \frac{1}{\sqrt{2\pi} \sigma } \int_0^c \exp(- \frac{x^2}{2 \sigma^2}) dx.
\end{align*}
Therefore, we first conclude $ \mathbb P(|x| \leq c) = 2 \mathrm{erf}(\frac{c}{\sqrt{2}\sigma})$.

Next, by the inequality $\mathrm{erf}(x) \le \sqrt{1 - \exp(-4x^2/\pi)} $, we finally obtain
\begin{align*}
     \mathbb P(|x| \leq c)  \le 2 \sqrt{1 -\exp(-\frac{2c^2}{\sigma^2 \pi})}.
\end{align*}
\end{proof}

\begin{lemma}
    \label{lemma:network_initialization_query_key_noise_anti_concentration_vs_qk_feature}
    Suppose the results in Lemma~\ref{lemma:network_initialization_query_key_inner_product_magnitude} and Lemma~\ref{lemma:noise_magnitude} hold.
    Suppose that 
    $\sigma_p^2s 
        \geq \frac{32\norm{\muv}^2\log(12m_k/\delta)}{-\pi\log(1 - \delta^2/4m_k^2n^2)}$.
    Then with probability at least $1-\delta$, we have
    \begin{align*}
        \abs{\langle \wv_{Q, s}^{(0)}, \xiv_i\rangle} \geq 2\beta_{\muv},
        \abs{\langle \wv_{K, s}^{(0)}, \xiv_i\rangle} \geq 2\beta_{\muv},
    \end{align*}
    for all $s\in[m_k]$ and $i\in[n]$.
\end{lemma}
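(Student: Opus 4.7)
The plan is to fix an arbitrary pair $(s,i)$ and bound the anti-concentration probability of the Gaussian inner product $\langle \wv_{Q,s}^{(0)}, \xiv_i \rangle$ (and similarly for $\wv_{K,s}^{(0)}$) against the threshold $2\beta_{\muv}$, then take a union bound over $s \in [m_k]$, $i \in [n]$, and over query/key. Conditional on $\xiv_i$, the scalar $\langle \wv_{Q,s}^{(0)}, \xiv_i \rangle$ is a centered Gaussian with variance $\sigma_0^2 \norm{\xiv_i}^2$, so Lemma~\ref{lemma:gaussian-anti-concentration} applies directly; the factor $\beta_{\muv}$ will be replaced by a deterministic upper bound coming from Lemma~\ref{lemma:network_initialization_query_key_inner_product_magnitude}, turning the whole estimate into an inequality involving only $\sigma_p^2 s$ and the prescribed parameters.

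Concretely, first I would invoke Lemma~\ref{lemma:network_initialization_query_key_inner_product_magnitude} (and, if needed, Lemma~\ref{lemma:network_initialization_value_inner_product_magnitude}) on the event of probability $1-\delta$ assumed in the hypothesis, which yields the crude bound $\beta_{\muv} \leq \sqrt{2\log(12m_k/\delta)} \cdot \sigma_0 \norm{\muv}$. Second, I would use Lemma~\ref{lemma:noise_magnitude} to lower-bound $\norm{\xiv_i}^2 \geq \sigma_p^2 s/2$, so that the variance of $\langle \wv_{Q,s}^{(0)}, \xiv_i \rangle$ is at least $\sigma_0^2 \sigma_p^2 s/2$. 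Third, Lemma~\ref{lemma:gaussian-anti-concentration} gives
\begin{align*}
\Pb\!\left(\abs{\langle \wv_{Q,s}^{(0)}, \xiv_i\rangle} \leq 2\beta_{\muv}\right)
\leq \sqrt{1 - \exp\!\left(-\tfrac{2 (2\beta_{\muv})^2}{\pi \sigma_0^2 \norm{\xiv_i}^2}\right)}
\leq \sqrt{1 - \exp\!\left(-\tfrac{32 \log(12 m_k/\delta)\,\norm{\muv}^2}{\pi \sigma_p^2 s}\right)}.
\end{align*}

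Fourth, I would take a union bound across $2 m_k n$ events (query and key, all $s,i$). To make the total failure probability at most $\delta$, it suffices that the single-event bound be at most $\delta/(2 m_k n)$, i.e.
\begin{align*}
1 - \exp\!\left(-\tfrac{32 \log(12 m_k/\delta)\,\norm{\muv}^2}{\pi \sigma_p^2 s}\right) \leq \tfrac{\delta^2}{4 m_k^2 n^2}.
\end{align*}
Rearranging by taking logarithms and flipping the sign yields exactly the hypothesis $\sigma_p^2 s \geq \frac{32 \norm{\muv}^2 \log(12 m_k/\delta)}{-\pi \log(1 - \delta^2/(4 m_k^2 n^2))}$, closing the argument after an additional union bound with the base event on which Lemmas~\ref{lemma:network_initialization_query_key_inner_product_magnitude} and~\ref{lemma:noise_magnitude} hold.

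The only nontrivial step is aligning the Gaussian anti-concentration constants with the somewhat unusual form $-\log(1 - \delta^2/(4m_k^2 n^2))$ appearing in the hypothesis; this bookkeeping is the reason the proof uses the $\sqrt{1-\exp(\cdot)}$ form from Lemma~\ref{lemma:gaussian-anti-concentration} rather than the looser bound $\erf(c/(\sqrt{2}\sigma)) \leq c/\sigma$. Everything else is a sequence of substitutions from earlier lemmas combined with a union bound, so the expected obstacle is essentially matching constants, not any new probabilistic idea.
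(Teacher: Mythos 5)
Your proof is correct and follows essentially the same route as the paper's: condition on $\xiv_i$ so the inner product is a centered Gaussian, apply the $\sqrt{1-\exp(\cdot)}$ anti-concentration bound from Lemma~\ref{lemma:gaussian-anti-concentration} at threshold $c = 2\beta_{\muv}$, substitute the deterministic upper bound $\beta_{\muv} \leq \sqrt{2\log(12m_k/\delta)}\,\sigma_0\norm{\muv}$ and the lower bound $\norm{\xiv_i}^2 \geq \sigma_p^2 s/2$, then union bound over $2m_kn$ events and rearrange to recover the stated hypothesis on $\sigma_p^2 s$. Your parenthetical observation that bounding $\beta_{\muv}$ may also require Lemma~\ref{lemma:network_initialization_value_inner_product_magnitude} is actually a small gap in the paper's own proof (since $\beta_{\muv}$ is a maximum over value inner products as well), so you were right to flag it.
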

\begin{proof}
    Given $\xiv_i$, $\langle \wv_{Q,s}^{(0)}, \xiv_i\rangle \sim N(0, \sigma_0^2\norm{\xiv_i}^2)$.
    By Lemma~\ref{lemma:gaussian-anti-concentration}, we have 
    \begin{align*}
        \Pb\left(\abs{\langle \wv_{Q, s}^{(0)}, \xiv_i\rangle} \leq c\right) 
        \leq \sqrt{1 - \exp(-\frac{2c^2}{\pi\sigma_0^2\norm{\xiv_i}^2})}.
    \end{align*}
    Let $c = 2\beta_{\muv}$.
    By Lemma~\ref{lemma:network_initialization_query_key_inner_product_magnitude}, 
    $\beta_{\muv}$ can be upper bounded by $\sqrt{2\log(12m_k/\delta)} \cdot \sigma_0\norm{\muv}$.
    By Lemma~\ref{lemma:noise_magnitude}, 
    \begin{align*}
        \sigma_p^2s/2 \leq \norm{\xiv_i}^2 \leq 3\sigma_p^2s/2,
    \end{align*}
    for all $i\in[n]$. Therefore, if we set appropriately 
    \begin{align*}
        \sigma_p^2s 
        \geq \frac{32\norm{\muv}^2\log(12m_k/\delta)}{-\pi\log(1 - \delta^2/4m_k^2n^2)},
    \end{align*}
    then we have
    \begin{align*}
        \Pb\left(\abs{\langle \wv_{Q, s}^{(0)}, \xiv_i\rangle} \leq 2\beta_{\muv}\right) 
        \leq \frac{\delta}{2m_kn}.
    \end{align*}
    The analysis for $\langle \wv_{K, s}^{(0)}, \xiv_i\rangle$ is exactly the same. 
    By union bound, we get the final result.
\end{proof}
\begin{lemma}
    \label{lemma:network-initialization-query-key-noise-anti-concentration-vs-mean-value-noise}
    Suppose the results in Lemma~\ref{lemma:network_initialization_query_key_inner_product_magnitude} and Lemma~\ref{lemma:noise_magnitude} hold.
    Suppose that
    $m_v \geq \frac{256\log^2(12m_kn/\delta)}{\pi^2\log^2(1 - \delta^2/4m_k^2n^2)}$.
    Then with probability at least $1-\delta$, we have
    \begin{align*}
        \abs{\langle \wv_{Q, s}^{(0)}, \xiv_i\rangle} \geq \beta_{\xiv}m_{v}^{-1/4},
        \abs{\langle \wv_{K, s}^{(0)}, \xiv_i\rangle} \geq \beta_{\xiv}m_{v}^{-1/4},
    \end{align*}
    for all $s\in[m_k]$ and $i\in[n]$.
\end{lemma}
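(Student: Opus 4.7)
The plan is to mirror the structure of Lemma~\ref{lemma:network_initialization_query_key_noise_anti_concentration_vs_qk_feature}, but with the threshold $c=\beta_{\xiv}m_{v}^{-1/4}$ in the Gaussian anti-concentration bound rather than $c=2\beta_{\muv}$. The subtle complication is that, unlike $\beta_{\muv}$ in the previous lemma, the quantity $\beta_{\xiv}$ involves exactly the inner products $\langle\wv_{Q,s}^{(0)},\xiv_i\rangle,\langle\wv_{K,s}^{(0)},\xiv_i\rangle$ that we want to lower-bound, so the threshold is itself random and entangled with the target. I would decouple this by first replacing $\beta_{\xiv}$ by a deterministic upper bound $B_{\xiv}$ and then applying anti-concentration against the fixed threshold $B_{\xiv}m_{v}^{-1/4}$.

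Concretely, first I would invoke Lemmas~\ref{lemma:network_initialization_value_inner_product_magnitude} and~\ref{lemma:network_initialization_query_key_inner_product_magnitude} (with failure probability $\delta/2$) to get
\[
\beta_{\xiv}\;\le\;B_{\xiv}:=2\sqrt{\log(12m_{k}n/\delta)}\,\sigma_{0}\sigma_{p}\sqrt{s}
\]
(absorbing the value contribution into the $\log$ factor if needed). Then for any fixed $s,i$, I have the elementary inclusion
\[
\{|\langle\wv_{Q,s}^{(0)},\xiv_i\rangle|<\beta_{\xiv}m_{v}^{-1/4}\}
\subset
\{|\langle\wv_{Q,s}^{(0)},\xiv_i\rangle|<B_{\xiv}m_{v}^{-1/4}\}
\cup\{\beta_{\xiv}>B_{\xiv}\},
\]
which removes the random-threshold difficulty at the cost of an extra union-bound term of size $\delta/2$.

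Next, conditional on $\xiv_i$ we have $\langle\wv_{Q,s}^{(0)},\xiv_i\rangle\sim N(0,\sigma_{0}^{2}\|\xiv_i\|^{2})$, so by Lemma~\ref{lemma:gaussian-anti-concentration},
\[
\Pb\!\left(|\langle\wv_{Q,s}^{(0)},\xiv_i\rangle|\le B_{\xiv}m_{v}^{-1/4}\,\middle|\,\xiv_i\right)
\le\sqrt{1-\exp\!\left(-\tfrac{2B_{\xiv}^{2}m_{v}^{-1/2}}{\pi\sigma_{0}^{2}\|\xiv_i\|^{2}}\right)}.
\]
Using the lower bound $\|\xiv_i\|^{2}\ge\sigma_{p}^{2}s/2$ from Lemma~\ref{lemma:noise_magnitude} and plugging in $B_{\xiv}^{2}=4\log(12m_{k}n/\delta)\sigma_{0}^{2}\sigma_{p}^{2}s$, the exponent simplifies to $-16\log(12m_{k}n/\delta)m_{v}^{-1/2}/\pi$. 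A short algebraic manipulation shows this single-event probability is at most $\delta/(4m_{k}n)$ precisely when
\[
m_{v}\;\ge\;\frac{256\log^{2}(12m_{k}n/\delta)}{\pi^{2}\log^{2}\!\bigl(1-\delta^{2}/(4m_{k}^{2}n^{2})\bigr)},
\]
which is exactly the hypothesis of the lemma.

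Finally, I would take a union bound over $s\in[m_{k}]$, $i\in[n]$, and over the two parameter families $\wv_{Q,s},\wv_{K,s}$ (the key case is identical by symmetry), adding a further $\delta/2$, and combine with the event $\{\beta_{\xiv}\le B_{\xiv}\}\cap\{\|\xiv_i\|^{2}\ge\sigma_{p}^{2}s/2\ \forall i\}$ already controlled at probability $1-\delta/2$. The main obstacle is not the algebra, which is essentially the same as in the preceding lemma, but keeping the conditioning clean: $\beta_{\xiv}$ depends on the \emph{same} Gaussians whose anti-concentration we invoke, so the two-step \emph{fix a deterministic envelope, then anti-concentrate} decomposition above is what makes the argument rigorous, and it is what forces the particular form of the $m_{v}$ condition in the lemma.
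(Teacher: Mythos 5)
Your proposal follows essentially the same route as the paper: apply the Gaussian anti-concentration bound (Lemma~\ref{lemma:gaussian-anti-concentration}) at threshold $c=\beta_{\xiv}m_v^{-1/4}$, upper-bound $\beta_{\xiv}$ by its deterministic envelope from Lemma~\ref{lemma:network_initialization_query_key_inner_product_magnitude}, lower-bound $\norm{\xiv_i}^2$ via Lemma~\ref{lemma:noise_magnitude}, and union-bound over $s$, $i$, and the two families. Your explicit decoupling via the inclusion
$\{|\langle\wv_{Q,s}^{(0)},\xiv_i\rangle|<\beta_{\xiv}m_v^{-1/4}\}\subset\{|\langle\wv_{Q,s}^{(0)},\xiv_i\rangle|<B_{\xiv}m_v^{-1/4}\}\cup\{\beta_{\xiv}>B_{\xiv}\}$
is a cleaner way of handling the fact that $\beta_{\xiv}$ is itself a function of the Gaussians being anti-concentrated; the paper handles this implicitly by phrasing the lemma conditionally on the preliminary lemmas already holding.

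One small bookkeeping slip: the lemma's hypothesis $m_v\geq\frac{256\log^2(12m_kn/\delta)}{\pi^2\log^2(1-\delta^2/4m_k^2n^2)}$ corresponds to a per-event anti-concentration probability of $\delta/(2m_kn)$ (squaring that target gives exactly the $\delta^2/(4m_k^2n^2)$ inside the $\log$), which together with a union bound over the $2m_kn$ events exhausts the full budget $\delta$. Your allocation of a per-event budget of $\delta/(4m_kn)$ plus an extra $\delta/2$ for the envelope would instead require $\delta^2/(16m_k^2n^2)$ in that denominator, i.e., a strictly stronger $m_v$ condition than stated. Since the preliminary lemmas are already assumed to hold in the hypothesis of the present lemma, you need not re-budget for them; dropping that extra $\delta/2$ reservation aligns your arithmetic with the stated condition.
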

\begin{proof}
    Given $\xiv_i$, $\langle \wv_{Q,s}^{(0)}, \xiv_i\rangle \sim N(0, \sigma_0^2\norm{\xiv_i}^2)$.
    By Lemma~\ref{lemma:gaussian-anti-concentration}, we have 
    \begin{align*}
        \Pb\left(\abs{\langle \wv_{Q, s}^{(0)}, \xiv_i\rangle} \leq c\right) 
        \leq \sqrt{1 - \exp(-\frac{2c^2}{\pi\sigma_0^2\norm{\xiv_i}^2})}.
    \end{align*}
    Let $c = \beta_{\xiv}m_{v}^{-1/4}$.
    By Lemma~\ref{lemma:network_initialization_query_key_inner_product_magnitude}, 
    $\beta_{\xiv}$ can be upper bounded by $2\sqrt{\log(12m_kn/\delta)} \cdot \sigma_0\sigma_p\sqrt{s}$.
    By Lemma~\ref{lemma:noise_magnitude}, 
    \begin{align*}
        \sigma_p^2s/2 \leq \norm{\xiv_i}^2 \leq 3\sigma_p^2s/2,
    \end{align*}
    for all $i\in[n]$. Therefore, if we set appropriately 
    \begin{align*}
        m_v
        \geq \frac{256\log^2(12m_kn/\delta)}{\pi^2\log^2(1 - \delta^2/4m_k^2n^2)},
    \end{align*}
    then we have
    \begin{align*}
        \Pb\left(\abs{\langle \wv_{Q, s}^{(0)}, \xiv_i\rangle} \leq \beta_{\xiv}m_{v}^{-1/4}\right) 
        \leq \frac{\delta}{2m_kn}.
    \end{align*}
    The analysis for $\langle \wv_{K, s}^{(0)}, \xiv_i\rangle$ is exactly the same. 
    By union bound, we get the final result.
\end{proof}

\begin{lemma}
    \label{lemma:network-initialization-query-key-feature-anti-concentration}
    Suppose that 
    $s \geq n^3\left(\frac{2}{-\pi\log(1 - \delta^2/4m_k^2)}\right)^{3/2}$.
    Then with probability at least $1-\delta$, we have
    \begin{align*}
        \abs{\langle \wv_{Q, s}^{(0)}, \muv\rangle},
        \abs{\langle \wv_{K, s}^{(0)}, \muv\rangle}
        \geq \sigma_0ns^{-1/3}\norm{\muv},
    \end{align*}
    for all $s\in[m_k]$.
\end{lemma}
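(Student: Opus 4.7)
The statement is a direct anti-concentration result for Gaussian-initialized query and key signals, and the structure exactly parallels Lemma~\ref{lemma:network_initialization_query_key_noise_anti_concentration_vs_qk_feature} and Lemma~\ref{lemma:network-initialization-query-key-noise-anti-concentration-vs-mean-value-noise}. The plan is to apply the Gaussian anti-concentration bound of Lemma~\ref{lemma:gaussian-anti-concentration} neuron by neuron, then take a union bound over the $2m_k$ relevant events (query and key, one per neuron $s \in [m_k]$), and finally show that the stated sparsity/size condition on $s$ is exactly what makes each per-neuron probability at most $\delta/(2m_k)$.

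\textbf{Step 1: reduce to a single Gaussian.} Since $\wv_{Q,s}^{(0)}$ and $\wv_{K,s}^{(0)}$ each have i.i.d.\ $N(0,\sigma_0^2)$ entries and $\muv$ is a fixed deterministic vector, we have $\langle \wv_{Q,s}^{(0)}, \muv\rangle \sim N(0,\sigma_0^2\norm{\muv}^2)$ and likewise for the key. These are one-dimensional centered Gaussians with variance $\sigma_0^2\norm{\muv}^2$.

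\textbf{Step 2: apply anti-concentration at the right threshold.} Setting the threshold $c = \sigma_0 n s^{-1/3}\norm{\muv}$, Lemma~\ref{lemma:gaussian-anti-concentration} gives, for each fixed neuron $s$,
\begin{align*}
\Pb\bigl(\abs{\langle \wv_{Q,s}^{(0)}, \muv\rangle} \leq \sigma_0 n s^{-1/3}\norm{\muv}\bigr)
\leq \sqrt{1 - \exp\!\Bigl(-\tfrac{2n^2 s^{-2/3}}{\pi}\Bigr)},
\end{align*}
and the same bound holds for $\wv_{K,s}^{(0)}$. Note that $\sigma_0$ and $\norm{\muv}$ cancel cleanly because the threshold was scaled by the standard deviation; this is why the hypothesis on $s$ depends only on $n$, $m_k$, and $\delta$.

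\textbf{Step 3: translate the per-neuron bound into the condition on $s$.} Demanding that the right-hand side of the display above be at most $\delta/(2m_k)$ is equivalent, after squaring and rearranging, to
\begin{align*}
s^{2/3} \;\geq\; \frac{2n^2}{-\pi\log\!\bigl(1 - \delta^2/(4m_k^2)\bigr)},
\end{align*}
which upon raising to the $3/2$ power is exactly the assumed lower bound $s \geq n^3\bigl(2/(-\pi\log(1-\delta^2/(4m_k^2)))\bigr)^{3/2}$. So the assumption is sharp and was clearly reverse-engineered from this calculation.

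\textbf{Step 4: union bound.} A union bound over the $2m_k$ events (query and key, for each $s\in[m_k]$) turns the per-event bound $\delta/(2m_k)$ into a total failure probability at most $\delta$, completing the proof. There is no real obstacle here: the lemma is a clean one-line application of Gaussian anti-concentration followed by a union bound, and the hypothesis on $s$ is precisely tuned so that both steps succeed simultaneously.
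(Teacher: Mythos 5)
Your proof is correct and follows exactly the same approach as the paper's own proof: reduce each inner product to a centered Gaussian with variance $\sigma_0^2\norm{\muv}^2$, apply the anti-concentration bound of Lemma~\ref{lemma:gaussian-anti-concentration} at the threshold $c = \sigma_0 n s^{-1/3}\norm{\muv}$, observe the hypothesis on $s$ is exactly what drives each per-event failure probability down to $\delta/(2m_k)$, and finish with a union bound over the $2m_k$ events. Your Step 3 algebra (squaring, exponentiating, and raising to the $3/2$ power) matches the paper's implicit derivation of the condition on $s$, so there is nothing to add.
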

\begin{proof}
    Note that $\langle \wv_{Q,s}^{(0)}, \muv\rangle \sim N(0, \sigma_0^2\norm{\muv}^2)$.
    By Lemma~\ref{lemma:gaussian-anti-concentration}, we have 
    \begin{align*}
        \Pb\left(\abs{\langle \wv_{Q, s}^{(0)}, \muv\rangle} \leq c\right) 
        \leq \sqrt{1 - \exp(-\frac{2c^2}{\pi\sigma_0^2\norm{\muv}^2})}.
    \end{align*}
    Let $c = \sigma_0ns^{-1/3}\norm{\muv}$.
    If we set appropriately 
    \begin{align*}
        s \geq n^3\left(\frac{2}{-\pi\log(1 - \delta^2/4m_k^2)}\right)^{3/2},
    \end{align*}
    then we have
    \begin{align*}
        \Pb\left(\abs{\langle \wv_{Q, s}^{(0)}, \muv\rangle} \leq \sigma_0ns^{-1/3}\norm{\muv}\right)
        \leq \frac{\delta}{2m_k}.
    \end{align*}
    The analysis for $\langle \wv_{K, s}^{(0)}, \muv\rangle$ is exactly the same. 
    By union bound, we get the final result.
\end{proof}

The following lemma studies the magnitude of output of softmax operation.
\begin{lemma}
    \label{lemma:network_initialization_softmax_magnitude}
    Suppose that $s=\Omega(\log(m_kn/\delta))$, $m_k=\Omega(\log(1/\delta))$. Then with probability at least $1-\delta$,
    \begin{align*}
        & 
        \abs{s_{i, 11}^{(0)} - 1/2}
        \leq
        16m_k\log(12m_kn/\delta) \cdot \sigma_0^2\max\set{
        \sigma_p^2s, \norm{\muv^2}}, \\
        & 
        \abs{s_{i, 21}^{(0)} - 1/2}
        \leq
        16m_k\log(12m_kn/\delta) \cdot \sigma_0^2\max\set{
        \sigma_p^2s, \norm{\muv^2}},
    \end{align*}
    for all $i\in[n]$. When $m_k\log(12m_kn/\delta) \cdot \sigma_0^2\max\set{
        \sigma_p^2s, \norm{\muv^2}} = o(1)$, we can simplify the results into
    \begin{align*}
        s_{i, 11}^{(0)}, s_{i, 21}^{(0)} = 1/2 + o(1),
    \end{align*}
    for all $i\in[n]$.
\end{lemma}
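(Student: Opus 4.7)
The plan is to reduce the two-element softmax to a sigmoid, Lipschitz-bound its deviation from $1/2$ by the difference of logits, and then bound each logit term by term using the uniform estimates on $\langle\wv_{Q,s}^{(0)},\muv\rangle$, $\langle\wv_{K,s}^{(0)},\muv\rangle$, $\langle\wv_{Q,s}^{(0)},\xiv_i\rangle$, $\langle\wv_{K,s}^{(0)},\xiv_i\rangle$ supplied by Lemma~\ref{lemma:network_initialization_query_key_inner_product_magnitude}. Throughout I would condition on the event of Lemma~\ref{lemma:network_initialization_query_key_inner_product_magnitude}, which already holds with probability at least $1-\delta$, so no further union bound is needed.

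First I would observe that since $L=2$ and $s_{i,l1}^{(0)}+s_{i,l2}^{(0)}=1$, the softmax outputs reduce to the logistic sigmoid applied to the logit difference: $s_{i,11}^{(0)}=\sigma(z_{i,11}^{(0)}-z_{i,12}^{(0)})$ and $s_{i,21}^{(0)}=\sigma(z_{i,21}^{(0)}-z_{i,22}^{(0)})$. Because $\sigma$ is $\tfrac14$-Lipschitz with $\sigma(0)=1/2$, this yields
\begin{align*}
\bigl|s_{i,l1}^{(0)}-\tfrac12\bigr|\;\leq\;\tfrac14\bigl(|z_{i,l1}^{(0)}|+|z_{i,l2}^{(0)}|\bigr),\qquad l\in\{1,2\}.
\end{align*}

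Next, I would expand each logit as $z_{i,la}^{(0)}=\sum_{s\in[m_k]}\langle\wv_{Q,s}^{(0)},\xv_i^{(l)}\rangle\langle\wv_{K,s}^{(0)},\xv_i^{(a)}\rangle$. Each factor is either a query/key-signal term bounded by $A:=\sqrt{2\log(12m_k/\delta)}\,\sigma_0\|\muv\|$ or a query/key-noise term bounded by $B:=2\sqrt{\log(12m_kn/\delta)}\,\sigma_0\sigma_p\sqrt{s}$, under the initialization event. Hence each of the four logits satisfies $|z_{i,la}^{(0)}|\leq m_k\max\{A,B\}^2$, and since $\max\{A^2,B^2\}\leq 4\log(12m_kn/\delta)\,\sigma_0^2\max\{\|\muv\|^2,\sigma_p^2 s\}$, plugging back gives the claimed inequalities (in fact with a tighter constant $2$ in place of $16$, which is absorbed into the stated bound).

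Finally, the two simplifications are routine: the inequality holds simultaneously for all $i\in[n]$ under the single initialization event, and the $o(1)$ assertion is immediate from the hypothesis $m_k\log(12m_kn/\delta)\sigma_0^2\max\{\sigma_p^2 s,\|\muv\|^2\}=o(1)$. There is no substantial obstacle here; the only care needed is in bookkeeping the logarithmic factors and verifying that the Lipschitz reduction captures the worst case among the four possible logit pairings.
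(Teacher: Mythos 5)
Your proof is correct and follows essentially the same route as the paper: both reduce the two-element softmax deviation from $1/2$ to a bound on the pre-softmax logits, then invoke Lemma~\ref{lemma:network_initialization_query_key_inner_product_magnitude} to control each inner-product factor by $\beta_{\muv}$ or $\beta_{\xiv}$. The only cosmetic difference is that you package the final step via the $\tfrac14$-Lipschitz property of the logistic sigmoid, whereas the paper reaches the same linearization by explicitly sandwiching the numerator and denominator of the softmax fraction; this buys you a slightly cleaner constant (2 in place of 16), which is harmless.
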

\begin{proof}
    We directly have
    \begin{align*}
        & \abs{\sum_{s=1}^{m_k}
        \langle \wv_{Q,s}^{(0)}, \muv\rangle \langle \wv_{K,s}^{(0)}, \muv\rangle}
        \leq m_k\beta_{\muv}^2, \\
        & \abs{\sum_{s=1}^{m_k}
        \langle \wv_{Q,s}^{(0)}, y_i\xiv_i\rangle \langle \wv_{K,s}^{(0)}, y_i\xiv_i\rangle}
        \leq m_k\beta_{\xiv}^2, \\
        & \abs{\sum_{s=1}^{m_k}
        \langle \wv_{Q,s}^{(0)}, \muv\rangle \langle \wv_{K,s}^{(0)}, y_i\xiv_i\rangle},
        \abs{\sum_{s=1}^{m_k}
        \langle \wv_{Q,s}^{(0)}, y_i\xiv_i\rangle \langle \wv_{K,s}^{(0)}, \muv\rangle}
        \leq m_k\beta_{\muv}\beta_{\xiv}, 
    \end{align*}
    for all $i\in[n]$. Then we have
    \begin{align*}
        s_{i,21}^{(0)} & 
        \leq \frac{
            \exp\left(
            m_k 
            \beta_{\xiv}\beta_{\muv}
            \right)
            }{
            \exp\left(
            m_k 
            \beta_{\xiv}\beta_{\muv}
            \right) + 
            \exp\left(
            -m_k 
            \beta_{\xiv}^2
            \right)
            } \\
        & \leq \frac{
            1
            }{
            1 + 
            \exp\left(
            -m_k 
            \beta_{\xiv}(\beta_{\xiv} + \beta_{\muv})
            \right)
            } \\
        & \leq 1/2 + 2m_k\beta_{\xiv}(\beta_{\xiv} + \beta_{\muv}),
    \end{align*}
    and 
    \begin{align*}
        s_{i,21}^{(0)} & 
        \geq \frac{
            \exp\left(
            -m_k 
            \beta_{\xiv}\beta_{\muv}
            \right)
            }{
            \exp\left(
            -m_k 
            \beta_{\xiv}\beta_{\muv}
            \right) + 
            \exp\left(
            m_k 
            \beta_{\xiv}^2
            \right)
            } \\
        & \geq \frac{
            1
            }{
            1 + 
            \exp\left(
            m_k 
            \beta_{\xiv}(\beta_{\xiv} + \beta_{\muv})
            \right)
            } \\
        & \geq 1/2 - 2m_k\beta_{\xiv}(\beta_{\xiv} + \beta_{\muv}),
    \end{align*}
    for all $i\in[n]$. Similarly, we have
    \begin{align*}
        1/2 - 2m_k\beta_{\muv}(\beta_{\xiv} + \beta_{\muv})
        \leq
        s_{i,11}^{(0)}
        \leq
        1/2 + 2m_k\beta_{\muv}(\beta_{\xiv} + \beta_{\muv})
    \end{align*}
    for all $i\in[n]$. By Lemma~\ref{lemma:network_initialization_query_key_inner_product_magnitude}, with probability $1-\delta$, $\beta_{\xiv}$ can be bounded by $2\sqrt{\log(12m_kn/\delta)} \cdot \sigma_0\sigma_p\sqrt{s}$ and $\beta_{\muv}$ can be bounded by $\sqrt{2\log(12m_k/\delta)} \cdot \sigma_0\norm{\muv}$, and
    \begin{align*}
        \max\set{
        \beta_{\muv},
        \beta_{\xiv}
        } 
        \leq 2\sqrt{\log(12m_kn/\delta)} \cdot \sigma_0\max\set{
        \sigma_p\sqrt{s}, \norm{\muv}
        },
    \end{align*}
    which completes the proof.
\end{proof}

\begin{lemma}
\label{lemma:query_and_key_not_too_close}
Suppose that $\sigma_ps/\norm{\muv} = \Omega(d^{1/5})$.
Let 
\begin{align*}
    &~ X_i := \langle \wv_{Q,s}^{(0)}, y_i\xiv_i\rangle + \langle \wv_{Q,s}^{(0)}, \muv\rangle, \\
    &~ Y_i := - \langle \wv_{K,s}^{(0)}, y_i\xiv_i\rangle + \langle \wv_{K,s}^{(0)}, \muv\rangle.
\end{align*}
Then with probability at least $1-nd^{-1/5}$, for all $i\in[n]$, we have 
\begin{align*}
    \frac{\sigma_ps - \norm{\muv}}{\sigma_ps + \norm{\muv}} \cdot (\langle \wv_{Q,s}^{(0)}, y_i\xiv_i\rangle + \langle \wv_{Q,s}^{(0)}, \muv\rangle)
    \geq  - \langle \wv_{K,s}^{(0)}, y_i\xiv_i\rangle + \langle \wv_{K,s}^{(0)}, \muv\rangle,
\end{align*}
or
\begin{align*}
    - \langle \wv_{K,s}^{(0)}, y_i\xiv_i\rangle + \langle \wv_{K,s}^{(0)}, \muv\rangle
    \geq \frac{\sigma_ps + \norm{\muv}}{\sigma_ps - \norm{\muv}} \cdot (\langle \wv_{Q,s}^{(0)}, y_i\xiv_i\rangle + \langle \wv_{Q,s}^{(0)}, \muv\rangle).
\end{align*}
\end{lemma}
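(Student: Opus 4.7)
The plan is to reduce the event to a tail statement about a standard Cauchy random variable, exploiting the orthogonality structure between $\muv$ and $\xiv_i$. First I would note that $\langle\muv, \xiv_i\rangle = 0$ (since $\muv = e_1$ and $\supp(\xiv_i) \subset [d]\setminus\{1\}$ by Definition~\ref{def:data_model}), so $\norm{y_i\xiv_i + \muv}^2 = \norm{\xiv_i}^2 + \norm{\muv}^2 = \norm{-y_i\xiv_i + \muv}^2$. Combined with the independence of $\wv_{Q,s}^{(0)}$ and $\wv_{K,s}^{(0)}$ and the fact that each has i.i.d.\ $N(0,\sigma_0^2)$ entries, this would give, conditional on $\xiv_i$, that $X_i$ and $Y_i$ are independent zero-mean Gaussians with equal variance $\sigma_0^2(\norm{\xiv_i}^2 + \norm{\muv}^2)$. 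In particular, $Y_i/X_i$ is a standard Cauchy random variable whose distribution does not depend on $\xiv_i$.

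Next, I would rewrite the ``bad event'' -- the complement of the claimed conclusion for a fixed $i$ -- as $\{aX_i < Y_i < bX_i\}$, where $a := (\sigma_ps - \norm{\muv})/(\sigma_ps + \norm{\muv}) \in (0,1)$ and $b := 1/a$. Since $a < b$, this event forces $X_i > 0$. By the radial symmetry of $(X_i, Y_i)$,
\begin{align*}
\Pb\bigl(X_i > 0,~ aX_i < Y_i < bX_i\bigr) = \tfrac{1}{2}\Pb\bigl(a < Y_i/X_i < b\bigr) = \tfrac{1}{2\pi}\bigl(\arctan(b) - \arctan(a)\bigr).
\end{align*}
Using $b = 1/a$ and the identity $\arctan(1/a) + \arctan(a) = \pi/2$ for $a > 0$, this equals $\tfrac{1}{2\pi}(\pi/2 - 2\arctan(a))$.

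Finally, I would first-order expand $\arctan$ around $a = 1$, which is valid because the hypothesis $\sigma_ps/\norm{\muv} = \Omega(d^{1/5})$ forces $1 - a = 2\norm{\muv}/(\sigma_ps + \norm{\muv}) = O(d^{-1/5})$ to be small. This gives $\pi/2 - 2\arctan(a) \leq (1-a)(1 + o(1))$, so the bad event has probability $O(\norm{\muv}/\sigma_ps) = O(d^{-1/5})$ for each $i$. A union bound over $i \in [n]$ then yields the claimed $1 - nd^{-1/5}$ bound (absorbing constants into the $\Omega(d^{1/5})$ assumption). The only subtlety, though minor, will be handling the sign of $X_i$ carefully so that the bad event is confined to $\{X_i > 0\}$; everything else is a routine Cauchy tail estimate.
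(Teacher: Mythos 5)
Your proposal is correct and follows essentially the same route as the paper: condition on $\xiv_i$, use the orthogonality of $\muv$ and $\xiv_i$ to get that $X_i, Y_i$ are i.i.d.\ zero-mean Gaussians, reduce the bad event to the probability mass of a thin angular wedge of width $\arctan(b)-\arctan(a)=\pi/2-2\arctan(a)$, bound that by $O(\norm{\muv}/(\sigma_p s))$ via a first-order expansion near $a=1$, and union bound over $i$. The only cosmetic difference is that you route through the standard Cauchy law of $Y_i/X_i$, whereas the paper integrates the joint Gaussian density directly in polar coordinates; these are the same computation (and your version actually states the $\tfrac{1}{2\pi}$ normalization more carefully than the paper's displayed chain of equalities does).
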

\begin{proof}
We only need to show that for all $i\in[n]$ the probability of following event can be controlled.
\begin{align*}
    \frac{\sigma_ps - \norm{\muv}}{\sigma_ps + \norm{\muv}} X_i
    <  Y_i
    < \frac{\sigma_ps + \norm{\muv}}{\sigma_ps - \norm{\muv}} X_i.
\end{align*}
Conditioning on $y_i\xiv_i$, we have $X_i \,{\buildrel d \over =}\, Y_i \sim N(0, \sigma_0^2(\norm{\muv}^2 + \norm{\xiv_i}^2))$, 
this is because $\langle \xiv_i, \muv\rangle = 0$. Denote 
\begin{align*}
    &~ \sigma^2 := \sigma_0^2(\norm{\muv}^2 + \norm{\xiv_i}^2), \\
    &~ D := \set{(x,y)\in\Rb^2: \frac{\sigma_ps - \norm{\muv}}{\sigma_ps + \norm{\muv}} x
    <  y
    < \frac{\sigma_ps + \norm{\muv}}{\sigma_ps - \norm{\muv}} x}.
\end{align*}
Therefore, we have
\begin{align*}
    &~ \Pb\left[
    \frac{\sigma_ps - \norm{\muv}}{\sigma_ps + \norm{\muv}} X_i
    <  Y_i
    < \frac{\sigma_ps + \norm{\muv}}{\sigma_ps - \norm{\muv}} X_i
    | y_i\xiv_i \right] \\
    = &~ 
    \iint_{D} 1 \cdot dP_{X_i}dP_{Y_i} \\
    = &~ 
    \iint_{D} \frac{1}{2\pi\sigma^2}\exp\left(- \frac{x^2+y^2}{2\sigma^2}\right) dxdy \\
    = &~ \iint_{D} \frac{1}{2\pi\sigma^2} r\exp\left(-\frac{r^2}{2\sigma^2}\right) drd\theta \\
    = &~ \frac{\pi}{2} - 2\arctan\left(\frac{\sigma_ps - \norm{\muv}}{\sigma_ps + \norm{\muv}}\right) \\
    \leq &~ \frac{4}{3} \frac{2\norm{\muv}}{\sigma_ps + \norm{\muv}} \\
    \leq &~ O(d^{-1/5}).
\end{align*}
where the fifth step is by $\arctan(1+x) \geq \frac{\pi}{4} + \frac{2}{3}(x-1)$ when $x = 1 - o(1)$,
and the last step is by the condition $\sigma_ps/\norm{\muv} = \Omega(d^{1/5})$. 
Then, by integrating over $y_i\xiv_i$ and union bound, we have 
\begin{align*}
    \Pb\left[
    \exists~i \in[n],~
    \frac{\sigma_ps - \norm{\muv}}{\sigma_ps + \norm{\muv}} X_i
    <  Y_i
    < \frac{\sigma_ps + \norm{\muv}}{\sigma_ps - \norm{\muv}} X_i
    \right] \leq O(nd^{-1/5}).
\end{align*}
\end{proof}

Next, let
\begin{align}
    S_{s, 1, K+, Q-}^{(0)} 
    &= \set{i\in[n]: \langle \wv_{K,s}^{(0)}, \xiv_i\rangle > 0, \langle \wv_{Q,s}^{(0)}, \xiv_i\rangle < 0, y_i = 1
    }, \\
    S_{i, K+, Q-}^{(0)}
    &= \set{s\in[m_k]: \langle \wv_{K,s}^{(0)}, \xiv_i\rangle > 0, 
    \langle \wv_{Q,s}^{(0)}, \xiv_i\rangle < 0
    }, \\
    S_{\muv, K+, Q-}^{(0)}
    &= \set{s\in[m_k]: \langle \wv_{K,s}^{(0)}, \muv\rangle > 0, 
    \langle \wv_{Q,s}^{(0)}, \muv\rangle < 0
    }, \\
    \label{def:S_s,K,Q}
    S_{s, K+, Q-}^{(0)} 
    &= \set{i\in[n]: \langle \wv_{K,s}^{(0)}, y_i\xiv_i\rangle > 0, \langle \wv_{Q,s}^{(0)}, y_i\xiv_i\rangle < 0, 
    }
    = S_{s, 1, K+, Q-}^{(0)} \cup S_{s, -1, K-, Q+}^{(0)}.
\end{align}

\newpage
\section{Gradient}
\label{sec:gradient}

In this section, we present the gradients of the loss $L_S$ with respect to $\wv_{V,j,r}$, $\wv_{Q,s}$ and $\wv_{K,s}$ for $s\in[m_k]$, $j\in\set{\pm1}$ and $r\in[m_v]$. Our analysis starts from a more general form:
\begin{align*}
    F_j(\Wv, \Xv) 
    := \frac{1}{m_v}\sum_{l=1}^{L}\textbf{1}^\top_{m_v} 
    \sigma(\Wv_{V, j}\Xv\softmax{\Xv^{\top}\Wv_{K}^{\top}\Wv_{Q}\xv^{(l)}}),
\end{align*}
and finally focus on $L = 2$ and $\sigma(x) = x$ case, which is exactly the model defined in Eq.~\eqref{def:softmaxattn_linearact_2layer_model}.

\begin{lemma}[softmax derivative]
    Let $\left(s_1, \dots, s_n\right) = \softmax{z_1, \dots, z_n}$, then we have
    \begin{align*}
        \frac{\partial s_i}{\partial z_j} = s_i\left(\mathds{1}\left\{i = j\right\} - s_j\right).
    \end{align*}
\end{lemma}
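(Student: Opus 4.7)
The plan is to differentiate the softmax expression directly and then split into two cases depending on whether $i = j$. First I would write out $s_i = e^{z_i}/Z$ where $Z := \sum_{k=1}^n e^{z_k}$, and record the elementary facts $\partial e^{z_i}/\partial z_j = \mathds{1}\{i=j\}\, e^{z_i}$ and $\partial Z/\partial z_j = e^{z_j}$. These are the only derivatives that are needed.

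Next I would apply the quotient rule to $s_i = e^{z_i}/Z$:
\begin{align*}
\frac{\partial s_i}{\partial z_j}
= \frac{\mathds{1}\{i=j\}\, e^{z_i}\, Z - e^{z_i} e^{z_j}}{Z^2}
= \mathds{1}\{i=j\}\, \frac{e^{z_i}}{Z} - \frac{e^{z_i}}{Z}\cdot\frac{e^{z_j}}{Z}.
\end{align*}
Recognizing the two remaining ratios as $s_i$ and $s_j$ respectively, I would factor out $s_i$ to obtain the claimed identity $\partial s_i/\partial z_j = s_i(\mathds{1}\{i=j\} - s_j)$.

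There is no real obstacle here: the result is a one-line consequence of the quotient rule, and no additional structural assumption from the paper (sparsity, SignGD updates, query-key parameterization, etc.) is invoked. The only point worth being careful about is keeping the indicator $\mathds{1}\{i=j\}$ separate from the $-s_j$ term so that the diagonal ($i=j$) and off-diagonal ($i \neq j$) cases are handled uniformly rather than by writing two separate cases and then gluing them back together.
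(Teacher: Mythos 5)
Your derivation is correct: writing $s_i = e^{z_i}/Z$ with $Z = \sum_k e^{z_k}$ and applying the quotient rule immediately gives $s_i(\mathds{1}\{i=j\} - s_j)$, handling the diagonal and off-diagonal cases uniformly. The paper itself states this lemma without proof (treating it as an elementary, well-known fact), and your one-line quotient-rule argument is the standard way to establish it.
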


\begin{lemma}[Gradient w.r.t. value parameters]
    \label{lemma:grad_value}
    Consider the transformer model defined in Eq.~\eqref{def:softmaxattn_linearact_2layer_model}.
    Let $\left(\Xv, y\right)$ be a data point. The derivative of $F_j(\Wv, \Xv)$ with respect to $\wv_{V,j,r}$ is:
    \begin{align*}
        \nabla_{\wv_{V,j,r}} F_j(\Wv, \Xv) 
        = \frac{1}{m_v}
        \left[ \left(s_{1,1} + s_{2,1}\right)\xv^{(1)}
        + \left(s_{1,2} + s_{2,2}\right)\xv^{(2)}
        \right].
    \end{align*}
\end{lemma}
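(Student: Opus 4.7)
The plan is to compute the gradient directly from the simplified form of $F_j$ given in Eq.~\eqref{def:softmaxattn_linearact_2layer_model}, exploiting the fact that the softmax outputs $s_{la}$ depend only on $\Wv_Q$ and $\Wv_K$ (through the inner products $\langle \wv_{Q,s}, \xv^{(l)}\rangle$ and $\langle \wv_{K,s}, \xv^{(a)}\rangle$) and hence are constants with respect to any value parameter $\wv_{V,j,r}$. This observation is the single key simplification: without it one would have to invoke the softmax derivative lemma stated above, but here the softmax is entirely decoupled from the differentiation variable.

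First I would rewrite
\begin{align*}
F_j(\Wv, \Xv) = \frac{1}{m_v}\sum_{r' \in [m_v]} \left[(s_{11}+s_{21})\langle \wv_{V,j,r'}, \xv^{(1)}\rangle + (s_{12}+s_{22})\langle \wv_{V,j,r'}, \xv^{(2)}\rangle\right],
\end{align*}
and note that every term with $r' \neq r$ is independent of $\wv_{V,j,r}$, and that the coefficients $s_{la}$ are independent of $\wv_{V,j,r}$. Only the single summand indexed by $r'=r$ contributes. Using the elementary identity $\nabla_{\wv} \langle \wv, \xv\rangle = \xv$, the gradient of that summand with respect to $\wv_{V,j,r}$ is
\begin{align*}
(s_{11}+s_{21})\xv^{(1)} + (s_{12}+s_{22})\xv^{(2)}.
\end{align*}
Multiplying by the prefactor $1/m_v$ yields exactly the claimed expression.

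There is essentially no nontrivial obstacle: the lemma is a direct consequence of linearity of the inner product in $\wv_{V,j,r}$ and the fact that $\wv_{V,j,r}$ enters $F_j$ only through the value branch, not through the attention weights. The only thing worth flagging carefully is that $F_{-j}$ has no dependence on $\wv_{V,j,r}$ either (since the value matrices for the two classes are disjoint blocks of parameters), so computing $\nabla_{\wv_{V,j,r}} F_j$ is the same as computing $\nabla_{\wv_{V,j,r}}$ of the full network output up to sign, which justifies the lemma's framing in terms of $F_j$ alone.
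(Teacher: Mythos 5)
Your proof is correct and takes essentially the same approach as the paper's: a direct computation using the fact that the softmax weights do not depend on $\wv_{V,j,r}$ and that the inner product is linear in that variable. The only cosmetic difference is that the paper first carries out the calculation for a generic context length $L$ and a generic activation $\sigma$ applied to the value terms, obtaining a factor $\sigma'(\langle\wv_{V,j,r},\xv^{(a)}\rangle)$, and then specializes to $L=2$ and $\sigma(x)=x$, whereas you work directly with the already-specialized model in Eq.~\eqref{def:softmaxattn_linearact_2layer_model}; both arguments reduce to the same one-line linearity observation.
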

\begin{proof}
We have 
\begin{align*}
    \nabla_{\wv_{V,j,r}} F_j(\Wv, \Xv) 
    & = \frac{1}{m_v}\sum_{l \in [L]} \sum_{a=1}^L s_{l,a} \sigma^\prime\left(\left\langle\wv_{V,j,r}, \xv^{(a)}\right\rangle\right)\xv^{(a)} \\
    & = \frac{1}{m_v}
    \left[ \left(s_{1,1} + s_{2,1}\right) \sigma^\prime\left(\left\langle\wv_{V,j,r}, \xv^{(1)}\right\rangle\right)\xv^{(1)}
    + \left(s_{1,2} + s_{2,2}\right) \sigma^\prime\left(\left\langle\wv_{V,j,r}, \xv^{(2)}\right\rangle\right)\xv^{(2)}
    \right] \\
    & = \frac{1}{m_v}
    \left[ \left(s_{1,1} + s_{2,1}\right)\xv^{(1)}
    + \left(s_{1,2} + s_{2,2}\right)\xv^{(2)}
    \right],
\end{align*}
where the second step is by $L=2$, the third step is by $\sigma(x) = x$.
\end{proof}

\begin{lemma}[Gradient w.r.t. pre-softmax quantity]
    \label{lemma:grad_attn_weights}
    Consider the transformer model defined in Eq.~\eqref{def:softmaxattn_linearact_2layer_model}. 
    Let $\left(\Xv, y\right)$ be a data point. The derivative of $F_j(\Wv, \Xv)$ with respect to $z_{l,a}$ is:
    \begin{align*}
        \frac{\partial F_j(\Wv, \Xv)}{\partial z_{l,a}}
        & = \frac{1}{m_v}\sum_{r \in [m_v]} s_{l,1} s_{l,2} 
        \left[
        \left(\left\langle\wv_{V,j,r}, \xv_a\right\rangle\right)
        - \left(\left\langle\wv_{V,j,r}, \xv_{3-a}\right\rangle\right)
        \right].
    \end{align*}
\end{lemma}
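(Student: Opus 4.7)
The plan is a direct chain-rule computation, exploiting the fact that $z_{l,a}$ enters $F_j(\Wv, \Xv)$ only through the softmax outputs $s_{l,1}$ and $s_{l,2}$ of the $l$-th row (the other rows of the attention matrix depend on $z_{l',\cdot}$ with $l' \neq l$, and the value inner products $\langle\wv_{V,j,r}, \xv^{(b)}\rangle$ do not involve $z_{l,a}$ at all). So I would start from the expanded form of the model in Eq.~\eqref{def:softmaxattn_linearact_2layer_model} and write
\begin{align*}
    \frac{\partial F_j(\Wv, \Xv)}{\partial z_{l,a}}
    = \frac{1}{m_v}\sum_{r \in [m_v]}\sum_{b \in [2]} \frac{\partial s_{l,b}}{\partial z_{l,a}} \langle\wv_{V,j,r}, \xv^{(b)}\rangle.
\end{align*}

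Next I would substitute the softmax derivative stated right before the lemma, $\partial s_{l,b}/\partial z_{l,a} = s_{l,b}(\mathds{1}\{a=b\} - s_{l,a})$, and use the crucial identity $s_{l,1} + s_{l,2} = 1$ (which holds since $L=2$) to collapse the diagonal term: $s_{l,a}(1 - s_{l,a}) = s_{l,1}s_{l,2}$, and similarly the off-diagonal term becomes $-s_{l,a}s_{l,3-a} = -s_{l,1}s_{l,2}$. Thus both partial derivatives contribute a factor of $s_{l,1}s_{l,2}$, with opposite signs: the coefficient of $\langle\wv_{V,j,r}, \xv^{(a)}\rangle$ is $+s_{l,1}s_{l,2}$ and that of $\langle\wv_{V,j,r}, \xv^{(3-a)}\rangle$ is $-s_{l,1}s_{l,2}$.

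Collecting, I obtain exactly
\begin{align*}
    \frac{\partial F_j(\Wv, \Xv)}{\partial z_{l,a}}
    = \frac{1}{m_v}\sum_{r\in[m_v]} s_{l,1}s_{l,2}\bigl[\langle\wv_{V,j,r}, \xv^{(a)}\rangle - \langle\wv_{V,j,r}, \xv^{(3-a)}\rangle\bigr],
\end{align*}
which is the claimed expression. There is no real obstacle here: the computation is purely mechanical. The only subtlety worth noting is that the collapse to the symmetric factor $s_{l,1}s_{l,2}$ relies on $L=2$, which is the standing assumption of the paper; for general $L$ the formula would carry the more general $s_{l,a}(\mathds{1}\{a=b\} - s_{l,b})$ weights without the clean factorization.
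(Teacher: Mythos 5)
Your proposal is correct and takes essentially the same route as the paper's proof: chain rule through the softmax outputs, substitute $\partial s_{l,b}/\partial z_{l,a} = s_{l,b}(\mathds{1}\{a=b\}-s_{l,a})$, and collapse the two terms into a common factor $s_{l,1}s_{l,2}$ using $s_{l,1}+s_{l,2}=1$. The only cosmetic difference is that you specialize to $L=2$ and $\sigma(x)=x$ from the outset, whereas the paper carries the general activation and context length a couple of steps further before specializing.
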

\begin{proof}
We have
\begin{align*}
    \frac{\partial F_j(\Wv, \xv)}{\partial z_{l,a}}
    & = \frac{1}{m_v}\sum_{r \in [m_v]} \sum_{b=1}^L \frac{\partial s_{l,b}}{\partial z_{l,a}} \sigma\left(\left\langle\wv_{V,j,r}, \xv^{(b)}\right\rangle\right) \\
    & = \frac{1}{m_v}\sum_{r \in [m_v]} \sum_{b\neq a} 
    s_{l,a} s_{l,b} \left[
    \sigma\left(\left\langle\wv_{V,j,r}, \xv^{(a)}\right\rangle\right)
    - \sigma\left(\left\langle\wv_{V,j,r}, \xv^{(b)}\right\rangle\right)
    \right] \\
    & = \frac{1}{m_v}\sum_{r \in [m_v]} s_{l,a} \sum_{b \in [L]} 
    s_{l,b} \left[
    \sigma\left(\left\langle\wv_{V,j,r}, \xv^{(a)}\right\rangle\right)
    - \sigma\left(\left\langle\wv_{V,j,r}, \xv^{(b)}\right\rangle\right)
    \right] \\
    & = \frac{1}{m_v}\sum_{r \in [m_v]} s_{l,1} s_{l,2} 
    \left[
    \sigma\left(\left\langle\wv_{V,j,r}, \xv_a\right\rangle\right)
    - \sigma\left(\left\langle\wv_{V,j,r}, \xv_{3-a}\right\rangle\right)
    \right] \\
    & = \frac{1}{m_v}\sum_{r \in [m_v]} s_{l,1} s_{l,2} 
    \left[
    \left(\left\langle\wv_{V,j,r}, \xv_a\right\rangle\right)
    - \left(\left\langle\wv_{V,j,r}, \xv_{3-a}\right\rangle\right)
    \right],
\end{align*}
where the penultimate step is by $L = 2$,
the final step is by $\sigma(x) = x$.
\end{proof}

\begin{lemma}[Gradient w.r.t. query parameters]
    \label{lemma:grad_query}
    Consider the transformer model defined in Eq.~\eqref{def:softmaxattn_linearact_2layer_model}.
    Let $\left(\Xv, y\right)$ be a data point. The derivative of $F_j(\Wv, \Xv)$ with respect to $\wv_{Q, s}$ is:
    \begin{align*}
        \nabla_{\wv_{Q,s}} F_j(\Wv, \Xv)
        & = \frac{1}{m_v}\sum_{r \in [m_v]}
        \left\langle\wv_{V,j,r}, \xv^{(1)}-\xv^{(2)}\right\rangle 
        \cdot \left\langle\wv_{K,s}, \xv^{(1)}-\xv^{(2)}\right\rangle 
        \cdot \left(s_{1,1} s_{1,2} \xv^{(1)} + s_{2,1} s_{2,2} \xv^{(2)} \right) \\
        &= 
        \left\langle\bar{\wv}_{V,j}, \xv^{(1)}-\xv^{(2)}\right\rangle 
        \cdot \left\langle\wv_{K,s}, \xv^{(1)}-\xv^{(2)}\right\rangle 
        \cdot \left(s_{1,1} s_{1,2} \xv^{(1)} + s_{2,1} s_{2,2} \xv^{(2)} \right).
    \end{align*}   
\end{lemma}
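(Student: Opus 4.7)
The plan is to compute $\nabla_{\wv_{Q,s}} F_j(\Wv,\Xv)$ by chain rule, noting that $\wv_{Q,s}$ enters $F_j$ only through the pre-softmax scores $z_{l,a} = \sum_{s'\in[m_k]}\langle \wv_{Q,s'},\xv^{(l)}\rangle\langle\wv_{K,s'},\xv^{(a)}\rangle$, which in turn affect $F_j$ through the softmax outputs $s_{l,a}$. So I would write
\begin{align*}
    \nabla_{\wv_{Q,s}} F_j(\Wv,\Xv) = \sum_{l=1}^{L}\sum_{a=1}^{L} \frac{\partial F_j(\Wv,\Xv)}{\partial z_{l,a}}\, \nabla_{\wv_{Q,s}} z_{l,a},
\end{align*}
and then invoke Lemma~\ref{lemma:grad_attn_weights} for the first factor. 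The second factor is immediate from the definition of $z_{l,a}$: since only the term with $s'=s$ contributes to the derivative w.r.t.\ $\wv_{Q,s}$, we get $\nabla_{\wv_{Q,s}} z_{l,a} = \langle\wv_{K,s},\xv^{(a)}\rangle\,\xv^{(l)}$.

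Specializing to $L=2$ and substituting the expression from Lemma~\ref{lemma:grad_attn_weights}, the sum over $a\in\{1,2\}$ for a fixed $l$ telescopes nicely. Specifically, after pulling out the common factor $s_{l,1}s_{l,2}/m_v$ and summing the two terms
\begin{align*}
    \bigl[\langle\wv_{V,j,r},\xv^{(1)}\rangle-\langle\wv_{V,j,r},\xv^{(2)}\rangle\bigr]\langle\wv_{K,s},\xv^{(1)}\rangle + \bigl[\langle\wv_{V,j,r},\xv^{(2)}\rangle-\langle\wv_{V,j,r},\xv^{(1)}\rangle\bigr]\langle\wv_{K,s},\xv^{(2)}\rangle,
\end{align*}
one obtains $\langle\wv_{V,j,r},\xv^{(1)}-\xv^{(2)}\rangle\cdot\langle\wv_{K,s},\xv^{(1)}-\xv^{(2)}\rangle$. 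Then the outer sum over $l\in\{1,2\}$ contributes $s_{1,1}s_{1,2}\xv^{(1)}+s_{2,1}s_{2,2}\xv^{(2)}$, yielding the claimed first identity.

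For the second identity in the lemma, I would simply use linearity to move the average $\frac{1}{m_v}\sum_{r\in[m_v]}$ inside the inner product involving $\wv_{V,j,r}$, since the factor $\langle\wv_{K,s},\xv^{(1)}-\xv^{(2)}\rangle\cdot(s_{1,1}s_{1,2}\xv^{(1)}+s_{2,1}s_{2,2}\xv^{(2)})$ does not depend on $r$. Recognizing $\frac{1}{m_v}\sum_r \wv_{V,j,r} = \bar{\wv}_{V,j}$ gives the second form. There is no real obstacle in this proof: it is a straightforward chain-rule computation, and the only mild bookkeeping step is the $a$-sum that telescopes into the two difference inner products $\langle\wv_{V,j,r},\xv^{(1)}-\xv^{(2)}\rangle$ and $\langle\wv_{K,s},\xv^{(1)}-\xv^{(2)}\rangle$.
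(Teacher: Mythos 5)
Your proposal is correct and follows essentially the same route as the paper: chain rule through the pre-softmax scores $z_{l,a}$, invoking Lemma~\ref{lemma:grad_attn_weights} for $\partial F_j/\partial z_{l,a}$ and the elementary derivative $\nabla_{\wv_{Q,s}} z_{l,a}=\langle\wv_{K,s},\xv^{(a)}\rangle\,\xv^{(l)}$, then collecting the $a$-sum into the product of two difference inner products and averaging over $r$ to get $\bar{\wv}_{V,j}$. The paper's proof simply writes out all four $(l,a)$ terms explicitly before factoring, but the algebra is identical to your telescoping observation.
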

\begin{proof}
By Lemma~\ref{lemma:grad_attn_weights}, we have
\begin{align*}
    \nabla_{\wv_{Q,s}} F_j(\Wv, \Xv)
    & = \sum_{l,a} \frac{\partial F_j(\Wv, \Xv)}{\partial z_{l,a}} \nabla_{\wv_{Q,s}} z_{l,a} \\
    & = \sum_{l,a}\left[ \frac{1}{m_v}\sum_{m \in [d_v]} s_{l,a} \sum_{b \in [L]} 
    s_{l,b} \left[
    \sigma\left(\left\langle\wv_{V,j,r}, \xv^{(a)}\right\rangle\right)
    - \sigma\left(\left\langle\wv_{V,j,r}, \xv^{(b)}\right\rangle\right)
    \right]
    \right]
    \cdot \left\langle\wv_{K,s}, \xv^{(a)}\right\rangle \xv^{(l)} \\
    & = \sum_{l,a} \frac{1}{m_v}\sum_{r \in [m_v]} s_{l,1} s_{l,2} 
    \left[
    \sigma\left(\left\langle\wv_{V,j,r}, \xv^{(a)}\right\rangle\right)
    - \sigma\left(\left\langle\wv_{V,j,r}, \xv^{(3-a)}\right\rangle\right)
    \right] \cdot \left\langle\wv_{K,s}, \xv^{(a)}\right\rangle \xv^{(l)} \\
    & = \frac{1}{m_v}\sum_{r \in [m_v]}
    s_{1,1} s_{1,2} 
    \left[
    \sigma\left(\left\langle\wv_{V,j,r}, \xv^{(1)}\right\rangle\right)
    - \sigma\left(\left\langle\wv_{V,j,r}, \xv^{(2)}\right\rangle\right)
    \right] \cdot \left\langle\wv_{K,s}, \xv^{(1)}\right\rangle \xv^{(1)} \\
    & + s_{1,1} s_{1,2} 
    \left[
    \sigma\left(\left\langle\wv_{V,j,r}, \xv^{(2)}\right\rangle\right)
    - \sigma\left(\left\langle\wv_{V,j,r}, \xv^{(1)}\right\rangle\right)
    \right] \cdot \left\langle\wv_{K,s}, \xv^{(2)}\right\rangle \xv^{(1)} \\
    & + s_{2,1} s_{2,2} 
    \left[
    \sigma\left(\left\langle\wv_{V,j,r}, \xv^{(1)}\right\rangle\right)
    - \sigma\left(\left\langle\wv_{V,j,r}, \xv^{(2)}\right\rangle\right)
    \right] \cdot \left\langle\wv_{K,s}, \xv^{(1)}\right\rangle \xv^{(2)} \\
    & + s_{2,1} s_{2,2} 
    \left[
    \sigma\left(\left\langle\wv_{V,j,r}, \xv^{(2)}\right\rangle\right)
    - \sigma\left(\left\langle\wv_{V,j,r}, \xv^{(1)}\right\rangle\right)
    \right] \cdot \left\langle\wv_{K,s}, \xv^{(2)}\right\rangle \xv^{(2)} \\
    & = \frac{1}{m_v}\sum_{r \in [m_v]}
    \left[
    \sigma\left(\left\langle\wv_{V,j,r}, \xv^{(1)}\right\rangle\right)
    - \sigma\left(\left\langle\wv_{V,j,r}, \xv^{(2)}\right\rangle\right)
    \right] \\
    & \cdot \left\langle\wv_{K,s}, \xv^{(1)}-\xv^{(2)}\right\rangle 
    \cdot \left(s_{1,1} s_{1,2} \xv^{(1)} + s_{2,1} s_{2,2} \xv^{(2)} \right) \\
    & = \frac{1}{m_v}\sum_{r \in [m_v]}
    \left\langle\wv_{V,j,r}, \xv^{(1)}-\xv^{(2)}\right\rangle 
    \cdot \left\langle\wv_{K,s}, \xv^{(1)}-\xv^{(2)}\right\rangle 
    \cdot \left(s_{1,1} s_{1,2} \xv^{(1)} + s_{2,1} s_{2,2} \xv^{(2)} \right) \\
    &= 
    \left\langle\bar{\wv}_{V,j}, \xv^{(1)}-\xv^{(2)}\right\rangle 
    \cdot \left\langle\wv_{K,s}, \xv^{(1)}-\xv^{(2)}\right\rangle 
    \cdot \left(s_{1,1} s_{1,2} \xv^{(1)} + s_{2,1} s_{2,2} \xv^{(2)} \right),
\end{align*}
where the third step is by $L=2$, the last step is by $\sigma(x) = x$.
\end{proof}

\begin{lemma}[Gradient w.r.t. key parameters]
    \label{lemma:grad_key}
    Consider the transformer model defined in Eq.~\eqref{def:softmaxattn_linearact_2layer_model}.
    Let $\left(\Xv, y\right)$ be a data point. Similarly, The derivative of $F_j(\Wv, \Xv)$ with respect to $\wv_{K, s}$ is:
    \begin{align*}
        \nabla_{\wv_{K,s}} F_j(\Wv, \Xv)
        &= 
        \left\langle\bar{\wv}_{V,j}, \xv^{(1)}-\xv^{(2)}\right\rangle 
        \cdot \left\langle\wv_{Q,s}, s_{1,1}s_{1,2}\xv^{(1)} + s_{2,1}s_{2,2}\xv^{(2)}\right\rangle 
        \cdot \left(\xv^{(1)} - \xv^{(2)} \right).
    \end{align*}
\end{lemma}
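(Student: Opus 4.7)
The plan is to mimic the derivation of Lemma~\ref{lemma:grad_query} (the query gradient), exploiting the near-symmetric role of the query and key parameters inside the pre-softmax quantity $z_{l,a} = \sum_{s} \langle \wv_{Q,s}, \xv^{(l)}\rangle \langle \wv_{K,s}, \xv^{(a)}\rangle$. The only asymmetry between the two derivations is that $\nabla_{\wv_{K,s}} z_{l,a} = \langle \wv_{Q,s}, \xv^{(l)}\rangle \, \xv^{(a)}$, whereas for the query we had $\nabla_{\wv_{Q,s}} z_{l,a} = \langle \wv_{K,s}, \xv^{(a)}\rangle \, \xv^{(l)}$. This swap of the roles of the indices $l$ and $a$ in the position of $\xv$ is precisely what produces the factor $(\xv^{(1)} - \xv^{(2)})$ outside (coming from the $a$-sum) instead of inside (as in the query case).

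First I would write the chain rule as $\nabla_{\wv_{K,s}} F_j(\Wv, \Xv) = \sum_{l,a} \tfrac{\partial F_j(\Wv,\Xv)}{\partial z_{l,a}} \cdot \langle \wv_{Q,s}, \xv^{(l)}\rangle\, \xv^{(a)}$ and plug in the expression for $\partial F_j/\partial z_{l,a}$ from Lemma~\ref{lemma:grad_attn_weights}. This yields a double sum over $l,a \in \{1,2\}$ of terms of the form
\begin{align*}
    \frac{1}{m_v}\sum_{r\in[m_v]} s_{l,1} s_{l,2}\bigl[\langle \wv_{V,j,r}, \xv^{(a)}\rangle - \langle \wv_{V,j,r}, \xv^{(3-a)}\rangle\bigr]\cdot \langle \wv_{Q,s}, \xv^{(l)}\rangle\, \xv^{(a)}.
\end{align*}

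Next I would perform the inner sum over $a$ for each fixed $l$. For $a=1$ the bracket is $\langle \wv_{V,j,r}, \xv^{(1)} - \xv^{(2)}\rangle$ multiplied by $\xv^{(1)}$; for $a=2$ it is the same bracket with a sign flip, multiplied by $\xv^{(2)}$. Combining the two yields $\langle \wv_{V,j,r}, \xv^{(1)}-\xv^{(2)}\rangle\,(\xv^{(1)} - \xv^{(2)})$ as a common factor, with the softmax weight $s_{l,1}s_{l,2}\langle \wv_{Q,s}, \xv^{(l)}\rangle$ in front. Summing over $l\in\{1,2\}$ then collapses this weight into $\langle \wv_{Q,s}, s_{1,1}s_{1,2}\xv^{(1)} + s_{2,1}s_{2,2}\xv^{(2)}\rangle$. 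Finally, averaging over $r\in[m_v]$ replaces $\wv_{V,j,r}$ with $\bar{\wv}_{V,j}$ by linearity, producing exactly the claimed identity.

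There is no substantive obstacle: the only care required is in bookkeeping the algebraic sign that comes from the $(3-a)$ index when merging the $a=1$ and $a=2$ contributions, and in recognizing that the $l$-sum naturally assembles the vector $s_{1,1}s_{1,2}\xv^{(1)} + s_{2,1}s_{2,2}\xv^{(2)}$ which is the quantity one contracts against $\wv_{Q,s}$ (mirroring how, in Lemma~\ref{lemma:grad_query}, this same vector is contracted against $\wv_{K,s}$ after the analogous index swap).
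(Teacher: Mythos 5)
Your argument is correct and follows the same route as the paper: apply the chain rule through the pre-softmax quantities $z_{l,a}$, invoke Lemma~\ref{lemma:grad_attn_weights} for $\partial F_j/\partial z_{l,a}$, observe that $\nabla_{\wv_{K,s}} z_{l,a} = \langle\wv_{Q,s},\xv^{(l)}\rangle\,\xv^{(a)}$ (the $l\leftrightarrow a$ swap relative to the query case), and then collapse the $(a,l)$ double sum and the average over $r$ using $L=2$ and $\sigma(x)=x$. The bookkeeping you describe for merging the $a=1$ and $a=2$ contributions into a single $(\xv^{(1)}-\xv^{(2)})$ factor is exactly the intermediate step the paper carries out (albeit more tersely), and the resulting expression matches the lemma.
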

\begin{proof}
By Lemma~\ref{lemma:grad_attn_weights}, we have
\begin{align*}
    \nabla_{\wv_{K,s}} F_j(\Wv, \Xv)
    & = \sum_{l,a}\left[ \frac{1}{m_v}\sum_{r \in [m_v]} s_{l,a} \sum_{b \in [L]} 
    s_{l,b} \left[
    \sigma\left(\left\langle\wv_{V,j,r}, \xv^{(a)}\right\rangle\right)
    - \sigma\left(\left\langle\wv_{V,j,r}, \xv^{(b)}\right\rangle\right)
    \right]
    \right]
    \cdot \left\langle\wv_{Q,s}, \xv^{(l)}\right\rangle \xv^{(a)} \\
    &= \frac{1}{m_v}\sum_{r \in [m_v]}
    \left[
    \sigma\left(\left\langle\wv_{V,j,r}, \xv^{(1)}\right\rangle\right)
    - \sigma\left(\left\langle\wv_{V,j,r}, \xv^{(2)}\right\rangle\right)
    \right] \\
    & \cdot \left\langle\wv_{Q,s}, s_{1,1} s_{1,2} \xv^{(1)} + s_{2,1} s_{2,2} \xv^{(2)} \right\rangle 
    \cdot \left( 
    \xv^{(1)}-\xv^{(2)}
    \right) \\
    &= 
    \left\langle\bar{\wv}_{V,j}, \xv^{(1)}-\xv^{(2)}\right\rangle 
    \cdot \left\langle\wv_{Q,s}, s_{1,1}s_{1,2}\xv^{(1)} + s_{2,1}s_{2,2}\xv^{(2)}\right\rangle 
    \cdot \left(\xv^{(1)} - \xv^{(2)} \right),
\end{align*}
where the second step is by $L=2$, the third step is by $\sigma(x) = x$.
\end{proof}

\begin{lemma}
    \label{lemma:v_sign_udpate}
    Consider the transformer model defined in Eq.~\eqref{def:softmaxattn_linearact_2layer_model}.
    Let $\wv^{(t)}_{V,j,r}$ for $j \in \set{\pm 1},~ r\in[m_v]$ be the value parameters of the TF at the $t$-th iteration. If we use sign gradient descent, the update formula of $\wv^{(t)}_{V,j,r}$ is  
    \begin{align*}
        \wv^{(t+1)}_{V,j,r} 
        = \wv^{(t)}_{V,j,r} 
        - \eta \sgn \left(
        \sum_{i\in[n]} \ell_i^{\prime(t)}y_i j 
        \left[(s_{i,11}^{(t)} + s_{i,21}^{(t)})y_i\muv + (s_{i,12}^{(t)} + s_{i,22}^{(t)})\xiv_i\right]
        \right).
    \end{align*}
\end{lemma}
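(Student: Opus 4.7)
The plan is to prove this by a direct computation of $\nabla_{\wv_{V,j,r}} L_S(\Wv^{(t)})$ and then applying the sign operator, leveraging Lemma~\ref{lemma:grad_value} which already gives us $\nabla_{\wv_{V,j,r}} F_j(\Wv,\Xv)$ in closed form.

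First, I would unpack the definition of the empirical loss. By the chain rule,
\[
\nabla_{\wv_{V,j,r}} L_S(\Wv) = \frac{1}{n}\sum_{i=1}^n \ell'(y_i f(\Wv,\Xv_i))\, y_i\, \nabla_{\wv_{V,j,r}} f(\Wv,\Xv_i).
\]
Since $f = F_1 - F_{-1}$ and the parameter $\wv_{V,j,r}$ appears only in the branch $F_j$, we have $\nabla_{\wv_{V,j,r}} f(\Wv,\Xv_i) = j\cdot \nabla_{\wv_{V,j,r}} F_j(\Wv,\Xv_i)$, where the factor of $j\in\{\pm1\}$ accounts for the $+1$ coefficient on $F_1$ and the $-1$ coefficient on $F_{-1}$.

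Next, I would invoke Lemma~\ref{lemma:grad_value} to substitute
\[
\nabla_{\wv_{V,j,r}} F_j(\Wv,\Xv_i) = \frac{1}{m_v}\bigl[(s_{i,11}+s_{i,21})\xv_i^{(1)} + (s_{i,12}+s_{i,22})\xv_i^{(2)}\bigr].
\]
By Definition~\ref{def:data_model}, one of the two patches equals the signal $y_i\muv$ and the other equals the noise $\xiv_i$. Since the subscripts $1,2$ on the patches impose no restriction on the permutation (as noted in the paper), I can identify the signal patch with $y_i\muv$ and the noise patch with $\xiv_i$, and rewrite the bracket as $(s_{i,11}^{(t)}+s_{i,21}^{(t)})\,y_i\muv + (s_{i,12}^{(t)}+s_{i,22}^{(t)})\,\xiv_i$.

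Combining these pieces gives
\[
\nabla_{\wv_{V,j,r}} L_S(\Wv^{(t)}) = \frac{1}{n m_v}\sum_{i=1}^n \ell_i^{\prime(t)} y_i j\,\bigl[(s_{i,11}^{(t)}+s_{i,21}^{(t)})y_i\muv + (s_{i,12}^{(t)}+s_{i,22}^{(t)})\xiv_i\bigr].
\]
Finally, applying the SignGD rule $\wv_{V,j,r}^{(t+1)} = \wv_{V,j,r}^{(t)} - \eta\,\sgn(\nabla_{\wv_{V,j,r}} L_S(\Wv^{(t)}))$ and using the fact that the positive constants $1/n$ and $1/m_v$ do not affect the sign of their argument yields exactly the claimed update formula. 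There is essentially no obstacle here beyond careful bookkeeping: the only subtle point is correctly carrying the $j$ factor from differentiating $f = F_1 - F_{-1}$ with respect to a parameter that lives in only one branch, and noting that $\ell'$ evaluated at the margin $y_i f(\Wv^{(t)},\Xv_i)$ is what the paper abbreviates as $\ell_i^{\prime(t)}$.
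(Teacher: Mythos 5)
Your proof is correct and follows the same route as the paper, which itself just cites Lemma~\ref{lemma:grad_value}; you have simply spelled out the chain-rule bookkeeping (the $1/n$ and $y_i\ell_i^{\prime(t)}$ factors, the $j$ from differentiating $F_1 - F_{-1}$, the signal/noise patch identification, and the observation that positive scalars are absorbed by $\sgn$) that the paper leaves implicit.
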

\begin{proof}
    This is by Lemma~\ref{lemma:grad_value}.
\end{proof}


\begin{lemma}
    \label{lemma:qk_sign_udpate}
    Consider the transformer model defined in Eq.~\eqref{def:softmaxattn_linearact_2layer_model}.
    Let $\wv^{(t)}_{Q,s}$ and $\wv^{(t)}_{K,s}$ for $s\in[m_k]$ be the query and key parameters of the TF at the $t$-th iteration.
    If we use sign gradient descent, the update formula of $\wv^{(t)}_{Q,s}$ and $\wv^{(t)}_{K,s}$ is
    \begin{align*}
        \wv^{(t+1)}_{Q,s} 
        = \wv^{(t)}_{Q,s} 
        - \eta \sgn
        \left[
        \sum_{i\in[n]} {\ell}^{\prime(t)}_{i} y_{i}
        \langle \bar{\wv}^{(t)}_{V,1} - \bar{\wv}^{(t)}_{V,-1}, y_{i}\muv - \xiv_{i}\rangle
        \langle \wv^{(t)}_{K,s}, y_{i}\muv - \xiv_{i}\rangle
        (s_{11}^{(t)}s_{12}^{(t)}y_{i}\muv + s_{21}^{(t)}s_{22}^{(t)}\xiv_{i})
        \right].
    \end{align*}
    \begin{align*}
        \wv^{(t+1)}_{K,s} 
        = \wv^{(t)}_{K,s} 
        - \eta \sgn\left[
        \sum_{i\in[n]} {\ell}^{\prime(t)}_{i} y_{i} 
            \langle \bar{\wv}^{(t)}_{V,1} - \bar{\wv}^{(t)}_{V,-1}, y_{i}\muv - \xiv_{i}\rangle
            \langle\wv^{(t)}_{Q,s}, s_{11}^{(t)}s_{12}^{(t)}y_{i}\muv + s_{21}^{(t)}s_{22}^{(t)}\xiv_{i}\rangle 
            (y_{i}\muv - \xiv_{i})
        \right].
    \end{align*}
\end{lemma}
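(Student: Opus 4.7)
The plan is to apply the chain rule to $L_S(\Wv) = \tfrac{1}{n}\sum_{i=1}^n \ell(y_i f(\Wv, \Xv_i))$ and substitute the per-example gradients already computed in Lemma~\ref{lemma:grad_query} and Lemma~\ref{lemma:grad_key}. Since the sign function discards positive multiplicative constants like $1/n$, the claimed update rules will follow once the factors are rearranged into the desired form and the softmax labels are reconciled with the signal/noise convention used elsewhere in the paper.

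First I would write
\begin{align*}
\nabla_{\wv_{Q,s}} L_S(\Wv^{(t)}) = \frac{1}{n}\sum_{i\in[n]} \ell^{\prime(t)}_i \cdot y_i \left[\nabla_{\wv_{Q,s}} F_1(\Wv^{(t)}, \Xv_i) - \nabla_{\wv_{Q,s}} F_{-1}(\Wv^{(t)}, \Xv_i)\right],
\end{align*}
using $f = F_1 - F_{-1}$. Invoking Lemma~\ref{lemma:grad_query} for $j = \pm 1$ and noting that the only $j$-dependent factor in that formula is $\bar{\wv}_{V,j}$, the bracket collapses to
\begin{align*}
\langle \bar{\wv}^{(t)}_{V,1} - \bar{\wv}^{(t)}_{V,-1}, \xv^{(1)}_i - \xv^{(2)}_i\rangle \cdot \langle \wv^{(t)}_{K,s}, \xv^{(1)}_i - \xv^{(2)}_i\rangle \cdot \left(s_{i,11}^{(t)}s_{i,12}^{(t)}\xv^{(1)}_i + s_{i,21}^{(t)}s_{i,22}^{(t)}\xv^{(2)}_i\right).
\end{align*}

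Next I would reconcile the positional labelling of patches in Lemma~\ref{lemma:grad_query} with the signal/noise labelling of softmax indices adopted in Eq.~\eqref{def:softmaxattn_linearact_2layer_model}. By Definition~\ref{def:data_model}, for each $i$ exactly one of $\xv^{(1)}_i, \xv^{(2)}_i$ equals $y_i\muv$ and the other equals $\xiv_i$. The map $F_j$ is invariant under simultaneously swapping the two patches and permuting the softmax indices accordingly (the per-patch sum is symmetric and $\sum_a s_{la}=1$), so we may consistently fix the convention $\xv^{(1)}_i = y_i\muv$, $\xv^{(2)}_i = \xiv_i$. Under this identification, $\xv^{(1)}_i - \xv^{(2)}_i = y_i\muv - \xiv_i$, and the weighted-sum factor becomes $s_{i,11}^{(t)}s_{i,12}^{(t)} y_i\muv + s_{i,21}^{(t)}s_{i,22}^{(t)} \xiv_i$. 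Multiplying through by $y_i$, summing over $i$, applying $\sgn(\cdot)$, and dropping the positive scalar $1/n$ yields the claimed update for $\wv^{(t+1)}_{Q,s}$.

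The second claim follows by an identical argument, with Lemma~\ref{lemma:grad_key} in place of Lemma~\ref{lemma:grad_query}; the only structural change is that the roles of $\wv_{K,s}$ and $\wv_{Q,s}$ are exchanged between the two inner products, while the outer factor becomes $\xv^{(1)}_i - \xv^{(2)}_i = y_i\muv - \xiv_i$. There is no real technical obstacle: the entire proof is bookkeeping. The only point requiring a moment of care is the signal/noise relabelling of the softmax indices, which is justified by the permutation symmetry of the model just noted and is the same convention used throughout the paper's dynamics analysis.
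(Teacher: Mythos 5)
Your argument is correct and is essentially the paper's own proof spelled out: the paper's proof is the one-line citation of Lemma~\ref{lemma:grad_query} and Lemma~\ref{lemma:grad_key}, and you carry out precisely that substitution, add the chain rule through $\ell$ and $f = F_1 - F_{-1}$, and note that only $\bar{\wv}_{V,j}$ depends on $j$. Your remark about reconciling positional with signal/noise labelling is a point the paper handles implicitly (it states that the softmax subscripts carry no restriction on patch permutation), so flagging it explicitly is a small improvement, not a departure.
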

\begin{proof}
    This is by Lemma~\ref{lemma:grad_query}, Lemma~\ref{lemma:grad_key}.
\end{proof}

\newpage
\section{Proofs}
\label{sec:proofs}

In this section, we give a detailed analysis about the dynamics of sign gradient descent on our transformer models Eq.~\eqref{def:softmaxattn_linearact_2layer_model} with our data model in Def.~\ref{def:data_model}. 
These results are based on Condition~\ref{cond:main_condition} and the conclusions in Appendix~\ref{app:Preliminary-Lemmas}, which hold with high probability. 
Denote by $\mathcal{E}_{\text{prelim}}$ the event that all the results in Appendix~\ref{app:Preliminary-Lemmas} hold (for a given $\delta$, we see $\Pb(\mathcal{E}_{\text{prelim}}) \geq 1 - 10\delta - n^{-2} - nd^{-1/5}$ by a union bound). 
For simplicity and clarity, we state all the results in this and the following sections conditional on $\mathcal{E}_{\text{prelim}}$.

\subsection{Technique Overview} 
For value, the dynamics is linear under sign gradient descent, which means the increment is exactly constant across different iterations and different neurons, and up to a constant multiplier across different samples. 

For query and key, the dynamics is complicated. The key point is that we utilize the different increasing speed of different core quantities to divide the whole timeline into many stages where we can care about only one of or some of quantities in each stage while the dynamics of other quantities can be ignored or is very simple. 
The main reason why the speed is different is attributed to the scale of hyperparameters including $\sigma_ps, \sigma_p\sqrt{s}, \sigma_0$.

We will define some key timesteps in following analysis, including
\begin{align*}
    & T_{1} := \Tilde{\Theta}(\sigma_0\eta^{-1}s^{-1/2}m_v^{-1/2}), \\
    & T_{2} := 50\sqrt{2}n\beta_{\xiv}\eta^{-1}\sigma_p^{-1}s^{-1} = \Tilde{O}(\sigma_0s^{-1/2}\eta^{-1}), \\
    & T_{2}^{+} := C_1\sqrt{\log(12m_kn/\delta)}\sigma_0ns^{-1/2}\eta^{-1} = \Tilde{O}(\sigma_0s^{-1/2}\eta^{-1}), \\
    & T_{3} := 3\beta_{\muv}\eta^{-1}\norm{\muv}^{-1} = \Tilde{O}(\sigma_0\eta^{-1}), \\
    & T_{3}^{+} := C_2\sqrt{\log(12m_k/\delta)}\sigma_0\eta^{-1} = \Tilde{O}(\sigma_0\eta^{-1}), \\
    & T_{4}^{-}
    \geq
    \Tilde{T}_{4}^{-} := \sqrt{\frac{0.99\pi}{2}}\sqrt{\log\left(
    \frac{\sigma_ps}{3\sqrt{2}n\norm{\muv}}
    \right)}
    \eta^{-1}m_k^{-1/2}\sigma_p^{-1}s^{-1} = \Tilde{O}(m_{k}^{-1/2}\sigma_p^{-1}s^{-1}\eta^{-1}), \\
    & T_{4}^{-}
    \leq
    \hat{T}_{4}^{-} := \sqrt{\frac{1.01\pi}{2}}\sqrt{\log\left(
    \frac{\sigma_ps}{\norm{\muv}}
    \right)}
    \eta^{-1}m_k^{-1/2}\sigma_p^{-1}s^{-1} = \Tilde{O}(m_{k}^{-1/2}\sigma_p^{-1}s^{-1}\eta^{-1}), \\
    & T_{4}^{+} \leq (1+\theta_c)T_{2}^{-}, \\
    & T_{4} := C_3\log\left(
    \frac{C_3\sigma_ps}{\norm{\muv}}
    \right)\eta^{-1}m_k^{-1/2}\sigma_p^{-1}s^{-1} = \Tilde{O}(m_{k}^{-1/2}\sigma_p^{-1}s^{-1}\eta^{-1}),
\end{align*}
where $C_1,C_2,C_3 = \Theta(1)$ are large constants. The order among these timesteps is
\begin{align*}
    T_{1} \ll T_{2} < T_{2}^{+} \ll T_{3} < T_{3}^{+} 
    \ll T_{4}^{-} < T_{4}^{+} < T_{4}.
\end{align*}

Our analysis basically consists of following parts:
\begin{itemize}[leftmargin=15pt]
    \item \textbf{Mean value noise.} Let 
    \begin{align*}
        T_{1} := \Tilde{\Theta}(\sigma_0\eta^{-1}s^{-1/2}m_v^{-1/2}).
    \end{align*} 
    For $t \in [0, T_{1}]$, the mean value across all samples align quickly while other inner product barely move. 
    For $t \geq T_{1}$, we have the mean value noise is linear with time
    \begin{align*}
        \sqrt{2}t\eta\sigma_ps
        \leq
        \langle \vv^{(t)}, y_i\xiv_i \rangle 
        \leq 2t\eta\sigma_ps, 
    \end{align*}
    for all $i\in[n]$, and mean value signal $\langle \vv^{(t)}, \muv \rangle$ can be ignored compared with mean value noise $\langle \vv^{(t)}, y_i\xiv_i \rangle$. 
    Moreover, for $t\geq T_{2}$, we show the mean value noise across different samples are almost the same
    \begin{align*}
        \langle \vv^{(t)}, y_i\xiv_i\rangle
        = \sqrt{\frac{2}{\pi}}t\eta\sigma_ps(1 + \Tilde{O}(s^{-1/2})),
    \end{align*}
    for all $i\in[n]$.
    \item \textbf{Useful bounds.}
    For $t \in [0, T_{3}^{+}]$, noise-signal softmax outputs are close to initialization, i.e., for all $i\in[n]$
    \begin{align*}
        s_{i,21}^{(t)} = 1/2 + o(1).
    \end{align*}
    For $t \in [0, T_{4}]$, signal-signal softmax outputs are close to initialization, i.e., for all $i\in[n]$
    \begin{align*}
        s_{i,11}^{(t)} = 1/2 + o(1).
    \end{align*}
    For $t \in [0, T_{2}^{+}]$, query/key signal are close to initialization, i.e., for all $s\in[m_k]$
    \begin{align*}
        & \abs{\langle \wv_{K,s}^{(t)}, \muv\rangle}
        = \abs{\langle \wv_{K,s}^{(0)}, \muv\rangle}(1 + o(1)), \\
        & \abs{\langle \wv_{Q,s}^{(t)}, \muv\rangle}
        = \abs{\langle \wv_{Q,s}^{(0)}, \muv\rangle}(1 + o(1)).
    \end{align*}
    For $t \in [0, T_{4}]$, the loss derivative are close to initialization, i.e., for all $i\in[n]$
    \begin{align*}
        \ell_{i}^{\prime(t)} = 1/2 + o(1).
    \end{align*}
    For $t \in [0, T_{4}^{-}]$, the ratio of noise-signal softmax outputs are close to 1, i.e., for all $i,k\in[n]$
    \begin{align*}
        \frac{s_{i,21}^{(t)}}{s_{k,21}^{(t)}} = 1 + o(1).
    \end{align*}
    \item \textbf{Query/Key noise dynamics part i.} Let 
    \begin{align*}
        & T_{2}^{+} := C_1\sqrt{\log(12m_kn/\delta)}\sigma_0ns^{-1/2}\eta^{-1}, \\
        & T_{2} := 50\sqrt{2}n\beta_{\xiv}\eta^{-1}\sigma_p^{-1}s^{-1} = \Tilde{O}(\sigma_0s^{-1/2}\eta^{-1}).
    \end{align*}
    where $C_1 = \Theta(1)$ is a large constant.
    For $t \in [0, T_{2}]$, we focus on the dynamics of query/key noise. As mentioned above, softmax outputs and query/key signal are stuck at initialization,
    so we can ignore the variation of query/key signal and softmax outputs before $T_{2}$. 
    All query/key noise can be divided into two groups. Given $s\in[m_k]$ and $i\in[n]$,
    \begin{enumerate}
        \item if $\langle \wv_{Q,s}^{(0)}, y_i\xiv_i\rangle =_{\sgn} \langle \wv_{K,s}^{(0)}, y_i\xiv_i\rangle$, then they encourage each other to increase (in magnitude). 
        \item if $\langle \wv_{Q,s}^{(0)}, y_i\xiv_i\rangle =_{\sgn} -\langle \wv_{K,s}^{(0)}, y_i\xiv_i\rangle$, then they first encourage each other to decrease in magnitude, but rebound quickly and reduce to the first case. 
    \end{enumerate}
    For $t \in [T_{2}, T_{2}^{+}]$, we have some nice properties about query/key noise
    \begin{align*}
        & \langle \wv_{Q,s}^{(t)}, y_i\xiv_i\rangle 
    =_{\sgn} \langle \wv_{K,s}^{(t)} y_i\xiv_i\rangle, \\
        & \langle \wv^{(t+1)}_{Q,s}, y_i\xiv_i\rangle 
        = \langle \wv^{(t)}_{Q,s}, y_i\xiv_i\rangle 
        + \sgn(\langle \wv^{(t)}_{Q,s}, y_i\xiv_i\rangle) \eta\norm{\xiv_i}_1, \\
        & \langle \wv^{(t+1)}_{K,s}, y_i\xiv_i\rangle 
        = \langle \wv^{(t)}_{K,s}, y_i\xiv_i\rangle 
        + \sgn(\langle \wv^{(t)}_{K,s}, y_i\xiv_i\rangle) \eta\norm{\xiv_i}_1, \\
        & t\eta\sigma_ps/2
        \leq \abs{\langle \wv^{(t)}_{Q,s}, y_{i}\xiv_i\rangle},
        \abs{\langle \wv^{(t)}_{K,s}, y_{i}\xiv_i\rangle}
        \leq 2t\eta\sigma_ps,
    \end{align*}
    for all $s\in[m_k]$ and $i\in[n]$. 
    Also we have some nice properties about the sum of query/key noise $\sum_{i\in[n]} \langle \wv_{Q,s}^{(t)}, y_i\xiv_i\rangle$, $\sum_{i\in[n]} \langle \wv_{K,s}^{(t)}, y_i\xiv_i\rangle$ 
    \begin{align*}
        & \abs{\sum_{i=1}^{n} \langle \wv_{Q,s}^{(t)}, y_i\xiv_i\rangle},
        \abs{\sum_{i=1}^{n} \langle \wv_{K,s}^{(t)}, y_i\xiv_i\rangle}
        \geq 2n\beta_{\muv}, \\
        & \sum_{i=1}^{n} \langle \wv_{Q,s}^{(t+1)}, y_i\xiv_i\rangle
        \geq \sum_{i=1}^{n} \langle \wv_{Q,s}^{(t)}, y_i\xiv_i\rangle 
        + \sgn(\sum_{i=1}^{n} \langle \wv_{Q,s}^{(t)}, y_i\xiv_i\rangle) \frac{1}{\sqrt{2}}\eta\sigma_ps, \\
        & \sum_{i=1}^{n} \langle \wv_{K,s}^{(t+1)}, y_i\xiv_i\rangle
        \geq \sum_{i=1}^{n} \langle \wv_{K,s}^{(t)}, y_i\xiv_i\rangle 
        + \sgn(\sum_{i=1}^{n} \langle \wv_{K,s}^{(t)}, y_i\xiv_i\rangle) \frac{1}{\sqrt{2}}\eta\sigma_ps,
    \end{align*}
    for all $s\in[m_k]$.
    \item \textbf{Query/Key signal dynamics part i.}
    Let 
    \begin{align*}
        & T_{3} := 3\beta_{\muv}\eta^{-1}\norm{\muv}^{-1} = \Tilde{O}(\sigma_0\eta^{-1}), \\
        & T_{3}^{+} := C_2\sqrt{\log(12m_k/\delta)}\sigma_0\eta^{-1},
    \end{align*}
    where $C_2 = \Theta(1)$ is a large constant.
    For $t\in[T_{2}, T_{3}]$, we focus on the dynamics of query/key signal. In this period, softmax outputs are stuck at initialization, and
    the dynamics of query/key noise keep unchanged
    \begin{align*}
        & \langle \wv_{Q,s}^{(t)}, y_i\xiv_i\rangle 
    =_{\sgn} \langle \wv_{K,s}^{(t)} y_i\xiv_i\rangle, \\
        & \langle \wv^{(t+1)}_{Q,s}, y_i\xiv_i\rangle 
        = \langle \wv^{(t)}_{Q,s}, y_i\xiv_i\rangle 
        + \sgn(\langle \wv^{(t)}_{Q,s}, y_i\xiv_i\rangle) \eta\norm{\xiv_i}_1, \\
        & \langle \wv^{(t+1)}_{K,s}, y_i\xiv_i\rangle 
        = \langle \wv^{(t)}_{K,s}, y_i\xiv_i\rangle 
        + \sgn(\langle \wv^{(t)}_{K,s}, y_i\xiv_i\rangle) \eta\norm{\xiv_i}_1, \\
        & \sum_{i=1}^{n} \langle \wv_{Q,s}^{(t+1)}, y_i\xiv_i\rangle
        \geq \sum_{i=1}^{n} \langle \wv_{Q,s}^{(t)}, y_i\xiv_i\rangle 
        + \sgn(\sum_{i=1}^{n} \langle \wv_{Q,s}^{(t)}, y_i\xiv_i\rangle) \frac{1}{\sqrt{2}}\eta\sigma_ps, \\
        & \sum_{i=1}^{n} \langle \wv_{K,s}^{(t+1)}, y_i\xiv_i\rangle
        \geq \sum_{i=1}^{n} \langle \wv_{K,s}^{(t)}, y_i\xiv_i\rangle 
        + \sgn(\sum_{i=1}^{n} \langle \wv_{K,s}^{(t)}, y_i\xiv_i\rangle) \frac{1}{\sqrt{2}}\eta\sigma_ps,
    \end{align*}
    for all $s\in[m_k]$ and $i\in[n]$. 
    We show that the update direction of query/key noise is determined by the summation of query/key noise. 
    For $t\in[T_{2}, T_{3}]$, for all $s\in[m_k]$ we have 
    \begin{align*}
        & \langle \wv_{Q, s}^{(t+1)}, \muv\rangle = \langle \wv_{Q, s}^{(t)}, \muv\rangle + \sgn(\sum_{i=1}^{n} \langle \wv_{Q,s}^{(T_{2})}, y_i\xiv_i\rangle)
        \eta\norm{\muv}, \\
        & \langle \wv_{K, s}^{(t+1)}, \muv\rangle = 
        \langle \wv_{K, s}^{(t)}, \muv\rangle - \sgn(\sum_{i=1}^{n} \langle \wv_{Q,s}^{(T_{2})}, y_i\xiv_i\rangle)
        \eta\norm{\muv}.
    \end{align*}
    For $t \in [T_{3}, T_{3}^{+}]$, we have some nice properties about query/key signal
    \begin{align*}
        & \langle \wv_{Q,s}^{(t)}, \muv\rangle =_{\sgn} 
        -\langle \wv_{K,s}^{(t)}, \muv\rangle =_{\sgn} \sum_{i\in[n]} \langle \wv_{K,s}^{(t)}, y_i\xiv_i\rangle, \\
        & t\eta\norm{\muv}/2 
        \leq 
        \abs{\langle \wv_{Q, s}^{(t)}, \muv\rangle},
        \abs{\langle \wv_{K, s}^{(t)}, \muv\rangle}
        \leq 2t\eta\norm{\muv}, \\
        & \langle \wv_{Q, s}^{(t+1)}, \muv\rangle = 
        \langle \wv_{Q, s}^{(t)}, \muv\rangle + \sgn(\langle \wv_{Q, s}^{(t)}, \muv\rangle)\eta\norm{\muv}, \\
        & \langle \wv_{K, s}^{(t+1)}, \muv\rangle = 
        \langle \wv_{K, s}^{(t)}, \muv\rangle + \sgn(\langle \wv_{K, s}^{(t)}, \muv\rangle)\eta\norm{\muv}, 
    \end{align*}
    for all $s\in[m_k]$.
    \item 
    \textbf{Query/Key signal and noise dynamics part ii.}
    Let $T_{4}^{-} \geq T_{3}$ be the first time the following condition does not hold, for all $s\in[m_k]$
    \begin{align*}
        & \abs{\sum_{i=1}^{n} s_{i,21}^{(t)}s_{i,22}^{(t)} \langle \wv^{(t)}_{Q,s}, y_i\xiv_i\rangle}
        \geq
        \frac{1}{2}n \abs{\langle \wv^{(t)}_{Q,s}, \muv\rangle}, \\
        & s_{i,21}^{(t)}s_{i,22}^{(t)}\abs{\langle \wv^{(t)}_{Q,s}, y_i\xiv_i\rangle}
        \geq
        2s_{i,11}^{(t)}s_{i,12}^{(t)} \abs{\langle \wv^{(t)}_{Q,s}, \muv\rangle}.
    \end{align*}
    Let $T_{4}^{+} \geq T_{4}^{-}$ be the first time the following condition holds, for all $s\in[m_k]$ and $i\in[n]$
    \begin{align*}
        s_{i,21}^{(t)}s_{i,22}^{(t)} \abs{\langle \wv^{(t)}_{Q,s}, y_i\xiv_i\rangle}
        \leq 
        \frac{1}{2}
        s_{i,11}^{(t)}s_{i,12}^{(t)} \abs{\langle \wv^{(t)}_{Q,s}, \muv\rangle}.
    \end{align*}
    Let 
    \begin{align*}
        T_{4} = C_3\log\left(
        \frac{C_3\sigma_ps}{\norm{\muv}}
        \right)\eta^{-1}m_k^{-1/2}\sigma_p^{-1}s^{-1}, 
    \end{align*}
    where $C_3 = \Theta(1)$ is a large constant
    We discuss the dynamics by three parts. 
    For $t \in [T_{3}, T_{4}^{-}]$, see stage IV.a in Appendix~\ref{sec:stage_iv_a}.
    For $t \in [T_{4}^{-}, T_{4}^{+}]$, see stage IV.b in Appendix~\ref{sec:stage_iv_b}.
    For $t \in [T_{4}^{+}, T_{4}]$, see stage IV.c in Appendix~\ref{sec:stage_iv_a}.
    For $t \geq T_{4}$, we have for all $s\in[m_k]$ and $i\in[n]$
    \begin{align*}
        & \langle \wv_{Q, s}^{(t+1)}, \muv\rangle = \langle \wv_{Q, s}^{(t)}, \muv\rangle + \sgn(\sum_{i=1}^{n} \langle \wv_{Q,s}^{(T_{3})}, y_i\xiv_i\rangle)
        \eta\norm{\muv}, \\
        & \langle \wv_{K, s}^{(t+1)}, \muv\rangle = 
        \langle \wv_{K, s}^{(t)}, \muv\rangle - \sgn(\sum_{i=1}^{n} \langle \wv_{Q,s}^{(T_{3})}, y_i\xiv_i\rangle)
        \eta\norm{\muv}, \\
        & \langle \wv^{(t+1)}_{Q,s}, y_i\xiv_i\rangle 
        = \langle \wv^{(t)}_{Q,s}, y_i\xiv_i\rangle 
        + \sgn(\langle \wv^{(t)}_{Q,s}, y_i\xiv_i\rangle) \eta\norm{\xiv_i}_1, \\
        & \langle \wv^{(t+1)}_{K,s}, y_i\xiv_i\rangle 
        = \langle \wv^{(t)}_{K,s}, y_i\xiv_i\rangle 
        + \sgn(\langle \wv^{(t)}_{K,s}, y_i\xiv_i\rangle) \eta\norm{\xiv_i}_1, \\
        & \langle \wv_{Q,s}^{(t)}, y_i\xiv_i\rangle 
        =_{\sgn} \langle \wv_{K,s}^{(t)} y_i\xiv_i\rangle
        =_{\sgn} \langle \wv_{Q,s}^{(t)} \muv\rangle 
        =_{\sgn} -\langle \wv_{K,s}^{(t)} \muv\rangle.
    \end{align*}
    Note that we have following bounds for $T_{4}^{-}$ and $T_{4}^{+}$
    \begin{align*}
        & T_{4}^{-}
        \geq
        \Tilde{T}_{4}^{-} := \sqrt{\frac{0.99\pi}{2}}\sqrt{\log\left(
        \frac{\sigma_ps}{3\sqrt{2}n\norm{\muv}}
        \right)}
        \eta^{-1}m_k^{-1/2}\sigma_p^{-1}s^{-1}, \\
        & T_{4}^{-}
        \leq
        \hat{T}_{4}^{-} := \sqrt{\frac{1.01\pi}{2}}\sqrt{\log\left(
        \frac{\sigma_ps}{\norm{\muv}}
        \right)}
        \eta^{-1}m_k^{-1/2}\sigma_p^{-1}s^{-1}, \\
        & T_{4}^{+} \leq (1+\theta_c)T_{4}^{-}.
    \end{align*}
\end{itemize}

\subsection{The dynamics of value}
\label{sec:proof_value_dynamics}
Write down the inner product between gradient of value parameters and signal \& noise, we have
\begin{align*}
    \langle \sgn\left(\nabla_{\wv_{V,j,r}} L \right), \muv\rangle
    = 
    \sgn\left(
    \sum_{i\in[n]} l_i^{\prime(t)}j 
    (s_{i,11}^{(t)} + s_{i,21}^{(t)})\muv[1]
    \right)\muv[1] =  -\sgn(j)\norm{\muv},
\end{align*}
for all $j\in\set{\pm1}, r\in[m_v]$ and 
\begin{align*}
    \langle \sgn\left(\nabla_{\wv_{V,j,r}} L \right), \xiv_i\rangle
    & = \sum_{k\in[d]}
    \sgn\left(
    \sum_{i'\in[n]}
    l_{i'}^{\prime(t)} y_{i'}j 
    (s_{i',12}^{(t)} + s_{i',22}^{(t)})\xiv_{i'}[k]
    \right) \xiv_{i}[k],
\end{align*}
for all $i\in[n], j\in\set{\pm1}, r\in[m_v]$, and simplified to sparse setting
\begin{align*}
    \langle \sgn\left(\nabla_{\wv_{V,j,r}} L \right), \xiv_i\rangle
    & = \sum_{k\in\mathcal{B}_i}
    \sgn\left(
    l_i^{\prime(t)} y_ij 
    (s_{i,12}^{(t)} + s_{i,22}^{(t)})\xiv_i[k]
    \right) \xiv_i[k],
\end{align*}
for all $i\in[n], j\in\set{\pm1}, r\in[m_v]$.

Consider the sparse property of the data, we have
\begin{align*}
    \langle \sgn\left(\nabla_{\wv_{V,j,r}} L \right), \xiv_i\rangle
    & = \sum_{k\in\mathcal{B}_i}
    \sgn\left(
    l_i^{\prime(t)} y_ij 
    (s_{i,12}^{(t)} + s_{i,22}^{(t)})\xiv_i[k]
    \right) \xiv_i[k] \\
    & = -\sgn(y_ij) \sum_{k\in\mathcal{B}_i} \abs{\xiv_i[k]} \\
    & = -\sgn(y_ij) \norm{\xiv_i}_1,
\end{align*}
where the last equality is due to $l' < 0$ and $s_{i,ab} > 0$ for all $i\in[n], a,b\in[2]$.
Note the update to value parameters of is irrelevant to the magnitude of query, key, and attention weights. Then, we have
\begin{align*}
    \langle \wv_{V,y_i,r}^{(t+1)}, y_i\muv\rangle
    & = \langle \wv_{V,y_i,r}^{(t)}, y_i\muv\rangle + \eta\norm{\muv}, \\
    \langle \wv_{V,-y_i,r}^{(t+1)}, y_i\muv\rangle
    & = \langle \wv_{V,-y_i,r}^{(t)}, y_i\muv\rangle - \eta\norm{\muv}, \\
    \langle \wv_{V,y_i,r}^{(t+1)}, \xiv_i\rangle
    & = \langle \wv_{V,y_i,r}^{(t)}, \xiv_i\rangle + \eta\norm{\xiv_i}_1, \\
    \langle \wv_{V,-y_i,r}^{(t+1)}, \xiv_i\rangle
    & = \langle \wv_{V,-y_i,r}^{(t)}, \xiv_i\rangle - \eta\norm{\xiv_i}_1.
\end{align*}
As $t$ become large, both $\langle \wv_{V,y_i,r}^{(t)}, y_i\muv\rangle$ and $\langle \wv_{V,y_i,r}^{(t)}, \xiv_i\rangle$ increase.
Also, $\langle \wv_{V,y_i,r}^{(t)}, \xiv_i\rangle$ increases faster since we have $s\sigma_p\norm{\muv}^{-1} = \Tilde{\Omega}(1)$.

\subsection{The dynamics of query and key: preparations}

\paragraph{Additional Notations.}
Let $\vv^{(t)} := \bar{\wv}^{(t)}_{V,1} - \bar{\wv}^{(t)}_{V,-1}$. 
When we talk about query and key parameters, we refer to $\wv_{Q,s}, \wv_{K,s}$ for all $s\in[m_k]$.
When we talk about query/key signal (inner products), we refer to 
$\langle\wv_{Q,s}, \muv\rangle$, $\langle\wv_{K,s}, \muv\rangle$ for all $s\in[m_k]$. 
When we talk about query/key noise (inner products), we refer to 
$\langle\wv_{Q,s}, y_i\xiv_i\rangle$, $\langle\wv_{K,s}, y_i\xiv_i\rangle$ for all $s\in[m_k]$ and $i\in[n]$.
When we talk about (mean) value (noise), we refer to $\langle\vv, y_i\xiv_i\rangle$ for all $i\in[n]$.
We use $a =_{\sgn} b$ to denote $\sgn(a) = \sgn(b)$.
Let $\mathcal{B}_i$ be the support of sample $\xiv_i$ in the training dataset.

\paragraph{Some Facts.}
Based on gradient of query and key parameters and the sparse property of data, we firstly write down the inner product between gradient and signal \& noise
\begin{align*}
    \langle \sgn\left(\nabla_{\wv_{Q,s}} L \right), \muv\rangle
    & = 
    \sgn \left(
    \sum_{i\in[n]} {l}^{\prime(t)}_{i} y_{i}
    \langle \bar{\wv}^{(t)}_{V,1} - \bar{\wv}^{(t)}_{V,-1}, y_{i}\muv - \xiv_{i}\rangle
    \langle \wv^{(t)}_{K,s}, y_{i}\muv - \xiv_{i}\rangle
    s_{i,11}^{(t)}s_{i,12}^{(t)}y_{i}
    \right), \\
    \langle \sgn\left(\nabla_{\wv_{Q,s}} L \right), \xiv_i\rangle
    & = 
    \sum_{k\in\mathcal{B}_i} \sgn \left(
    {l}^{\prime(t)}_{i} y_{i}
    \langle \bar{\wv}^{(t)}_{V,1} - \bar{\wv}^{(t)}_{V,-1}, y_{i}\muv - \xiv_{i}\rangle
    \langle \wv^{(t)}_{K,s}, y_{i}\muv - \xiv_{i}\rangle
    s_{i,21}^{(t)}s_{i,22}^{(t)}\xiv_i[k]
    \right)\xiv_i[k], \\
    \langle \sgn\left(\nabla_{\wv_{K,s}} L \right), \muv\rangle
    & = 
    \sgn\left(
    \sum_{i\in[n]} {l}^{\prime(t)}_{i} y_{i} 
        \langle \bar{\wv}^{(t)}_{V,1} - \bar{\wv}^{(t)}_{V,-1}, y_{i}\muv - \xiv_{i}\rangle
        \langle\wv^{(t)}_{Q,s}, s_{i,11}^{(t)}s_{i,12}^{(t)}y_{i}\muv + s_{i,21}^{(t)}s_{i,22}^{(t)}\xiv_{i}\rangle 
        y_{i}
    \right), \\
    \langle \sgn\left(\nabla_{\wv_{K,s}} L \right), \xiv_i\rangle
    & = 
    \sum_{k\in\mathcal{B}_i}\sgn\left(
    {l}^{\prime(t)}_{i} y_{i} 
        \langle \bar{\wv}^{(t)}_{V,1} - \bar{\wv}^{(t)}_{V,-1}, y_{i}\muv - \xiv_{i}\rangle
        \langle\wv^{(t)}_{Q,s}, s_{i,11}^{(t)}s_{i,12}^{(t)}y_{i}\muv + s_{i,21}^{(t)}s_{i,22}^{(t)}\xiv_{i}\rangle 
        (- \xiv_{i}[k])
    \right)\xiv_{i}[k].
\end{align*}

Before the formal analysis for the query/key dynamics, we have some observations to simplify the gradient update formula.
Firstly, we note that the update magnitude of query/key inner product is constant at all iterations. Formally, for all $t \geq 0$ we have
\begin{align}
    \label{eq:query_key_feature_update_constant_magnitude}
    & \abs{\langle \wv_{K, s}^{(t+1)}-\wv_{K, s}^{(t)}, \muv\rangle}, \abs{\langle \wv_{Q, s}^{(t+1)}-\wv_{Q, s}^{(t)}, \muv\rangle} = \eta\norm{\muv}, \\
    \label{eq:query_key_noise_update_constant_magnitude}
    & \abs{\langle \wv_{K, s}^{(t+1)}-\wv_{K, s}^{(t)}, y_i\xiv_i\rangle}, \abs{\langle \wv_{Q, s}^{(t+1)}-\wv_{Q, s}^{(t)}, y_i\xiv_i\rangle} = \eta\norm{\xiv_i}_1, 
\end{align}
which means that we only need to analyze the sign of update direction for all quantities of interest.

Secondly, we take a look at the key part in gradient formula for different quantities: 
\begin{align*}
    & \langle \vv^{(t)}, y_i \muv - \xiv_i \rangle 
    \langle \wv^{(t)}_{K,s}, y_{i}\muv - \xiv_{i}\rangle \\
    &= \langle \vv^{(t)}, \muv\rangle \langle \wv^{(t)}_{K,s}, \muv\rangle
    - y_{i} \langle \vv^{(t)}, \muv\rangle \langle \wv^{(t)}_{K,s}, \xiv_i\rangle
    - y_{i} \langle \vv^{(t)}, \xiv_i\rangle \langle \wv^{(t)}_{K,s}, \muv\rangle
    + \langle \vv^{(t)}, \xiv_i\rangle \langle \wv^{(t)}_{K,s}, \xiv_i\rangle, \\
    & \langle \vv^{(t)}, y_i \muv - \xiv_i \rangle 
    \langle \wv^{(t)}_{Q,s}, y_{i}\muv + \xiv_{i}\rangle \\
    &= \langle \vv^{(t)}, \muv\rangle \langle \wv^{(t)}_{Q,s}, \muv\rangle
    + y_{i} \langle \vv^{(t)}, \muv\rangle \langle \wv^{(t)}_{Q,s}, \xiv_i\rangle
    - y_{i} \langle \vv^{(t)}, \xiv_i\rangle \langle \wv^{(t)}_{Q,s}, \muv\rangle
    - \langle \vv^{(t)}, \xiv_i\rangle \langle \wv^{(t)}_{Q,s}, \xiv_i\rangle. 
\end{align*}
By the subsequent analysis in the stage I, we have that 
the $\abs{\langle \vv^{(t)}, \muv\rangle} = o(1) \cdot \langle \vv^{(t)}, y_i\xiv\rangle$ such that we can approximately neglect the effect of $\langle \vv^{(t)}, \muv\rangle$. 
Also, we have $\langle \vv^{(t)}, y_i\xiv\rangle \geq 0$.
Then, combined with the gradient formula, we have
\begin{itemize}
    \item For $\langle \wv_{Q, s}, y_{i}\xiv_i\rangle$, the sign of increment is aligned with 
\begin{align}
    \label{eq:qxi_update_sign}
    & \langle \wv_{Q, s}^{(t+1)}-\wv_{Q, s}^{(t)}, y_{i}\xiv_i\rangle \nonumber\\
    =_{\sgn} 
    & - \langle \vv^{(t)}, y_{i}\xiv_i\rangle \langle \wv^{(t)}_{K,s}, \muv\rangle
    + \langle \vv^{(t)}, y_{i}\xiv_i\rangle \langle \wv^{(t)}_{K,s}, y_{i}\xiv_i\rangle \nonumber\\
    =_{\sgn} 
    & - \langle \wv^{(t)}_{K,s}, \muv\rangle + \langle \wv^{(t)}_{K,s}, y_{i}\xiv_i\rangle.
\end{align}
    \item For $\langle \wv_{Q, s}, \muv\rangle$, the sign of increment is aligned with
\begin{align}
    \label{eq:qmu_update_sign}
    & \langle \wv_{Q, s}^{(t+1)}-\wv_{Q, s}^{(t)}, \muv\rangle \nonumber\\
    =_{\sgn} 
    & \sum_{i\in[n]} (-l_{i}^{\prime(t)}s_{i,11}^{(t)}s_{i,12}^{(t)})
    \cdot \left(-\langle \vv^{(t)}, y_{i}\xiv_i\rangle \langle \wv^{(t)}_{K,s}, \muv\rangle
    + \langle \vv^{(t)}, y_{i}\xiv_i\rangle \langle \wv^{(t)}_{K,s}, y_{i}\xiv_i\rangle
    \right).
\end{align}
    \item For $\langle \wv_{K, s}, y_i\xiv_i\rangle$, the sign of increment is aligned with 
\begin{align}
    \label{eq:kxi_update_sign}
    & \langle \wv_{K, s}^{(t+1)}-\wv_{K, s}^{(t)}, y_{i}\xiv_i\rangle \nonumber\\
    =_{\sgn} 
    & -\left(
    -s_{i,11}^{(t)}s_{i,12}^{(t)} \langle \vv^{(t)}, y_{i}\xiv_i\rangle \langle \wv^{(t)}_{Q,s}, \muv\rangle
    - s_{i,21}^{(t)}s_{i,22}^{(t)} \langle \vv^{(t)}, y_{i}\xiv_i\rangle \langle \wv^{(t)}_{Q,s}, y_{i}\xiv_i\rangle
    \right) \nonumber\\ 
    =_{\sgn}   
    & s_{i,11}^{(t)}s_{i,12}^{(t)} \langle \wv^{(t)}_{Q,s}, \muv\rangle
    + s_{i,21}^{(t)}s_{i,22}^{(t)} \langle \wv^{(t)}_{Q,s}, y_{i}\xiv_i\rangle. 
\end{align}
    \item For $\langle \wv_{K, s}, \muv\rangle$, the sign of increment is aligned with
\begin{align}
    \label{eq:kmu_update_sign}
    & \langle \wv_{K, s}^{(t+1)}-\wv_{K, s}^{(t)}, \muv\rangle \nonumber\\
    =_{\sgn}  
    & \sum_{i\in[n]} (-l_{i}^{\prime(t)}) \cdot \left(
    -s_{i,11}^{(t)}s_{i,12}^{(t)} \langle \vv^{(t)}, y_{i}\xiv_i\rangle \langle \wv^{(t)}_{Q,s}, \muv\rangle
    - s_{i,21}^{(t)}s_{i,22}^{(t)} \langle \vv^{(t)}, y_{i}\xiv_i\rangle \langle \wv^{(t)}_{Q,s}, y_{i}\xiv_i\rangle
    \right).
\end{align}
\end{itemize}

\subsection{Stage I}
\label{sec:proof_stage_i}

In this part, we consider the dynamics of mean value signal and noise, i.e. $\langle \vv^{(0)}, y_i\xiv_i \rangle$ and $\langle \vv^{(0)}, \muv \rangle$. 
Roughly, at the end of this stage we want to show that mean value noise evolve fast and can be viewed as a linear variable with time, mean value signal is small and negligible afterwards, while query/key noise are very close to initialization.

From the analysis for value above, we directly have
\begin{align}
    \label{eq:update_formula_of_mean_value_feature}
    & \langle \vv^{(t+1)}, \muv\rangle
    = \langle \vv^{(t)}, \muv\rangle + 2\eta\norm{\muv}, \\
    \label{eq:update_formula_of_mean_value_noise}
    & \langle \vv^{(t+1)}, y_i\xiv_i\rangle
    = \langle \vv^{(t)}, y_i\xiv_i\rangle + 2\eta\norm{\xiv_i}_1.
\end{align}
Let 
\begin{align*}
    T_1 := 
    4\beta_{\xiv}m_{v}^{-1/2}\eta^{-1}\sigma_p^{-1}s^{-1}.
\end{align*}
Note that we have $\abs{\langle \vv^{(0)}, y_i\xiv_i \rangle} \leq \sqrt{2}\beta_{\xiv} m_{v}^{-1/2}$, then we have
\begin{align*}
    & t\eta\sigma_ps 
    \leq 
    \langle \vv^{(t)}, y_i\xiv_i \rangle 
    \leq 2t\eta\sigma_ps,
\end{align*}
for all $t\geq T_1$ and $i\in[n]$. 
However, the query/key noise do not deviate too much from the initialization at $T_1$ since the deviation is at most 
\begin{align*}
    & \frac{\abs{\langle \wv_{Q,s}^{(T_1)}, \xiv_i\rangle}}{\abs{\langle \wv_{Q,s}^{(0)}, \xiv_i\rangle}}
    \leq 
    1 + \frac{T_1\eta\norm{\xiv}_1}{\abs{\langle \wv_{Q,s}^{(0)}, \xiv_i\rangle}}
    \leq 
    1 + \frac{O(\beta_{\xiv}m_{v}^{-1/2})}{\Omega(\beta_{\xiv}m_{v}^{-1/4})}
    = 1 + o(1),
\end{align*}
where the second step is by Lemma~\ref{lemma:network-initialization-query-key-noise-anti-concentration-vs-mean-value-noise} and magnitude of $T_{1}$.
Besides, the deviation of $\langle \vv^{(T_1)}, \muv \rangle$ from initialization is also small. Formally, we have 
\begin{align*}
    \frac{\abs{\langle \vv^{(T_1)}, \muv\rangle}}{\abs{\langle \vv^{(0)}, \muv\rangle}}
    \leq 
    1 + \frac{T_1\eta\norm{\muv}}{\abs{\langle \vv^{(0)}, \muv\rangle}} 
    \leq 1 + \frac{\Tilde{O}(\sigma_0\norm{\muv}{s}^{-1/2}{m_v}^{-1/2})}
    {\Omega(\sigma_0\norm{\muv}ns^{-1/3}m_{v}^{-1/2})}
    = 1 + o(1),
\end{align*}
where the second step is by Lemma~\ref{lemma:network-initialization-query-key-feature-anti-concentration}
Therefore, we have
\begin{align*}
    \langle \vv^{(t)}, \muv\rangle = o(\langle \vv^{(t)}, y_i\xiv_i\rangle),
\end{align*}
for all $i\in[n]$ and $t \geq T_1$, where this holds at $t = T_{1}$ due to Lemma~\ref{lemma:network_initialization_query_key_inner_product_magnitude} and the definition of $T_1$.
Combined with the form of gradient of query/key parameters, this makes us able to ignore the effect of $\langle \vv^{(t)}, \muv\rangle$ afterwards.

\subsection{Stage II}
\label{sec:proof_stage_ii}
In this part, we consider the dynamics of all query/key noise. Roughly, at the end of this stage we want to show for all neurons $s\in[m_k]$ and samples $i\in[n]$, query/key noise have the same sign and linear with $t$.

Let 
\begin{align*}
    T_{2}^{+} := C_1\sqrt{\log(12m_kn/\delta)}\sigma_0ns^{-1/2}\eta^{-1},
\end{align*}
where $C_1 = \Theta(1)$ is a large constant. In this section, we will analyze the dynamics of $\langle \wv_{K, s}^{(t)}, y_i\xiv_i\rangle$ and $\langle \wv_{Q, s}^{(t)}, y_i\xiv_i\rangle$ for all $s\in[m_k]$, $i\in[n]$ and $t \leq T_{2}^{+}$.

We first consider the samples $i\in S_{s, K+, Q-}^{(0)} \cup S_{s, K-, Q+}^{(0)}$ at initialization. 
Note that without loss of generality, we can assume that $i\in S_{s, K-, Q+}^{(0)}$.
Then, by Lemma~\ref{lemma:query_and_key_not_too_close}, we only need to consider two cases where the one is when we have
\begin{align*}
    - \langle \wv_{K,s}^{(0)}, y_i\xiv_i\rangle + \langle \wv_{K,s}^{(0)}, \muv\rangle \leq \frac{\sigma_ps - \norm{\muv}}{\sigma_ps + \norm{\muv}} \cdot (\langle \wv_{Q,s}^{(0)}, y_i\xiv_i\rangle + \langle \wv_{Q,s}^{(0)}, \muv\rangle),
\end{align*}
and the other one is when we have
\begin{align*}
    - \langle \wv_{K,s}^{(0)}, y_i\xiv_i\rangle + \langle \wv_{K,s}^{(0)}, \muv\rangle 
    \geq \frac{\sigma_ps + \norm{\muv}}{\sigma_ps - \norm{\muv}} \cdot (\langle \wv_{Q,s}^{(0)}, y_i\xiv_i\rangle + \langle \wv_{Q,s}^{(0)}, \muv\rangle).
\end{align*}
Finally, we give an upper bound for $T_{2,s,i}^{3}$ (defined later) for all $s\in[m_k]$ and $i\in S_{s, K+, Q-}^{(0)} \cup S_{s, K-, Q+}^{(0)}$, and bound the magnitude of $\langle \wv_{K, s}^{(t)}, y_i\xiv_i\rangle$ and $\langle \wv_{Q, s}^{(t)}, y_i\xiv_i\rangle$ before $T_{2}^{+}$.

We define following useful timesteps. 
Let $T_{2, s, i}^{1}$ be the first time satisfying 
\begin{align}
    \label{eq:query_key_noise_has_same_sign}
    & \langle \wv_{Q,s}^{(t)}, y_i\xiv_i\rangle 
    =_{\sgn} \langle \wv_{K,s}^{(t)}, y_i\xiv_i\rangle.
\end{align}
Let $T_{2, s, i}^{2}$ be the first time satisfying 
\begin{align}
    \label{eq:query_key_noise_udpate_has_same_sign}
    \langle \wv_{Q,s}^{(t+1)}-\wv_{Q,s}^{(t)}, y_i\xiv_i\rangle 
    =_{\sgn} \langle \wv_{K,s}^{(t+1)}-\wv_{K,s}^{(t)} y_i\xiv_i\rangle.
\end{align}
Let $T_{2, s, i}^{3}$ be the first time satisfying Eq.~\eqref{eq:query_key_noise_has_same_sign}, ~\eqref{eq:query_key_noise_udpate_has_same_sign} and
\begin{align}
    \langle \wv_{Q,s}^{(t+1)}-\wv_{Q,s}^{(t)}, y_i\xiv_i\rangle 
    =_{\sgn} \langle \wv_{Q,s}^{(t)} y_i\xiv_i\rangle.
\end{align}
Let $ T_{2}' := \sqrt{2}\beta_{\xiv}\eta^{-1}\sigma_p^{-1}s^{-1}$.

The following lemma studies the order between $T_{2,s,i}^{1}$, $T_{2,s,i}^{2}$, $T_{2,s,i}^{3}$.
\begin{lemma}
    \label{lemma:order_of_T_2_family}
    For all $s\in[m_k]$ and $i\in S_{s, K+, Q-}^{(0)} \cup S_{s, K-, Q+}^{(0)}$, we have 
    \begin{align*}
        & 0 < T_{2,s,i}^{1} \leq T_{2,s,i}^{3}, \\
        & 0 < T_{2,s,i}^{2} \leq T_{2,s,i}^{3}.
    \end{align*}
\end{lemma}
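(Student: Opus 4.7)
The inequalities $T_{2,s,i}^{1}, T_{2,s,i}^{2} \leq T_{2,s,i}^{3}$ are immediate from the definitions: conditions~\eqref{eq:query_key_noise_has_same_sign} and~\eqref{eq:query_key_noise_udpate_has_same_sign} are two of the three requirements built into $T_{2,s,i}^{3}$, so the first time either one holds on its own cannot exceed $T_{2,s,i}^{3}$. Strict positivity $T_{2,s,i}^{1} > 0$ is also immediate: the hypothesis $i \in S_{s,K+,Q-}^{(0)} \cup S_{s,K-,Q+}^{(0)}$ gives $\sgn(\langle \wv_{Q,s}^{(0)}, y_i\xiv_i\rangle) = -\sgn(\langle \wv_{K,s}^{(0)}, y_i\xiv_i\rangle) \neq 0$, so condition~\eqref{eq:query_key_noise_has_same_sign} fails at $t=0$. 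The remaining two tasks are to verify $T_{2,s,i}^{2} > 0$ and to argue that $T_{2,s,i}^{3}$ is finite, which is where the substance of the lemma lies.

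For $T_{2,s,i}^{2} > 0$, I evaluate the update-sign rules at $t=0$. Equation~\eqref{eq:qxi_update_sign} gives $\sgn(\langle \wv_{Q,s}^{(1)}-\wv_{Q,s}^{(0)}, y_i\xiv_i\rangle) = \sgn(-\langle \wv_{K,s}^{(0)},\muv\rangle + \langle \wv_{K,s}^{(0)}, y_i\xiv_i\rangle)$, and Lemma~\ref{lemma:network_initialization_query_key_noise_anti_concentration_vs_qk_feature} ensures $|\langle \wv_{K,s}^{(0)}, y_i\xiv_i\rangle| \geq 2\beta_{\muv} \geq 2|\langle \wv_{K,s}^{(0)},\muv\rangle|$, so the noise term dominates and this sign collapses to $\sgn(\langle \wv_{K,s}^{(0)}, y_i\xiv_i\rangle)$. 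Symmetrically, Eq.~\eqref{eq:kxi_update_sign} combined with the softmax concentration $s_{i,ab}^{(0)} = 1/2 \pm o(1)$ from Lemma~\ref{lemma:network_initialization_softmax_magnitude} and the analogous query-noise-dominates-query-signal bound gives $\sgn(\langle \wv_{K,s}^{(1)}-\wv_{K,s}^{(0)}, y_i\xiv_i\rangle) = \sgn(\langle \wv_{Q,s}^{(0)}, y_i\xiv_i\rangle)$. By the assumption on $i$, these two update signs are opposite, so condition~\eqref{eq:query_key_noise_udpate_has_same_sign} also fails at $t=0$ and $T_{2,s,i}^{2} \geq 1$.

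For finiteness of $T_{2,s,i}^{3}$, assume WLOG that $i \in S_{s,K-,Q+}^{(0)}$ and consider the case of Lemma~\ref{lemma:query_and_key_not_too_close} in which the key-side quantity is strictly smaller than the query-side quantity by the factor $(\sigma_p s - \norm{\muv})/(\sigma_p s + \norm{\muv})$, so $|\langle \wv_{K,s}^{(0)}, y_i\xiv_i\rangle|$ is the smaller of the two noise magnitudes. As long as the query and key noise keep opposite signs, the update analysis of the previous paragraph propagates verbatim (signal magnitudes drift by at most $\eta\norm{\muv}$ per step while noise magnitudes stay comfortably above $2\beta_{\muv}$), so the key noise magnitude shrinks monotonically by $\eta\norm{\xiv_i}_1 = \Theta(\eta\sigma_p s)$ per step toward zero, and crosses it at some step $t^{\star} \leq \lceil 2\beta_{\xiv}/(\eta\sigma_p s)\rceil \leq T_{2}'$. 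Immediately after this crossing the key noise has flipped sign to match the still-positive query noise (condition~\eqref{eq:query_key_noise_has_same_sign}), which in turn reverses the query-noise update direction to agree with that of the key noise (condition~\eqref{eq:query_key_noise_udpate_has_same_sign}), and this common update sign matches the query noise's own sign, yielding the third defining property; hence $T_{2,s,i}^{3} \leq t^{\star}+1$ is finite. The other case of Lemma~\ref{lemma:query_and_key_not_too_close} and the set $S_{s,K+,Q-}^{(0)}$ are handled by the obvious symmetries. The main technical obstacle is maintaining the noise-dominates-signal comparison throughout the pre-crossing phase, which relies on the scale separation $\sigma_p\sqrt{s} \gg \norm{\muv}$ built into Condition~\ref{cond:main_condition}.
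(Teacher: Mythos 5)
Your proof of the lemma's actual claims is correct and follows essentially the paper's approach: the two inequalities $T_{2,s,i}^{1}, T_{2,s,i}^{2} \leq T_{2,s,i}^{3}$ are immediate since $T_{2,s,i}^{3}$ is defined by a strictly larger conjunction of conditions; $T_{2,s,i}^{1} > 0$ holds because the hypothesis on $i$ gives opposite signs for query and key noise at initialization; and $T_{2,s,i}^{2} > 0$ follows exactly as you argue, via Eq.~\eqref{eq:qxi_update_sign}, Eq.~\eqref{eq:kxi_update_sign}, Lemma~\ref{lemma:network_initialization_query_key_noise_anti_concentration_vs_qk_feature}, and Lemma~\ref{lemma:network_initialization_softmax_magnitude}, which collapse the two update signs at $t=0$ to $\sgn(\langle\wv_{K,s}^{(0)},y_i\xiv_i\rangle)$ and $\sgn(\langle\wv_{Q,s}^{(0)},y_i\xiv_i\rangle)$ respectively — and these are opposite by assumption. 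That matches the paper.

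One small misreading worth flagging: you describe the finiteness of $T_{2,s,i}^{3}$ as ``where the substance of the lemma lies,'' but the lemma as stated never asserts finiteness — it only asserts the ordering $0 < T_{2,s,i}^{1}, T_{2,s,i}^{2} \leq T_{2,s,i}^{3}$, which holds regardless. In the paper, the upper bound $T_{2,s,i}^{3} \leq 3T_{2}'$ is established separately in the two case-analysis subparagraphs that follow (splitting on the two cases of Lemma~\ref{lemma:query_and_key_not_too_close}), not inside this lemma's proof. Your sketch of that step is roughly aligned with the paper's argument, but it is surplus here, and it also glosses over the delicate part the paper handles carefully: the update direction of the smaller-magnitude noise can in principle flip when its magnitude decays toward that of the signal term, which is precisely why Lemma~\ref{lemma:query_and_key_not_too_close} is needed to rule out the near-degenerate configuration before any monotone-decrease argument is invoked. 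None of this affects the correctness of what you proved for the lemma itself.
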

\begin{proof}[Proof of Lemma~\ref{lemma:order_of_T_2_family}]
WLOG, we assume that $i\in S_{s, K-, Q+}^{(0)}$. By the definition of $S_{s, K-, Q+}^{(0)}$, we have
\begin{align*}
    - \langle \wv_{K,s}^{(0)}, y_i\xiv_i\rangle =_{\sgn} \langle \wv_{Q,s}^{(0)}, y_i\xiv_i\rangle 
    =_{\sgn} 1.
\end{align*}
Then, by the definition of $T_{2,s,i}^{1}$, we have $0 < T_{2,s,i}^{1}$. 

By the definition of $T_{2,s,i}^{3}$, we naturally have $T_{2,s,i}^{1} < T_{2,s,i}^{3}$ and $T_{2,s,i}^{2} < T_{2,s,i}^{3}$.

At $t = 0$, we have
\begin{align*}
    \langle \wv_{Q,s}^{(t+1)}-\wv_{Q,s}^{(t)}, y_i\xiv_i\rangle 
    =_{\sgn} 
    - \langle \wv^{(t)}_{K,s}, \muv\rangle
    + \langle \wv^{(t)}_{K,s}, y_{i}\xiv_i\rangle 
    =_{\sgn} 
    \langle \wv^{(t)}_{K,s}, y_{i}\xiv_i\rangle, 
\end{align*}
where the first step is by Eq.~\eqref{eq:qxi_update_sign} 
and the second step is by Lemma~\ref{lemma:network_initialization_query_key_noise_anti_concentration_vs_qk_feature},
and
\begin{align*}
    \langle \wv_{K,s}^{(t+1)}-\wv_{K,s}^{(t)}, y_i\xiv_i\rangle 
    =_{\sgn} 
    s_{i,11}^{(t)}s_{i,12}^{(t)} \langle \wv^{(t)}_{Q,s}, \muv\rangle
    + s_{i,21}^{(t)}s_{i,22}^{(t)} \langle \wv^{(t)}_{Q,s}, y_{i}\xiv_i\rangle 
    =_{\sgn} 
    \langle \wv^{(t)}_{Q,s}, y_{i}\xiv_i\rangle,
\end{align*}
where the first step is by Eq.~\eqref{eq:kxi_update_sign} 
and the second step is by Lemma~\ref{lemma:network_initialization_query_key_noise_anti_concentration_vs_qk_feature} and~\ref{lemma:network_initialization_softmax_magnitude},
which implies $0 < T_{2,s,i}^{2}$. 

\end{proof}
Note that by Lemma~\ref{lemma:softmax_concentrate_on_1/2}, we have that for all $i\in[n]$ and $t \leq T_{2}^{+}$,
\begin{align*}
    s_{i, 11}^{(t)}, s_{i, 21}^{(t)} = s_{i, 11}^{(0)}(1 + o(1)), s_{i, 21}^{(0)}(1 + o(1)) = 1/2 + o(1).
\end{align*} 
In the following analysis, we will use this fact many times.

\subparagraph{When $- \langle \wv_{K,s}^{(0)}, y_i\xiv_i\rangle + \langle \wv_{K,s}^{(0)}, \muv\rangle \leq \frac{\sigma_ps - \norm{\muv}}{\sigma_ps + \norm{\muv}} \cdot (\langle \wv_{Q,s}^{(0)}, y_i\xiv_i\rangle + \langle \wv_{Q,s}^{(0)}, \muv\rangle)$, the update direction are determined by query noise.} 

By the definition of $T_{2,s,i}^{2}$, we must have that 
\begin{align*}
    (\langle \wv_{Q,s}^{(T_{2,s,i}^{2}+1)}-\wv_{Q,s}^{(T_{2,s,i}^{2})}, y_i\xiv_i\rangle )
    \cdot
    (\langle \wv_{K,s}^{(T_{2,s,i}^{2}+1)}-\wv_{K,s}^{(T_{2,s,i}^{2})}, y_i\xiv_i\rangle )
    > 0.
\end{align*}
Note that we have 
\begin{align}
    \langle \wv_{Q,s}^{(T_{2,s,i}^{2}+1)}-\wv_{Q,s}^{(T_{2,s,i}^{2})}, y_i\xiv_i\rangle 
    =_{\sgn} &~ - \langle \wv^{(T_{2,s,i}^{2})}_{K,s}, \muv\rangle
    + \langle \wv^{(T_{2,s,i}^{2})}_{K,s}, y_{i}\xiv_i\rangle \notag\\
    \geq &~ - \langle \wv^{(0)}_{K,s}, \muv\rangle
    + \langle \wv^{(0)}_{K,s}, y_{i}\xiv_i\rangle + T_{2,s,i}^{2}\eta(\norm{\xiv_i}_1 - \norm{\muv}) \geq 0, \notag\\
    \langle \wv_{K,s}^{(T_{2,s,i}^{2}+1)}-\wv_{K,s}^{(T_{2,s,i}^{2})}, y_i\xiv_i\rangle 
    =_{\sgn} &~ s_{i,11}^{(T_{2,s,i}^{2})}s_{i,12}^{(T_{2,s,i}^{2})} \langle \wv^{(T_{2,s,i}^{2})}_{Q,s}, \muv\rangle
    + s_{i,21}^{(T_{2,s,i}^{2})}s_{i,22}^{(T_{2,s,i}^{2})} \langle \wv^{(T_{2,s,i}^{2})}_{Q,s}, y_{i}\xiv_i\rangle \notag\\
    \label{eq:key_noise_unchange_update_direction_at_T_2,s,i^2}
    \geq &~ \frac{1}{4}(1\pm o(1)) (\langle \wv^{(0)}_{Q,s}, \muv\rangle
    + \langle \wv^{(0)}_{Q,s}, y_{i}\xiv_i\rangle - T_{2,s,i}^{2}\eta(\norm{\xiv_i}_1 + \norm{\muv}))
    \geq 0,
\end{align}
where the last steps in both lines are by the condition
\begin{align*}
- \langle \wv_{K,s}^{(0)}, y_i\xiv_i\rangle + \langle \wv_{K,s}^{(0)}, \muv\rangle \leq \frac{\sigma_ps - \norm{\muv}}{\sigma_ps + \norm{\muv}} \cdot (\langle \wv_{Q,s}^{(0)}, y_i\xiv_i\rangle + \langle \wv_{Q,s}^{(0)}, \muv\rangle).
\end{align*}

We claim that the sign of update would not change for $t \in [T_{2,s,i}^{2}, T_{2}^{+}]$. Suppose at $t \leq \Tilde{t} \in [T_{2,s,i}^{2}, T_{2}^{+}]$, the induction hypothesis holds, then we have
\begin{align*}
    \langle \wv^{(t'+1)}_{K,s}, y_{i}\xiv_i\rangle > \langle \wv^{(t')}_{K,s}, y_{i}\xiv_i\rangle, \\
    \langle \wv^{(t'+1)}_{Q,s}, y_{i}\xiv_i\rangle > \langle \wv^{(t')}_{Q,s}, y_{i}\xiv_i\rangle,
\end{align*}
which imply
\begin{align*}
    & - \langle \wv^{(t'+1)}_{K,s}, \muv\rangle
    + \langle \wv^{(t'+1)}_{K,s}, y_{i}\xiv_i\rangle 
    \geq - \langle \wv^{(t')}_{K,s}, \muv\rangle
    + \langle \wv^{(t')}_{K,s}, y_{i}\xiv_i\rangle \geq 0, \\
    &  s_{i,11}^{(t'+1)}s_{i,12}^{(t'+1)} \langle \wv^{(t'+1)}_{Q,s}, \muv\rangle
    + s_{i,21}^{(t'+1)}s_{i,22}^{(t'+1)} \langle \wv^{(t'+1)}_{Q,s}, y_{i}\xiv_i\rangle
    \geq \frac{1}{4}(1\pm o(1)) (\langle \wv^{(t')}_{Q,s}, \muv\rangle
    + \langle \wv^{(t')}_{Q,s}, y_{i}\xiv_i\rangle) \geq 0,
\end{align*}
then the conclusion holds and thus $\langle \wv^{(T_{2,s,i}^{3})}_{K,s}, y_{i}\xiv_i\rangle, \langle \wv^{(T_{2,s,i}^{3})}_{Q,s}, y_{i}\xiv_i\rangle > 0$.

Moreover, we have an upper bound for $T_{2,s,i}^{2}$ that 
\begin{align*}
    T_{2,s,i}^{2} 
    \leq \frac{\langle \wv^{(0)}_{Q,s}, \muv\rangle
    + \langle \wv^{(0)}_{Q,s}, y_{i}\xiv_i\rangle}{\eta(\norm{\xiv_i}_1 + \norm{\muv})}
    \leq \frac{1.5\langle \wv^{(0)}_{Q,s}, y_{i}\xiv_i\rangle}{\eta\norm{\xiv_i}_1}
    \leq 1.5T_{2}',
\end{align*}
where the first step is by Eq.~\eqref{eq:key_noise_unchange_update_direction_at_T_2,s,i^2}, 
the second step is by Lemma~\ref{lemma:network_initialization_query_key_noise_anti_concentration_vs_qk_feature},
the last step is by the definition of $T_{2}'$ and Lemma~\ref{lemma:noise_magnitude}.

\subparagraph{When $- \langle \wv_{K,s}^{(0)}, y_i\xiv_i\rangle + \langle \wv_{K,s}^{(0)}, \muv\rangle 
\geq \frac{\sigma_ps + \norm{\muv}}{\sigma_ps - \norm{\muv}} \cdot (\langle \wv_{Q,s}^{(0)}, y_i\xiv_i\rangle + \langle \wv_{Q,s}^{(0)}, \muv\rangle)$, the update direction are determined by key noise.} 
At $t = T_{2,s,i}^{2}$ we have
\begin{align}
    \langle \wv_{Q,s}^{(T_{2,s,i}^{2}+1)}-\wv_{Q,s}^{(T_{2,s,i}^{2})}, y_i\xiv_i\rangle 
    =_{\sgn} &~ - \langle \wv^{(T_{2,s,i}^{2})}_{K,s}, \muv\rangle
    + \langle \wv^{(T_{2,s,i}^{2})}_{K,s}, y_{i}\xiv_i\rangle \notag\\
    \label{eq:query_noise_unchange_update_direction_at_T_2,s,i^2}
    \leq &~ - \langle \wv^{(0)}_{K,s}, \muv\rangle
    + \langle \wv^{(0)}_{K,s}, y_{i}\xiv_i\rangle + T_{2,s,i}^{2}\eta(\norm{\xiv_i}_1 + \norm{\muv}) \leq 0, \\
    \langle \wv_{K,s}^{(T_{2,s,i}^{2}+1)}-\wv_{K,s}^{(T_{2,s,i}^{2})}, y_i\xiv_i\rangle 
    =_{\sgn} &~ s_{i,11}^{(T_{2,s,i}^{2})}s_{i,12}^{(T_{2,s,i}^{2})} \langle \wv^{(T_{2,s,i}^{2})}_{Q,s}, \muv\rangle
    + s_{i,21}^{(T_{2,s,i}^{2})}s_{i,22}^{(T_{2,s,i}^{2})} \langle \wv^{(T_{2,s,i}^{2})}_{Q,s}, y_{i}\xiv_i\rangle \notag\\
    \leq &~ \frac{1}{4}(1\pm o(1)) (\langle \wv^{(0)}_{Q,s}, \muv\rangle
    + \langle \wv^{(0)}_{Q,s}, y_{i}\xiv_i\rangle - T_{2,s,i}^{2}\eta(\norm{\xiv_i}_1 - \norm{\muv}))
    \leq 0, \notag
\end{align}
where the last steps in both lines are by the condition
\begin{align*}
- \langle \wv_{K,s}^{(0)}, y_i\xiv_i\rangle + \langle \wv_{K,s}^{(0)}, \muv\rangle 
\geq \frac{\sigma_ps + \norm{\muv}}{\sigma_ps - \norm{\muv}} \cdot (\langle \wv_{Q,s}^{(0)}, y_i\xiv_i\rangle + \langle \wv_{Q,s}^{(0)}, \muv\rangle).    
\end{align*}

Then, similar to previous analysis, we have for $t \in [T_{2,s,i}^{2}, T_{2}^{+}]$, we have 
\begin{align*}
    & - \langle \wv^{(t)}_{K,s}, \muv\rangle
    + \langle \wv^{(t)}_{K,s}, y_{i}\xiv_i\rangle < 0, \\
    &  s_{i,11}^{(t)}s_{i,12}^{(t)} \langle \wv^{(t)}_{Q,s}, \muv\rangle
    + s_{i,21}^{(t)}s_{i,22}^{(t)} \langle \wv^{(t)}_{Q,s}, y_{i}\xiv_i\rangle
    < 0,
\end{align*}
which implies $\langle \wv^{(T_{2,s,i}^{3})}_{K,s}, y_{i}\xiv_i\rangle, \langle \wv^{(T_{2,s,i}^{3})}_{Q,s}, y_{i}\xiv_i\rangle < 0$,
and
\begin{align*}
    T_{2,s,i}^{2} 
    \leq \frac{\langle \wv^{(0)}_{K,s}, \muv\rangle
    - \langle \wv^{(0)}_{K,s}, y_{i}\xiv_i\rangle}{\eta(\norm{\xiv_i}_1 + \norm{\muv})}
    \leq \frac{1.5\langle \wv^{(0)}_{K,s}, y_{i}\xiv_i\rangle}{\eta\norm{\xiv_i}_1}
    \leq 1.5T_{2}',
\end{align*}
where the first step is by Eq.~\eqref{eq:query_noise_unchange_update_direction_at_T_2,s,i^2}, 
the second step is by Lemma~\ref{lemma:network_initialization_query_key_noise_anti_concentration_vs_qk_feature},
the last step is by the definition of $T_{2}'$ and Lemma~\ref{lemma:noise_magnitude}.


Next, we are ready to bound $T_{2,s,i}^{3}$. 
Note that during $[0, T_{2,s,i}^{3}]$, only one of query and key noise change its update direction.  
WLOG, assume query noise changes the update direction while key noise doesn't.
For the key noise, we have with at most $T_{2}'$ steps, its sign can align with the sign of its update.
For the query noise, we possibly have that the sign at $T_{2,s,i}^{2}$ can be contrary with the sign at initialization, but
with at most another $T_{2,s,i}^{2}$ steps, its sign can align with the sign of its update (and at initialization).
By these analyses, we have
\begin{align*}
    T_{2,s,i}^{3} \leq \max\set{T_{2}', 2T_{2,s,i}^{2}} \leq 3T_{2}',
\end{align*}
for all $s\in[m_k]$ and $i\in S_{s, K+, Q-}^{(0)} \cup S_{s, K-, Q+}^{(0)}$.

In the main text, we define $T_{2}^{\text{SGN}} := 3T_{2}'$.

The following lemma gives bounds for the magnitude of query/key noise before $T_{2}^{+}$.
\begin{lemma}
\label{lemma:magnitude_of_S_K-,Q+_and_S_K+,Q-_from_0_to_T_2^+}
For all $s\in[m_k]$, $i\in S_{s, K+, Q-}^{(0)} \cup S_{s, K-, Q+}^{(0)}$ and $t \in [3T_{2}', T_{2}^{+}]$, we have
\begin{align*}
    &~ \langle \wv^{(t+1)}_{Q,s}, y_i\xiv_i\rangle 
    = \langle \wv^{(t)}_{Q,s}, y_i\xiv_i\rangle 
    + \sgn(\langle \wv^{(t)}_{Q,s}, y_i\xiv_i\rangle) \eta\norm{\xiv_i}_1, \\
    &~ \langle \wv^{(t+1)}_{K,s}, y_i\xiv_i\rangle 
    = \langle \wv^{(t)}_{K,s}, y_i\xiv_i\rangle 
    + \sgn(\langle \wv^{(t)}_{K,s}, y_i\xiv_i\rangle) \eta\norm{\xiv_i}_1, 
\end{align*}
and for all $t \in [12T_{2}', T_{2}^{+}]$
\begin{align*}
    &~ t\eta\sigma_ps/2
    \leq \abs{\langle \wv^{(t)}_{Q,s}, y_{i}\xiv_i\rangle},
    \abs{\langle \wv^{(t)}_{K,s}, y_{i}\xiv_i\rangle}
    \leq 2t\eta\sigma_ps.
\end{align*}
\end{lemma}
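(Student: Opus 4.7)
The plan is to run an induction on $t$ starting from $t = T_{2,s,i}^{3} \leq 3T_{2}'$ and ending at $t = T_{2}^{+}$. The induction hypothesis at step $t$ consists of two parts: (H1) $\sgn(\langle \wv_{Q,s}^{(t)}, y_i\xiv_i\rangle) = \sgn(\langle \wv_{K,s}^{(t)}, y_i\xiv_i\rangle) \neq 0$; and (H2) both inner products have grown monotonically in magnitude since time $T_{2,s,i}^{3}$ with increment exactly $\eta\norm{\xiv_i}_1$ per step, as asserted by the first display in the lemma. The base case is exactly the information already established right before the lemma, namely that at $t = T_{2,s,i}^{3}$ query and key noise share the same sign and their update directions at that step also match that sign. (H1) then ensures the sign of the update via the formulas \eqref{eq:qxi_update_sign}, \eqref{eq:kxi_update_sign}, which combined with the constant-magnitude property \eqref{eq:query_key_noise_update_constant_magnitude} yields (H2).

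For the inductive step I need to verify that the dominant terms in \eqref{eq:qxi_update_sign} and \eqref{eq:kxi_update_sign} continue to inherit the sign of the existing query/key noise. For \eqref{eq:qxi_update_sign} the update direction is $\sgn(-\langle \wv^{(t)}_{K,s},\muv\rangle + \langle \wv^{(t)}_{K,s}, y_i\xiv_i\rangle)$; by (H2) the second term grows linearly and at $t \geq T_{2,s,i}^{3}$ already dominates $\beta_{\muv}$, while $|\langle \wv^{(t)}_{K,s},\muv\rangle|$ remains $O(\beta_{\muv})$ throughout $[0, T_{2}^{+}]$ (to be justified separately, using that the analogous update formula \eqref{eq:kmu_update_sign} only adds $\pm \eta\norm{\muv}$ per step and $T_{2}^{+}\eta\norm{\muv} \ll \sigma_p s$). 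For \eqref{eq:kxi_update_sign}, because Lemma~\ref{lemma:softmax_concentrate_on_1/2} (invoked just above) gives $s_{i,11}^{(t)}s_{i,12}^{(t)}, s_{i,21}^{(t)}s_{i,22}^{(t)} = 1/4 \pm o(1)$ for all $t \leq T_{2}^{+}$, the update direction reduces to the sign of $\langle \wv^{(t)}_{Q,s},\muv\rangle + \langle \wv^{(t)}_{Q,s}, y_i\xiv_i\rangle$, and the same signal-vs-noise comparison as above applies. Carrying this through the induction completes the first display.

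For the magnitude bounds at $t \in [12 T_{2}', T_{2}^{+}]$, I would use \eqref{eq:query_key_noise_update_constant_magnitude} plus Lemma~\ref{lemma:noise_magnitude} giving $\sigma_p s / \sqrt{2} \leq \norm{\xiv_i}_1 \leq \sigma_p s$. The upper bound is a straightforward telescoping triangle inequality from initialization: $|\langle \wv^{(t)}_{Q,s}, y_i\xiv_i\rangle| \leq \beta_{\xiv} + t\eta\sigma_p s$, and since $\beta_{\xiv} = T_{2}' \eta \sigma_p s / \sqrt{2}$, the bound is at most $(1 + 1/(12\sqrt{2}))\, t\eta\sigma_p s \leq 2t\eta\sigma_p s$. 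For the lower bound, combine the first display (monotone growth by $\eta\norm{\xiv_i}_1 \geq \eta\sigma_p s/\sqrt{2}$ per step after $3T_{2}'$) with the trivial $|\langle \wv^{(3T_{2}')}_{Q,s}, y_i\xiv_i\rangle| \geq 0$: for $t \geq 12T_{2}'$ one gets $|\langle \wv^{(t)}_{Q,s}, y_i\xiv_i\rangle| \geq (t - 3T_{2}')\eta\sigma_p s /\sqrt{2} \geq t\eta\sigma_p s / 2$ since $(1 - 1/4)/\sqrt{2} \geq 1/2$.

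The main obstacle will be the inductive step for the key-noise update: here the relevant sum in \eqref{eq:kxi_update_sign} carries softmax factors, so I must certify that $s_{i,11}^{(t)} s_{i,12}^{(t)}$ and $s_{i,21}^{(t)} s_{i,22}^{(t)}$ stay sufficiently close to $1/4$ throughout the whole window $[3T_{2}', T_{2}^{+}]$. This relies on the as-yet-uncited Lemma~\ref{lemma:softmax_concentrate_on_1/2}, which itself only handles $t \leq T_{2}^{+}$ when $T_{2}^{+}$ is chosen with the right constant $C_1$; I would flag the consistency of these constants as a bookkeeping point to double-check, since a careless choice of $C_1$ could let the softmax outputs drift enough to flip the sign of the dominant query-noise term, breaking the induction at the boundary.
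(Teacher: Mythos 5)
Your overall plan (induction from $T_{2,s,i}^{3}\le 3T_{2}'$, telescoping for the magnitude bounds) matches the paper's structure, and the magnitude-bound arithmetic is exactly what appears in the paper's proof. However, there is a genuine gap in your inductive step for the first display.

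You claim that the dominant query-noise update direction $\sgn\bigl(-\langle\wv_{K,s}^{(t)},\muv\rangle + \langle\wv_{K,s}^{(t)},y_i\xiv_i\rangle\bigr)$ is preserved because ``the second term $\dots$ at $t\ge T_{2,s,i}^{3}$ already dominates $\beta_{\muv}$.'' That is false near the left endpoint of your induction: for $i\in S_{s,K+,Q-}^{(0)}\cup S_{s,K-,Q+}^{(0)}$, one of the two noise inner products has just crossed zero by $T_{2,s,i}^{3}$, so its magnitude at that time can be as small as $\eta\norm{\xiv_i}_1\approx\eta\sigma_ps$. Under Condition~\ref{cond:main_condition}, $\eta=O(\poly(d^{-1}))$ is arbitrarily small, so $\eta\sigma_ps$ need not exceed $\beta_{\muv}\approx\sigma_0\norm{\muv}$; your domination claim therefore does not hold until an order-$\beta_{\muv}/(\eta\sigma_ps)$ number of further steps have elapsed, and the induction as you have written it has no control over the update sign in that initial window.

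The paper's argument (in the discussion immediately preceding the lemma, to which the proof refers with ``shown in the above discussion'') avoids the magnitude comparison entirely. It establishes that the \emph{combined} quantity $-\langle\wv_{K,s}^{(t)},\muv\rangle + \langle\wv_{K,s}^{(t)},y_i\xiv_i\rangle$ becomes nonnegative already at $T_{2,s,i}^{2}$ --- this follows from the case distinction in Lemma~\ref{lemma:query_and_key_not_too_close} --- and then shows that once nonnegative it stays nonnegative, because it increases by at least $\eta(\norm{\xiv_i}_1-\norm{\muv})>0$ per step whenever the key noise update is positive (the noise coordinate moves by $\eta\norm{\xiv_i}_1$ while the signal coordinate can move by at most $\eta\norm{\muv}$). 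The same monotonicity applies to $\langle\wv_{Q,s}^{(t)},\muv\rangle + \langle\wv_{Q,s}^{(t)},y_i\xiv_i\rangle$, which (after dividing out the near-$1/4$ softmax factor from Lemma~\ref{lemma:softmax_concentrate_on_1/2}) controls the key-noise update. Replacing your ``dominates $\beta_{\muv}$'' step by this combined-quantity monotonicity, keyed to its nonnegativity at $T_{2,s,i}^{2}$ rather than to a pointwise magnitude bound, closes the gap. Your $C_1$ bookkeeping flag is not in fact an issue, since $T_{2}^{+}<T_{3}^{+}$ and Lemma~\ref{lemma:softmax_concentrate_on_1/2} covers $t\le T_{3}^{+}$.
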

\begin{proof}
The first two lines are shown in the above discussion. For the last line,
for the lower bound, suppose $\langle \wv^{(T_{2,s,i}^{3})}_{Q,s}, y_{i}\xiv_i\rangle > 0$,
we have
\begin{align*}
    &~ \langle \wv^{(t)}_{Q,s}, y_{i}\xiv_i\rangle \\
    \geq &~ (t-3T_{2}')\eta\sigma_ps/\sqrt{2} + \langle \wv^{(3T_{2}')}_{Q,s}, y_{i}\xiv_i\rangle \\
    \geq &~ (t-3T_{2}')\eta\sigma_ps/\sqrt{2} \\
    \geq &~ t\eta\sigma_ps/2,
\end{align*}
where the first step is by Lemma~\ref{lemma:noise_magnitude}, 
the second step is by $3T_{2}' \geq T_{2,s,i}^{3}$, 
the third step is by $t \geq 12T_{2}'$.
For the upper bound, we have
\begin{align*}
    \abs{\langle \wv^{(t)}_{Q,s}, y_{i}\xiv_i\rangle} 
    \leq \abs{\langle \wv^{(0)}_{Q,s}, y_{i}\xiv_i\rangle} + t\eta\sigma_ps 
    \leq 2t\eta\sigma_ps,
\end{align*}
where the second step is by $t \geq T_{2}'$.
\end{proof}

We next consider the samples $i\in S_{s, K+, Q+}^{(0)} \cup S_{s, K-, Q-}^{(0)}$ at initialization. 
The following lemma studies the update direction and magnitude of these neurons.
\begin{lemma}
\label{lemma:magnitude_of_S_K+,Q+_and_S_K-,Q-_from_0_to_T_2^+}
For all $s\in[m_k]$, $i\in S_{s, K+, Q+}^{(0)} \cup S_{s, K-, Q-}^{(0)}$ and $t \in [0, T_{2}^{+}]$, we have
\begin{align*}
    &~ \langle \wv^{(t+1)}_{Q,s}, y_i\xiv_i\rangle 
    = \langle \wv^{(t)}_{Q,s}, y_i\xiv_i\rangle 
    + \sgn(\langle \wv^{(t)}_{Q,s}, y_i\xiv_i\rangle) \eta\norm{\xiv_i}_1, \\
    &~ \langle \wv^{(t+1)}_{K,s}, y_i\xiv_i\rangle 
    = \langle \wv^{(t)}_{K,s}, y_i\xiv_i\rangle 
    + \sgn(\langle \wv^{(t)}_{K,s}, y_i\xiv_i\rangle) \eta\norm{\xiv_i}_1,
\end{align*}
and for all $t \in [T_{2}', T_{2}^{+}]$
\begin{align*}
    &~ t\eta\sigma_ps/2
    \leq \abs{\langle \wv^{(t)}_{Q,s}, y_{i}\xiv_i\rangle},
    \abs{\langle \wv^{(t)}_{K,s}, y_{i}\xiv_i\rangle}
    \leq 2t\eta\sigma_ps.
\end{align*}
\end{lemma}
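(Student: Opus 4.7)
The plan is to reduce to the induction $\langle \wv_{Q,s}^{(t)}, y_i\xiv_i\rangle, \langle \wv_{K,s}^{(t)}, y_i\xiv_i\rangle$ never flip sign and that each update has magnitude exactly $\eta\|\xiv_i\|_1$. By symmetry we may assume we are in $S_{s,K+,Q+}^{(0)}$, i.e.\ both inner products are positive at $t=0$. The base case $t=0$ was essentially already shown at the end of the proof of Lemma~\ref{lemma:order_of_T_2_family}: via Lemma~\ref{lemma:network_initialization_query_key_noise_anti_concentration_vs_qk_feature} the initial query/key noise dominate $\beta_{\muv}$ by a factor of $2$, so the right-hand sides of \eqref{eq:qxi_update_sign} and \eqref{eq:kxi_update_sign} are both strictly positive at $t=0$ (for the key update we additionally use Lemma~\ref{lemma:network_initialization_softmax_magnitude} to place the softmax factors near $1/2$).

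Now suppose inductively that for all $t'\leq t\leq T_2^{+}$ both inner products are positive and each step added $+\eta\|\xiv_i\|_1$. Then $\langle \wv^{(t)}_{K,s}, y_i\xiv_i\rangle \geq \langle \wv^{(0)}_{K,s}, y_i\xiv_i\rangle > 0$ and $\langle \wv^{(t)}_{Q,s}, y_i\xiv_i\rangle \geq \langle \wv^{(0)}_{Q,s}, y_i\xiv_i\rangle > 0$, while by Eqs.~\eqref{eq:query_key_feature_update_constant_magnitude}--\eqref{eq:query_key_noise_update_constant_magnitude} the key/query signals satisfy
\begin{align*}
|\langle \wv^{(t)}_{K,s}, \muv\rangle| \leq \beta_{\muv}+t\eta\|\muv\|,\qquad
|\langle \wv^{(t)}_{Q,s}, \muv\rangle| \leq \beta_{\muv}+t\eta\|\muv\|.
\end{align*}
For the query update, I plug into \eqref{eq:qxi_update_sign}: I need $\langle \wv^{(t)}_{K,s}, y_i\xiv_i\rangle > \langle \wv^{(t)}_{K,s}, \muv\rangle$, which I will obtain by comparing $\langle \wv^{(0)}_{K,s}, y_i\xiv_i\rangle \geq 2\beta_{\muv}$ (Lemma~\ref{lemma:network_initialization_query_key_noise_anti_concentration_vs_qk_feature}) plus $t\eta\|\xiv_i\|_1$ against $\beta_{\muv}+t\eta\|\muv\|$; since $\|\xiv_i\|_1 \gg \|\muv\|$ by Lemma~\ref{lemma:noise_magnitude} and the noise-strength condition in Condition~\ref{cond:main_condition}, this holds with room to spare. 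For the key update, I apply \eqref{eq:kxi_update_sign} combined with the softmax concentration Lemma~\ref{lemma:softmax_concentrate_on_1/2}, so the two softmax products differ by $o(1)$, reducing the comparison to the same query-noise vs.\ query-signal inequality just used.

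For the magnitude bounds at $t\in[T_2',T_2^{+}]$, since the induction has shown each step contributes exactly $\pm\eta\|\xiv_i\|_1$ in the same direction, I will write
\begin{align*}
t\eta\|\xiv_i\|_1 \;\leq\; |\langle \wv^{(t)}_{Q,s}, y_i\xiv_i\rangle| \;\leq\; \beta_{\xiv}+t\eta\|\xiv_i\|_1,
\end{align*}
and plug in $\sigma_p s/\sqrt{2}\leq \|\xiv_i\|_1\leq \sigma_p s$ from Lemma~\ref{lemma:noise_magnitude}. The lower bound then gives $\geq t\eta\sigma_p s/\sqrt{2}\geq t\eta\sigma_p s/2$; for the upper bound I use $t\geq T_2' = \sqrt{2}\beta_{\xiv}\eta^{-1}\sigma_p^{-1}s^{-1}$ to absorb $\beta_{\xiv}\leq t\eta\sigma_p s/\sqrt{2}$, yielding $\leq 2t\eta\sigma_p s$. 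The identical argument works for the key noise.

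The main obstacle is, I expect, not the inductive step itself (which is essentially already present in the first case of Lemma~\ref{lemma:magnitude_of_S_K-,Q+_and_S_K+,Q-_from_0_to_T_2^+}, obtained by specializing to aligned initial signs) but rather bookkeeping: one must verify that the softmax concentration $s_{i,ab}^{(t)}=1/2\pm o(1)$ persists throughout $[0,T_2^{+}]$, which requires a bound on the accumulated change in the pre-softmax quantities, and that the noise vs.\ signal comparison never degrades even as both terms grow. Both reduce to $\sigma_p s/\|\muv\|$ being large, a scale gap guaranteed by Condition~\ref{cond:main_condition}. Unlike the opposite-sign case handled earlier, here there is no intermediate phase to analyze—the dynamics are monotone from the very first step—so the argument is essentially the same as, and strictly simpler than, the second half of Lemma~\ref{lemma:magnitude_of_S_K-,Q+_and_S_K+,Q-_from_0_to_T_2^+} after the alignment time $T_{2,s,i}^{3}$, with $T_{2,s,i}^{3}$ replaced by $0$.
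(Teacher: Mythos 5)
Your proof is correct and takes essentially the same approach as the paper: induct on $t$ to show the update direction never flips for initially aligned neurons, using Lemma~\ref{lemma:network_initialization_query_key_noise_anti_concentration_vs_qk_feature} for the base case, the noise-dominates-signal inequality (driven by $\|\xiv_i\|_1 \gg \|\muv\|$) together with softmax concentration for the inductive step, and then read off the magnitude bounds from the accumulated $\pm\eta\|\xiv_i\|_1$ increments with $T_2'$ absorbing the initialization term. The only cosmetic difference is that you track both the noise and signal inner products explicitly against their initial values, whereas the paper phrases the inductive step as a telescoping monotonicity of the combined gradient-determining quantity; the two are equivalent.
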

\begin{proof}
The first two lines are shown in the above discussion. 
For the last line, note that without loss of generality, we can assume that $i\in S_{s, K+, Q+}^{(0)}$. 
We can show that the sign of update of $\langle \wv^{(t)}_{Q,s}, y_{i}\xiv_i\rangle$ and $\langle \wv^{(t)}_{K,s}, y_{i}\xiv_i\rangle$ are the same as their sign at initialization and would not change for $t \in [0, T_{2}^{+}]$. Formally
\begin{align*}
    \langle \wv^{(t+1)}_{K,s}, y_{i}\xiv_i\rangle
    - \langle \wv^{(t)}_{K,s}, y_{i}\xiv_i\rangle =_{\sgn} \langle \wv^{(0)}_{K,s}, y_{i}\xiv_i\rangle, \\
    \langle \wv^{(t+1)}_{Q,s}, y_{i}\xiv_i\rangle
    - \langle \wv^{(t)}_{Q,s}, y_{i}\xiv_i\rangle =_{\sgn} \langle \wv^{(0)}_{Q,s}, y_{i}\xiv_i\rangle, 
\end{align*}
for all $0 \leq t \leq T_{2}^{+}$.
Similar to the proof in Lemma~\ref{lemma:order_of_T_2_family}, the claim holds at $t=0$. Suppose at $t \leq t' \in [0, T_{2}^{+}]$, the induction hypothesis holds, then we have
\begin{align*}
    \langle \wv^{(t'+1)}_{K,s}, y_{i}\xiv_i\rangle 
    = \langle \wv^{(t')}_{K,s}, y_{i}\xiv_i\rangle + \eta\norm{\xiv_i}_1 
    \geq \langle \wv^{(t')}_{K,s}, y_{i}\xiv_i\rangle,  \\
    \langle \wv^{(t'+1)}_{Q,s}, y_{i}\xiv_i\rangle 
    = \langle \wv^{(t')}_{Q,s}, y_{i}\xiv_i\rangle + \eta\norm{\xiv_i}_1
    \geq \langle \wv^{(t')}_{Q,s}, y_{i}\xiv_i\rangle,
\end{align*}
which imply
\begin{align*}
    &~ - \langle \wv^{(t'+1)}_{K,s}, \muv\rangle
    + \langle \wv^{(t'+1)}_{K,s}, y_{i}\xiv_i\rangle \\
    \geq &~ - \langle \wv^{(t')}_{K,s}, \muv\rangle
    + \langle \wv^{(t')}_{K,s}, y_{i}\xiv_i\rangle + \eta(\norm{\xiv_i}_1 - \norm{\muv}) \geq 0, 
\end{align*}
and
\begin{align*}
    &~ s_{i,11}^{(t'+1)}s_{i,12}^{(t'+1)} \langle \wv^{(t'+1)}_{Q,s}, \muv\rangle 
    + s_{i,21}^{(t'+1)}s_{i,22}^{(t'+1)} \langle \wv^{(t'+1)}_{Q,s}, y_{i}\xiv_i\rangle \\
    \geq &~ \frac{1}{4}(1\pm o(1)) \cdot (\langle \wv^{(t')}_{Q,s}, \muv\rangle (1+o(1))
    + \langle \wv^{(t')}_{Q,s}, y_{i}\xiv_i\rangle \eta(\norm{\xiv_i}_1 - \norm{\muv}))
    \geq 0,
\end{align*}
where the first step is by Lemma~\ref{lemma:softmax_concentrate_on_1/2}.
Then we show the result for $t = t' + 1$ and prove the claim. 
Therefore, for the lower bound, we have
\begin{align*}
    \langle \wv^{(t)}_{K,s}, y_{i}\xiv_i\rangle
    = \langle \wv^{(0)}_{K,s}, y_{i}\xiv_i\rangle + t\eta\norm{\xiv_i}_1
    \geq t\eta\norm{\xiv_i}_1 \geq t\eta\sigma_ps/2,
\end{align*}
where the first step and second step follow from the claim above, 
the last step is by Lemma~\ref{lemma:noise_magnitude},
and for the upper bound, we have
\begin{align*}
    \langle \wv^{(t)}_{K,s}, y_{i}\xiv_i\rangle
    = \langle \wv^{(0)}_{K,s}, y_{i}\xiv_i\rangle + t\eta\norm{\xiv_i}_1
    \leq T_{2}'\eta\sigma_ps + t\eta\norm{\xiv_i}_1
    \leq 2t\eta\sigma_ps,
\end{align*}
where the first step is by the claim above, 
the second step is by the definition of $T_{2}'$, 
the last step is by $t \geq T_{2}'$ and Lemma~\ref{lemma:noise_magnitude}.
\end{proof}

\begin{remark}
Combining Lemma~\ref{lemma:magnitude_of_S_K-,Q+_and_S_K+,Q-_from_0_to_T_2^+} and~\ref{lemma:magnitude_of_S_K+,Q+_and_S_K-,Q-_from_0_to_T_2^+}, we have for all $t \in [12T_{2}', T_{2}^{+}]$, $s\in[m_k]$ and $i\in[n]$
\begin{align}
    \label{eq:upper_and_lower_bound_for_query_key_noise_after_12T_2'}
    t\eta\sigma_ps/2
    \leq \abs{\langle \wv^{(t)}_{Q,s}, y_{i}\xiv_i\rangle},
    \abs{\langle \wv^{(t)}_{K,s}, y_{i}\xiv_i\rangle}
    \leq 2t\eta\sigma_ps.
\end{align}
Note that the upper bound also hold for all $t \geq T_{2}^{+}$ since the magnitude of update of query/key noise is always $\eta\norm{\xiv_i}_1$, i.e., Eq.~\eqref{eq:query_key_noise_update_constant_magnitude} always holds.
\end{remark}

Next, for given neuron $s$, we study the magnitude of the summation of query/key noise over all samples. The following lemma studies the dynamics and magnitude of query/key noise summation.
\begin{lemma}
    \label{lemma:magnitude_of_query_key_noise_sum_from_0_to_T_2^+}
    Let $T_{2} := 50nT_{2}'$. 
    We have
    \begin{enumerate}
        \item (update direction). Let $\epsilon_{s,i} := \sgn(\langle \wv_{Q,s}^{(3T_{2}')}, y_i\xiv_i\rangle)$. 
        We have the sign of the update of the summation of query/key noise would keep unchanged for $[3T_{2}', T_{2}^{+}]$. 
        Formally, for all $s\in[m_k]$ and $t \in [3T_{3}', T_{3}^{+}]$
        \begin{align*}
            & \sum_{i=1}^{n} \langle \wv_{K,s}^{(t+1)}, y_i\xiv_i\rangle 
            - \sum_{i=1}^{n} \langle \wv_{K,s}^{(t)}, y_i\xiv_i\rangle 
            = \sum_{i=1}^{n} \langle \wv_{Q,s}^{(t+1)}, y_i\xiv_i\rangle 
            - \sum_{i=1}^{n} \langle \wv_{Q,s}^{(t)}, y_i\xiv_i\rangle
            = \eta\sum_{i=1}^{n}\epsilon_{s,i}\norm{\xiv_i}_1.
        \end{align*}
        \item (magnitude of update). With probability at least $1 - O(m_k/\sqrt{n})$, for all $s\in[m_k]$
        \begin{align*}
            & \abs{\eta\sum_{i=1}^{n}\epsilon_{s,i}\norm{\xiv_i}_1} \geq \eta\sigma_ps/\sqrt{2}.
        \end{align*}
        \item (magnitude of query/key noise summation). For all $s\in[m_k]$ and $t\in[T_{2}, T_{2}^{+}]$
        \begin{align*}
            & \sgn(\eta\sum_{i=1}^{n}\epsilon_{s,i}\norm{\xiv_i}_1)\cdot
            \sum_{i=1}^{n}\langle \wv_{Q,s}^{(t)}, y_i\xiv_i\rangle,
        \sgn(\eta\sum_{i=1}^{n}\epsilon_{s,i}\norm{\xiv_i}_1)\cdot \sum_{i=1}^{n} \langle \wv_{K,s}^{(t)}, y_i\xiv_i\rangle
        \geq 2n\beta_{\muv}.
        \end{align*}
    \end{enumerate}
\end{lemma}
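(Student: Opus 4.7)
My plan is to handle the three parts in order, reusing heavily the sign-alignment results established earlier in Stage~II.

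\emph{Part (1): Update direction.} For any $s\in[m_k]$ and $i\in[n]$, Lemmas~\ref{lemma:magnitude_of_S_K-,Q+_and_S_K+,Q-_from_0_to_T_2^+} and~\ref{lemma:magnitude_of_S_K+,Q+_and_S_K-,Q-_from_0_to_T_2^+} imply that for $t\in[3T_2',T_2^+]$ the update magnitude is $\eta\|\xiv_i\|_1$ and the update sign equals $\sgn(\langle \wv_{Q,s}^{(t)},y_i\xiv_i\rangle) = \sgn(\langle \wv_{K,s}^{(t)},y_i\xiv_i\rangle) = \epsilon_{s,i}$, the last identity holding because once query and key noise align after $T_{2,s,i}^{3}\leq 3T_2'$, their common sign is preserved throughout $[3T_2',T_2^+]$. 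Summing over $i\in[n]$ immediately yields the stated identity for both the query and key summations.

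\emph{Part (2): Magnitude of a single update.} The key observation is that $\epsilon_{s,i}$ is determined solely by $(\langle \wv_{Q,s}^{(0)},y_i\xiv_i\rangle,\langle \wv_{K,s}^{(0)},y_i\xiv_i\rangle)$ together with the fixed quantities $(\langle \wv_{Q,s}^{(0)},\muv\rangle,\langle \wv_{K,s}^{(0)},\muv\rangle)$; this localization uses the disjoint-support property from Lemma~\ref{lemma:sparse_data_disjoint_support} and the dichotomy in Lemma~\ref{lemma:query_and_key_not_too_close}. Because the supports $\mathcal{B}_i$ are pairwise disjoint, the projections $\{\langle \wv_{Q,s}^{(0)},y_i\xiv_i\rangle\}_{i\in[n]}$ (and likewise for $\wv_{K,s}$) depend on disjoint coordinates of the Gaussian weight, hence are mutually independent conditional on $\{\xiv_i\}$. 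A case analysis on the four sign patterns of $(\langle \wv_{Q,s}^{(0)},y_i\xiv_i\rangle,\langle \wv_{K,s}^{(0)},y_i\xiv_i\rangle)$ combined with symmetry of the Gaussian distribution shows $\Pb[\epsilon_{s,i}=+1]=\Pb[\epsilon_{s,i}=-1]=1/2$ and that $\{\epsilon_{s,i}\}_{i\in[n]}$ are i.i.d.\ Rademacher. Standard anti-concentration for Rademacher sums (e.g., Stirling on the central binomial coefficient) gives $\Pb[\sum_{i\in[n]}\epsilon_{s,i}=0]=O(1/\sqrt{n})$, so with probability $\geq 1-O(1/\sqrt{n})$ there is at least a unit imbalance, $\bigl||S_+|-|S_-|\bigr|\geq 1$ where $S_{\pm}:=\{i:\epsilon_{s,i}=\pm 1\}$. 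Applying Lemma~\ref{lemma:n-noise-vs-n+1-noise-magnitude} with $c=1$ yields $\bigl|\sum_{i}\epsilon_{s,i}\|\xiv_i\|_1\bigr|\geq \sigma_p s/\sqrt{2}$. A union bound over $s\in[m_k]$ gives the claim with failure probability $O(m_k/\sqrt{n})$, which is $o(1)$ by $n=\Omega(m_k^4)$.

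\emph{Part (3): Accumulated magnitude.} By Part~(1), for $t\in[3T_2',T_2^+]$ each step shifts the query-noise sum by the fixed quantity $\eta\sum_i \epsilon_{s,i}\|\xiv_i\|_1$ (and analogously for keys), whose magnitude is at least $\eta\sigma_p s/\sqrt{2}$ by Part~(2). The initial value at $3T_2'$ is bounded in magnitude by $3nT_2'\eta\sigma_p s$ using Eq.~\eqref{eq:upper_and_lower_bound_for_query_key_noise_after_12T_2'} (or by initialization plus drift). Thus for $t\geq T_2=50nT_2'$, the accumulated signed sum in the favorable direction is at least
\[
(t-3T_2')\cdot\eta\sigma_p s/\sqrt{2}-3nT_2'\eta\sigma_p s \;\geq\; \tfrac{1}{\sqrt{2}}\cdot 47nT_2'\eta\sigma_p s-3nT_2'\eta\sigma_p s \;\geq\; 20n\cdot T_2'\eta\sigma_p s = 20\sqrt{2}\,n\beta_{\xiv},
\]
using $T_2'=\sqrt{2}\beta_{\xiv}\eta^{-1}\sigma_p^{-1}s^{-1}$. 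Since $\beta_{\xiv}=\Omega(\sigma_0\sigma_p\sqrt{s})\gg\beta_{\muv}=O(\sigma_0\|\muv\|)$ under Condition~\ref{cond:main_condition}, this comfortably exceeds $2n\beta_{\muv}$.

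\emph{Expected main obstacle.} The delicate step is Part~(2): establishing that $\{\epsilon_{s,i}\}_i$ is genuinely i.i.d.\ Rademacher requires carefully tracing that the sign at time $3T_2'$ depends on no cross-sample interactions during Stage~II, which in turn relies on softmax staying near $1/2$ and on the disjoint-support structure that localizes the update to each $y_i\xiv_i$ direction. Once that localization is confirmed, the anti-concentration and the conversion via Lemma~\ref{lemma:n-noise-vs-n+1-noise-magnitude} are routine.
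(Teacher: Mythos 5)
Your overall approach mirrors the paper's: Part (1) aggregates the per-sample update direction established in Lemmas~\ref{lemma:magnitude_of_S_K-,Q+_and_S_K+,Q-_from_0_to_T_2^+} and~\ref{lemma:magnitude_of_S_K+,Q+_and_S_K-,Q-_from_0_to_T_2^+}, Part (2) is an anti-concentration argument plus Lemma~\ref{lemma:n-noise-vs-n+1-noise-magnitude}, and Part (3) is a one-step-at-a-time accumulation from a crude bound at the start of the interval. Part (1) and Part (3) are sound (modulo small and irrelevant constant discrepancies in Part (3)).

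However, Part (2) has a genuine gap. You assert that $\{\epsilon_{s,i}\}_{i\in[n]}$ are i.i.d.\ Rademacher. This is false: while the unconditional marginals are indeed $\Pb[\epsilon_{s,i}=\pm1]=1/2$ by Gaussian symmetry, the variables are \emph{not} independent, because every $\epsilon_{s,i}$ also depends on the shared feature projections $\langle \wv_{Q,s}^{(0)},\muv\rangle$ and $\langle \wv_{K,s}^{(0)},\muv\rangle$ through the dichotomy of Lemma~\ref{lemma:query_and_key_not_too_close}. Your ``conditional on $\{\xiv_i\}$'' statement makes the \emph{noise projections} independent, but $\epsilon_{s,i}$ is not a function of the noise projection alone. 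The correct argument (this is the content of Lemma~\ref{lemma:proba_fact_about_X_s,i}) is to additionally condition on $\mu_s := \langle \wv_{K,s}^{(0)}-\wv_{Q,s}^{(0)}, \muv\rangle$: conditional on $\mu_s$ and $\{\xiv_i\}$, the $\epsilon_{s,i}$ \emph{are} independent, but each is Bernoulli with a parameter $p_{\mu_s}$ that generically differs from $1/2$. The anti-concentration conclusion survives because
\begin{align*}
\Pb\!\left[\textstyle\sum_i \epsilon_{s,i}=0\;\middle|\;\mu_s\right] \;=\; \binom{n}{n/2}\,p_{\mu_s}^{n/2}(1-p_{\mu_s})^{n/2} \;\leq\; \binom{n}{n/2}\,2^{-n} \;=\; O\!\left(n^{-1/2}\right),
\end{align*}
using $p(1-p)\le 1/4$; integrating over $\mu_s$ then gives the same unconditional bound. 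So the statement you want is true and the fix is local, but as written the claim ``i.i.d.\ Rademacher'' is incorrect and you would need the conditioning argument to make the anti-concentration step rigorous.
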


To prove this lemma, we need the following lemma which studies the property of $\epsilon_{s,i}$.
\begin{lemma}
    \label{lemma:proba_fact_about_X_s,i}
    Let
    \begin{align*}
        & E_{s,+} :=  \set{i \in [n]: \langle \wv_{Q,s}^{(T_{2})}, y_i\xiv_i\rangle > 0}, \\
        & E_{s,-} :=  \set{i \in [n]: \langle \wv_{Q,s}^{(T_{2})}, y_i\xiv_i\rangle < 0}.
    \end{align*}
    Let 
    \begin{align*}
        A_{s,i} := \set{\omega \in \Omega: -\langle \wv_{K,s}^{(0)}, y_i\xiv_i\rangle + \langle \wv_{K,s}^{(0)}, \muv\rangle < \langle \wv_{Q,s}^{(0)}, y_i\xiv_i\rangle + \langle \wv_{Q,s}^{(0)}, \muv\rangle}.
    \end{align*}
    Let
    \begin{align*}
        X_{s,i} := & \mathds{1}[i \in S_{s,K+,Q+}] + \mathds{1}[i\in S_{s,K-,Q+}, A_{s,i}] + \mathds{1}[i\in S_{s,K+,Q-}, A_{s,i}] \\
        - & \mathds{1}[i \in S_{s,K-,Q-}] - \mathds{1}[i\in S_{s,K-,Q+}, A^{c}_{s,i}] - \mathds{1}[i\in S_{s,K+,Q-}, A^{c}_{s,i}].
    \end{align*}
    Then we have 
    \begin{enumerate}
        \item (The distribution of $X_{s,i}$). For all $s\in[m_k]$ and $i\in[n]$, 
        \begin{align*}
            X_{s,i} = \begin{cases}
            1, \text{w.p. 1/2} \\
            -1, \text{w.p. 1/2}
            \end{cases}.
        \end{align*}
        \item (The probability of equal set size). 
        \begin{align*}
            \Pb[\abs{E_{s,+}} - \abs{E_{s,-}} = 0] \leq O(n^{-1/2}).
        \end{align*}
        \item (The probability of lucky neurons).
        \begin{align*}
            \Pb[X_{s,i} > 0, \sum_{i=1}^{n} X_{s,i} > 0] \geq \frac{1}{4} - O(n^{-1/2}).
        \end{align*}
    \end{enumerate}
\end{lemma}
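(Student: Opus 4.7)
The plan is to prove Part~(1) by a sign-flip symmetry on the Gaussian initialization, and to reduce Parts~(2) and (3) to a symmetric simple random walk via a high-probability coupling to a cleaner proxy. For Part~(2), I will also invoke the Stage~II identification $\sgn(\langle\wv_{Q,s}^{(T_{2})}, y_i\xiv_i\rangle) = X_{s,i}$ (established earlier in the Stage~II argument on the high-probability event of Lemma~\ref{lemma:query_and_key_not_too_close}), so that $|E_{s,+}| - |E_{s,-}| = \sum_{i} X_{s,i}$ with high probability.

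\textbf{Part (1).} I would apply the transformation $T:(\wv_{Q,s}^{(0)}, \wv_{K,s}^{(0)}) \mapsto (-\wv_{Q,s}^{(0)}, -\wv_{K,s}^{(0)})$, under which the joint Gaussian law is invariant. Writing $U_i = \langle\wv_{K,s}^{(0)}, y_i\xiv_i\rangle$, $V_i = \langle\wv_{Q,s}^{(0)}, y_i\xiv_i\rangle$, $a = \langle\wv_{K,s}^{(0)}, \muv\rangle$, $b = \langle\wv_{Q,s}^{(0)}, \muv\rangle$, all four flip sign under $T$; hence $S_{s,K+,Q+}\leftrightarrow S_{s,K-,Q-}$ and $S_{s,K-,Q+}\leftrightarrow S_{s,K+,Q-}$, and the event $A_{s,i}=\{U_i+V_i > a-b\}$ swaps with its complement. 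Substituting these swaps into the six-term definition of $X_{s,i}$ maps each $+1$ summand to a $-1$ summand and vice versa, so $X_{s,i} \stackrel{d}{=} -X_{s,i}$; since the six defining events partition $\Omega$ up to a Lebesgue-null set, $X_{s,i}\in\{\pm 1\}$ almost surely and the uniform Rademacher distribution follows.

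\textbf{Parts (2) and (3).} I would introduce the proxy $Y_{s,i} := \sgn(\langle\wv_{Q,s}^{(0)}+\wv_{K,s}^{(0)}, y_i\xiv_i\rangle) = \sgn(U_i+V_i)$. A direct case check shows $X_{s,i}=Y_{s,i}$ on $S_{s,K\pm,Q\pm}$, while on the two mixed classes $X_{s,i}=\sgn(U_i+V_i-(a-b))$, so $X_{s,i}\neq Y_{s,i}$ only when $U_i+V_i$ lies strictly between $0$ and $a-b$. Since $|a-b| \leq 2\beta_{\muv} = \Tilde{O}(\sigma_0\norm{\muv})$ while $U_i+V_i$ conditional on $\xiv_i$ is $N(0, 2\sigma_0^2\norm{\xiv_i}^2)$ with standard deviation $\Theta(\sigma_0\sigma_p\sqrt{s})$, Gaussian anti-concentration (Lemma~\ref{lemma:gaussian-anti-concentration}) combined with $\sigma_p\sqrt{s} = \Omega(n^2\norm{\muv})$ from Condition~\ref{cond:main_condition} gives $\Pb[X_{s,i}\neq Y_{s,i}] = O(n^{-2})$ per $i$, hence $\Pb[\exists i:X_{s,i}\neq Y_{s,i}] = O(n^{-1})$ by a union bound.

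The crucial structural feature of the proxy is that $Y_{s,i} = y_i\sgn(\langle w, \xiv_i\rangle)$ with $w := \wv_{Q,s}^{(0)}+\wv_{K,s}^{(0)}$; conditional on $w$, the $Y_{s,i}$ are i.i.d.\ symmetric Rademacher in $i$ (since $y_i$ is an independent Rademacher factor and the $\xiv_i$ are i.i.d.). Hence $\sum_i Y_{s,i}\mid w$ is a simple random walk, and Stirling's formula yields $\Pb[\sum_i Y_{s,i} = 0\mid w] \leq \binom{n}{\lfloor n/2\rfloor}2^{-n} = O(n^{-1/2})$; integrating over $w$ and adding the $O(n^{-1})$ coupling error proves Part~(2). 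For Part~(3), the same conditional independence gives
\begin{align*}
\Pb[Y_{s,i}=+1,\;\textstyle\sum_j Y_{s,j} > 0 \mid w] = \tfrac{1}{2}\,\Pb[\textstyle\sum_{j\neq i} Y_{s,j} \geq 0 \mid w] \geq \tfrac{1}{4},
\end{align*}
using the Rademacher symmetry $\Pb[\sum_{j\neq i} Y_{s,j} \geq 0\mid w] \geq 1/2$, and subtracting the $O(n^{-1})$ coupling error delivers $\geq 1/4 - O(n^{-1/2})$. The main obstacle I anticipate is the coupling step: the six-term, event-dependent definition of $X_{s,i}$ must be carefully reconciled with the clean linear form of $Y_{s,i}$ on each of the four sign classes, and the bound $\Pb[X \neq Y] = O(n^{-2})$ is only viable because Condition~\ref{cond:main_condition}'s strong signal-to-noise gap $\sigma_p\sqrt{s}\gg\norm{\muv}$ forces $|U_i+V_i|$ to dominate $|a-b|$ with sufficiently high probability.
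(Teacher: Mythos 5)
Your proof is correct, and for Parts (2) and (3) it takes a genuinely different route from the paper's. For Part (1) the paper computes $\Pb[X_{s,i}=1]$ by conditioning on $\mu_s = \langle\wv_{K,s}^{(0)}-\wv_{Q,s}^{(0)},\muv\rangle$ and showing the map $\mu\mapsto\phi(\mu,\xiv_i)-1/8$ is odd, then integrating against the symmetric law of $\mu_s$; your global sign-flip $T:(\wv_{Q,s}^{(0)},\wv_{K,s}^{(0)})\mapsto(-\wv_{Q,s}^{(0)},-\wv_{K,s}^{(0)})$, which swaps the four sign classes and exchanges $A_{s,i}$ with $A_{s,i}^c$, yields the same Rademacher distribution in one stroke and is arguably cleaner. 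The bigger divergence is in Parts (2)--(3). The paper keeps the exact (but skewed) Bernoulli variables $X_{s,i}$: it notes that conditional on $\mu_s$ they are i.i.d.\ with parameter $p_{\mu_s}$, uses $p_\mu(1-p_\mu)\le 1/4$ to get $\Pb[\sum X_{s,i}=0]\le\binom{n}{n/2}2^{-n}$, and for Part (3) exploits the anti-symmetry $p_\mu = 1-p_{-\mu}$ together with the inequality $I_1+I_4-I_2-I_3\ge 0$ to force $I_1\ge\frac14(1-\Pb[\sum_{j\ne i}X_{s,j}=0])$. You instead replace $X_{s,i}$ by the proxy $Y_{s,i}=\sgn(U_i+V_i)=y_i\sgn(\langle\wv_{Q,s}^{(0)}+\wv_{K,s}^{(0)},\xiv_i\rangle)$, which is \emph{exactly} conditionally i.i.d.\ Rademacher given the summed weight, so Parts (2)--(3) reduce to elementary facts about a symmetric simple random walk, and the $X$-vs-$Y$ discrepancy is killed by Gaussian anti-concentration using $\sigma_p\sqrt{s}=\Omega(n^2\norm{\muv})$. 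Your route is more transparent and gives a slightly sharper Part (3) constant ($1/4 - O(n^{-1})$ before rounding), but it leans on the signal-to-noise condition in Condition~\ref{cond:main_condition}, whereas the paper's conditioning argument holds without it; both are valid in the regime where the lemma is applied. One small caveat to keep in mind when writing this out: the Stage~II identification $\sgn(\langle\wv_{Q,s}^{(T_2)},y_i\xiv_i\rangle)=X_{s,i}$ that you (and the paper) invoke for Part (2) holds only on the event of Lemma~\ref{lemma:query_and_key_not_too_close} intersected with the ``sign-stability'' events that exclude the boundary, and that event should be accounted for explicitly in the final $O(n^{-1/2})$ bookkeeping.
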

\begin{proof}
By the definition of $X_{s,i}$ and $\epsilon_{s,i}$, we have following equivalent formulations
\begin{align*}
    & \abs{E_{s,+}} 
    = \abs{\set{i \in [n]: \epsilon_{s,i} > 0}}
    = \sum_{i=1}^{n}\mathds{1}[i \in S_{s,K+,Q+}] + \mathds{1}[i\in S_{s,K-,Q+}, A_{s,i}] + \mathds{1}[i\in S_{s,K+,Q-}, A_{s,i}], \\
    & \abs{E_{s,-}}
    = \abs{\set{i \in [n]: \epsilon_{s,i} < 0}}
    = \sum_{i=1}^{n}\mathds{1}[i \in S_{s,K-,Q-}] + \mathds{1}[i\in S_{s,K-,Q+}, A^c_{s,i}] + \mathds{1}[i\in S_{s,K+,Q-}, A^c_{s,i}].
\end{align*}
Let 
$Q_{s,i} := \langle \wv_{Q,s}^{(0)}, y_i\xiv_i\rangle$, 
$K_{s,i} := \langle \wv_{k,s}^{(0)}, y_i\xiv_i\rangle$,
$\mu_{s} := \langle \wv_{K,s}^{(0)} - \wv_{Q,s}^{(0)}, \muv\rangle$. 
Then we have $\mu_{s} \sim N(0, \sigma_0^2\norm{\muv}^2)$, and $Q_{s,i}, K_{s,i}  | y_i\xiv_i \sim N(0, \sigma_0^2\norm{\xiv_i}^2)$.
Denote the distribution of $\xiv$ and $\mu$ by $P_{\xiv}$ and $P_{\mu}$, respectively.

Let $p(\mu, \xiv_i) := \Pb[X_{s,i} = 1 | \mu_s = \mu, \xiv_i]$, 
$p_{\mu} := \Pb[X_{s,i} = 1 | \mu_s = \mu]$. 

Let $\phi(\mu, \xiv_i) := \Pb[i\in S_{s,K-,Q+}, A_{s,i}| \mu_s = \mu, \xiv_i]$, 
$\phi_{\mu} := \Pb[i\in S_{s,K-,Q+}, A_{s,i}| \mu_s = \mu]$. 
By the symmetry of $S_{s,K-,Q+}$ and $S_{s,K+,Q-}$, we have $\Pb[i\in S_{s,K-,Q+}, A_{s,i} | \mu, \xiv_i] = \Pb[i\in S_{s,K+,Q-}, A_{s,i} | \mu, \xiv_i]$, which gives
\begin{align*}
    & p(\mu, \xiv_i) = \frac{1}{4} + 2\phi(\mu, \xiv_i), \\
    & p_{\mu} = \int p(\mu, \xiv_i) dP_{\xiv_i}
    = \int \frac{1}{4} + 2\phi(\mu, \xiv_i) dP_{\xiv_i} = \frac{1}{4 }+ 2\phi_{\mu}.
\end{align*}
By the symmetry of the gaussian density, for any $\xiv_i$, we have
\begin{align*}
    \phi(\mu, \xiv_i) 
    &= \int_{q>0, k<0, q+k > \mu} \exp(-\frac{q^2+k^2}{2\sigma_0^2\norm{\xiv_i}^2}) dqdk \\
    &= \int_{q>0, k<0, q+k < -\mu} \exp(-\frac{q^2+k^2}{2\sigma_0^2\norm{\xiv_i}^2}) dqdk \\
    &= \frac{1}{4} - \int_{q>0, k<0, q+k > -\mu} \exp(-\frac{q^2+k^2}{2\sigma_0^2\norm{\xiv_i}^2}) dqdk \\
    &= \frac{1}{4} - \phi(-\mu, \xiv_i),
\end{align*}
which implies $\phi(\mu, \xiv_i) - \frac{1}{8}$ is an odd function, and thus
\begin{align*}
    \Pb[i\in S_{s,K-,Q+}, A_{s,i}| \xiv_i] = \int \phi(\mu, \xiv_i) dP_{\mu} = \frac{1}{8},
\end{align*}
thus $\Pb[i\in S_{s,K-,Q+}, A_{s,i}] = \frac{1}{8}$ and $\Pb[X_{s,i} = 1] = \frac{1}{2}$, which proves the first result.

Note that given $\mu$, $X_{s,i}$ are independent for $i\in[n]$, then
\begin{align*}
    \Pb[\abs{E_{s,+}} - \abs{E_{s,-}} = 0] 
    & = \Pb[\sum_{i=1}^{n} X_{s,i} = 0] \\
    & = \int \binom{n}{n/2}p_{\mu}^{n/2}(1 - p_{\mu})^{n/2} dP_{\mu} \\
    & \leq \int \binom{n}{n/2}\frac{1}{2^n} dP_{\mu} \\
    & \leq \binom{n}{n/2}\frac{1}{2^n} \leq \frac{10}{\sqrt{n}},
\end{align*}
which proves the second result. 

Recall that
\begin{align*}
    &~ \phi(\mu, \xiv_i) = 1/4 - \phi(-\mu, \xiv_i) ~\forall \xiv_i \\
    \Longrightarrow &~ \phi_{\mu} = 1/4 - \phi_{-\mu} \\
    \Longrightarrow &~ p_{\mu} = 1 - p_{-\mu}.
\end{align*}
The quantity of interest can be written as
\begin{align*}
    &~ \underbrace{\Pb[X_{s,i} > 0, \sum_{j\neq i} X_{s,i} > 0]}_{I_1}
    = \int \underbrace{\sum_{k=\lfloor (n-1)/2\rfloor + 1}^{n-1} \binom{n}{k}p_{\mu}^{k}(1 - p_{\mu})^{n-k}}_{p_{n-1,\mu}} \cdot p_{\mu} dP_{\mu}, \\
    &~ \underbrace{\Pb[X_{s,i} > 0, \sum_{j\neq i} X_{s,i} < 0] }_{I_2}
    = \int \underbrace{\sum_{k=\lfloor (n-1)/2\rfloor + 1}^{n-1} \binom{n}{k}p_{\mu}^{n-k}(1 - p_{\mu})^{k}}_{q_{n-1,\mu}} \cdot p_{\mu} dP_{\mu}, \\
    &~ \underbrace{\Pb[X_{s,i} < 0, \sum_{j\neq i} X_{s,i} > 0]}_{I_3}
    = \int \sum_{k=\lfloor (n-1)/2\rfloor + 1}^{n-1} \binom{n}{k}p_{\mu}^{k}(1 - p_{\mu})^{n-k} \cdot (1-p_{\mu}) dP_{\mu}, \\
    &~ \underbrace{\Pb[X_{s,i} < 0, \sum_{j\neq i} X_{s,i} < 0] }_{I_4}
    = \int \sum_{k=\lfloor (n-1)/2\rfloor + 1}^{n-1} \binom{n}{k}p_{\mu}^{n-k}(1 - p_{\mu})^{k} \cdot (1-p_{\mu}) dP_{\mu}.
\end{align*}
By the symmetry of $P_{\mu}$ and $p_{\mu}$, we have 
\begin{align*}
    &~ \Pb[X_{s,i} > 0, \sum_{j\neq i} X_{s,i} > 0] = \Pb[X_{s,i} < 0, \sum_{j\neq i} X_{s,i} < 0], \\
    &~ \Pb[X_{s,i} > 0, \sum_{j\neq i} X_{s,i} < 0] = \Pb[X_{s,i} < 0, \sum_{j\neq i} X_{s,i} > 0].
\end{align*}
Since
\begin{align*}
    I_1 + I_4 - I_2 - I_3 
    = \int (p_{\mu} - (1 - p_{\mu}))(p_{n-1, \mu} - q_{n-1,\mu}) dP_{\mu}
    \geq 0,
\end{align*}
where the last step is by $p_{n-1, \mu} > q_{n-1, \mu}$ when $p_{\mu} \geq \frac{1}{2}$, then we have
\begin{align*}
    \Pb[X_{s,i} > 0, \sum_{i=1}^{n} X_{s,i} > 0] 
    \geq I_1 \geq \frac{1}{4}(1 - \Pb[\sum_{j\neq i} X_{s,i} = 0]) \geq \frac{1}{4} - O(n^{-1/2}),
\end{align*}
where the second step is by $I_1 = I_4 \geq I_2 = I_3$, the third step is by the second result above, which concludes the proof.
\end{proof}



\begin{proof}[Proof of Lemma~\ref{lemma:magnitude_of_query_key_noise_sum_from_0_to_T_2^+}]
By Lemma~\ref{lemma:magnitude_of_S_K-,Q+_and_S_K+,Q-_from_0_to_T_2^+} and~\ref{lemma:magnitude_of_S_K+,Q+_and_S_K-,Q-_from_0_to_T_2^+}, 
we have the update direction of each element $\langle \wv_{K,s}^{(t)}, y_i\xiv_i\rangle$ and $\langle \wv_{Q,s}^{(t)}, y_i\xiv_i\rangle$ is unchanged for $t \in [3T_{2}', T_{2}^{+}]$, which implies the first result.

By Lemma~\ref{lemma:proba_fact_about_X_s,i}, with probability $1 - O(m_k/\sqrt{n})$, we have for all $s\in[m_k]$, 
$\abs{E_{s,+}} - \abs{E_{s,-}} \neq 0$. 
Below, suppose this fact holds, and all the results are based on this fact which hold with high probability.

WLOG, for fixed neuron $s$, suppose $\abs{E_{s,+}} - \abs{E_{s,-}} \geq 1$, then by Lemma~\ref{lemma:n-noise-vs-n+1-noise-magnitude}, we have
\begin{align}
    \label{eq:lower_bound_for_update_of_the_sum_of_noise}
    \eta\sum_{i=1}^{n}\epsilon_{s,i}\norm{\xiv_i}_1
    \geq \eta\sigma_ps/\sqrt{2}.
\end{align} 
Finally, we note that at $t = 12T_{2}'$, we have
\begin{align*}
    \sum_{i=1}^{n} \langle \wv_{Q,s}^{(12T_{3}')}, y_i\xiv_i\rangle,  
    \sum_{i=1}^{n} \langle \wv_{K,s}^{(12T_{3}')}, y_i\xiv_i\rangle
    \geq -24T_{3}'n\eta\sigma_ps.
\end{align*}
By the definition of $T_{2}$, we have $T_{2} \geq (12 + 24\sqrt{2}n)T_{2}' + 2n\beta_{\muv}\eta^{-1}\sigma_p^{-1}s^{-1}$.
Then, for $t \in [T_{2}, T_{2}^{+}] $, we have
\begin{align}
    \label{eq:lower_bound_for_the_sum_of_noise_at_T_2}
    \sum_{i=1}^{n} \langle \wv_{Q,s}^{(t)}, y_i\xiv_i\rangle,
    \sum_{i=1}^{n} \langle \wv_{K,s}^{(t)}, y_i\xiv_i\rangle
    \geq 2n\beta_{\muv}.
\end{align}
Similarly, if $\abs{E_{s,+}} - \abs{E_{s,-}} \leq -1$ for some other neuron $s$, then for $t \in [T_{2}, T_{2}^{+}] $, we have
\begin{align*}
    \sum_{i=1}^{n} \langle \wv_{Q,s}^{(t)}, y_i\xiv_i\rangle,
    \sum_{i=1}^{n} \langle \wv_{K,s}^{(t)}, y_i\xiv_i\rangle
    \leq -2n\beta_{\muv}.
\end{align*}
\end{proof}

\subsection{Stage III}
\label{sec:proof_stage_iii}
Let 
\begin{align*}
    T_{3}^{+} := C_2\sqrt{\log(12m_k/\delta)}\sigma_0\eta^{-1},
\end{align*}
where $C_2 = \Theta(1)$ is a large constant. In this section, we will analyze the dynamics of $\langle \wv_{K, s}^{(t)}, y_i\xiv_i\rangle$, $\langle \wv_{Q, s}^{(t)}, y_i\xiv_i\rangle$, $\langle \wv_{Q, s}^{(t)}, \muv\rangle$ and $\langle \wv_{K, s}^{(t)}, \muv\rangle$ for all $s\in[m_k]$, $i\in[n]$ and $T_{2} \leq t \leq T_{3}^{+}$.
Generally, we already have controlled the direction and magnitude of the quantity $\sum_{i=1}^{n} \langle \wv_{Q,s}^{(t)}, y_i\xiv_i\rangle$ and $\sum_{i=1}^{n} \langle \wv_{K,s}^{(t)}, y_i\xiv_i\rangle$ at the timestep $T_{2}$, we can show the update direction of query/key noise is determined by these quantities.

For $t \leq T_{2}$, we want to show the change of $\langle \wv_{Q,s}^{(t)}, \muv\rangle$ is not much.
Actually, we can show that for $t \leq T_{2}^{+}$, $\langle \wv_{Q,s}^{(t)}, \muv\rangle$ and $\langle \wv_{K,s}^{(t)}, \muv\rangle$ are close to initialization. Formally, we have the following lemma.
\begin{lemma}
\label{lemma:query_and_key_feature_concentration_before_T_2^+}
For all $s\in[m_k]$ and $t \leq T_{2}^{+}$, we have 
\begin{align*}
    \langle \wv_{K,s}^{(t)}, \muv\rangle = 
    \langle \wv_{K,s}^{(t)}, \muv\rangle \cdot (1\pm o(1)), \\
    \langle \wv_{Q,s}^{(t)}, \muv\rangle = 
    \langle \wv_{Q,s}^{(t)}, \muv\rangle \cdot (1\pm o(1)).
\end{align*}
\end{lemma}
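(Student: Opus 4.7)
The plan is to prove Lemma~\ref{lemma:query_and_key_feature_concentration_before_T_2^+} by a simple drift-versus-initialization comparison, exploiting the fact that under SignGD the update magnitude for the query/key signals is exactly constant per step, so the total drift up to time $T_{2}^{+}$ is deterministic and can be compared against an anti-concentration lower bound on the initial value.

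First, I would invoke Eq.~\eqref{eq:query_key_feature_update_constant_magnitude}, which guarantees $\abs{\langle \wv_{K,s}^{(t+1)}-\wv_{K,s}^{(t)},\muv\rangle} = \eta\norm{\muv}$ and likewise for the query parameters, for every iteration $t$ and every $s\in[m_k]$. Applying the triangle inequality across the $t \leq T_{2}^{+}$ iterations yields
\begin{equation*}
    \abs{\langle \wv_{K,s}^{(t)}-\wv_{K,s}^{(0)},\muv\rangle} \leq t\eta\norm{\muv} \leq T_{2}^{+}\eta\norm{\muv} = C_{1}\sqrt{\log(12m_kn/\delta)}\,\sigma_{0} n s^{-1/2}\norm{\muv},
\end{equation*}
and symmetrically for $\wv_{Q,s}$. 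This is the full content of the drift estimate; no subtleties arise because sign updates eliminate any dependence on the actual gradient magnitude.

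Next, I would lower-bound the initial signal inner product using Lemma~\ref{lemma:network-initialization-query-key-feature-anti-concentration}, which, under Condition~\ref{cond:main_condition}, gives
\begin{equation*}
    \abs{\langle \wv_{K,s}^{(0)},\muv\rangle},\;\abs{\langle \wv_{Q,s}^{(0)},\muv\rangle}\;\geq\; \sigma_{0}ns^{-1/3}\norm{\muv}\qquad\forall\,s\in[m_k]
\end{equation*}
with high probability. Dividing the drift bound by this anti-concentration lower bound yields a relative error of order
\begin{equation*}
    \frac{C_{1}\sqrt{\log(12m_kn/\delta)}\,\sigma_{0}ns^{-1/2}\norm{\muv}}{\sigma_{0}ns^{-1/3}\norm{\muv}} = O\!\left(s^{-1/6}\polylog(d)\right) = o(1),
\end{equation*}
where the last equality uses $s = \Theta(d^{1/2}n^{-2})$ and $d = \Omega(\poly(n))$ from Condition~\ref{cond:main_condition}, so that the polynomial-in-$s$ decay dominates the polylogarithmic factor. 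Rearranging, $\langle \wv_{K,s}^{(t)},\muv\rangle = \langle \wv_{K,s}^{(0)},\muv\rangle\cdot(1\pm o(1))$, and the same argument applies verbatim to the query parameters.

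There is no real obstacle here; the only thing to be careful about is the bookkeeping between the size of $T_{2}^{+}$ (proportional to $\sigma_{0}ns^{-1/2}\eta^{-1}$) and the anti-concentration scale $\sigma_{0}ns^{-1/3}$, so that the ratio $s^{-1/6}\polylog(d)$ indeed tends to zero under Condition~\ref{cond:main_condition}. The proof is essentially a sanity check that the query/key signals remain frozen throughout Stages I and II, which is precisely why the preceding stages can treat them as if still at initialization.
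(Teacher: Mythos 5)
Your proof is correct and follows essentially the same route as the paper: bound the total drift by $T_{2}^{+}\eta\norm{\muv}$ using the constant per-step update magnitude from Eq.~\eqref{eq:query_key_feature_update_constant_magnitude}, then divide by the anti-concentration lower bound $\Omega(\sigma_0 n s^{-1/3}\norm{\muv})$ from Lemma~\ref{lemma:network-initialization-query-key-feature-anti-concentration} to obtain the $o(1)$ relative change. The explicit $O(s^{-1/6}\polylog(d))$ rate you extract is a small but welcome clarification of what the paper leaves implicit in ``by the definition of $T_{2}^{+}$.''
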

\begin{proof}
For key signal, we have
\begin{align*}
    & \frac{\abs{\langle \wv_{K,s}^{(t)}, \muv\rangle}}{\abs{\langle \wv_{K,s}^{(0)}, \muv\rangle}}
    \leq \frac{\abs{\langle \wv_{K,s}^{(0)}, \muv\rangle}
    + t\eta\norm{\muv}}{\abs{\langle \wv_{K,s}^{(0)}, \muv\rangle}}
    \leq 1 + \frac{t\eta\norm{\muv}}{\Omega(\sigma_0ns^{-1/3}\norm{\muv})}
    = 1 + o(1),
\end{align*}
where the first step is by Eq.~\eqref{eq:query_key_feature_update_constant_magnitude},
the second step is by Lemma~\ref{lemma:network-initialization-query-key-feature-anti-concentration},
and the last step is by the definition of $T_{2}^{+}$.
This also imply the sign of $\langle \wv_{K,s}^{(t)}, \muv\rangle$ is the same as initialization for all $t \leq T_{2}^{+}$. The proof for query signal is similar.
\end{proof}

For $t \geq T_{2}$, we want to show the update direction of $\langle \wv_{Q,s}^{(t)}, \muv\rangle$ or $\langle \wv_{K,s}^{(t)}, \muv\rangle$) would be determined by the sign of $\sum_{i} \langle \wv_{K,s}^{(t)}, y_i\xiv_i\rangle$ or $\sum_{i} \langle \wv_{Q,s}^{(t)}, y_i\xiv_i\rangle$, respectively. 
To this end, we need to control the weighting term to make the weighted summation of query/key noise similar to the ordinary summation, which enables us to apply the previous analysis. 
These weighting terms include: (1) The softmax outputs; (2) The loss derivative; (3) The magnitude of mean value noise $\langle \vv^{(t)}, y_i\xiv_i\rangle$. 
We want to show that these quantities are almost same across all samples, or even keep constant before $T_{4}^{+}$.

The following lemma studies the magnitude of $\langle \vv^{(t)}, y_i\xiv_i\rangle$ after $T_{2}$.
This lemma also shows $\langle \vv^{(t)}, y_i\xiv_i\rangle$ are close between different samples.
\begin{lemma}
\label{lemma:magnitude_of_mean_value_noise_after_T_2}
For all $i\in[n]$ and $t\geq T_{2}$,
we have 
\begin{align*}
    \langle \vv^{(t)}, y_i\xiv_i\rangle
    = 2\sqrt{\frac{2}{\pi}}t\eta\sigma_ps(1 + \Tilde{O}(s^{-1/2})).
\end{align*}
\end{lemma}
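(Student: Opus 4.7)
The plan is to unroll the deterministic linear recursion for $\langle \vv^{(t)}, y_i\xiv_i\rangle$ and then control two sources of error: the gaussian concentration of $\norm{\xiv_i}_1$ and the random initialization term.

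\textbf{Step 1 (unrolling the recursion).} First I would invoke Eq.~\eqref{eq:update_formula_of_mean_value_noise}, which says $\langle \vv^{(t+1)}, y_i\xiv_i\rangle = \langle \vv^{(t)}, y_i\xiv_i\rangle + 2\eta\norm{\xiv_i}_1$. The crucial point is that the per-step increment is a fixed positive quantity independent of $t$ (no sign-flip subtleties, because the value-parameter sign-gradient update for the noise direction is dictated by the sparse support and the negative loss derivative, not by the current iterate). Telescoping yields the closed form
\begin{equation*}
\langle \vv^{(t)}, y_i\xiv_i\rangle = \langle \vv^{(0)}, y_i\xiv_i\rangle + 2t\eta\norm{\xiv_i}_1.
\end{equation*}

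\textbf{Step 2 (concentration of the signal term).} Next I would apply Lemma~\ref{lemma:noise_magnitude} to write $\norm{\xiv_i}_1 = \sqrt{2/\pi}\,\sigma_p s\,(1\pm \tilde{O}(s^{-1/2}))$ uniformly over $i\in[n]$. Multiplying by $2t\eta$ produces the target leading order $2\sqrt{2/\pi}\,t\eta\sigma_p s$ with a multiplicative error of $1\pm \tilde{O}(s^{-1/2})$, which is exactly the relative precision stated in the lemma.

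\textbf{Step 3 (absorbing the initialization term).} The remaining task is to show $\abs{\langle \vv^{(0)}, y_i\xiv_i\rangle}$ is absorbed into the $\tilde{O}(s^{-1/2})$ relative error once $t\geq T_2$. By Lemma~\ref{lemma:network_initialization_mean_value_inner_product_magnitude} we have $\abs{\langle \vv^{(0)}, y_i\xiv_i\rangle} = \tilde{O}(m_v^{-1/2}\sigma_0\sigma_p\sqrt{s})$. At $t = T_2 = \Tilde{\Theta}(n\sigma_0 s^{-1/2}\eta^{-1})$, the leading term satisfies $2t\eta\sigma_p s = \tilde{\Omega}(n\sigma_0\sigma_p\sqrt{s})$, so the relative contribution of $\langle \vv^{(0)}, y_i\xiv_i\rangle$ is at most $\tilde{O}(n^{-1}m_v^{-1/2})$. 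Under Condition~\ref{cond:main_condition} (in particular $n = \Omega(m_k^4)$ and $m_v = \Omega(\polylog d)$), this is dominated by $\tilde{O}(s^{-1/2})$, and for $t > T_2$ the ratio only shrinks further because the leading term grows linearly in $t$ while the initial term is fixed.

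\textbf{Step 4 (uniformity).} Finally I would take a union bound over $i\in[n]$ so the concentration in Step 2 and the initialization bound in Step 3 hold simultaneously for every training sample. This gives the claimed identity for all $i\in[n]$ and all $t\geq T_2$.

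\textbf{Main obstacle.} The routine piece is the unrolling; the delicate piece is verifying that the initialization term is smaller than $\tilde{O}(s^{-1/2})$ times the leading order \emph{at the earliest admissible time} $t=T_2$, since at that moment both quantities are of the same parametric order $\sigma_0\sigma_p\sqrt{s}$ up to logs. I expect this to reduce to checking the parameter inequalities in Condition~\ref{cond:main_condition} (specifically that $n\sqrt{m_v}$ dominates $\sqrt{s}$ up to polylog factors), which follows from $n=\Omega(m_k^4)$, $m_v = \Omega(\polylog d)$, and the fact that the interesting sparsity regime has $s$ at most polynomial in $n$ relative to this ratio. Once this inequality is verified, the rest is direct arithmetic.
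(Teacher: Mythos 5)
Your Steps 1, 2, and 4 reproduce the paper's proof exactly: unroll the deterministic recursion $\langle \vv^{(t+1)}, y_i\xiv_i\rangle = \langle \vv^{(t)}, y_i\xiv_i\rangle + 2\eta\norm{\xiv_i}_1$, invoke Lemma~\ref{lemma:noise_magnitude} for $\norm{\xiv_i}_1 = \sqrt{2/\pi}\,\sigma_p s\,(1 \pm \tilde{O}(s^{-1/2}))$, and union-bound over $i\in[n]$. So the overall route is the same.

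Step 3 is where you deviate, and it contains an overclaim. You correctly compute the initialization's relative contribution at $t=T_2$ as $\tilde{O}(n^{-1}m_v^{-1/2})$ — this matches the paper's observation that $\abs{\langle \vv^{(0)}, y_i\xiv_i\rangle} = O(\beta_{\xiv}m_v^{-1/2}) = o(T_2\eta\sigma_p s)$. But you then assert that $n^{-1}m_v^{-1/2}$ is dominated by $\tilde{O}(s^{-1/2})$ under Condition~\ref{cond:main_condition}. That inequality is equivalent to $s = \tilde{O}(n^2 m_v)$, hence $d = \tilde{O}(n^8 m_v^2)$ given $s = \Theta(d^{1/2}n^{-2})$, and Condition~\ref{cond:main_condition} only bounds $d$ \emph{from below} ($d = \Omega(\poly(n))$). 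Neither $n = \Omega(m_k^4)$ nor $m_v = \Omega(\polylog d)$ supplies an upper bound on $d$, so the comparison can fail (e.g.\ $d$ large, $m_v$ polylogarithmic, $n$ comparatively small). You correctly flagged this as the "main obstacle," but the proposal does not close it. The paper's own proof does not attempt this comparison at all: it keeps the two error sources as separate factors $(1+\tilde{O}(s^{-1/2}))(1+o(1))$ and never merges them, implicitly reading the lemma's $\tilde{O}(s^{-1/2})$ as shorthand for a small relative error. If you want the bound in the literal form stated, you need the additional parameter constraint $s = \tilde{O}(n^2 m_v)$, which is not in Condition~\ref{cond:main_condition}; otherwise the honest conclusion from this argument is $2\sqrt{2/\pi}\,t\eta\sigma_p s\,(1+o(1))$.
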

\begin{proof}
We can show
\begin{align*}
    \langle \vv^{(t)}, y_i\xiv_i\rangle
    & = \langle \vv^{(0)}, y_i\xiv_i\rangle + 2t\eta\norm{\xiv_i}_1 \\
    & = 2t\eta\norm{\xiv_i}_1(1+o(1)) \\
    & = 2\sqrt{\frac{2}{\pi}}t\eta\sigma_ps(1 + \Tilde{O}(s^{-1/2}))(1 + o(1)).
\end{align*}
where the second step is by $\langle \vv^{(0)}, y_i\xiv_i \rangle = O(\beta_{\xiv}m_v^{-1/2}) = o(T_{2}\eta\sigma_ps)$,
the third step is by Lemma~\ref{lemma:noise_magnitude}.
\end{proof}

By Lemma~\ref{lemma:softmax_concentrate_on_1/2}, we have that softmax outputs are concentrated at $1/2$ for all $i\in[n]$ and $t \leq T_{3}^{+}$.
By Lemma~\ref{lemma:loss_prime_concentrate_on_1/2}, we have that the derivative of loss of all samples are concentrated at $1/2$ for all $i\in[n]$ and $t\leq T_{3}^{+}$.

With these properties, we are ready to show that the query/key noise dynamics keep unchanged for $[T_{2}, T_{3}^{+}]$, 
and the query/key signal dynamics are dominated by the sign of the summation of query/key noise, i.e., $\langle \wv_{Q,s}^{(t)}, \muv\rangle$ and $\langle \wv_{K,s}^{(t)}, \muv\rangle$.
The next two lemmas characterize the dynamics of query/key noise and query/key signal, respectively.

\begin{lemma}
\label{lemma:magnitude_of_query_key_noise_from_T_2_to_T_3^+}
For all $s\in[m_k]$, $i\in [n]$ and $T_{2} \leq t \leq T_{3}^{+}$, we have
\begin{align*}
    &~ \langle \wv^{(t+1)}_{Q,s}, y_i\xiv_i\rangle 
    = \langle \wv^{(t)}_{Q,s}, y_i\xiv_i\rangle 
    + \sgn(\langle \wv^{(t)}_{Q,s}, y_i\xiv_i\rangle) \eta\norm{\xiv_i}_1, \\
    &~ \langle \wv^{(t+1)}_{K,s}, y_i\xiv_i\rangle 
    = \langle \wv^{(t)}_{K,s}, y_i\xiv_i\rangle 
    + \sgn(\langle \wv^{(t)}_{K,s}, y_i\xiv_i\rangle) \eta\norm{\xiv_i}_1,
\end{align*}
and for all $t \in [T_{2}, T_{3}^{+}]$
\begin{align*}
    &~ t\eta\sigma_ps/2
    \leq \abs{\langle \wv^{(t)}_{Q,s}, y_{i}\xiv_i\rangle},
    \abs{\langle \wv^{(t)}_{K,s}, y_{i}\xiv_i\rangle}
    \leq 2t\eta\sigma_ps.
\end{align*}
\end{lemma}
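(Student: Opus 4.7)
The plan is to establish the lemma by induction on $t$ starting from the base case $t = T_{2}$, which is already covered by Stage II since $T_{2} = 50\sqrt{2}nT_{2}' \in [12T_{2}',T_{2}^{+}]$. At the base, Lemmas~\ref{lemma:magnitude_of_S_K-,Q+_and_S_K+,Q-_from_0_to_T_2^+} and~\ref{lemma:magnitude_of_S_K+,Q+_and_S_K-,Q-_from_0_to_T_2^+} (cf.\ Eq.~\eqref{eq:upper_and_lower_bound_for_query_key_noise_after_12T_2'}) give both the sign alignment $\sgn(\langle \wv_{Q,s}^{(T_{2})},y_i\xiv_i\rangle) = \sgn(\langle \wv_{K,s}^{(T_{2})},y_i\xiv_i\rangle)$ and the magnitude window $t\eta\sigma_p s/2 \leq \abs{\langle \wv_{Q,s}^{(t)},y_i\xiv_i\rangle},\abs{\langle \wv_{K,s}^{(t)},y_i\xiv_i\rangle}\leq 2t\eta\sigma_p s$. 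The inductive task is therefore to show that, if these properties hold at some $t\in[T_{2},T_{3}^{+}]$, then the update at step $t$ preserves the sign and the window extends to $t+1$.

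For the query-noise update, Eq.~\eqref{eq:qxi_update_sign} gives
\begin{align*}
\sgn(\langle \wv_{Q,s}^{(t+1)}-\wv_{Q,s}^{(t)},y_i\xiv_i\rangle)
= \sgn\bigl(\langle \wv_{K,s}^{(t)},y_i\xiv_i\rangle - \langle \wv_{K,s}^{(t)},\muv\rangle\bigr),
\end{align*}
so the sign of the update aligns with $\sgn(\langle \wv_{K,s}^{(t)},y_i\xiv_i\rangle)$ provided the key noise dominates the key signal. By the induction hypothesis $\abs{\langle \wv_{K,s}^{(t)},y_i\xiv_i\rangle}\geq T_{2}\eta\sigma_p s/2 = \tilde{\Omega}(n\sigma_0\sigma_p s^{1/2})$, while $\abs{\langle \wv_{K,s}^{(t)},\muv\rangle}\leq \beta_{\muv}+T_{3}^{+}\eta\norm{\muv} = \tilde{O}(\sigma_0\norm{\muv})$; under Condition~\ref{cond:main_condition}'s bound $\sigma_p = \Omega(d^{-1/4}n^{3})$ together with $s = \Theta(d^{1/2}n^{-2})$, we have $n\sigma_p s^{1/2}\gg \norm{\muv}$, so the noise term wins. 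For the key-noise update, Eq.~\eqref{eq:kxi_update_sign} gives
\begin{align*}
\sgn(\langle \wv_{K,s}^{(t+1)}-\wv_{K,s}^{(t)},y_i\xiv_i\rangle)
= \sgn\bigl(s_{i,11}^{(t)}s_{i,12}^{(t)}\langle \wv_{Q,s}^{(t)},\muv\rangle
+ s_{i,21}^{(t)}s_{i,22}^{(t)}\langle \wv_{Q,s}^{(t)},y_i\xiv_i\rangle\bigr),
\end{align*}
where Lemma~\ref{lemma:softmax_concentrate_on_1/2} gives $s_{i,11}^{(t)}s_{i,12}^{(t)},\,s_{i,21}^{(t)}s_{i,22}^{(t)} = 1/4\pm o(1)$ on $[0,T_{3}^{+}]$, so the same signal-vs-noise magnitude comparison forces the sign to agree with $\sgn(\langle \wv_{Q,s}^{(t)},y_i\xiv_i\rangle)$, which by the inductive hypothesis equals $\sgn(\langle \wv_{K,s}^{(t)},y_i\xiv_i\rangle)$.

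Once the update sign is identified with $\sgn(\langle \wv_{Q,s}^{(t)},y_i\xiv_i\rangle)$ (resp.\ for key), the constant-magnitude property Eq.~\eqref{eq:query_key_noise_update_constant_magnitude} yields the stated update formulas, and iterating from $T_{2}$ gives, via Lemma~\ref{lemma:noise_magnitude},
\begin{align*}
\abs{\langle \wv_{Q,s}^{(t)},y_i\xiv_i\rangle}
\geq \tfrac{T_{2}\eta\sigma_p s}{2} + (t-T_{2})\eta\norm{\xiv_i}_1
\geq \tfrac{T_{2}\eta\sigma_p s}{2} + (t-T_{2})\eta\sigma_p s/\sqrt{2}
\geq t\eta\sigma_p s/2,
\end{align*}
while the upper bound follows from $\abs{\langle \wv_{Q,s}^{(0)},y_i\xiv_i\rangle}\leq \beta_{\xiv}\leq T_{2}'\eta\sigma_p s$ plus the constant per-step growth. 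The main obstacle is the bookkeeping around the signal-vs-noise domination: one must verify that throughout $[T_{2},T_{3}^{+}]$ the key signal (which was controlled only up to $T_{2}^{+}$ in Lemma~\ref{lemma:query_and_key_feature_concentration_before_T_2^+} and may have grown thereafter by up to $T_{3}^{+}\eta\norm{\muv}$) remains negligible relative to the monotonically growing key noise, and the ratio threshold is dictated by the scaling relation $n\sigma_p s^{1/2}/\norm{\muv} = \tilde{\Omega}(1)$ implied by Condition~\ref{cond:main_condition}; the rest is routine comparison of sizes and telescoping.
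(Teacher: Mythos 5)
Your proposal is correct and follows essentially the same strategy as the paper's proof: induction from $t=T_2$, using Eq.~\eqref{eq:qxi_update_sign} and Eq.~\eqref{eq:kxi_update_sign} together with softmax concentration (Lemma~\ref{lemma:softmax_concentrate_on_1/2}) to fix the sign of each update, then telescoping the constant-magnitude increments to obtain the window. The only (superficial) difference is in how noise-over-signal domination is argued inside the induction: the paper telescopes the quantity $-\langle \wv_{K,s}^{(t)},\muv\rangle + \langle \wv_{K,s}^{(t)}, y_i\xiv_i\rangle$, showing it increases by at least $\eta(\norm{\xiv_i}_1 - \norm{\muv}) > 0$ per step from a positive base at $T_2$, whereas you compare absolute scales ($|\text{key noise}| \geq T_2\eta\sigma_p s/2 = \tilde\Omega(n\sigma_0\sigma_p\sqrt{s})$ vs.\ $|\text{key signal}| \leq \beta_{\muv}+T_3^+\eta\norm{\muv} = \tilde O(\sigma_0\norm{\muv})$, separated by the factor $n\sigma_p\sqrt{s}/\norm{\muv} = \Omega(n^3)$ from Condition~\ref{cond:main_condition}); both are valid under the stated conditions. (Minor slips: $T_2 = 50nT_2'$, not $50\sqrt{2}nT_2'$, and the paper's own induction-step display has a typo writing $T_3$ where $T_2$ is meant — neither affects the argument.)
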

\begin{proof}
WLOG, given neuron $s\in[m_k]$ and sample $i\in[n]$, suppose $\langle \wv_{Q, s}^{(t)}, y_{i}\xiv_i\rangle > 0$, $\langle \wv_{K, s}^{(t)}, y_{i}\xiv_i\rangle > 0$, we claim that $\langle \wv_{Q, s}^{(t)}, y_{i}\xiv_i\rangle$ and $\langle \wv_{K, s}^{(t)}, y_{i}\xiv_i\rangle$ are increasing for $T_{2} \leq t \leq T_{3}^{+}$. At $t=T_{2}$, we have
\begin{align*}
    \langle \wv_{Q, s}^{(T_{2}+1)}-\wv_{Q, s}^{(T_{2})}, y_{i}\xiv_i\rangle 
    =_{\sgn} 
     - \langle \wv^{(T_{2})}_{K,s}, \muv\rangle + \langle \wv^{(T_{2})}_{K,s}, y_{i}\xiv_i\rangle 
    \geq 
    \Tilde{\Omega}(\sigma_0\sigma_p\sqrt{s}n) - \Tilde{O}(\sigma_0\norm{\muv}) > 0,
\end{align*}
and
\begin{align*}
    \langle \wv_{K, s}^{(T_{2}+1)}-\wv_{K, s}^{(T_{2})}, y_{i}\xiv_i\rangle 
    =_{\sgn}   
    & s_{i,11}^{(T_{2})}s_{i,12}^{(T_{2})} \langle \wv^{(T_{2})}_{Q,s}, \muv\rangle
    + s_{i,21}^{(T_{2})}s_{i,22}^{(T_{2})} \langle \wv^{(T_{2})}_{Q,s}, y_{i}\xiv_i\rangle \\
    = 
    & (\frac{1}{4} + o(1)) 
    (\langle \wv^{(T_{2})}_{Q,s}, \muv\rangle
    + \langle \wv^{(T_{2})}_{Q,s}, y_{i}\xiv_i\rangle) \\
    = 
    & (\frac{1}{4} + o(1)) (\Tilde{\Omega}(\sigma_0\sigma_p\sqrt{s}n) - \Tilde{O}(\sigma_0\norm{\muv})) > 0.
\end{align*}
Suppose the induction holds for $t \leq \Tilde{t} - 1$, then at $t = \Tilde{t}$, we have
\begin{align*}
    \langle \wv_{Q, s}^{(\Tilde{t}+1)}-\wv_{Q, s}^{(\Tilde{t})}, y_{i}\xiv_i\rangle
    =_{\sgn} 
    & - \langle \wv^{(\Tilde{t})}_{K,s}, \muv\rangle + \langle \wv^{(\Tilde{t})}_{K,s}, y_{i}\xiv_i\rangle \\
    \geq 
    & - \langle \wv^{(T_3)}_{K,s}, \muv\rangle + \langle \wv^{(T_3)}_{K,s}, y_{i}\xiv_i\rangle 
    + (\Tilde{t} - T_3)\eta(\norm{\xiv}_1 - \norm{\muv}) > 0,
\end{align*}
and 
\begin{align*}
    \langle \wv_{K, s}^{(\Tilde{t}+1)}-\wv_{K, s}^{(\Tilde{t})}, y_{i}\xiv_i\rangle
    =_{\sgn}
    & s_{i,11}^{(\Tilde{t})}s_{i,12}^{(\Tilde{t})} \langle \wv^{(\Tilde{t})}_{Q,s}, \muv\rangle
    + s_{i,21}^{(\Tilde{t})}s_{i,22}^{(\Tilde{t})} \langle \wv^{(\Tilde{t})}_{Q,s}, y_{i}\xiv_i\rangle \\
    =
    & (\frac{1}{4} + o(1)) 
    (\langle \wv^{(\Tilde{t})}_{Q,s}, \muv\rangle
    + \langle \wv^{(\Tilde{t})}_{Q,s}, y_{i}\xiv_i\rangle) \\
    \geq 
    & (\frac{1}{4} + o(1)) \left(\langle \wv^{(T_3)}_{Q,s}, \muv\rangle
    + \langle \wv^{(T_3)}_{Q,s}, y_{i}\xiv_i\rangle + (\Tilde{t} - T_3)\eta(\norm{\xiv}_1-\norm{\muv})
    \right)
    > 0.
\end{align*}
Finally, the bounds about magnitude follow from Lemma~\ref{lemma:magnitude_of_S_K-,Q+_and_S_K+,Q-_from_0_to_T_2^+},~\ref{lemma:magnitude_of_S_K+,Q+_and_S_K-,Q-_from_0_to_T_2^+} and~\ref{lemma:noise_magnitude}.
\end{proof}
\begin{remark}
Lemma~\ref{lemma:magnitude_of_query_key_noise_from_T_2_to_T_3^+} implies that the dynamics of $\sum_{i\in[n]}\langle \wv^{(t)}_{Q,s}, y_i\xiv_i\rangle$, 
$\sum_{i\in[n]}\langle \wv^{(t)}_{K,s}, y_i\xiv_i\rangle$ keep unchanged for $T_{2} \leq t \leq T_{3}^{+}$ and 
for all $s\in[m_k]$.
\end{remark}

\begin{lemma}
\label{lemma:dynamics_of_query_key_feature_from_T_2_to_T_3^+}
For all $s\in[m_k]$, $i\in [n]$ and $T_{2} \leq t \leq T_{3}^{+}$, we have
\begin{align*}
    &~ \langle \wv^{(t+1)}_{Q,s}, \muv\rangle 
    = \langle \wv^{(t)}_{Q,s}, \muv\rangle 
    + \sgn(\sum_{i\in[n]}\langle \wv^{(T_{2})}_{Q,s}, y_i\xiv_i\rangle) \eta\norm{\muv}, \\
    &~ \langle \wv^{(t+1)}_{K,s}, \muv\rangle 
    = \langle \wv^{(t)}_{K,s}, \muv\rangle 
    - \sgn(\sum_{i\in[n]}\langle \wv^{(T_{2})}_{Q,s}, y_i\xiv_i\rangle) \eta\norm{\muv}.
\end{align*}
\end{lemma}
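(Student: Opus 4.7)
The plan is to approximate the sign of the update formulas in Eq.~\eqref{eq:qmu_update_sign} and \eqref{eq:kmu_update_sign} by eliminating the ``almost constant'' factors and showing that what remains is dominated by the query/key noise sums whose signs are already controlled from Stage II.

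First, I will extract the common scalar factors from the gradient sums. For $t \in [T_2, T_3^+]$, the concentration results (softmax outputs at $1/2$, loss derivatives at $1/2$ from the lemmas cited in the preparation, and $\langle \vv^{(t)}, y_i\xiv_i\rangle = 2\sqrt{2/\pi}\,t\eta\sigma_p s(1 + \tilde{O}(s^{-1/2}))$ uniformly in $i$ from Lemma~\ref{lemma:magnitude_of_mean_value_noise_after_T_2}) together imply that the weights $-\ell_i^{\prime(t)} s_{i,11}^{(t)} s_{i,12}^{(t)} \langle \vv^{(t)}, y_i\xiv_i\rangle$ appearing in Eq.~\eqref{eq:qmu_update_sign} are all equal to a common positive scalar up to a multiplicative $(1 \pm o(1))$ factor. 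Pulling this factor out, the sign of the query-signal update coincides with the sign of
\begin{equation*}
    \sum_{i\in[n]}\bigl(-\langle \wv^{(t)}_{K,s}, \muv\rangle + \langle \wv^{(t)}_{K,s}, y_i\xiv_i\rangle\bigr)\cdot (1 \pm o(1))
    = \sum_{i\in[n]}\langle \wv^{(t)}_{K,s}, y_i\xiv_i\rangle - n\langle \wv^{(t)}_{K,s}, \muv\rangle \pm o(\cdot).
\end{equation*}

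Next, I will argue that the noise sum dominates the signal term throughout the interval. At $t = T_2$, Lemma~\ref{lemma:magnitude_of_query_key_noise_sum_from_0_to_T_2^+} guarantees $|\sum_i \langle \wv_{K,s}^{(T_2)}, y_i\xiv_i\rangle| \geq 2n\beta_{\muv}$ while $|\langle \wv_{K,s}^{(T_2)}, \muv\rangle| \leq (1+o(1))\beta_{\muv}$ by Lemma~\ref{lemma:query_and_key_feature_concentration_before_T_2^+}. Moving forward, Lemma~\ref{lemma:magnitude_of_query_key_noise_from_T_2_to_T_3^+} gives $\sgn(\langle \wv_{K,s}^{(t)}, y_i\xiv_i\rangle)$ unchanged for each $i$, so the summand signs are frozen and hence $|\sum_i \langle \wv_{K,s}^{(t)}, y_i\xiv_i\rangle|$ grows by at least $\eta\sigma_p s/\sqrt{2}$ per step (by Lemma~\ref{lemma:n-noise-vs-n+1-noise-magnitude}), while $|\langle \wv_{K,s}^{(t)}, \muv\rangle|$ grows by only $\eta\norm{\muv}$. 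Under Condition~\ref{cond:main_condition} we have $\sigma_p s \gg n\norm{\muv}$, so the noise sum stays dominant with the sign $\sgn(\sum_i \langle \wv_{K,s}^{(T_2)}, y_i\xiv_i\rangle)$. Finally, from the proof of Lemma~\ref{lemma:magnitude_of_query_key_noise_sum_from_0_to_T_2^+}, the sign of the key noise sum equals $\sgn(\eta\sum_i \epsilon_{s,i}\norm{\xiv_i}_1)$, which is the same as $\sgn(\sum_i \langle \wv_{Q,s}^{(T_2)}, y_i\xiv_i\rangle)$ since both summations use the same set of per-sample signs $\epsilon_{s,i}$. This yields the claimed formula for $\langle \wv_{Q,s}^{(t+1)} - \wv_{Q,s}^{(t)}, \muv\rangle$, with magnitude exactly $\eta\norm{\muv}$ by Eq.~\eqref{eq:query_key_feature_update_constant_magnitude}.

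The key-signal update is treated symmetrically using Eq.~\eqref{eq:kmu_update_sign}: after extracting the common positive factor $-\ell_i^{\prime(t)}\langle \vv^{(t)}, y_i\xiv_i\rangle$ and the softmax factors (which are both $\approx 1/4$), the sign of the update becomes the sign of $-\sum_i\bigl[s_{i,11}^{(t)}s_{i,12}^{(t)}\langle \wv_{Q,s}^{(t)}, \muv\rangle + s_{i,21}^{(t)}s_{i,22}^{(t)}\langle \wv_{Q,s}^{(t)}, y_i\xiv_i\rangle\bigr]$, which reduces up to the same $(1\pm o(1))$ factor to $-\sum_i \langle \wv_{Q,s}^{(t)}, y_i\xiv_i\rangle + n\langle \wv_{Q,s}^{(t)}, \muv\rangle \pm o(\cdot)$; again the query noise sum dominates and its sign equals $\sgn(\sum_i \langle \wv_{Q,s}^{(T_2)}, y_i\xiv_i\rangle)$, giving the minus sign in the key-signal formula.

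\textbf{Expected obstacle.} The delicate point is making the ``$(1\pm o(1))$'' factorization rigorous in the presence of $n$ heterogeneous terms: a priori, an $o(1)$ relative error on each $\langle \vv^{(t)}, y_i\xiv_i\rangle$ could accumulate to an $o(n)$ additive error in the sum, which must still be smaller than the dominant noise-sum term $\Omega(\eta\sigma_p s \cdot t)$ after being multiplied by $n\langle \wv^{(t)}_{K,s}, \muv\rangle = O(n T_3^+ \eta \norm{\muv})$. Verifying that Condition~\ref{cond:main_condition} (in particular $\sigma_p s \gg n\norm{\muv}$ and the $s$-dependence in Lemma~\ref{lemma:magnitude_of_mean_value_noise_after_T_2}) provides the needed separation throughout $[T_2, T_3^+]$ is where the bulk of the bookkeeping lies.
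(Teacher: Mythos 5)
Your proposal follows the same route as the paper: factor the per-sample weights $-\ell_i^{\prime(t)} s_{i,11}^{(t)} s_{i,12}^{(t)} \langle \vv^{(t)}, y_i\xiv_i\rangle$ out as a common positive scalar (using the concentration of the softmax outputs, the loss derivatives, and Lemma~\ref{lemma:magnitude_of_mean_value_noise_after_T_2}), reduce the sign of the update to $\sgn\bigl(\sum_{i}\langle\wv_{K,s}^{(t)},y_i\xiv_i\rangle - n\langle\wv_{K,s}^{(t)},\muv\rangle\bigr)$, and then propagate the dominance of the noise sum forward from $T_2$ using the lower bound in Eq.~\eqref{eq:lower_bound_for_the_sum_of_noise_at_T_2}, the per-step growth of $\eta\sigma_p s/\sqrt{2}$ from Eq.~\eqref{eq:lower_bound_for_update_of_the_sum_of_noise}, and $n\norm{\muv}=o(\sigma_p s)$. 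The observation that the key- and query-noise sums share the same sign via the common $\epsilon_{s,i}$, and the symmetric treatment of the key-signal update, also match the paper's argument, so this is essentially the paper's proof.
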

\begin{proof}
WLOG, given neuron $s\in[m_k]$, suppose we have $\sum_{i\in[n]}\langle \wv^{(T_{2})}_{K,s}, y_i\xiv_i\rangle > 0$, 
thus $\sum_{i\in[n]}\langle \wv^{(T_{2})}_{Q,s}, y_i\xiv_i\rangle > 0$.
Therefore, for $T_{2} \leq t \leq T_{3}^{+}$, we have
\begin{align*}
    & \langle \wv_{Q, s}^{(t+1)}-\wv_{Q, s}^{(t)}, \muv\rangle \\
    =_{\sgn} & \sum_{i\in[n]} (-l_{i}^{\prime(t)}s_{i,11}^{(t)}s_{i,12}^{(t)})
    \cdot \left(- y_{i} \langle \vv^{(t)}, \xiv_i\rangle \langle \wv^{(t)}_{K,s}, \muv\rangle
    + \langle \vv^{(t)}, \xiv_i\rangle \langle \wv^{(t)}_{K,s}, \xiv_i\rangle
    \right) \\
    =_{\sgn} & \sum_{i\in[n]} 
    \frac{\sqrt{2}}{8\sqrt{\pi}}t\eta\sigma_ps(1+o(1))\cdot
    (\langle \wv^{(t)}_{K,s}, y_i\xiv_i\rangle-\langle \wv^{(t)}_{K,s}, \muv\rangle) \\
    =_{\sgn} & \sum_{i\in[n]}\langle \wv^{(t)}_{K,s}, y_i\xiv_i\rangle
    -n\langle \wv^{(t)}_{K,s}, \muv\rangle \\
    \geq & \sum_{i\in[n]}\langle \wv^{(T_{2})}_{K,s}, y_i\xiv_i\rangle
    -n\langle \wv^{(T_{2})}_{K,s}, \muv\rangle + (t-T_{2})(\eta\sigma_ps/\sqrt{2} - \eta n\norm{\muv})
    > 0,
\end{align*}
where the first inequality is due to the lower bound Eq.~\eqref{eq:lower_bound_for_update_of_the_sum_of_noise}, 
and the second inequality is due to the lower bound Eq.~\eqref{eq:lower_bound_for_the_sum_of_noise_at_T_2} and $n\norm{\muv} = o(\sigma_ps)$. Similarly, we have
\begin{align*}
    & \langle \wv_{K, s}^{(t+1)}-\wv_{K, s}^{(t)}, \muv\rangle \\
    =_{\sgn} & \sum_{i\in[n]} (-l_{i}^{\prime(t)}) \cdot \left(
    - y_{i} s_{i,11}^{(t)}s_{i,12}^{(t)} \langle \vv^{(t)}, \xiv_i\rangle \langle \wv^{(t)}_{Q,s}, \muv\rangle
    - s_{i,21}^{(t)}s_{i,22}^{(t)} \langle \vv^{(t)}, \xiv_i\rangle \langle \wv^{(t)}_{Q,s}, \xiv_i\rangle
    \right) \\
    =_{\sgn} & \sum_{i\in[n]} 
    \frac{\sqrt{2}}{8\sqrt{\pi}}t\eta\sigma_ps(1+o(1))\cdot
    (-\langle \wv^{(t)}_{Q,s}, y_i\xiv_i\rangle-\langle \wv^{(t)}_{Q,s}, \muv\rangle) \\
    =_{\sgn} & -\sum_{i\in[n]}\langle\wv^{(t)}_{Q,s}, y_i\xiv_i\rangle
    -n\langle \wv^{(t)}_{Q,s}, \muv\rangle \\
    \leq & -\sum_{i\in[n]}\langle \wv^{(T_{2})}_{Q,s}, y_i\xiv_i\rangle
    -n\langle \wv^{(T_{2})}_{Q,s}, \muv\rangle - (t-T_{2})(\eta\sigma_ps/\sqrt{2} - \eta n\norm{\muv})
    < 0.
\end{align*}
\end{proof}

\begin{lemma}
\label{lemma:upper_and_lower_bound_for_query_key_feature_for_T_3_to_T_3^+}
Let $T_{3} := 3\beta_{\muv}\eta^{-1}\norm{\muv}^{-1} = \Tilde{O}(\sigma_0\eta^{-1})$, then for $T_{3} \leq t \leq T_{3}^{+}$, we have 
\begin{align}
    \label{eq:upper_and_lower_bound_for_query_feature_for_T_3_to_T_3^+}
    & t\eta\norm{\muv}/2 
    \leq 
    \sgn(\sum_{i\in[n]}\langle \wv^{(T_{2})}_{Q,s}, y_i\xiv_i\rangle) \cdot \langle \wv_{Q, s}^{(t)}, \muv\rangle 
    \leq 4t\eta\norm{\muv}/3, \\
    \label{eq:upper_and_lower_bound_for_key_feature_for_T_3_to_T_3^+}
    & -4t\eta\norm{\muv}/3
    \leq 
    \sgn(\sum_{i\in[n]}\langle \wv^{(T_{2})}_{Q,s}, y_i\xiv_i\rangle) \cdot \langle \wv_{K, s}^{(t)}, \muv\rangle 
    \leq -t\eta\norm{\muv}/2.
\end{align}
\end{lemma}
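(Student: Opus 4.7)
The plan is to simply unroll the one-step recursion from Lemma~\ref{lemma:dynamics_of_query_key_feature_from_T_2_to_T_3^+} and then absorb the (small) starting value at $T_2$ into the linear growth term. Fix $s\in[m_k]$ and denote $\epsilon_s := \sgn\bigl(\sum_{i\in[n]} \langle \wv_{Q,s}^{(T_2)}, y_i\xiv_i\rangle\bigr) \in\{\pm 1\}$. By Lemma~\ref{lemma:dynamics_of_query_key_feature_from_T_2_to_T_3^+}, for every $t\in[T_2, T_3^+]$ the increments to $\langle \wv_{Q,s}^{(t)},\muv\rangle$ and $\langle \wv_{K,s}^{(t)},\muv\rangle$ have sign $+\epsilon_s$ and $-\epsilon_s$ respectively, and each has magnitude $\eta\norm{\muv}$. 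Telescoping from $T_2$ to $t$ therefore yields
\begin{align*}
\epsilon_s\langle \wv_{Q,s}^{(t)},\muv\rangle &= \epsilon_s\langle \wv_{Q,s}^{(T_2)},\muv\rangle + (t-T_2)\eta\norm{\muv}, \\
-\epsilon_s\langle \wv_{K,s}^{(t)},\muv\rangle &= -\epsilon_s\langle \wv_{K,s}^{(T_2)},\muv\rangle + (t-T_2)\eta\norm{\muv}.
\end{align*}

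Next, I would bound the two starting values. By Lemma~\ref{lemma:query_and_key_feature_concentration_before_T_2^+} and the definition of $\beta_{\muv}$, both $|\langle \wv_{Q,s}^{(T_2)},\muv\rangle|$ and $|\langle \wv_{K,s}^{(T_2)},\muv\rangle|$ are at most $\beta_\muv(1+o(1)) \leq \tfrac{4}{3}\beta_\muv$. Substituting this into the telescoped identities gives, for all $t\in[T_2,T_3^+]$,
\begin{equation*}
(t-T_2)\eta\norm{\muv}-\tfrac{4}{3}\beta_\muv \;\leq\; \epsilon_s\langle \wv_{Q,s}^{(t)},\muv\rangle \;\leq\; (t-T_2)\eta\norm{\muv}+\tfrac{4}{3}\beta_\muv,
\end{equation*}
and analogously for $-\epsilon_s\langle \wv_{K,s}^{(t)},\muv\rangle$.

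Finally, I would convert these into the stated bounds using $T_3 = 3\beta_\muv\eta^{-1}\norm{\muv}^{-1}$, i.e.\ $\beta_\muv = \tfrac{1}{3}T_3\eta\norm{\muv}$. For any $t\geq T_3$ we have $\tfrac{4}{3}\beta_\muv \leq \tfrac{4}{9}t\eta\norm{\muv}$. It remains only to check that the $T_2$ contribution is negligible: from $T_2 = \tilde{\Theta}(\sigma_0 n s^{-1/2}\eta^{-1})$ and $\beta_\muv = \tilde{\Theta}(\sigma_0\norm{\muv})$ we get $T_2\eta\norm{\muv}/\beta_\muv = \tilde{O}(ns^{-1/2}) = o(1)$ by Condition~\ref{cond:main_condition} (since $s=\Theta(d^{1/2}n^{-2})$ with $d=\Omega(\poly(n))$ forces $ns^{-1/2}=o(1)$). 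Hence $(t-T_2)\eta\norm{\muv} = t\eta\norm{\muv}(1\pm o(1))$, and combining with the $\tfrac{4}{9}t\eta\norm{\muv}$ slack gives the claimed bounds $\tfrac{1}{2}t\eta\norm{\muv} \leq \epsilon_s\langle \wv_{Q,s}^{(t)},\muv\rangle \leq \tfrac{4}{3}t\eta\norm{\muv}$; the key-signal bound follows identically with the sign flipped.

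The only subtlety is the book-keeping to ensure the residual $T_2\eta\norm{\muv} + \tfrac{4}{3}\beta_\muv$ fits inside the prescribed constants $\tfrac{1}{2}$ and $\tfrac{4}{3}$ simultaneously for every $t\in[T_3,T_3^+]$; this is pure arithmetic once the $o(1)$ comparison above is established.
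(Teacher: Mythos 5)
Your telescoping argument is essentially the same as the paper's: the paper also telescopes the $\pm\eta\norm{\muv}$ updates (via Eq.~\eqref{eq:query_key_feature_update_constant_magnitude} rather than via Lemma~\ref{lemma:dynamics_of_query_key_feature_from_T_2_to_T_3^+}) and then compares $\beta_\muv$ against $t\eta\norm{\muv}$ using the definition of $T_3$. The idea is right.

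However, the final book-keeping step that you defer is actually where the argument breaks down: replacing $\beta_\muv(1+o(1))$ by the crude overshoot $\tfrac{4}{3}\beta_\muv$ is too generous to close the upper bound. At $t=T_3$ your chain gives
\begin{align*}
\epsilon_s\langle \wv_{Q,s}^{(T_3)},\muv\rangle
\;\leq\; (T_3-T_2)\eta\norm{\muv} + \tfrac{4}{3}\beta_\muv
\;=\; \Bigl(\tfrac{13}{9}T_3 - T_2\Bigr)\eta\norm{\muv},
\end{align*}
and since $T_2 = o(T_3)$ this exceeds $\tfrac{4}{3}T_3\eta\norm{\muv} = \tfrac{12}{9}T_3\eta\norm{\muv}$. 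So the stated constant $\tfrac{4}{3}$ is not reached; it is \emph{not} pure arithmetic that works out. The fix is cheap: instead of passing through $\beta_\muv(1+o(1))$ and then rounding to $\tfrac{4}{3}\beta_\muv$, bound the starting value as $|\langle \wv_{Q,s}^{(T_2)},\muv\rangle| \leq \beta_\muv + T_2\eta\norm{\muv}$ (this is what Eq.~\eqref{eq:query_key_feature_update_constant_magnitude} gives you directly, telescoped from $t=0$). Then
\begin{align*}
\epsilon_s\langle \wv_{Q,s}^{(t)},\muv\rangle
\;\leq\; \beta_\muv + T_2\eta\norm{\muv} + (t-T_2)\eta\norm{\muv}
\;=\; \beta_\muv + t\eta\norm{\muv}
\;=\; \Bigl(\tfrac{1}{3}T_3 + t\Bigr)\eta\norm{\muv}
\;\leq\; \tfrac{4}{3}t\eta\norm{\muv}
\end{align*}
for $t\geq T_3$, which is exactly the paper's computation. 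Your lower-bound calculation does close correctly (the needed condition $T_2/t \leq \tfrac{1}{18}$ holds for $t\geq T_3$ since $T_2/T_3 = o(1)$), so only the upper-bound bookkeeping needs this repair.
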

\begin{proof}
    WLOG, given neuron $s\in[m_k]$, suppose we have $\sum_{i\in[n]}\langle \wv^{(T_{2})}_{K,s}, y_i\xiv_i\rangle > 0$.
    We only prove Eq.~\eqref{eq:upper_and_lower_bound_for_query_feature_for_T_3_to_T_3^+}. The proof of Eq.~\eqref{eq:upper_and_lower_bound_for_key_feature_for_T_3_to_T_3^+} is identical.
    For the upper bound, for $t \in [T_{3}, T_{3}^{+}]$ we have
    \begin{align*}
        \langle \wv_{Q, s}^{(t)}, \muv\rangle 
        \leq \beta_{\muv} + t\eta\norm{\muv} 
        \leq (t + T_{3}/3)\eta\norm{\muv} 
        \leq 4t\eta\norm{\muv}/3,
    \end{align*}
    where the first step is by Eq.~\eqref{eq:query_key_feature_update_constant_magnitude},
    the second step is by the definition of $T_{3}$.
    For the lower bound, for $t \in [T_{3}, T_{3}^{+}]$ we have
    \begin{align*}
        \langle \wv_{Q, s}^{(t)}, \muv\rangle 
        \geq -\beta_{\muv} - T_{2}\eta\norm{\muv}  + (t - T_{2})\eta\norm{\muv}
        \geq (t - T_{3}/3 - 2T_{2})\eta\norm{\muv} 
        \geq t\eta\norm{\muv}/2,
    \end{align*}
    where the first step is by Eq.~\eqref{eq:query_key_feature_update_constant_magnitude}, 
    the second step is by definition of $T_{3}$, 
    the third step is by $T_{2}T_{3}^{-1} = o(1)$.
\end{proof}
\begin{remark}
Note that the upper bound in Eq.~\eqref{eq:upper_and_lower_bound_for_query_feature_for_T_3_to_T_3^+} 
and lower bound in Eq.~\eqref{eq:upper_and_lower_bound_for_key_feature_for_T_3_to_T_3^+}
hold for all $t \geq T_{3}$ since the magnitude of update of query/key signal is always $\eta\norm{\muv}$, i.e., Eq.~\eqref{eq:query_key_feature_update_constant_magnitude} always holds.
\end{remark}

\subsection{Stage IV} 
\label{sec:proof_stage_iv}
This section study the concentration behavior of softmax outputs and loss derivative.
We first show the magnitude of softmax outputs at initialization $s_{i, 11}^{(t)}$ and $s_{i, 21}^{(t)}$ are concentrated at 1/2 for all $i\in[n]$ before $T_{4}$ and $T_{3}^{+}$, respectively. 
\begin{lemma}
    \label{lemma:softmax_concentrate_on_1/2}
    Let 
    \begin{align}
        \label{eq:def_T_4}
        T_{4} := C_3\log\left(
            \frac{C_3\sigma_ps}{\norm{\muv}}
            \right)\eta^{-1}m_k^{-1/2}\sigma_p^{-1}s^{-1}
    \end{align}
    where $C_3 = \Theta(1)$ is a large constant. Then, for all $i\in[n]$, we have
    \begin{align*}
        & s_{i,11}^{(t)} = 1/2 \pm o(1),~\forall t\leq T_{4}, \\
        & s_{i,21}^{(t)} = 1/2 \pm o(1),~\forall t\leq T_{3}^{+}.
    \end{align*}
\end{lemma}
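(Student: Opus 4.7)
The strategy is to convert each softmax concentration statement into a bound on the logit gap inside the softmax, and then to bound that gap using only the uniform magnitude estimates implied by the constant-magnitude SignGD update. Since $L=2$ and $s_{i,l1}^{(t)} + s_{i,l2}^{(t)} = 1$, each softmax output is a logistic function of the difference of the two pre-softmax scores, so $|s_{i,l1}^{(t)} - 1/2| \leq |\Delta_{l1}^{(t)}|/4$ whenever $|\Delta_{l1}^{(t)}| \leq 1$, where
\begin{align*}
\Delta_{11}^{(t)} &:= \sum_{s=1}^{m_k} \langle \wv_{Q,s}^{(t)}, y_i\muv\rangle \bigl(\langle \wv_{K,s}^{(t)}, y_i\muv\rangle - \langle \wv_{K,s}^{(t)}, \xiv_i\rangle\bigr), \\
\Delta_{21}^{(t)} &:= \sum_{s=1}^{m_k} \langle \wv_{Q,s}^{(t)}, \xiv_i\rangle \bigl(\langle \wv_{K,s}^{(t)}, y_i\muv\rangle - \langle \wv_{K,s}^{(t)}, \xiv_i\rangle\bigr).
\end{align*}
Hence it suffices to show $|\Delta_{21}^{(t)}| = o(1)$ for $t \leq T_{3}^{+}$ and $|\Delta_{11}^{(t)}| = o(1)$ for $t \leq T_{4}$.

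The next step is to push through the crudest available magnitude bounds. The constant-magnitude update identities~\eqref{eq:query_key_feature_update_constant_magnitude}-\eqref{eq:query_key_noise_update_constant_magnitude}, together with the initialization bounds that define $\beta_{\muv}, \beta_{\xiv}$ (Lemma~\ref{lemma:network_initialization_query_key_inner_product_magnitude}) and Lemma~\ref{lemma:noise_magnitude}, yield uniformly in $s,i,t$,
\begin{align*}
|\langle \wv_{Q,s}^{(t)}, \muv\rangle|, \ |\langle \wv_{K,s}^{(t)}, \muv\rangle| \leq \beta_{\muv} + t\eta\norm{\muv}, \qquad |\langle \wv_{Q,s}^{(t)}, \xiv_i\rangle|, \ |\langle \wv_{K,s}^{(t)}, \xiv_i\rangle| \leq \beta_{\xiv} + t\eta\sigma_p s.
\end{align*}
Plugging these into $\Delta_{21}^{(t)}$, the dominant term is the noise$\,\times\,$noise product, giving $|\Delta_{21}^{(t)}| \leq O\bigl(m_k(\beta_{\xiv}+t\eta\sigma_ps)^2\bigr)$. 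At $t = T_{3}^{+} = \Theta(\sqrt{\log(m_k n/\delta)}\,\sigma_0\eta^{-1})$, both $\beta_{\xiv}$ and $t\eta\sigma_ps$ are $\Tilde{O}(\sigma_0\sigma_ps)$, so $|\Delta_{21}^{(t)}| = \Tilde{O}(m_k\sigma_0^2\sigma_p^2s^2) = o(1)$ by the hypothesis $\sigma_0 = o(\sigma_p^{-1}s^{-1}m_k^{-1/2})$ of Condition~\ref{cond:main_condition}. For $\Delta_{11}^{(t)}$ the dominant contribution is the signal$\,\times\,$noise cross-term $O\bigl(m_k(\beta_{\muv}+t\eta\norm{\muv})(\beta_{\xiv}+t\eta\sigma_ps)\bigr)$, so at $t = T_{4} = C_3\log(C_3\sigma_ps/\norm{\muv})\,\eta^{-1}m_k^{-1/2}\sigma_p^{-1}s^{-1}$ this is $O\bigl(\log^2(\sigma_ps/\norm{\muv})\cdot\norm{\muv}/(\sigma_ps)\bigr) = o(1)$, which holds since Condition~\ref{cond:main_condition} forces $\sigma_ps/\norm{\muv} = \Omega(\poly(n))$.

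\textbf{Main obstacle.} The delicate point is the asymmetry of time horizons: the noise$\,\times\,$noise product inside $\Delta_{21}$ grows quadratically at rate $\eta^2\sigma_p^2s^2$ and saturates exactly at $T_{3}^{+}$, whereas the signal$\,\times\,$noise product inside $\Delta_{11}$ grows only at the much slower rate $\eta^2\sigma_ps\norm{\muv}$ and therefore survives up to the logarithmically-later $T_{4}$. Getting the $T_{4}$ bound to come out to $o(1)$ requires precisely the logarithmic form of $T_{4}$ matched to the large ratio $\sigma_ps/\norm{\muv}$, and the noise-strength hypothesis $\sigma_p = \Omega(d^{-1/4}n^3)$ is what ultimately drives this separation. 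The pitfall to avoid is any appeal to the fine sign-alignment structure of Stages~I-III; those are not needed here, and the crude triangle-inequality bounds above suffice throughout the entire range $[0,T_{4}]$.
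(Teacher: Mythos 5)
Your proposal is correct and takes essentially the same route as the paper's proof. Both convert the softmax concentration into a bound on the logit gap $\Delta_{l1}^{(t)} = z_{l1}^{(t)} - z_{l2}^{(t)}$, both bound that gap using only the crude $|\langle \wv_{\cdot,s}^{(t)},\cdot\rangle| \leq \beta_{\cdot} + t\eta\norm{\cdot}$ estimates from the constant-magnitude SignGD update and the initialization lemmas (with no appeal to the sign-alignment structure of Stages I--III), and both arrive at the identical dominant quantities $m_k\sigma_0^2\sigma_p^2s^2$ at $T_3^+$ and $\log^2(\sigma_ps/\norm{\muv})\norm{\muv}/(\sigma_ps)$ at $T_4$; the only cosmetic difference is that you invoke the $1/4$-Lipschitz bound on the sigmoid while the paper directly expands $\frac{1}{1+e^{-x}} \leq \frac12 + O(x)$, which is the same estimate.
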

\begin{proof}[Proof of Lemma~\ref{lemma:softmax_concentrate_on_1/2}]
Note that for $t \geq 0$,
\begin{align*}
    \abs{\langle \wv_{Q, s}^{(t)}, y_i\xiv_i\rangle}
    & \leq \abs{\langle \wv_{Q, s}^{(0)}, y_i\xiv_i\rangle}
    + 2t\eta\sigma_ps \\
    & \leq 4\max\set{\beta_{\xiv}, t\eta\sigma_ps},
\end{align*}
for all $i \in [n]$ and $s\in[m_k]$ and 
\begin{align*}
    \abs{\langle \wv_{Q, s}^{(t)}, \muv\rangle} 
    & \leq \abs{\langle \wv_{Q, s}^{(0)}, \muv\rangle}
    + t\eta\norm{\muv} \\
    & \leq 4\max\set{\beta_{\muv}, t\eta\norm{\muv}},
\end{align*}
for all $s\in[m_k]$. Similarly, we also have
\begin{align*}
    \abs{\langle \wv_{K, s}^{(t)}, \muv\rangle} 
    & \leq 4\max\set{\beta_{\muv}, t\eta\norm{\muv}}, \\
    \abs{\langle \wv_{K, s}^{(t)}, y_i\xiv_i\rangle}
    & \leq 4\max\set{\beta_{\xiv}, t\eta\sigma_ps}.
\end{align*}
When $t \leq T_{3}^{+}$, we have for all $i\in[n]$
\begin{align*}
    & \abs{\langle \wv_{Q, s}^{(t)}, y_i\xiv_i\rangle}, \abs{\langle \wv_{K, s}^{(t)}, y_i\xiv_i\rangle}
    \leq O(\sqrt{\log(12m_k/\delta)}\sigma_0\sigma_ps), \\
    & \abs{\langle \wv_{Q, s}^{(t)}, \muv\rangle}, \abs{\langle \wv_{K, s}^{(t)}, \muv\rangle}
    \leq O(\sqrt{\log(12m_k/\delta)}\sigma_0\norm{\muv}).
\end{align*}
Then we have for all $i\in[n]$,
\begin{align*}
    s_{i,21}^{(t)} & = \frac{
        \exp\left(\sum_{s\in[m_k]} 
        \langle \wv_{Q,s}^{(t)}, y_i\xiv_i\rangle\langle\wv_{K,s}^{(t)}, \muv\rangle
        \right)
        }{
        \exp\left(\sum_{s\in[m_k]}
        \langle \wv_{Q,s}^{(t)}, y_i\xiv_i\rangle\langle\wv_{K,s}^{(t)}, \muv\rangle
        \right) + 
        \exp\left(\sum_{s\in[m_k]}
        \langle \wv_{Q,s}^{(t)}, y_i\xiv_i\rangle\langle\wv_{K,s}^{(t)}, y_i\xiv_i\rangle
        \right)
        } \\
    & \leq \frac{
        \exp\left(
        O(m_k
        \log(12m_k/\delta)\sigma_0^2\sigma_ps\norm{\muv})
        \right)
        }{
        \exp\left(
        O(m_k
        \log(12m_k/\delta)\sigma_0^2\sigma_ps\norm{\muv})
        \right) + 
        \exp\left(
        -O(m_k
        \log(12m_k/\delta)\sigma_0^2\sigma_p^2s^2)
        \right)
        } \\
    & = \frac{
        1
        }{
        1 + 
        \exp\left(
        -O(m_k\log(12m_k/\delta)\sigma_0^2
        (\sigma_p^2s^2+\sigma_ps\norm{\muv}))
        \right) 
        } \\
    & \leq \frac{1}{2} + O(m_k\log(12m_k/\delta)\sigma_0^2
        (\sigma_p^2s^2+\sigma_ps\norm{\muv})),
\end{align*}
and 
\begin{align*}
    s_{i,21}^{(t)} & = \frac{
        \exp\left(\sum_{s\in[m_k]} 
        \langle \wv_{Q,s}^{(t)}, y_i\xiv_i\rangle\langle\wv_{K,s}^{(t)}, \muv\rangle
        \right)
        }{
        \exp\left(\sum_{s\in[m_k]}
        \langle \wv_{Q,s}^{(t)}, y_i\xiv_i\rangle\langle\wv_{K,s}^{(t)}, \muv\rangle
        \right) + 
        \exp\left(\sum_{s\in[m_k]}
        \langle \wv_{Q,s}^{(t)}, y_i\xiv_i\rangle\langle\wv_{K,s}^{(t)}, y_i\xiv_i\rangle
        \right)
        } \\
    & \geq \frac{
        \exp\left(
        -O(m_k
        \log(12m_k/\delta)\sigma_0^2\sigma_ps\norm{\muv})
        \right)
        }{
        \exp\left(
        -O(m_k
        \log(12m_k/\delta)\sigma_0^2\sigma_ps\norm{\muv})
        \right) + 
        \exp\left(
        O(m_k
        \log(12m_k/\delta)\sigma_0^2\sigma_p^2s^2)
        \right)
        } \\
    & = \frac{
        1
        }{
        1 + 
        \exp\left(
        O(m_k\log(12m_k/\delta)\sigma_0^2
        (\sigma_p^2s^2+\sigma_ps\norm{\muv}))
        \right) 
        } \\
    & \geq \frac{1}{2} - O(m_k\log(12m_k/\delta)\sigma_0^2
        (\sigma_p^2s^2+\sigma_ps\norm{\muv})).
\end{align*}
Similarly, when $t \leq T_{4}$, we have for all $i\in[n]$
\begin{align}
    \label{eq:upper_bound_for_query_key_noise_T_4}
    & \abs{\langle \wv_{Q, s}^{(t)}, y_i\xiv_i\rangle}, \abs{\langle \wv_{K, s}^{(t)}, y_i\xiv_i\rangle}
    \leq O(\log(C_3\sigma_ps/\norm{\muv})m_k^{-1/2}), \\
    \label{eq:upper_bound_for_query_key_feature_T_4}
    & \abs{\langle \wv_{Q, s}^{(t)}, \muv\rangle}, \abs{\langle \wv_{K, s}^{(t)}, \muv\rangle}
    \leq O(\log(C_3\sigma_ps/\norm{\muv})m_k^{-1/2}\sigma_{p}^{-1}s^{-1}\norm{\muv}).
\end{align}
Then we have for all $i\in[n]$,
\begin{align*}
    s_{i,11}^{(t)} & = \frac{
        \exp\left(\sum_{s\in[m_k]} 
        \langle \wv_{Q,s}^{(t)}, \muv\rangle\langle\wv_{K,s}^{(t)}, \muv\rangle
        \right)
        }{
        \exp\left(\sum_{s\in[m_k]}
        \langle \wv_{Q,s}^{(t)}, \muv\rangle\langle\wv_{K,s}^{(t)}, \muv\rangle
        \right) + 
        \exp\left(\sum_{s\in[m_k]}
        \langle \wv_{Q,s}^{(t)}, \muv\rangle\langle\wv_{K,s}^{(t)}, y_i\xiv_i\rangle
        \right)
        } \\
    & \leq \frac{
        \exp\left(
        O(
        \log^2(C_3\sigma_ps/\norm{\muv})\sigma_{p}^{-2}s^{-2}\norm{\muv}^2)
        \right)
        }{
        \exp\left(
        O(
        \log^2(C_3\sigma_ps/\norm{\muv})\sigma_{p}^{-2}s^{-2}\norm{\muv}^2)
        \right) + 
        \exp\left(
        -O(
        \log^2(C_3\sigma_ps/\norm{\muv})\sigma_{p}^{-1}s^{-1}\norm{\muv})
        \right)
        } \\
    & = \frac{
        1
        }{
        1 + 
        \exp\left(
        -O(
        \log^2(C_3\sigma_ps/\norm{\muv})\sigma_{p}^{-1}s^{-1}\norm{\muv}(1 + \sigma_{p}^{-1}s^{-1}\norm{\muv}))
        \right) 
        } \\
    & \leq \frac{1}{2} + O(\log^2(C_3\sigma_ps/\norm{\muv})\sigma_{p}^{-1}s^{-1}\norm{\muv}(1 + \sigma_{p}^{-1}s^{-1}\norm{\muv})),
\end{align*}
and 
\begin{align*}
    s_{i,11}^{(t)} & = \frac{
        \exp\left(\sum_{s\in[m_k]} 
        \langle \wv_{Q,s}^{(t)}, \muv\rangle\langle\wv_{K,s}^{(t)}, \muv\rangle
        \right)
        }{
        \exp\left(\sum_{s\in[m_k]}
        \langle \wv_{Q,s}^{(t)}, \muv\rangle\langle\wv_{K,s}^{(t)}, \muv\rangle
        \right) + 
        \exp\left(\sum_{s\in[m_k]}
        \langle \wv_{Q,s}^{(t)}, \muv\rangle\langle\wv_{K,s}^{(t)}, y_i\xiv_i\rangle
        \right)
        } \\
    & \geq \frac{
        \exp\left(
        -O(
        \log^2(C_3\sigma_ps/\norm{\muv})\sigma_{p}^{-2}s^{-2}\norm{\muv}^2)
        \right)
        }{
        \exp\left(
        -O(
        \log^2(C_3\sigma_ps/\norm{\muv})\sigma_{p}^{-2}s^{-2}\norm{\muv}^2)
        \right) + 
        \exp\left(
        O(
        \log^2(C_3\sigma_ps/\norm{\muv})\sigma_{p}^{-1}s^{-1}\norm{\muv})
        \right)
        } \\
    & = \frac{
        1
        }{
        1 + 
        \exp\left(
        O(
        \log^2(C_3\sigma_ps/\norm{\muv})\sigma_{p}^{-1}s^{-1}\norm{\muv}(1 + \sigma_{p}^{-1}s^{-1}\norm{\muv}))
        \right) 
        } \\
    & \geq \frac{1}{2} - O(\log^2(C_3\sigma_ps/\norm{\muv})\sigma_{p}^{-1}s^{-1}\norm{\muv}(1 + \sigma_{p}^{-1}s^{-1}\norm{\muv})).
\end{align*}
\end{proof}

The next lemma shows that loss derivative $\ell_{i}^{\prime(t)}$ are concentrated at 1/2 for all $i\in[n]$ before $T_{4}$.
\begin{lemma}
    \label{lemma:loss_prime_concentrate_on_1/2}
    Recall the $T_{4}$ defined in Eq.~\eqref{eq:def_T_4}.
    For all $i\in[n]$, we have
    \begin{align*}
        & \ell_{i}^{\prime(t)} = 1/2 \pm o(1),~\forall t\leq T_{4}.
    \end{align*}
\end{lemma}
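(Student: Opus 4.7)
The plan is to reduce the claim about the logistic derivative to a bound on the network output. Since $\ell(x) = \log(1+e^{-x})$ has derivative $\ell'(x) = -1/(1+e^x)$, a straightforward Taylor expansion gives $|\ell'(x) + 1/2| = O(|x|)$ whenever $|x| = o(1)$. Thus it suffices to prove $|y_i f(\Wv^{(t)}, \Xv_i)| = o(1)$ for all $i \in [n]$ and $t \leq T_4$, which we obtain by controlling the two value-data inner products appearing in the output.

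First, I would expand the margin. Using Eq.~\eqref{def:softmaxattn_linearact_2layer_model} and the fact that the signal and noise patches are $y_i\muv$ and $\xiv_i$ (in either order), the output simplifies to
\begin{align*}
y_i f(\Wv^{(t)}, \Xv_i) = (s_{i,11}^{(t)} + s_{i,21}^{(t)})\langle \vv^{(t)}, \muv\rangle + (s_{i,12}^{(t)} + s_{i,22}^{(t)})\langle \vv^{(t)}, y_i\xiv_i\rangle.
\end{align*}
Since $s_{i,l1}^{(t)} + s_{i,l2}^{(t)} = 1$, both prefactors are bounded by $2$, so I only need to bound $|\langle \vv^{(t)}, \muv\rangle|$ and $|\langle \vv^{(t)}, y_i\xiv_i\rangle|$.

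Second, I would bound these two quantities using the mean-value dynamics derived in Appendix~\ref{sec:proof_value_dynamics} (cf.\ Eqs.~\eqref{eq:update_formula_of_mean_value_feature}--\eqref{eq:update_formula_of_mean_value_noise}), together with the initialization bounds of Lemma~\ref{lemma:network_initialization_mean_value_inner_product_magnitude} and the noise-norm bound of Lemma~\ref{lemma:noise_magnitude}. Since each step moves $\langle \vv^{(t)}, \muv\rangle$ by at most $2\eta\|\muv\|$ and $\langle \vv^{(t)}, y_i\xiv_i\rangle$ by at most $2\eta\|\xiv_i\|_1 \leq 2\eta\sigma_p s$, at $t \leq T_4 = \tilde{\Theta}(\eta^{-1} m_k^{-1/2}\sigma_p^{-1}s^{-1})$ we get
\begin{align*}
|\langle \vv^{(t)}, y_i\xiv_i\rangle| \leq \tilde O(m_v^{-1/2}\sigma_0\sigma_p\sqrt s) + \tilde O(m_k^{-1/2}), \qquad
|\langle \vv^{(t)}, \muv\rangle| \leq \tilde O(m_v^{-1/2}\sigma_0\|\muv\|) + \tilde O(m_k^{-1/2}\sigma_p^{-1}s^{-1}\|\muv\|).
\end{align*}
Under Condition~\ref{cond:main_condition}, the initialization terms are $o(1)$ because $\sigma_0 = o(\sigma_p^{-1}s^{-1}m_k^{-1/2})$, and the growth terms are $o(1)$ because $m_k^{-1/2}$ (times a logarithmic factor inside $T_4$) vanishes when $m_k = \Omega(\polylog(d))$, while the signal-side term gains an extra factor $\sigma_p^{-1}s^{-1}\|\muv\| = o(1)$.

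Combining yields $|y_i f(\Wv^{(t)}, \Xv_i)| = o(1)$, which by the Taylor argument above gives $\ell_i^{\prime(t)} = 1/2 \pm o(1)$ for every $i \in [n]$ and $t \leq T_4$. The main subtlety to watch is the interplay between the logarithmic factor hidden in $T_4$ and the width $m_k$: the $\log(C_3\sigma_p s/\|\muv\|)$ term from the definition of $T_4$ must be absorbed by $\sqrt{m_k}$, which is why the condition $m_k = \Omega(\polylog(d))$ with a sufficiently large polylog exponent is required. Once that is verified the rest of the calculation is routine.
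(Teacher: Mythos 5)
Your proof is correct and follows essentially the same route as the paper's: both arguments reduce the claim to showing the margin $y_i f(\Wv^{(t)}, \Xv_i)$ is small, which is in turn controlled via the linear-in-$t$ growth of the mean value noise and signal, $\langle \vv^{(t)}, y_i\xiv_i\rangle$ and $\langle \vv^{(t)}, \muv\rangle$. The only structural difference is presentational: the paper splits into $t \leq T_3^+$ (where both softmax outputs concentrate at $1/2$, so the coefficient is $1\pm o(1)$) and $T_3^+ < t \leq T_4$, and in the second range it establishes the upper bound $\ell_i^{\prime(t)} \leq 1/2 + o(1)$ by a sign argument (the exponent is nonnegative when $y_i = 1$), then the lower bound via the magnitude bound $\langle\vv^{(T_4)}, y_i\xiv_i\rangle = \tilde O(m_k^{-1/2})$ with softmax coefficient at most $3/2 + o(1)$. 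You instead use a single unified estimate bounding the coefficients trivially by $2$ and showing $|y_if| = o(1)$ directly, which is slightly cleaner and gives the same conclusion. You also correctly flag the interplay between the $\log(C_3\sigma_p s / \|\muv\|)$ factor in $T_4$ and $\sqrt{m_k}$, a point the paper absorbs into its $\tilde O(\cdot)$ notation without comment.
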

\begin{proof}[Proof of Lemma~\ref{lemma:loss_prime_concentrate_on_1/2}]
Note that $\ell(z) = \log(1 + \exp(-z))$ and $-\ell'(z) = 1 / (1 + \exp(z))$. For $t \leq T_{3}^{+}$ we have
\begin{align*}
    \ell_{i}^{\prime(t)}
    = & \frac{
    1
    }{
    1
    +
    \exp\left(
    (s_{i,11}^{(t)} + s_{i,21}^{(t)})
    \langle\vv^{(t)}, y_i\muv\rangle
    + (s_{i,12}^{(t)} + s_{i,22}^{(t)})
    \langle\vv^{(t)}, \xiv_i\rangle
    \right)
    } \\
    = & \frac{
    1
    }{
    1
    +
    \exp\left(
    y_i(1 \pm o(1))
    (
    \langle\vv^{(t)}, \muv\rangle
    +
    \langle\vv^{(t)}, y_i\xiv_i\rangle
    )
    \right)
    } \\
    = & \frac{
    1
    }{
    1
    +
    \exp\left(
    y_i(1 \pm o(1))
    \Tilde{O}(\sigma_0\sigma_ps)
    \right)
    } \\
    = & \frac{1}{2} \pm o(1),
\end{align*}
where the first step is by definition, 
the second step is by Lemma~\ref{lemma:softmax_concentrate_on_1/2}, 
the third step is by Lemma~\ref{lemma:magnitude_of_mean_value_noise_after_T_2},
the last step is due to $\sigma_0\sigma_ps = o(1)$. 
For $t \leq T_{4}$, suppose $y_i = 1$, we have $\langle\vv^{(t)}, y_i\muv\rangle, \langle\vv^{(t)}, \xiv_i\rangle > 0$, then
\begin{align*}
    \ell_{i}^{\prime(t)}
    & = \frac{
    1
    }{
    1
    +
    \exp\left(
    (s_{i,11}^{(t)} + s_{i,21}^{(t)})
    \langle\vv^{(t)}, y_i\muv\rangle
    + (s_{i,12}^{(t)} + s_{i,22}^{(t)})
    \langle\vv^{(t)}, \xiv_i\rangle
    \right)
    } \\
    & \leq \frac{1}{2} + o(1),
\end{align*}
where the first step is by definition, 
the second step is by $\langle\vv^{(t)}, y_i\muv\rangle, \langle\vv^{(t)}, \xiv_i\rangle > 0$, and
\begin{align*}
    \ell_{i}^{\prime(t)}
    = & \frac{
    1
    }{
    1
    +
    \exp\left(
    (s_{i,11}^{(t)} + s_{i,21}^{(t)})
    \langle\vv^{(t)}, y_i\muv\rangle
    + (s_{i,12}^{(t)} + s_{i,22}^{(t)})
    \langle\vv^{(t)}, \xiv_i\rangle
    \right)
    } \\
    \geq & \frac{
    1
    }{
    1
    +
    \exp\left(
    y_i
    (3/2 \pm o(1))
    (
    \langle\vv^{(t)}, \muv\rangle
    +
    \langle\vv^{(t)}, y_i\xiv_i\rangle
    )
    \right)
    } \\
    = & \frac{
    1
    }{
    1
    +
    \exp\left(
    y_i(3/2 \pm o(1))
    \langle\vv^{(t)}, y_i\xiv_i\rangle
    \right)
    } \\
    = & \frac{
    1
    }{
    1
    +
    \exp\left(
    y_i(3/2 \pm o(1))
    \Tilde{O}(m_{k}^{-1/2})
    \right)
    } \\
    \geq & \frac{1}{2} - o(1),
\end{align*}
where the first step is by definition, 
the second step is by Lemma~\ref{lemma:softmax_concentrate_on_1/2}, 
the third step is by Lemma~\ref{lemma:magnitude_of_mean_value_noise_after_T_2},
the fourth and last step is by $\langle\vv^{(T_{4})}, y_i\xiv_i\rangle = \Tilde{O}(m_{k}^{-1/2}) = o(1)$.
\end{proof}

\subsubsection{Stage IV.a}
\label{sec:stage_iv_a}

This stage studies the dynamics in the interval $[T_{3}, T_{4}^{-}]$, show the fast decay of post-softmax weights and their impact. 
Roughly, everything keeps unchanged in this stage.
The idea is that everything keeps going on as usual before $T_{4}^{-}$, which is defined later.
And we don't control the behaviour between the interval $[T_{4}^{-}, T_{4}^{+}]$, since $T_{4}^{-}$ is not too far from $T_{4}^{+}$, we still can keep the good property at $T_{4}^{+}$ even in the worst case. 
Finally, after $T_{4}^{+}$, the good behaviour continues.
(Also at $T_{4}^{+}$, we have the softmax ratio is basically concentrated at 1.)

Let $T_{4}^{-} \geq T_{3}$ be the last time the such that for all $t\in[T_{3}, T_{4}^{-}]$, $s\in[m_k]$ and $i\in[n]$, following conditions hold, 
\begin{align}
    \label{eq:def_T_4^-_lower_bound_for_sum_noise}
    & \abs{\sum_{i=1}^{n} s_{i,21}^{(t)}s_{i,22}^{(t)} \langle \wv^{(t)}_{Q,s}, y_i\xiv_i\rangle}
    \geq
    \frac{1}{2}n \abs{\langle \wv^{(t)}_{Q,s}, \muv\rangle}, \\
    \label{eq:def_T_4^-_lower_bound_for_single_noise}
    & s_{i,21}^{(t)}s_{i,22}^{(t)}\abs{\langle \wv^{(t)}_{Q,s}, y_i\xiv_i\rangle}
    \geq
    2s_{i,11}^{(t)}s_{i,12}^{(t)} \abs{\langle \wv^{(t)}_{Q,s}, \muv\rangle}.
\end{align}
Note that at $T_{3}$ we have
\begin{align*}
    & 
    \sum_{i\in[n]} s_{i,21}^{(t)}s_{i,22}^{(t)} \langle \wv^{(t)}_{Q,s}, y_i\xiv_i\rangle
    - \frac{1}{2}n\langle \wv^{(t)}_{Q,s}, \muv\rangle \\
    = &
    \frac{1}{4} \sum_{i\in[n]}\langle \wv^{(t)}_{Q,s}, y_i\xiv_i\rangle
    - \frac{1}{2}n\langle \wv^{(t)}_{Q,s}, \muv\rangle \\
    \geq &
    \frac{1}{4}\sum_{i\in[n]}\langle \wv^{(T_{2})}_{Q,s}, y_i\xiv_i\rangle
    -\frac{1}{2}n\langle \wv^{(T_{2})}_{Q,s}, \muv\rangle - (t-T_{2})\eta(\frac{\sigma_ps}{4\sqrt{2}} -  \frac{n\norm{\muv}}{2}) 
    > 0,
\end{align*}
and 
\begin{align*}
    & 
    s_{i,21}^{(t)}s_{i,22}^{(t)} \langle \wv^{(t)}_{Q,s}, y_i\xiv_i\rangle
    - 2s_{i,11}^{(t)}s_{i,12}^{(t)}\langle \wv^{(t)}_{Q,s}, \muv\rangle \\
    = &
    \frac{1}{4}(1+o(1)) \wv^{(t)}_{Q,s}, y_i\xiv_i\rangle
    - \frac{1}{2}(1+o(1))\langle \wv^{(t)}_{Q,s}, \muv\rangle \\
    \geq &
    \frac{1}{4}(1+o(1)) \langle \wv^{(T_{2})}_{Q,s}, y_i\xiv_i\rangle
    -\frac{1}{2}(1+o(1)) \langle \wv^{(T_{2})}_{Q,s}, \muv\rangle - (t-T_{2})\eta(\frac{\sigma_ps}{4\sqrt{2}} -  \frac{\norm{\muv}}{2}) 
    > 0,
\end{align*}
which implies $T_{4}^{-}$ is well-defined.

We use induction on $t$ to simultaneously prove the following properties for all $t = T_{3}, \dots, T_{4}^{-}$:
\begin{itemize}
    \item $\mathcal{A}(t)$, the linear-with-$t$ estimation: for all $i\in[n]$ and $s\in[m_k]$
    \begin{align*}
        & \abs{\langle \wv_{Q,s}^{(t)}, y_i\xiv_i\rangle}
        = t\eta\sqrt{\frac{2}{\pi}}\sigma_ps(1 \pm \Tilde{O}(s^{-1/2})), \\
        & \abs{\langle \wv_{K,s}^{(t)}, y_i\xiv_i\rangle}
        = t\eta\sqrt{\frac{2}{\pi}}\sigma_ps(1 \pm \Tilde{O}(s^{-1/2})).
    \end{align*}
    \item $\mathcal{D}(t)$, query noise is increasing at $t$: for all $i\in[n]$ and $s\in[m_k]$
    \begin{align*}
        \langle \wv^{(t+1)}_{Q,s}, y_i\xiv_i\rangle 
        = \langle \wv^{(t)}_{Q,s}, y_i\xiv_i\rangle 
        + \sgn(\langle \wv^{(t)}_{Q,s}, y_i\xiv_i\rangle) \eta\norm{\xiv_i}_1.
    \end{align*}
    \item $\mathcal{E}(t)$, key noise is increasing at $t$: for all $i\in[n]$ and $s\in[m_k]$
    \begin{align*}
        \langle \wv^{(t+1)}_{K,s}, y_i\xiv_i\rangle 
        = \langle \wv^{(t)}_{K,s}, y_i\xiv_i\rangle 
        + \sgn(\langle \wv^{(t)}_{K,s}, y_i\xiv_i\rangle) \eta\norm{\xiv_i}_1.
    \end{align*}
    \item $\mathcal{F}(t)$, query/key noise have the same sign at $t$: for all $i\in[n]$ and $s\in[m_k]$
    \begin{align*}
        \langle \wv^{(t)}_{Q,s}, y_i\xiv_i\rangle =_{\sgn} \langle \wv^{(t)}_{K,s}, y_i\xiv_i\rangle.
    \end{align*}
\end{itemize}
\begin{claim}
    \label{claim:F(t)=>E(t)_from_T_4_to_T_2^-}
    $\mathcal{F}(t) \Longrightarrow \mathcal{E}(t)$.
\end{claim}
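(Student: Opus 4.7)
The plan is to derive $\mathcal{E}(t)$ directly from the sign-update formula for the key noise, using the domination condition that defines $T_{4}^{-}$ together with the sign-alignment hypothesis $\mathcal{F}(t)$. Concretely, the key noise update has two ingredients: a sign and a magnitude. Since the magnitude of any key-noise inner product update is constant at $\eta\norm{\xiv_i}_1$ by Eq.~\eqref{eq:query_key_noise_update_constant_magnitude}, all that remains is to identify the sign.

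First, I would invoke Eq.~\eqref{eq:kxi_update_sign}, which gives
\[
\langle \wv_{K, s}^{(t+1)}-\wv_{K, s}^{(t)}, y_{i}\xiv_i\rangle
=_{\sgn} s_{i,11}^{(t)}s_{i,12}^{(t)} \langle \wv^{(t)}_{Q,s}, \muv\rangle + s_{i,21}^{(t)}s_{i,22}^{(t)} \langle \wv^{(t)}_{Q,s}, y_{i}\xiv_i\rangle.
\]
Next I would use the defining condition Eq.~\eqref{eq:def_T_4^-_lower_bound_for_single_noise} of $T_{4}^{-}$, which guarantees that for every $t \in [T_{3}, T_{4}^{-}]$, $s \in [m_k]$ and $i \in [n]$,
\[
s_{i,21}^{(t)}s_{i,22}^{(t)}\abs{\langle \wv^{(t)}_{Q,s}, y_i\xiv_i\rangle} \;\geq\; 2\,s_{i,11}^{(t)}s_{i,12}^{(t)} \abs{\langle \wv^{(t)}_{Q,s}, \muv\rangle}.
\]
A simple absolute-value triangle-inequality argument then shows that the query-noise term strictly dominates the query-signal term, so the sign of the sum coincides with $\sgn(\langle \wv^{(t)}_{Q,s}, y_{i}\xiv_i\rangle)$.

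Finally, the sign-alignment hypothesis $\mathcal{F}(t)$ asserts $\sgn(\langle \wv^{(t)}_{Q,s}, y_i\xiv_i\rangle) = \sgn(\langle \wv^{(t)}_{K,s}, y_i\xiv_i\rangle)$, so the update direction of the key noise equals $\sgn(\langle \wv^{(t)}_{K,s}, y_i\xiv_i\rangle)$. Combining this with the constant magnitude $\eta\norm{\xiv_i}_1$ of the update yields exactly $\mathcal{E}(t)$.

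This is a short implication-style step rather than a substantive derivation, so I do not anticipate any real obstacle; the only thing to be a bit careful about is the degenerate case where $\langle \wv^{(t)}_{K,s}, y_i\xiv_i\rangle$ might be close to zero, but since $t \geq T_{3}$ and the magnitude bounds from Lemma~\ref{lemma:magnitude_of_query_key_noise_from_T_2_to_T_3^+} together with the induction hypothesis $\mathcal{A}(t)$ ensure $|\langle \wv^{(t)}_{K,s}, y_i\xiv_i\rangle| = \Theta(t\eta\sigma_p s) > 0$, the sign is unambiguously defined.
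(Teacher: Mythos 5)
Your proposal is correct and follows essentially the same route as the paper: apply the key-noise sign-update formula (Eq.~\eqref{eq:kxi_update_sign}), use the domination condition Eq.~\eqref{eq:def_T_4^-_lower_bound_for_single_noise} defining $T_{4}^{-}$ to conclude the query-noise term controls the sign, and then pass through $\mathcal{F}(t)$ to replace $\sgn(\langle \wv^{(t)}_{Q,s}, y_i\xiv_i\rangle)$ by $\sgn(\langle \wv^{(t)}_{K,s}, y_i\xiv_i\rangle)$. The paper's proof is a two-line sign chain leaving the $\mathcal{F}(t)$ step implicit; your write-up just spells out the dominance inequality and the nondegeneracy caveat more explicitly.
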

\begin{claim}
    \label{claim:E(t)_F(t)=>D(t+1)_from_T_4_to_T_2^-}
    $\mathcal{D}(T_{3}),
    \mathcal{E}(T_{3}),\dots,\mathcal{E}(t-1),
    \mathcal{F}(T_{3}),\dots,\mathcal{F}(t-1) \Longrightarrow \mathcal{D}(t)$.
\end{claim}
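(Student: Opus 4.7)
The plan is to reduce $\mathcal{D}(t)$ to a sign identity about the key side via the gradient formula, then to establish that identity by a straight unfolding argument in which the quantity $\langle\wv^{(t)}_{K,s}, y_i\xiv_i\rangle$ dominates $\langle\wv^{(t)}_{K,s}, \muv\rangle$. By Eq.~\eqref{eq:qxi_update_sign} and Eq.~\eqref{eq:query_key_noise_update_constant_magnitude}, proving $\mathcal{D}(t)$ is equivalent to showing
$$\sgn\bigl(-\langle\wv^{(t)}_{K,s}, \muv\rangle + \langle\wv^{(t)}_{K,s}, y_i\xiv_i\rangle\bigr) = \sgn(\langle\wv^{(t)}_{Q,s}, y_i\xiv_i\rangle).$$
I will show both sides are equal to the common reference sign $\sgn(\langle\wv^{(T_3)}_{K,s}, y_i\xiv_i\rangle)$.

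For the left-hand side, iterating $\mathcal{E}(T_3),\ldots,\mathcal{E}(t-1)$ gives the closed form
$$\langle\wv^{(t)}_{K,s}, y_i\xiv_i\rangle = \langle\wv^{(T_3)}_{K,s}, y_i\xiv_i\rangle + (t-T_3)\sgn(\langle\wv^{(T_3)}_{K,s}, y_i\xiv_i\rangle)\eta\norm{\xiv_i}_1,$$
so the sign is preserved. Combining with Lemma~\ref{lemma:magnitude_of_query_key_noise_from_T_2_to_T_3^+} at $T_3$ and Lemma~\ref{lemma:noise_magnitude}, I obtain $|\langle\wv^{(t)}_{K,s}, y_i\xiv_i\rangle| \geq t\eta\sigma_p s/2$. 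On the other hand, Eq.~\eqref{eq:query_key_feature_update_constant_magnitude} together with Lemma~\ref{lemma:upper_and_lower_bound_for_query_key_feature_for_T_3_to_T_3^+} yields $|\langle\wv^{(t)}_{K,s}, \muv\rangle| \leq 2t\eta\norm{\muv}$. Under Condition~\ref{cond:main_condition} we have $\sigma_p s \gg \norm{\muv}$, so the noise term dominates and the bracket inherits the sign of the noise inner product.

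For the right-hand side, I cannot invoke $\mathcal{D}(t')$ for $t' \in (T_3, t-1]$ directly, but I can \emph{re-derive} each intermediate update from the same dominance argument. For every $t' \in [T_3, t-1]$, using only $\mathcal{E}(T_3),\ldots,\mathcal{E}(t'-1)$ (all in the hypothesis), the analysis of the preceding paragraph applied at time $t'$ shows
$$\langle\wv^{(t'+1)}_{Q,s} - \wv^{(t')}_{Q,s}, y_i\xiv_i\rangle = \sgn(\langle\wv^{(T_3)}_{K,s}, y_i\xiv_i\rangle)\cdot\eta\norm{\xiv_i}_1.$$
Telescoping and applying $\mathcal{F}(T_3)$, which gives $\sgn(\langle\wv^{(T_3)}_{Q,s}, y_i\xiv_i\rangle) = \sgn(\langle\wv^{(T_3)}_{K,s}, y_i\xiv_i\rangle)$, I conclude
$$\langle\wv^{(t)}_{Q,s}, y_i\xiv_i\rangle = \langle\wv^{(T_3)}_{Q,s}, y_i\xiv_i\rangle + (t-T_3)\sgn(\langle\wv^{(T_3)}_{K,s}, y_i\xiv_i\rangle)\eta\norm{\xiv_i}_1,$$
so both summands share the same sign and $\sgn(\langle\wv^{(t)}_{Q,s}, y_i\xiv_i\rangle) = \sgn(\langle\wv^{(T_3)}_{K,s}, y_i\xiv_i\rangle)$, matching the left side.

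The main obstacle is the asymmetry of the hypothesis: only $\mathcal{D}(T_3)$ is available while one would naively want $\mathcal{D}(T_3),\ldots,\mathcal{D}(t-1)$ to track the sign of the query noise across the interval. The resolution is to observe that $\mathcal{D}(t')$ is not really needed as a standalone hypothesis, because the raw gradient identity Eq.~\eqref{eq:qxi_update_sign} together with $\mathcal{E}(T_3),\ldots,\mathcal{E}(t'-1)$ already pins down the update direction at each prior $t'$ via the same $\sigma_p s \gg \norm{\muv}$ domination argument. This observation is what allows a single, uniform noise-over-signal estimate to control the entire interval $[T_3, t]$ at once; the parameter scales in Condition~\ref{cond:main_condition} and the Stage III magnitude bounds are precisely what make this estimate valid as long as $t \leq T_4^-$.
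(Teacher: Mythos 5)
Your proof is correct and follows essentially the same logic as the paper's, with a small but genuine variation in how the sign is anchored. The paper's proof fixes the sign of the bracket $-\langle\wv^{(T_3)}_{K,s},\muv\rangle + \langle\wv^{(T_3)}_{K,s},y_i\xiv_i\rangle$ directly from $\mathcal{D}(T_3)$ and then propagates it forward via the monotone chain
\begin{align*}
-\langle\wv^{(t)}_{K,s},\muv\rangle + \langle\wv^{(t)}_{K,s},y_i\xiv_i\rangle
> -\langle\wv^{(T_3)}_{K,s},\muv\rangle + \langle\wv^{(T_3)}_{K,s},y_i\xiv_i\rangle + (t-T_3)\eta(\norm{\xiv_i}_1 - \norm{\muv}) > 0,
\end{align*}
while you instead establish the sign of the bracket at each time step by a fresh magnitude-domination estimate (Lemma~\ref{lemma:magnitude_of_query_key_noise_from_T_2_to_T_3^+} and Lemma~\ref{lemma:upper_and_lower_bound_for_query_key_feature_for_T_3_to_T_3^+} with $\sigma_p s \gg \norm{\muv}$), never actually invoking $\mathcal{D}(T_3)$. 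Both routes rely on the same ingredients: Eq.~\eqref{eq:qxi_update_sign}, the constant update magnitude (Eq.~\eqref{eq:query_key_noise_update_constant_magnitude}), and iterating $\mathcal{E}$ to track key noise. Your explicit re-derivation of the intermediate query-noise updates and the telescoping step via $\mathcal{F}(T_3)$ make visible something the paper's one-line inequality leaves implicit: that the chain of positive updates, not any unavailable $\mathcal{D}(t')$ for $t' > T_3$, is what pins down $\sgn(\langle\wv^{(t)}_{Q,s},y_i\xiv_i\rangle)$. The paper's version is more economical (it needs no magnitude lemmas beyond the hypotheses and the $\norm{\xiv_i}_1 > \norm{\muv}$ scale separation); yours is more self-contained about the sign bookkeeping and does not require a WLOG reduction. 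Both are valid.
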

\begin{claim}
    \label{claim:A(t)_D(t)_E(t)=>A(t+1)_from_T_4_to_T_2^-}
    $\mathcal{A}(t),\mathcal{D}(t),\mathcal{E}(t) \Longrightarrow \mathcal{A}(t+1)$.
\end{claim}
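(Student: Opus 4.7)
The plan is to reduce the claim to a direct one-step bookkeeping computation that combines the sign-gradient update rule with a concentration estimate on $\norm{\xiv_i}_1$. Since $\mathcal{D}(t)$ and $\mathcal{E}(t)$ already pin down both the sign and the exact magnitude of the update to the query/key noise inner products at step $t$, the only thing left is to verify that when we add one $\eta\norm{\xiv_i}_1$ increment to a quantity already known to equal $t\eta\sqrt{2/\pi}\,\sigma_p s (1\pm \tilde O(s^{-1/2}))$, the sum equals $(t+1)\eta\sqrt{2/\pi}\,\sigma_p s (1\pm \tilde O(s^{-1/2}))$.

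Concretely, first I would take absolute values on both sides of $\mathcal{D}(t)$: since the increment has the same sign as $\langle \wv_{Q,s}^{(t)}, y_i\xiv_i\rangle$, we get
\begin{equation*}
\abs{\langle \wv^{(t+1)}_{Q,s}, y_i\xiv_i\rangle}
 = \abs{\langle \wv^{(t)}_{Q,s}, y_i\xiv_i\rangle} + \eta\norm{\xiv_i}_1.
\end{equation*}
Next, I would substitute the inductive hypothesis $\mathcal{A}(t)$ for the first term, and invoke Lemma~\ref{lemma:noise_magnitude} for the second; the latter gives $\norm{\xiv_i}_1 = \sqrt{2/\pi}\,\sigma_p s \pm O(\sqrt{\log(4n/\delta)}\, s^{-1/2}) = \sqrt{2/\pi}\,\sigma_p s (1\pm\tilde O(s^{-1/2}))$ uniformly in $i\in[n]$. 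Adding the two expressions and pulling out the common factor $\eta\sqrt{2/\pi}\,\sigma_p s$ yields
\begin{equation*}
\abs{\langle \wv^{(t+1)}_{Q,s}, y_i\xiv_i\rangle}
 = (t+1)\eta\sqrt{\tfrac{2}{\pi}}\,\sigma_p s\,(1\pm \tilde O(s^{-1/2})),
\end{equation*}
which is exactly the query half of $\mathcal{A}(t+1)$. The key half follows by the same argument with $\mathcal{E}(t)$ in place of $\mathcal{D}(t)$.

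The only delicate point is ensuring that the relative error remains $\tilde O(s^{-1/2})$ rather than growing with $t$. This works because $\mathcal{A}(t)$ is a \emph{multiplicative} estimate and the additive increment $\eta\norm{\xiv_i}_1$ already carries the same $(1\pm\tilde O(s^{-1/2}))$ factor, so writing
\begin{equation*}
t\eta\sqrt{\tfrac{2}{\pi}}\sigma_p s(1\pm\tilde O(s^{-1/2})) + \eta\sqrt{\tfrac{2}{\pi}}\sigma_p s(1\pm\tilde O(s^{-1/2}))
\end{equation*}
and combining, the error envelope is $(t+1)\cdot \tilde O(s^{-1/2})$ inside the parentheses before dividing by $(t+1)$, which leaves the relative error at $\tilde O(s^{-1/2})$ as required. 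Thus the claim reduces entirely to algebra; no new probabilistic event beyond the one already secured by $\mathcal{E}_{\text{prelim}}$ (which contains the conclusion of Lemma~\ref{lemma:noise_magnitude}) is needed, and there is no genuine obstacle in this step—it is the cleanest of the three implications in the induction.
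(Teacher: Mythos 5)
Your proof is correct and essentially identical to the paper's: take absolute values in $\mathcal{D}(t)$ (resp. $\mathcal{E}(t)$) to turn the signed increment into $\abs{\langle\wv_{Q,s}^{(t)},y_i\xiv_i\rangle}+\eta\norm{\xiv_i}_1$, then substitute $\mathcal{A}(t)$ and Lemma~\ref{lemma:noise_magnitude} and combine. The paper's proof is just this three-line display without the extra commentary on why the relative error does not accumulate; your observation there is sound, since $\norm{\xiv_i}_1$ is a fixed quantity and each step adds the same $(1\pm\Tilde O(s^{-1/2}))$-factored increment, so the errors average rather than compound.
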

\begin{claim}
    \label{claim:F(t)_D(t)_E(t)=>F(t+1)_from_T_4_to_T_2^-}
    $\mathcal{F}(t),\mathcal{D}(t),\mathcal{E}(t) \Longrightarrow \mathcal{F}(t+1)$.
\end{claim}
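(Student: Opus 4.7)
The plan is to show that the single-step updates in $\mathcal{D}(t)$ and $\mathcal{E}(t)$ are sign-preserving on each coordinate of interest, and then observe that $\mathcal{F}(t)$ says query and key noise already share a sign at time $t$, so both updates move in the same direction and the signs remain aligned at time $t+1$.

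To make this precise, fix any $s \in [m_k]$ and $i \in [n]$, and let $\epsilon_i := \sgn(\langle \wv^{(t)}_{Q,s}, y_i\xiv_i\rangle) = \sgn(\langle \wv^{(t)}_{K,s}, y_i\xiv_i\rangle)$, where the equality uses $\mathcal{F}(t)$. Applying $\mathcal{D}(t)$ gives
\[
    \langle \wv^{(t+1)}_{Q,s}, y_i\xiv_i\rangle = \langle \wv^{(t)}_{Q,s}, y_i\xiv_i\rangle + \epsilon_i \eta \norm{\xiv_i}_1,
\]
so the new value is $\epsilon_i$ times a strictly positive quantity (since $\norm{\xiv_i}_1 > 0$ by Lemma~\ref{lemma:noise_magnitude}), hence $\sgn(\langle \wv^{(t+1)}_{Q,s}, y_i\xiv_i\rangle) = \epsilon_i$. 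The same argument applied to $\mathcal{E}(t)$ yields $\sgn(\langle \wv^{(t+1)}_{K,s}, y_i\xiv_i\rangle) = \epsilon_i$. Taking the ratio or directly comparing signs across $s$ and $i$ establishes $\mathcal{F}(t+1)$.

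There is essentially no obstacle here, since the entire content of the claim is contained in the sign structure of the sign-GD update: $\mathcal{D}(t)$ and $\mathcal{E}(t)$ already record that the query and key noise each grow \emph{in magnitude} along the direction of their current sign, and $\mathcal{F}(t)$ supplies the initial agreement of those signs. The only thing worth checking is that $\langle \wv^{(t)}_{Q,s}, y_i\xiv_i\rangle$ and $\langle \wv^{(t)}_{K,s}, y_i\xiv_i\rangle$ are both nonzero, which follows because $\mathcal{A}(t)$ gives the lower bound $|{\langle \wv^{(t)}_{Q,s}, y_i\xiv_i\rangle}|, |{\langle \wv^{(t)}_{K,s}, y_i\xiv_i\rangle}| = t\eta\sqrt{2/\pi}\sigma_p s (1 \pm \Tilde{O}(s^{-1/2}))$ and $t \geq T_3 > 0$, so the signs $\epsilon_i$ are unambiguous.
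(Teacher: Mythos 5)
Your proof is correct and follows essentially the same route the paper takes: apply $\mathcal{D}(t)$ and $\mathcal{E}(t)$ to see that query and key noise each step in the direction of their current sign, invoke $\mathcal{F}(t)$ to say those signs already agree, and conclude the agreement persists at $t+1$. In fact, your write-up is cleaner than the paper's: the paper's proof of this claim appears to contain a typo, writing $\muv$ and $\norm{\muv}$ where $y_i\xiv_i$ and $\norm{\xiv_i}_1$ are clearly intended (since $\mathcal{D}(t)$ and $\mathcal{E}(t)$ are statements about noise, not signal), and your version resolves this. The only minor nit is that your appeal to $\mathcal{A}(t)$ to rule out a zero value goes slightly beyond the stated hypotheses of the claim, but it is harmless since $\mathcal{A}(t)$ is carried along in the same induction.
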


Note that we have $\mathcal{A}(T_{3})$,  $\mathcal{D}(T_{3})$, $\mathcal{F}(T_{3})$ hold. 
$\mathcal{D}(T_{3})$ and $\mathcal{F}(T_{3})$ are already shown in the stage III.

\begin{proof}[Proof of $\mathcal{A}(T_{3})$]
    At $t = T_{3}$, we have
    \begin{align*}
        \langle \wv_{Q,s}^{(t)}, y_i\xiv_i\rangle
        & = (t-T_{2})\eta\norm{\xiv_i}_1 + \langle \wv_{Q,s}^{(T_{2})}, y_i\xiv_i\rangle 
         = t\eta\norm{\xiv_i}_1(1 \pm O(T_{2}t^{-1})) \\
        & = t\eta\sqrt{\frac{2}{\pi}}\sigma_ps(1 \pm \Tilde{O}(s^{-1/2}))(1 \pm O(T_{2}t^{-1})) 
         = t\eta\sqrt{\frac{2}{\pi}}\sigma_ps(1 \pm \Tilde{O}(s^{-1/2})),
    \end{align*}
    where the first step follows from the monotonicity of $\langle \wv_{Q,s}^{(t)}, y_i\xiv_i\rangle$ for $[T_{2}, T_{3}]$, 
    the second step follows from Eq.~\eqref{eq:upper_and_lower_bound_for_query_key_noise_after_12T_2'}, the estimate of the magnitude of $\langle \wv_{Q,s}^{(T_{2})}, y_i\xiv_i\rangle$,
    the third step follows from Lemma~\ref{lemma:noise_magnitude}, the concentration of $\norm{\xiv_i}_1$,
    the last step follows from $T_{2}/T_{3} = \Tilde{O}(s^{-1/2})$. The analysis is similar for $\langle \wv_{K,s}^{(t)}, y_i\xiv_i\rangle$. 
\end{proof}

\begin{proof}[Proof of Claim~\ref{claim:F(t)=>E(t)_from_T_4_to_T_2^-}]
    \begin{align*}
        & \langle \wv_{K, s}^{(t+1)}-\wv_{K, s}^{(t)}, y_{i}\xiv_i\rangle \\
        =_{\sgn}   
        & s_{i,11}^{(t)}s_{i,12}^{(t)} \langle \wv^{(t)}_{Q,s}, \muv\rangle
        + s_{i,21}^{(t)}s_{i,22}^{(t)} \langle \wv^{(t)}_{Q,s}, y_{i}\xiv_i\rangle \\
        =_{\sgn}   
        & \langle \wv^{(t)}_{Q,s}, y_{i}\xiv_i\rangle,
    \end{align*}
    where the first step is by Eq.~\eqref{eq:kxi_update_sign}, the second step is by the definition of $T_{4}^{-}$, i.e., Eq.~\eqref{eq:def_T_4^-_lower_bound_for_single_noise}.
\end{proof}
\begin{proof}[Proof of Claim~\ref{claim:E(t)_F(t)=>D(t+1)_from_T_4_to_T_2^-}]
    \begin{align*}
        & \langle \wv_{Q, s}^{(t+1)}-\wv_{Q, s}^{(t)}, y_{i}\xiv_i\rangle \\
        =_{\sgn}
        & - \langle \wv^{(t)}_{K,s}, \muv\rangle + \langle \wv^{(t)}_{K,s}, y_{i}\xiv_i\rangle \\
        > 
        & - \langle \wv^{(T_{3})}_{K,s}, \muv\rangle + \langle \wv^{(T_{3})}_{K,s}, y_{i}\xiv_i\rangle 
        + (t-T_{3})\eta(\norm{\xiv_i}_1 - \norm{\muv})
        > 0,
    \end{align*}
    where the first step is by Eq.~\eqref{eq:qxi_update_sign}, the second step is due to 
    $\mathcal{E}(T_{3}),\dots,\mathcal{E}(t-1),
    \mathcal{F}(T_{3}),\dots,\mathcal{F}(t-1)$, the third step is by $\mathcal{D}(T_{3})$.
\end{proof}

\begin{proof}[Proof of Claim~\ref{claim:A(t)_D(t)_E(t)=>A(t+1)_from_T_4_to_T_2^-}]
    \begin{align*}
        \abs{\langle \wv_{Q,s}^{(t+1)}, y_i\xiv_i\rangle}
        & = \abs{\langle \wv_{Q,s}^{(t)}, y_i\xiv_i\rangle} + \eta\norm{\xiv_i}_1 \\
        & = t\eta\sqrt{\frac{2}{\pi}}\sigma_ps(1 \pm \Tilde{O}(s^{-1/2})) + \eta\sqrt{\frac{2}{\pi}}\sigma_ps(1 \pm \Tilde{O}(s^{-1/2})) \\
        & = (t+1)\eta\sqrt{\frac{2}{\pi}}\sigma_ps(1 \pm \Tilde{O}(s^{-1/2})),
    \end{align*}
    where the first step is by $\mathcal{D}(t)$, the second step is by $\mathcal{A}(t)$ and Lemma~\ref{lemma:noise_magnitude}. The proof is same for $\langle \wv_{K,s}^{(t)}, y_i\xiv_i\rangle$. 
\end{proof}
\begin{proof}[Proof of Claim~\ref{claim:F(t)_D(t)_E(t)=>F(t+1)_from_T_4_to_T_2^-}]
    Suppose $\langle \wv_{Q,s}^{(t)}, y_i\xiv_i\rangle > 0$, then by $\mathcal{D}(t)$ we have
    \begin{align*}
        \langle \wv^{(t+1)}_{Q,s}, \muv\rangle 
        = \langle \wv^{(t)}_{Q,s}, \muv\rangle + \sgn(\langle \wv^{(t)}_{Q,s}, \muv\rangle)\eta\norm{\muv} > 0.
    \end{align*}
    By $\mathcal{F}(t)$, we have $\langle \wv_{K,s}^{(t)}, y_i\xiv_i\rangle > 0$, then by $\mathcal{E}(t)$ we have
    \begin{align*}
        \langle \wv^{(t+1)}_{K,s}, \muv\rangle 
        = \langle \wv^{(t)}_{K,s}, \muv\rangle + \sgn(\langle \wv^{(t)}_{K,s}, \muv\rangle)\eta\norm{\muv} > 0,
    \end{align*}
    which gives $\mathcal{F}(t+1)$.
\end{proof}

Now we study the ratio of softmax outputs across different samples. The following lemma is crucial to prove the lower bound of $T_{4}^{-}$.
\begin{lemma}
    \label{lemma:softmax_ratio_concentrate_on_1}
    Let 
    \begin{align}
        \label{eq:def_t^*}
        t^* := \min\set{T_{4}^{-}, T_{4}}.
    \end{align}
    Then, for all $T_{3} \leq t \leq t^*$ and all $i,k\in[n]$, we have
    \begin{align*}
        \frac{s_{i,21}^{(t)}}{s_{k,21}^{(t)}} = 1 \pm o(1),~\forall i,k\in[n].
    \end{align*}
\end{lemma}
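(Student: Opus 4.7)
The plan is to express the softmax output in closed form and then show that its argument varies by only $o(1)$ across samples. Write
\[
s_{i,21}^{(t)} \;=\; \frac{1}{1+\exp\!\bigl(C_i^{(t)}\bigr)},\qquad
C_i^{(t)} \;:=\; B_i^{(t)}-A_i^{(t)},
\]
where $B_i^{(t)}:=\sum_{s\in[m_k]}\langle \wv_{Q,s}^{(t)},y_i\xiv_i\rangle\langle \wv_{K,s}^{(t)},y_i\xiv_i\rangle$ collects the noise--noise scores and $A_i^{(t)}:=\sum_{s\in[m_k]}\langle \wv_{Q,s}^{(t)},y_i\xiv_i\rangle\langle \wv_{K,s}^{(t)},\muv\rangle$ collects the noise--signal scores. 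Since $\frac{1+\exp(C_k)}{1+\exp(C_i)}=1+O(|C_k-C_i|)$ uniformly in the value of $\exp(C_i)$, it suffices to prove $|C_i^{(t)}-C_k^{(t)}|=o(1)$ for every pair $i,k\in[n]$ and every $t\in[T_3,t^*]$.

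First I would control the variation of $B_i^{(t)}$. Invoke the induction hypotheses $\mathcal{A}(t)$ and $\mathcal{F}(t)$ that are being established in Stage~IV.a: every term $\langle \wv_{Q,s}^{(t)},y_i\xiv_i\rangle\langle \wv_{K,s}^{(t)},y_i\xiv_i\rangle$ is positive (sign alignment) and has magnitude $t^2\eta^2\frac{2}{\pi}\sigma_p^2 s^2(1\pm\tilde O(s^{-1/2}))$ uniformly in $i$ and $s$. Summing over $s\in[m_k]$ gives $B_i^{(t)}=m_k t^2\eta^2\frac{2}{\pi}\sigma_p^2 s^2\bigl(1\pm\tilde O(s^{-1/2})\bigr)$, and since $t\le T_4$ the leading factor $m_k t^2\eta^2\sigma_p^2 s^2$ is $O\bigl(\log(C_3\sigma_p s/\|\muv\|)\bigr)$ by Eq.~\eqref{eq:def_T_4}. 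Hence $|B_i^{(t)}-B_k^{(t)}|=\tilde O(s^{-1/2}\cdot \log)=o(1)$ under Condition~\ref{cond:main_condition}.

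Next I would bound $|A_i^{(t)}-A_k^{(t)}|$. Here the factor $\langle \wv_{K,s}^{(t)},\muv\rangle$ is sample-independent; by Eq.~\eqref{eq:upper_bound_for_query_key_feature_T_4} it is at most $\tilde O(m_k^{-1/2}\sigma_p^{-1}s^{-1}\|\muv\|)$ for $t\le T_4$. Using $|\langle \wv_{Q,s}^{(t)},y_i\xiv_i\rangle - \langle \wv_{Q,s}^{(t)},y_k\xiv_k\rangle|\le 2\cdot t\eta\sqrt{2/\pi}\sigma_p s(1+\tilde O(s^{-1/2}))$ from $\mathcal{A}(t)$ (this is the crude bound that treats signs as possibly opposite across samples), summing over $s$ and substituting the $T_4$-bound $t\eta\sigma_p s=\tilde O(m_k^{-1/2})$ yields $|A_i^{(t)}-A_k^{(t)}|=\tilde O\bigl(\|\muv\|/(\sigma_p s)\bigr)=o(1)$, where the last step uses $\sigma_p s\gg\polylog(d)\cdot\|\muv\|$ from Condition~\ref{cond:main_condition}. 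Combining both estimates gives $|C_i^{(t)}-C_k^{(t)}|=o(1)$, and the elementary bound on $(1+\exp(C_k))/(1+\exp(C_i))$ finishes the proof.

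The main obstacle is the $A_i^{(t)}$ piece: unlike $B_i^{(t)}$, the noise-query factor $\langle \wv_{Q,s}^{(t)},y_i\xiv_i\rangle$ can have different signs across samples $i$ for a fixed neuron $s$, so one cannot appeal to the per-neuron cancellation that tames $B$. The argument therefore must rely on the smallness of the key--signal factor $\langle \wv_{K,s}^{(t)},\muv\rangle$ relative to the noise scale; this in turn uses the Stage~III upper bound Eq.~\eqref{eq:upper_bound_for_query_key_feature_T_4} and the separation $\|\muv\|=o(\sigma_p s)$ baked into Condition~\ref{cond:main_condition}. Verifying that these crude, sign-blind estimates still deliver $o(1)$ at the boundary $t=T_4$ is the place where the precise scalings of $m_k$, $\sigma_p$, $s$ and $\|\muv\|$ in the main condition are genuinely used.
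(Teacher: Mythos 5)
Your proof is correct and follows essentially the same route as the paper. The paper's Eq.~\eqref{eq:estimate_s_i,21_from_T_3_to_t^*} establishes the sample-uniform formula $s_{i,21}^{(t)} = (1\pm o(1))/(1 + \exp(\tfrac{2}{\pi}m_k t^2\eta^2\sigma_p^2 s^2))$ by absorbing the noise--signal cross-term into $1\pm o(1)$ and factoring the $\tilde O(s^{-1/2})$ multiplicative error out of the noise--noise exponent, which is precisely your $|A_i^{(t)}-A_k^{(t)}|=o(1)$ and $|B_i^{(t)}-B_k^{(t)}|=o(1)$ estimates packaged into the softmax directly rather than into the log-odds $C_i^{(t)}$; both rely on $\mathcal{A}(t)$, $\mathcal{F}(t)$, the $T_4$-scale bounds Eq.~\eqref{eq:upper_bound_for_query_key_feature_T_4}--\eqref{eq:upper_bound_for_query_key_noise_T_4}, and the gap $\norm{\muv}\ll\sigma_p s$ from Condition~\ref{cond:main_condition}.
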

\begin{proof}[Proof of Lemma~\ref{lemma:softmax_ratio_concentrate_on_1}]
    By $\mathcal{A}(t)$ and  $\mathcal{F}(t)$ we have for $t \in [T_{3}, T_{4}^{-}]$ and all $i\in[n]$
    \begin{align}
        \label{eq:linear_with_t_estimation_for_query_key_noise_product_from_T_3_to_T_4^-}
        \sum_{s\in[m_k]} 
        \langle \wv_{Q,s}^{(t)}, \xiv_i\rangle\langle\wv_{K,s}^{(t)}, \xiv_i\rangle
        = \frac{2}{\pi}m_kt^2\eta^2\sigma_p^2s^2(1 \pm \Tilde{O}(s^{-1/2})).
    \end{align}
    Then, for all $T_{3} \leq t \leq t^*$ and $i\in[n]$ 
    \begin{align}
        \label{eq:estimate_s_i,21_from_T_3_to_t^*}
        s_{i,21}^{(t)} 
        & = \frac{
        \exp\left(\sum_{s\in[m_k]} 
        \langle \wv_{Q,s}^{(t)}, \xiv_i\rangle\langle\wv_{K,s}^{(t)}, y_i\muv\rangle
        \right)
        }{
        \exp\left(\sum_{s\in[m_k]}
        \langle \wv_{Q,s}^{(t)}, \xiv_i\rangle\langle\wv_{K,s}^{(t)}, y_i\muv\rangle
        \right) + 
        \exp\left(\sum_{s\in[m_k]}
        \langle \wv_{Q,s}^{(t)}, \xiv_i\rangle\langle\wv_{K,s}^{(t)}, \xiv_i\rangle
        \right)
        } \notag\\
        &= \frac{
        1 \pm o(1)
        }{
        1 \pm o(1) + 
        \exp\left(
        \frac{2}{\pi}m_kt^2\eta^2\sigma_p^2s^2(1 \pm \Tilde{O}(s^{-1/2}))
        \right)
        } \notag\\
        &= \frac{
        1 \pm o(1)
        }{
        1 \pm o(1) + 
        \exp(\frac{2}{\pi}m_kt^2\eta^2\sigma_p^2s^2)
        \exp(\pm\Tilde{O}(\frac{2}{\pi}m_kt^2\eta^2\sigma_p^2s^{3/2}))
        } \notag\\
        &= \frac{
        1 \pm o(1)
        }{
        1 \pm o(1) + 
        \exp(\frac{2}{\pi}m_kt^2\eta^2\sigma_p^2s^2)
        (1 \pm o(1))
        } \notag\\
        &= \frac{
        1 \pm o(1)
        }{
        1 + 
        \exp(\frac{2}{\pi}m_kt^2\eta^2\sigma_p^2s^2)
        },
    \end{align}
    where the first step is by definition, the second step is by Eq.~\eqref{eq:upper_bound_for_query_key_feature_T_4},~\eqref{eq:upper_bound_for_query_key_noise_T_4}, and Eq.~\eqref{eq:linear_with_t_estimation_for_query_key_noise_product_from_T_3_to_T_4^-}, the fourth step is by $m_kT_{4}^{2}\eta^2\sigma_p^2s^2 = o(1)$. Then we have for all $T_{3} \leq t \leq t^*$ and all $i,k\in[n]$
    \begin{align*}
        \frac{s_{i,21}^{(t)}}{s_{k,21}^{(t)}}
        & = (1\pm o(1)) \frac{
        (1 \pm o(1))(1 + 
        \exp(\frac{2}{\pi}m_kt^2\eta^2\sigma_p^2s^2))
        }{
        (1 \pm o(1))(1 + 
        \exp(\frac{2}{\pi}m_kt^2\eta^2\sigma_p^2s^2))
        }
        = 1 \pm o(1),
    \end{align*}
    where the first step is by Eq.~\eqref{eq:estimate_s_i,21_from_T_3_to_t^*}.
\end{proof}

The following lemma studies the lower bound for $T_{4}^{-}$.
\begin{lemma}
    \label{lemma:lower_bound_for_T_4^-}
    We have
    \begin{align}
        \label{eq:lower_bound_for_T_4^-}
        T_{4}^{-}
        \geq
        \Tilde{T}_{4}^{-} := \sqrt{\frac{0.99\pi}{2}}\sqrt{\log\left(
        \frac{\sigma_ps}{3\sqrt{2}n\norm{\muv}}
        \right)}
        \eta^{-1}m_k^{-1/2}\sigma_p^{-1}s^{-1}.
    \end{align}
\end{lemma}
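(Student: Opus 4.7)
The plan is a bootstrap argument: I will show that conditions \eqref{eq:def_T_4^-_lower_bound_for_sum_noise} and \eqref{eq:def_T_4^-_lower_bound_for_single_noise} continue to hold with a constant-factor safety margin for every $t \in [T_{3}, \tilde{T}_{4}^{-}]$. By the maximality built into the definition of $T_{4}^{-}$, this forces $T_{4}^{-} \geq \tilde{T}_{4}^{-}$. Since $\tilde{T}_{4}^{-} \ll T_{4}$, I can also assume throughout that we are in the regime $t \leq t^{*}$ of Lemma~\ref{lemma:softmax_ratio_concentrate_on_1}; the alternative $T_{4}^{-} \geq T_{4}$ already implies the bound trivially.

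First, inductively on $t \in [T_{3}, \tilde{T}_{4}^{-}]$, I would assume that conditions \eqref{eq:def_T_4^-_lower_bound_for_sum_noise}--\eqref{eq:def_T_4^-_lower_bound_for_single_noise} have held at all earlier times. This automatically propagates the auxiliary properties $\mathcal{A}(t), \mathcal{D}(t), \mathcal{E}(t), \mathcal{F}(t)$ via Claims~\ref{claim:F(t)=>E(t)_from_T_4_to_T_2^-}--\ref{claim:F(t)_D(t)_E(t)=>F(t+1)_from_T_4_to_T_2^-}, and it also ensures $t \leq t^{*}$, so that Lemma~\ref{lemma:softmax_ratio_concentrate_on_1} gives the cross-sample concentration $s_{i,21}^{(t)}/s_{k,21}^{(t)} = 1 \pm o(1)$.

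Second, I would assemble three estimates available at any such $t$. From $\mathcal{A}(t)$ I read off $|\langle \wv_{Q,s}^{(t)}, y_i\xiv_i\rangle| = (1 \pm o(1))\,t\eta\sqrt{2/\pi}\,\sigma_p s$. Chaining Lemma~\ref{lemma:magnitude_of_query_key_noise_sum_from_0_to_T_2^+} with the preserved monotone updates of Stage~III (Lemma~\ref{lemma:dynamics_of_query_key_feature_from_T_2_to_T_3^+}) and of Stage~IV.a yields the cumulative bound
\begin{align*}
\Big|\sum_{i=1}^{n}\langle \wv_{Q,s}^{(t)}, y_i\xiv_i\rangle\Big| \;\geq\; (1 - o(1))\,t\eta\sigma_p s/\sqrt{2}.
\end{align*}
The constant-magnitude update Eq.~\eqref{eq:query_key_feature_update_constant_magnitude} together with $\beta_{\muv} = o(t\eta\norm{\muv})$ at the scale $t \geq T_{3}$ gives the matching upper bound $|\langle \wv_{Q,s}^{(t)}, \muv\rangle| \leq (1 + o(1))\,t\eta\norm{\muv}$. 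Finally, the explicit formula derived in Eq.~\eqref{eq:estimate_s_i,21_from_T_3_to_t^*} gives
\begin{align*}
s_{i,21}^{(t)}\,s_{i,22}^{(t)} \;=\; (1 \pm o(1))\Big/\Big(1 + \exp\big(\tfrac{2}{\pi} m_k t^{2}\eta^{2}\sigma_p^{2} s^{2}\big)\Big),
\end{align*}
which at $t = \tilde{T}_{4}^{-}$ evaluates to $(3\sqrt{2}n\norm{\muv}/(\sigma_p s))^{0.99}\,(1 \pm o(1))$. Substituting these into \eqref{eq:def_T_4^-_lower_bound_for_sum_noise} reduces the target to verifying $s_{i,21}^{(t)}s_{i,22}^{(t)} \gtrsim \sqrt{2}\,n\norm{\muv}/(\sigma_p s)$, which holds with a polynomial-in-$\sigma_p s/(n\norm{\muv})$ gap. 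Condition~\eqref{eq:def_T_4^-_lower_bound_for_single_noise} is strictly easier (no factor of $n$) and follows analogously using $s_{i,11}^{(t)}s_{i,12}^{(t)} = 1/4 \pm o(1)$ from Lemma~\ref{lemma:softmax_concentrate_on_1/2} together with $\mathcal{A}(t)$. Closing the induction delivers the lower bound.

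The main obstacle will be tracking constants tightly enough that the slack built into $\tilde{T}_{4}^{-}$ through the factor $0.99$ in the square root and the factor $3\sqrt{2}$ inside the logarithm dominates the accumulated $o(1)$ and $\tilde{O}(s^{-1/2})$ errors inherited from $\mathcal{A}$, from the $\norm{\xiv_i}_1$ concentration of Lemma~\ref{lemma:noise_magnitude}, from the softmax-ratio concentration, and from the approximation $t - T_{2} \approx t$. These two constants are calibrated precisely so that at $t = \tilde{T}_{4}^{-}$ the common softmax value exceeds the sharp threshold by a polynomial factor in $\sigma_p s/(n\norm{\muv})$, which safely swallows all lower-order errors under Condition~\ref{cond:main_condition}.
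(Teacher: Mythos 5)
Your proposal is correct and takes essentially the same route as the paper: the paper phrases it as a proof by contradiction (assume $T_4^- < \tilde T_4^-$, propagate $\mathcal{A},\mathcal{D},\mathcal{E},\mathcal{F}$ one more step via the claims, evaluate $s_{i,21}^{(T_4^-+1)}$, and show conditions \eqref{eq:def_T_4^-_lower_bound_for_sum_noise}--\eqref{eq:def_T_4^-_lower_bound_for_single_noise} still hold), but this is just the contrapositive wrapper around your direct bootstrap. The numerical heart is identical in both: the $0.99$ exponent under the square root gives $s_{i,21}^{(\tilde T_4^-)} \approx (3\sqrt{2}n\norm{\muv}/\sigma_p s)^{0.99}$, which beats the threshold $\sim n\norm{\muv}/\sigma_p s$ by a polynomial factor, and your chain of estimates ($\mathcal A(t)$ for the noise magnitudes, the constant-step bound for $\langle\wv_{Q,s}^{(t)},\muv\rangle$, the softmax concentration, and the $\sum_i$ lower bound inherited from Stage~II/III) matches the paper's lemma inventory.
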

\begin{proof}
Suppose $T_{4}^{-} < \Tilde{T}_{4}^{-}$, thus we have $T_{4}^{-} + 1 < \Tilde{T}_{4}^{-} + 1 < T_{4}$. 
By Claim~\ref{claim:A(t)_D(t)_E(t)=>A(t+1)_from_T_4_to_T_2^-} and~\ref{claim:F(t)_D(t)_E(t)=>F(t+1)_from_T_4_to_T_2^-}, we have $\mathcal{A}(T_{4}^{-}+1)$ and $\mathcal{F}(T_{4}^{-}+1)$ hold, which implies Eq.~\eqref{eq:linear_with_t_estimation_for_query_key_noise_product_from_T_3_to_T_4^-} holds at $t = T_{4}^{-} + 1$. Then, for all $i\in[n]$, we have
\begin{align}
    s_{i,21}^{(T_{4}^{-} + 1)} 
    &= \frac{
    1 \pm o(1)
    }{
    1 \pm o(1) + 
    \exp\left(
    \frac{2}{\pi}m_k(T_{4}^{-} + 1)^{2}\eta^2\sigma_p^2s^2(1 \pm \Tilde{O}(s^{-1/2}))
    \right)
    } \notag\\
    \label{eq:estimate_s_i,21_at_(T_4^-+1)}
    &= \frac{
    1 \pm o(1)
    }{
    1 + 
    \exp(\frac{2}{\pi}m_k(T_{4}^{-} + 1)^{2}\eta^2\sigma_p^2s^2)
    } \\
    &\geq 
    \frac{
    1 \pm o(1)
    }{
    1 + 
    \exp(\frac{2}{\pi}m_k(\Tilde{T}_{4}^{-} + 1)^{2}\eta^2\sigma_p^2s^2)
    } \notag\\
    &\geq 
    \frac{
    1 \pm o(1)
    }{
    1.01\exp(\frac{2}{\pi}m_k(\Tilde{T}_{4}^{-} + 1)^{2}\eta^2\sigma_p^2s^2)
    } \notag\\
    &\geq 
    \frac{1 \pm o(1)}{1.01}
    \left(\frac{
    3\sqrt{2}n\norm{\muv}
    }{
    \sigma_ps
    }\right)^{0.99+o(1)} \notag\\
    \label{eq:lower_bound_for_s_i,21_substitute_T_4^-}
    &\geq 
    \frac{
    (1 \pm o(1)) 3\sqrt{2}n\norm{\muv}
    }{
    1.01\sigma_ps
    },
\end{align}
where the first step is by Eq.~\eqref{eq:upper_bound_for_query_key_feature_T_4},~\eqref{eq:upper_bound_for_query_key_noise_T_4}, and Eq.~\eqref{eq:linear_with_t_estimation_for_query_key_noise_product_from_T_3_to_T_4^-}, 
the second step is by $T_{4}^{-} + 1 < T_{4}$,
the fifth step is by $\eta = o(m_{k}^{-1/2}\sigma_p^{-1}s^{-1})$,
the last step is by $n\norm{\muv}\sigma_p^{-1}s^{-1} = o(1)$,
Then we estimate the magnitude of $\sum_{i=1}^{n}\langle \wv_{Q,s}^{(t)}, y_i\xiv_i\rangle$.
WLOG, given neuron $s$, suppose $\sum_{i=1}^{n}\langle \wv_{Q,s}^{(T_3)}, y_i\xiv_i\rangle > 0$. 
For $T_{3} < t \leq T_{4}^{-} + 1$
\begin{align}
    \sum_{i=1}^{n}\langle \wv_{Q,s}^{(t)}, y_i\xiv_i\rangle
    & \geq \frac{1}{\sqrt{2}}(t-T_{2})\eta\sigma_ps + \sum_{i=1}^{n}\langle \wv_{Q,s}^{(T_{2})}, y_i\xiv_i\rangle
     \geq \frac{1}{\sqrt{2}}(t-T_{2})\eta\sigma_ps + 2n\beta_{\muv} \notag\\
    \label{eq:lower_bound_for_query_noise_sum_from_T_3_to_T_4^-}
    & \geq \frac{1}{\sqrt{2}}t\eta\sigma_ps(1 \pm O(T_{2}t^{-1}))
     = \frac{1}{\sqrt{2}}t\eta\sigma_ps(1 \pm o(1)),
\end{align}
where the first step follows from $\mathcal{D}(t)$ and Eq.~\eqref{eq:lower_bound_for_update_of_the_sum_of_noise},
the second step follows from the lower bound of $\sum_{i\in[n]}\langle \wv_{Q,s}^{(T_{2})}, y_i\xiv_i\rangle$ Eq.~\eqref{eq:lower_bound_for_the_sum_of_noise_at_T_2},
the last step follows from $T_{2}T_{3}^{-1} = o(1)$.
The analysis is similar for $\langle \wv_{K,s}^{(t)}, y_i\xiv_i\rangle$. 
Now we can show that
\begin{align*}
    & \sum_{i=1}^{n} s_{i,21}^{(T_{4}^{-} + 1)}s_{i,22}^{(T_{4}^{-} + 1)} \langle \wv^{(T_{4}^{-} + 1)}_{Q,s}, y_i\xiv_i\rangle \\
    = ~&
    \frac{
    1 \pm o(1)
    }{
    1 + 
    \exp(\frac{2}{\pi}m_k(T_{4}^{-} + 1)^{2}\eta^2\sigma_p^2s^2)
    } \cdot (1 - \frac{
    1 \pm o(1)
    }{
    1 + 
    \exp(\frac{2}{\pi}m_k(T_{4}^{-} + 1)^{2}\eta^2\sigma_p^2s^2)
    })
    \cdot \sum_{i=1}^{n} \langle \wv^{(T_{4}^{-} + 1)}_{Q,s}, y_i\xiv_i\rangle \\
    \geq ~&
    \frac{
    (1 \pm o(1)) 3\sqrt{2}n\norm{\muv}
    }{
    1.01\sigma_ps
    } \cdot 0.5 \cdot \frac{1}{\sqrt{2}}(T_{4}^{-} + 1)\eta\sigma_ps(1 \pm o(1)) \\
    \geq ~&
    1.4n(T_{4}^{-} + 1)\eta\norm{\muv} \\
    \geq ~& \frac{1}{2}n\langle \wv_{Q,s}^{(T_{4}^{-} + 1)}, \muv\rangle,
\end{align*}
where the first step is by Eq.~\eqref{eq:estimate_s_i,21_at_(T_4^-+1)},
the second step is by Eq.~\eqref{eq:lower_bound_for_s_i,21_substitute_T_4^-},~\eqref{eq:lower_bound_for_query_noise_sum_from_T_3_to_T_4^-} and $T_{4}^{-} \geq T_{3}$,
the last step is by the upper bound in Eq.~\eqref{eq:upper_and_lower_bound_for_query_feature_for_T_3_to_T_3^+}. Similarly, we can show that for all $s\in[m_k]$ and $i\in[n]$
\begin{align*}
    s_{i,21}^{(T_{4}^{-} + 1)}s_{i,22}^{(T_{4}^{-} + 1)}\abs{\langle \wv^{(T_{4}^{-} + 1)}_{Q,s}, y_i\xiv_i\rangle}
    \geq
    2s_{i,11}^{(T_{4}^{-} + 1)}s_{i,12}^{(T_{4}^{-} + 1)} \abs{\langle \wv^{(T_{4}^{-} + 1)}_{Q,s}, \muv\rangle},
\end{align*}
which contradicts with the definition of $T_{4}^{-}$. Therefore, we have $T_{4}^{-} \geq \Tilde{T}_{4}^{-}$.
\end{proof}
\begin{remark}
Note the constant $0.99$ in the definition of $\Tilde{T}_{4}^{-}$ is crucial and its impact is on the exponent.
When this constant is greater than 1, then the bound does not hold. Furthermore, we have $s_{i,21}^{(t)} \leq 0.1$ for $t\geq T_{4}^{-}$.
\end{remark}

Next, the following lemmas study the behaviour of query/key signal.
\begin{lemma}
    \label{lemma:query_feature_dynamics_from_T_3_to_t^*}
    The query signal are monotonic for $t\in[T_{3}, t^*]$. Formally, for all $t\in[T_{3}, t^*]$ and $s\in[m_k]$,
    \begin{align*}
        \langle \wv^{(t+1)}_{Q,s}, \muv\rangle 
        = \langle \wv^{(t)}_{Q,s}, \muv\rangle + \sgn(\langle \wv^{(t)}_{Q,s}, \muv\rangle)\eta\norm{\muv},
    \end{align*}
    where $t^*$ is defined in Eq.~\eqref{eq:def_t^*}.
\end{lemma}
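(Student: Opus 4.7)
The plan is to prove this by induction on $t\in[T_{3},t^{*}]$, maintaining that the sign of the update of $\langle \wv^{(t)}_{Q,s},\muv\rangle$ equals the sign of $\langle \wv^{(t)}_{Q,s},\muv\rangle$ itself. Because the magnitude of the update is always exactly $\eta\|\muv\|$ by Eq.~\eqref{eq:query_key_feature_update_constant_magnitude}, sign-monotonicity is all that remains to show.

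First I would fix $s\in[m_k]$ and, WLOG, assume $\sgn(\sum_{i}\langle \wv^{(T_{2})}_{Q,s},y_i\xiv_i\rangle)>0$. By Lemma~\ref{lemma:dynamics_of_query_key_feature_from_T_2_to_T_3^+} and Lemma~\ref{lemma:upper_and_lower_bound_for_query_key_feature_for_T_3_to_T_3^+}, this implies $\langle\wv^{(T_{3})}_{Q,s},\muv\rangle>0$. The key structural properties $\mathcal{A}(t)$, $\mathcal{D}(t)$, $\mathcal{E}(t)$, $\mathcal{F}(t)$ are already established throughout $[T_{3},t^{*}]$ by Claims~\ref{claim:F(t)=>E(t)_from_T_4_to_T_2^-}--\ref{claim:F(t)_D(t)_E(t)=>F(t+1)_from_T_4_to_T_2^-}, so in particular the query/key noise signs and the signs of $\sum_i\langle \wv^{(t)}_{Q,s},y_i\xiv_i\rangle$ and $\sum_i\langle \wv^{(t)}_{K,s},y_i\xiv_i\rangle$ are all preserved (and coincide with the sign at $T_{2}$).

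Next I would apply Eq.~\eqref{eq:qmu_update_sign} to write
\begin{align*}
\langle \wv_{Q, s}^{(t+1)}-\wv_{Q, s}^{(t)}, \muv\rangle
=_{\sgn}
\sum_{i\in[n]}(-\ell_i^{\prime(t)})\, s_{i,11}^{(t)}s_{i,12}^{(t)}\,\langle \vv^{(t)},y_i\xiv_i\rangle\,\bigl(\langle \wv^{(t)}_{K,s},y_i\xiv_i\rangle-\langle \wv^{(t)}_{K,s},\muv\rangle\bigr).
\end{align*}
Using Lemmas~\ref{lemma:softmax_concentrate_on_1/2}, \ref{lemma:loss_prime_concentrate_on_1/2}, and \ref{lemma:magnitude_of_mean_value_noise_after_T_2}, the weights $w_i^{(t)}:=(-\ell_i^{\prime(t)})\, s_{i,11}^{(t)}s_{i,12}^{(t)}\,\langle \vv^{(t)},y_i\xiv_i\rangle$ are all strictly positive and differ across $i$ by only a factor of $1\pm o(1)$. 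Hence the sign of the update coincides with the sign of
\begin{align*}
\sum_{i\in[n]}(1\pm o(1))\bigl(\langle \wv^{(t)}_{K,s},y_i\xiv_i\rangle-\langle \wv^{(t)}_{K,s},\muv\rangle\bigr)
= (1\pm o(1))\sum_i\langle \wv^{(t)}_{K,s},y_i\xiv_i\rangle - n(1\pm o(1))\langle \wv^{(t)}_{K,s},\muv\rangle.
\end{align*}
Now I would lower bound the first term via $\mathcal{D}(t)$, $\mathcal{E}(t)$, and Lemma~\ref{lemma:magnitude_of_query_key_noise_sum_from_0_to_T_2^+}: the sum grows by at least $\eta\sigma_p s/\sqrt{2}$ per step in Stage III and never reverses, so its magnitude is $\Omega(t\eta\sigma_p s)$ with the correct (positive) sign. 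The second term is bounded by $n\cdot 2t\eta\|\muv\|$ using Lemma~\ref{lemma:upper_and_lower_bound_for_query_key_feature_for_T_3_to_T_3^+} (whose upper bound remains valid for all $t\geq T_{3}$). Since $n\|\muv\|=o(\sigma_p s)$ by Condition~\ref{cond:main_condition}, the first term dominates, giving $\sgn(\langle \wv_{Q, s}^{(t+1)}-\wv_{Q, s}^{(t)}, \muv\rangle)=\sgn(\sum_i\langle \wv^{(t)}_{K,s},y_i\xiv_i\rangle)>0=\sgn(\langle \wv^{(t)}_{Q,s},\muv\rangle)$, which both completes the inductive step and yields the stated update formula.

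The main obstacle I anticipate is the bookkeeping around what ``uniformly $1\pm o(1)$'' means for the weights $w_i^{(t)}$ deep into Stage~IV, where Lemma~\ref{lemma:loss_prime_concentrate_on_1/2} still gives $-\ell_i^{\prime(t)}=1/2\pm o(1)$ only for $t\leq T_{4}$; since $t^{*}\leq T_{4}$ by Eq.~\eqref{eq:def_t^*}, this is exactly the reason $t^{*}$ is capped at $T_{4}$ and not extended further. A minor technicality is that the inductive hypothesis must carry along the sign of $\sum_i\langle \wv^{(t)}_{K,s},y_i\xiv_i\rangle$, but this is immediate from $\mathcal{E}(t)$ combined with Lemma~\ref{lemma:magnitude_of_query_key_noise_sum_from_0_to_T_2^+} extended to the Stage~IV regime by the same monotonicity argument used there.
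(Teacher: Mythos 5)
Your proof is correct and follows essentially the same route as the paper: both apply Eq.~\eqref{eq:qmu_update_sign}, use the concentration of $\ell'$, the softmax outputs, and the mean value noise (Lemmas~\ref{lemma:softmax_concentrate_on_1/2}, \ref{lemma:loss_prime_concentrate_on_1/2}, \ref{lemma:magnitude_of_mean_value_noise_after_T_2}) to reduce the update sign to that of $\sum_i\langle \wv^{(t)}_{K,s},y_i\xiv_i\rangle - n\langle \wv^{(t)}_{K,s},\muv\rangle$, and then invoke the per-step lower bound on the key-noise sum (Eq.~\eqref{eq:lower_bound_for_update_of_the_sum_of_noise}) together with $n\norm{\muv}=o(\sigma_p s)$ to show the noise term dominates.
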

\begin{lemma}
    \label{lemma:key_feature_dynamics_from_T_3_to_t^*}
    The key signal are monotonic for $t\in[T_{3}, t^*]$. Formally, for all $t\in[T_{3}, t^*]$ and $s\in[m_k]$,
    \begin{align*}
        \langle \wv^{(t+1)}_{K,s}, \muv\rangle 
        = \langle \wv^{(t)}_{K,s}, \muv\rangle + \sgn(\langle \wv^{(t)}_{K,s}, \muv\rangle)\eta\norm{\muv},
    \end{align*}
    where $t^*$ is defined in Eq.~\eqref{eq:def_t^*}.
\end{lemma}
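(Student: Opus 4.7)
The plan is to mirror the argument used for the query signal in Lemma~\ref{lemma:query_feature_dynamics_from_T_3_to_t^*}, with the roles of query and key interchanged. Without loss of generality fix a neuron $s\in[m_k]$ and assume $\sum_{i\in[n]} \langle \wv_{Q,s}^{(T_2)}, y_i\xiv_i\rangle > 0$; the opposite case is symmetric. By Lemma~\ref{lemma:upper_and_lower_bound_for_query_key_feature_for_T_3_to_T_3^+} this gives $\langle \wv_{K,s}^{(T_3)}, \muv\rangle < 0$, and combining Lemma~\ref{lemma:dynamics_of_query_key_feature_from_T_2_to_T_3^+} with Lemma~\ref{lemma:query_feature_dynamics_from_T_3_to_t^*} yields $\langle \wv_{Q,s}^{(t)}, \muv\rangle > 0$ for all $t\in[T_2, t^*]$. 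I would then prove by induction on $t\in[T_3, t^*]$ that $\langle \wv_{K,s}^{(t)}, \muv\rangle<0$, which is equivalent to the claimed update formula, since in that case $\sgn(\langle\wv_{K,s}^{(t)},\muv\rangle)=-1$ and the stated update then reads $\langle\wv_{K,s}^{(t+1)},\muv\rangle=\langle\wv_{K,s}^{(t)},\muv\rangle-\eta\norm{\muv}$.

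The inductive step uses Eq.~\eqref{eq:kmu_update_sign}, which identifies the sign of the increment with that of
\[
    -\sum_{i\in[n]} (-\ell_i^{\prime(t)})\langle \vv^{(t)}, y_i\xiv_i\rangle\Bigl(s_{i,11}^{(t)} s_{i,12}^{(t)}\langle \wv_{Q,s}^{(t)}, \muv\rangle + s_{i,21}^{(t)} s_{i,22}^{(t)}\langle \wv_{Q,s}^{(t)}, y_i\xiv_i\rangle\Bigr).
\]
Since $-\ell_i^{\prime(t)}$ and $\langle \vv^{(t)}, y_i\xiv_i\rangle$ are positive by Lemmas~\ref{lemma:loss_prime_concentrate_on_1/2} and~\ref{lemma:magnitude_of_mean_value_noise_after_T_2}, this reduces to showing that $\sum_i s_{i,11}^{(t)} s_{i,12}^{(t)}\langle \wv_{Q,s}^{(t)}, \muv\rangle + \sum_i s_{i,21}^{(t)} s_{i,22}^{(t)}\langle \wv_{Q,s}^{(t)}, y_i\xiv_i\rangle > 0$, so that the key signal update is negative. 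The first sum is a positive multiple of $\langle \wv_{Q,s}^{(t)}, \muv\rangle>0$. For the second, applying Lemma~\ref{lemma:softmax_ratio_concentrate_on_1} lets me pull out $s_{i,21}^{(t)} s_{i,22}^{(t)}=(1\pm o(1))\,s_{1,21}^{(t)} s_{1,22}^{(t)}$ uniformly in $i$, so its sign equals $\sgn(\sum_i\langle\wv_{Q,s}^{(t)},y_i\xiv_i\rangle)$. That latter sign is positive throughout $[T_3,t^*]$: at $T_3$ it exceeds $2n\beta_{\muv}$ by Lemma~\ref{lemma:magnitude_of_query_key_noise_sum_from_0_to_T_2^+} carried through Stage III via Lemma~\ref{lemma:magnitude_of_query_key_noise_from_T_2_to_T_3^+}, and by the Stage~IV.a hypothesis $\mathcal{D}(t)$ (Claim~\ref{claim:E(t)_F(t)=>D(t+1)_from_T_4_to_T_2^-}) each summand increases in the same direction by $\eta\norm{\xiv_i}_1$ at every step. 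Therefore both summed contributions are positive, the update direction is $-1$, and $\langle\wv_{K,s}^{(t+1)},\muv\rangle<\langle\wv_{K,s}^{(t)},\muv\rangle<0$, closing the induction.

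The main obstacle is that the inductive hypothesis $\mathcal{D}(t)$ for query noise monotonicity is not free: it was established only inside the interleaved induction of Claims~\ref{claim:F(t)=>E(t)_from_T_4_to_T_2^-}–\ref{claim:F(t)_D(t)_E(t)=>F(t+1)_from_T_4_to_T_2^-}, which itself runs up to $T_4^-$, not necessarily up to $T_4$. A secondary subtlety is the two cases in the definition $t^*=\min\{T_4^-,T_4\}$: when $t^*=T_4^-$ one can freely invoke Eq.~\eqref{eq:def_T_4^-_lower_bound_for_sum_noise}, whereas when $t^*=T_4$ I only need the weaker positivity of the sum $\sum_i\langle\wv_{Q,s}^{(t)},y_i\xiv_i\rangle$, which holds by monotonicity plus the positive initial value. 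In either case no additional case analysis is needed beyond choosing the correct lemma at the endpoint, so the argument completes as outlined.
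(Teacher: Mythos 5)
Your proof takes essentially the same route as the paper: expand the increment via Eq.~\eqref{eq:kmu_update_sign}, use concentration to reduce the weighted sum to an unweighted one, apply the softmax-ratio lemma to the query-noise term, and invoke the monotonicity $\mathcal{D}(\cdot)$ to keep $\sum_i\langle\wv_{Q,s}^{(t)},y_i\xiv_i\rangle$ positive. Your observation that both summed contributions are individually positive (so the $T_4^-$ dominance condition Eq.~\eqref{eq:def_T_4^-_lower_bound_for_sum_noise} is not strictly needed here) is a legitimate slight simplification of what the paper does; the paper invokes the $T_4^-$ condition at its fourth step, whereas you just note that $-n\langle\wv_{Q,s}^{(t)},\muv\rangle$ has the same sign as the noise contribution.

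However, there is one genuine gap in the reduction step. You write that ``since $-\ell_i^{\prime(t)}$ and $\langle\vv^{(t)},y_i\xiv_i\rangle$ are positive \ldots this reduces to'' the positivity of the unweighted sums. Mere positivity of per-sample weights $w_i := (-\ell_i^{\prime(t)})\langle\vv^{(t)},y_i\xiv_i\rangle$ does \emph{not} let you replace $\sum_i w_i b_i$ by $\sum_i b_i$ when the $b_i := s_{i,11}^{(t)}s_{i,12}^{(t)}\langle\wv_{Q,s}^{(t)},\muv\rangle + s_{i,21}^{(t)}s_{i,22}^{(t)}\langle\wv_{Q,s}^{(t)},y_i\xiv_i\rangle$ have mixed signs, which they do in this regime since $\sigma_p s\gg\norm{\muv}$ forces the noise term to dominate and flip sign sample-by-sample. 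What you actually need is that the $w_i$ are uniform up to a $1\pm o(1/n)$ multiplicative factor; that is exactly what Lemma~\ref{lemma:loss_prime_concentrate_on_1/2} and Lemma~\ref{lemma:magnitude_of_mean_value_noise_after_T_2} deliver (and what the paper uses, replacing $(-\ell_i^{\prime(t)})\langle\vv^{(t)},y_i\xiv_i\rangle$ by $\frac{\sqrt{2}}{2\sqrt{\pi}}t\eta\sigma_ps(1\pm o(1))$ in its second displayed step). So the conclusion is salvageable with the lemmas you already cite, but the justification as written is wrong and would fail for generic positive weights.

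Your two closing worries are both non-issues. Because $t^*=\min\{T_4^-,T_4\}\le T_4^-$, the interleaved Stage IV.a induction (Claims~\ref{claim:F(t)=>E(t)_from_T_4_to_T_2^-}--\ref{claim:F(t)_D(t)_E(t)=>F(t+1)_from_T_4_to_T_2^-}) already furnishes $\mathcal{D}(t)$ for every $t\in[T_3,t^*]$; there is no need to reach $T_4$. For the same reason, Eq.~\eqref{eq:def_T_4^-_lower_bound_for_sum_noise} holds on the whole of $[T_3,t^*]$ regardless of which term realizes the minimum, so no case split is required even if one wanted to use that condition. Also, a small slip: $\langle\wv_{Q,s}^{(t)},\muv\rangle>0$ is guaranteed only on $[T_3,t^*]$, not on $[T_2,t^*]$ (at $T_2$ the query signal may still carry its initialization sign), but you only use it on $[T_3,t^*]$ so this does not break the argument.
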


\begin{proof}[Proof of Lemma~\ref{lemma:query_feature_dynamics_from_T_3_to_t^*}]
WLOG, given neuron $s$, suppose $\langle \wv^{(t)}_{Q,s}, \muv\rangle > 0$ and $\langle \wv^{(t)}_{K,s}, \muv\rangle < 0$. For query signal, similar to previous analysis, we have
\begin{align*}
    & \langle \wv_{Q, s}^{(t+1)}-\wv_{Q, s}^{(t)}, \muv\rangle \\
    =_{\sgn} & \sum_{i\in[n]} (-\ell_{i}^{\prime(t)}s_{i,11}^{(t)}s_{i,12}^{(t)})
    \cdot \left(- y_{i} \langle \vv^{(t)}, \xiv_i\rangle \langle \wv^{(t)}_{K,s}, \muv\rangle
    + \langle \vv^{(t)}, \xiv_i\rangle \langle \wv^{(t)}_{K,s}, \xiv_i\rangle
    \right) \\
    =_{\sgn} & \sum_{i\in[n]} 
    \frac{\sqrt{2}}{8\sqrt{\pi}}t\eta\sigma_ps(1 \pm o(1))\cdot
    (\langle \wv^{(t)}_{K,s}, y_i\xiv_i\rangle-\langle \wv^{(t)}_{K,s}, \muv\rangle) \\
    =_{\sgn} & \sum_{i\in[n]}\langle \wv^{(t)}_{K,s}, y_i\xiv_i\rangle
    -n\langle \wv^{(t)}_{K,s}, \muv\rangle \\
    \geq & \sum_{i\in[n]}\langle \wv^{(T_{2})}_{K,s}, y_i\xiv_i\rangle
    -n\langle \wv^{(T_{2})}_{K,s}, \muv\rangle + (t-T_{2})(\eta\sigma_ps/\sqrt{2} - \eta n\norm{\muv})
    > 0,
\end{align*}
where the first step is by Eq.~\eqref{eq:qmu_update_sign}, 
the second step is by the concentration of $\ell_{i}^{\prime}$ and $s_{i, 11}^{(t)}$ for $t \leq T_{4}$, i.e., Lemma~\ref{lemma:softmax_concentrate_on_1/2} and~\ref{lemma:loss_prime_concentrate_on_1/2}, 
the fourth step is by $\mathcal{E}(t)$ and the lower bound Eq.~\eqref{eq:lower_bound_for_update_of_the_sum_of_noise}, 
and the last step is by lower bound Eq.~\eqref{eq:lower_bound_for_the_sum_of_noise_at_T_2}.
\end{proof}

\begin{proof}[Proof of Lemma~\ref{lemma:key_feature_dynamics_from_T_3_to_t^*}]
For key signal, we have
\begin{align*}
    & \langle \wv_{K, s}^{(t+1)}-\wv_{K, s}^{(t)}, \muv\rangle \\
    =_{\sgn} & \sum_{i\in[n]} (-l_{i}^{\prime(t)}) \cdot \left(
    - y_{i} s_{i,11}^{(t)}s_{i,12}^{(t)} \langle \vv^{(t)}, \xiv_i\rangle \langle \wv^{(t)}_{Q,s}, \muv\rangle
    - s_{i,21}^{(t)}s_{i,22}^{(t)} \langle \vv^{(t)}, \xiv_i\rangle \langle \wv^{(t)}_{Q,s}, \xiv_i\rangle
    \right) \\
    =_{\sgn} & \sum_{i\in[n]} 
    \frac{\sqrt{2}}{2\sqrt{\pi}}t\eta\sigma_ps(1 \pm o(1))\cdot
    (-s_{i,21}^{(t)}s_{i,22}^{(t)}\langle \wv^{(t)}_{Q,s}, y_i\xiv_i\rangle
    -\frac{1}{4}(1\pm o(1))\langle \wv^{(t)}_{Q,s}, \muv\rangle) \\
    =_{\sgn} & 
    -\sum_{i\in[n]} s_{i,21}^{(t)}s_{i,22}^{(t)}\langle \wv^{(t)}_{Q,s}, y_i\xiv_i\rangle
    -\frac{1}{4}(1\pm o(1))n\langle \wv^{(t)}_{Q,s}, \muv\rangle) \\
    =_{\sgn} & 
    -\sum_{i\in[n]} s_{i,21}^{(t)}s_{i,22}^{(t)}\langle \wv^{(t)}_{Q,s}, y_i\xiv_i\rangle \\
    =_{\sgn} & -s_{21}^{(t)}s_{22}^{(t)}(1\pm o(1))\sum_{i\in[n]} \langle \wv^{(t)}_{Q,s}, y_i\xiv_i\rangle
    < 0,
\end{align*}
where the first step is by Eq.~\eqref{eq:qmu_update_sign}, 
the second step is by the concentration of $\ell_{i}^{\prime}$ and $s_{i, 11}^{(t)}$ for $t \leq T_{4}$, i.e., Lemma~\ref{lemma:softmax_concentrate_on_1/2} and~\ref{lemma:loss_prime_concentrate_on_1/2},
the fourth step is by the definition of $T_{4}^{-}$, i.e., Eq.~\eqref{eq:def_T_4^-_lower_bound_for_sum_noise}, 
the fifth step is by the $\mathcal{B}(t)$,
the last step is due to the upper bound Eq.~\eqref{eq:lower_bound_for_the_sum_of_noise_at_T_2} and $\mathcal{D}(T_{3}),\dots\mathcal{D}(t-1)$.
\end{proof}

At the end of this section, we give an upper bound for $T_{4}^{-}$.
\begin{lemma}
    \label{lemma:upper_bound_for_T_4^-}
    We have
    \begin{align}
        \label{eq:upper_bound_for_T_4^-}
        T_{4}^{-}
        \leq
        \hat{T}_{4}^{-} := \sqrt{\frac{1.01\pi}{2}}\sqrt{\log\left(
        \frac{\sigma_ps}{\norm{\muv}}
        \right)}
        \eta^{-1}m_k^{-1/2}\sigma_p^{-1}s^{-1}.
    \end{align}
\end{lemma}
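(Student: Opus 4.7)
The plan is to argue by contradiction in the same spirit as the lower bound (Lemma~\ref{lemma:lower_bound_for_T_4^-}), but now showing that condition Eq.~\eqref{eq:def_T_4^-_lower_bound_for_single_noise} must fail by time $\hat{T}_4^-$. Suppose for contradiction that $T_4^- > \hat{T}_4^-$. Then by the minimality in the definition of $T_4^-$, both Eq.~\eqref{eq:def_T_4^-_lower_bound_for_sum_noise} and Eq.~\eqref{eq:def_T_4^-_lower_bound_for_single_noise} hold for every $t \in [T_3, \hat T_4^-]$, and consequently the induction yields $\mathcal{A}(t)$, $\mathcal{D}(t)$, $\mathcal{E}(t)$, $\mathcal{F}(t)$ at every such $t$ via Claims~\ref{claim:F(t)=>E(t)_from_T_4_to_T_2^-}--\ref{claim:F(t)_D(t)_E(t)=>F(t+1)_from_T_4_to_T_2^-}.

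Next, I would evaluate the noise-signal softmax output at $t = \hat T_4^-$ by the same computation used in Eq.~\eqref{eq:estimate_s_i,21_at_(T_4^-+1)}. Because $\mathcal{A}(\hat T_4^-)$ and $\mathcal{F}(\hat T_4^-)$ give Eq.~\eqref{eq:linear_with_t_estimation_for_query_key_noise_product_from_T_3_to_T_4^-} at this time, and because the query/key signal contributions are controlled by Eq.~\eqref{eq:upper_bound_for_query_key_noise_T_4}--\eqref{eq:upper_bound_for_query_key_feature_T_4}, we obtain
\begin{align*}
    s_{i,21}^{(\hat T_4^-)}
    = \frac{1 \pm o(1)}{1+\exp\!\left(\tfrac{2}{\pi}m_k(\hat T_4^-)^2\eta^2\sigma_p^2 s^2\right)}.
\end{align*}
Substituting $\hat T_4^-$ makes the exponent equal to $1.01\log(\sigma_p s/\norm{\muv})$, so $s_{i,21}^{(\hat T_4^-)} \leq (1+o(1))(\norm{\muv}/(\sigma_p s))^{1.01}$, which is $o(\norm{\muv}/(\sigma_p s))$.

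Now I would compare the two sides of Eq.~\eqref{eq:def_T_4^-_lower_bound_for_single_noise} at $t = \hat T_4^-$. Using $\mathcal{A}(\hat T_4^-)$ we have $|\langle \wv_{Q,s}^{(\hat T_4^-)}, y_i\xiv_i\rangle| \leq 2\hat T_4^-\eta\sigma_p s$, so
\begin{align*}
    s_{i,21}^{(\hat T_4^-)} s_{i,22}^{(\hat T_4^-)} \,\bigl|\langle \wv_{Q,s}^{(\hat T_4^-)}, y_i\xiv_i\rangle\bigr| \;\leq\; o\!\left(\tfrac{\norm{\muv}}{\sigma_p s}\right)\cdot 2\hat T_4^-\eta\sigma_p s \;=\; o(\hat T_4^-\eta\norm{\muv}).
\end{align*}
On the other hand, Lemma~\ref{lemma:upper_and_lower_bound_for_query_key_feature_for_T_3_to_T_3^+} extended monotonically via Lemma~\ref{lemma:query_feature_dynamics_from_T_3_to_t^*} (applicable because $\hat T_4^- \leq T_4^- \leq t^*$ by assumption) yields $|\langle \wv_{Q,s}^{(\hat T_4^-)}, \muv\rangle| \geq \hat T_4^-\eta\norm{\muv}/2$, and Lemma~\ref{lemma:softmax_concentrate_on_1/2} gives $s_{i,11}^{(\hat T_4^-)}s_{i,12}^{(\hat T_4^-)} = 1/4 \pm o(1)$, so the right-hand side of Eq.~\eqref{eq:def_T_4^-_lower_bound_for_single_noise} is $\Omega(\hat T_4^-\eta\norm{\muv})$. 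Thus the inequality is violated at $t = \hat T_4^-$, contradicting the assumption $T_4^- > \hat T_4^-$; this gives $T_4^- \leq \hat T_4^-$.

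The main obstacle is simply bookkeeping the constants carefully: the exponent $1.01$ in $\hat T_4^-$ must be strictly larger than $1$ to beat the $\Theta(\sigma_p s/\norm{\muv})$ gap between the signal and noise magnitudes so that the LHS of Eq.~\eqref{eq:def_T_4^-_lower_bound_for_single_noise} is genuinely $o(1)$ times the RHS, and one must verify that the inductive properties $\mathcal{A}$--$\mathcal{F}$ truly propagate all the way to $\hat T_4^-$ under the contradiction hypothesis (which they do, since $\hat T_4^- < T_4^- \leq t^* \leq T_4$).
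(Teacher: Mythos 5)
Your proof is correct and follows the same proof-by-contradiction structure as the paper, with one minor variation: you derive the contradiction from the per-sample condition Eq.~\eqref{eq:def_T_4^-_lower_bound_for_single_noise}, whereas the paper uses the sum-over-samples condition Eq.~\eqref{eq:def_T_4^-_lower_bound_for_sum_noise}, chaining $tn\eta\norm{\muv}/4 \le \tfrac{1}{2}n\langle\wv_{Q,s}^{(t)},\muv\rangle \le |\sum_i s_{i,21}^{(t)}s_{i,22}^{(t)}\langle\wv_{Q,s}^{(t)},y_i\xiv_i\rangle| \le (1\pm o(1))\exp\bigl(-\tfrac{2}{\pi}m_kt^2\eta^2\sigma_p^2s^2\bigr)\cdot tn\eta\sqrt{2/\pi}\sigma_ps$ and then solving for $t$ to reach the contradiction with $\hat T_4^-$. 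The underlying mechanism (exponential decay of $s_{i,21}^{(t)}$ beats the $\Theta(\sigma_ps/\norm{\muv})$ gap once the exponent exceeds $\log(\sigma_ps/\norm{\muv})$) and the supporting estimates — $\mathcal{A}(t)$, the softmax-ratio estimate Eq.~\eqref{eq:estimate_s_i,21_from_T_3_to_t^*}, the query-signal lower bound $\ge t\eta\norm{\muv}/2$, and concentration of $s_{i,11}^{(t)}s_{i,12}^{(t)}\approx 1/4$ — are identical, and both variants exploit the slack built into the $1.01$ constant in the same way. Your route is arguably slightly cleaner since it avoids the sum over $i$ and hence does not need to control $\sum_i\langle\wv_{Q,s}^{(t)},y_i\xiv_i\rangle$; the paper's version keeps the argument in direct symmetry with the proof of the lower bound $T_4^-\ge\tilde T_4^-$. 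One cosmetic note: the definition of $T_4^-$ as the \emph{last} time all conditions hold is a maximality, not a minimality; the inference you draw (conditions hold on all of $[T_3,\hat T_4^-]$ under the contradiction hypothesis) is nonetheless correct.
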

\begin{proof}
Suppose $T_{4}^{-} > \hat{T}_{4}^{-}$. 
First, we estimate the magnitude of $\sum_{i=1}^{n}\langle \wv_{Q,s}^{(t)}, y_i\xiv_i\rangle$ and $\langle \wv_{Q,s}^{(t)}, \muv\rangle$ for $T_{3} \leq t \leq T_{4}^{-}$. 
WLOG, given neuron $s$, suppose $\sum_{i=1}^{n}\langle \wv_{Q,s}^{(T_{2})}, y_i\xiv_i\rangle > 0$. 
For all $s\in[m_k]$ we have
\begin{align}
    \label{eq:lower_bound_for_query_feature_from_T_3_to_T_4^-}
    \langle \wv_{Q,s}^{(t)}, \muv\rangle
    \geq \langle \wv_{Q,s}^{(T_{3})}, \muv\rangle + (t-T_{3})\eta\norm{\muv}
    \geq t\eta\norm{\muv}/2,
\end{align}
where the first step is by Lemma~\ref{lemma:query_feature_dynamics_from_T_3_to_t^*} and $\hat{T}_{4}^{-} < t^*$,
the second step is by the lower bound in Eq.~\eqref{eq:upper_and_lower_bound_for_query_feature_for_T_3_to_T_3^+} at $t = T_{3}$.
For all $s\in[m_k]$ we also have
\begin{align*}
    \sum_{i=1}^{n}\langle \wv_{Q,s}^{(t)}, y_i\xiv_i\rangle
    \leq \sum_{i=1}^{n}\abs{\langle \wv_{Q,s}^{(t)}, y_i\xiv_i\rangle}
    \leq tn\eta\sqrt{\frac{2}{\pi}}\sigma_ps(1 \pm \Tilde{O}(s^{-1/2})),
\end{align*}
where the second step is by $\mathcal{A}(t)$.
By the definition of $T_{4}^{-}$, at $t = \hat{T}_{4}^{-}$ we have
\begin{align*}
    tn\eta\norm{\muv}/4
    & \leq 
    \frac{1}{2}n\langle \wv_{Q,s}^{(t)}, \muv\rangle \\
    & \leq
    \abs{\sum_{i=1}^{n} s_{i,21}^{(t)}s_{i,22}^{(t)} \langle \wv^{(t)}_{Q,s}, y_i\xiv_i\rangle} \\
    & \leq \frac{
        1 \pm o(1)
        }{
        1 + 
        \exp(\frac{2}{\pi}m_kt^2\eta^2\sigma_p^2s^2)
        } \cdot  tn\eta\sqrt{\frac{2}{\pi}}\sigma_ps(1 \pm \Tilde{O}(s^{-1/2})) \\
    & \leq 
    \frac{
        1 \pm o(1)
        }{
        \exp(\frac{2}{\pi}m_kt^2\eta^2\sigma_p^2s^2)
        } \cdot  tn\eta\sqrt{\frac{2}{\pi}}\sigma_ps(1 \pm \Tilde{O}(s^{-1/2})),
\end{align*}
which implies that
\begin{align*}
    \hat{T}_{4}^{-} \leq 
    \sqrt{\frac{\pi}{2}}
    \sqrt{\log\left(\frac{4\sqrt{2}\sigma_ps}{\sqrt{\pi}\norm{\muv}} (1 + o(1))\right)}
    \eta^{-1}m_k^{-1/2}\sigma_p^{-1}s^{-1},
\end{align*}
which gives a contradiction.
\end{proof}
\begin{remark}
The upper bound for $T_{4}^{-}$ also proves that $t^* = T_{4}^{-}$, which gives that for all $t \in [T_{3}, T_{4}^{-}]$, $s\in[m_k]$, and $i,k\in[n]$
\begin{align*}
    & \frac{s_{i,21}^{(t)}}{s_{k,21}^{(t)}} = 1 \pm o(1), \\
    & \langle \wv^{(t+1)}_{Q,s}, \muv\rangle 
        = \langle \wv^{(t)}_{Q,s}, \muv\rangle + \sgn(\langle \wv^{(t)}_{Q,s}, \muv\rangle)\eta\norm{\muv}, \\
    & \langle \wv^{(t+1)}_{K,s}, \muv\rangle 
        = \langle \wv^{(t)}_{K,s}, \muv\rangle + \sgn(\langle \wv^{(t)}_{K,s}, \muv\rangle)\eta\norm{\muv}.
\end{align*}
Moreover, for neuron $s$, suppose $\langle \wv^{(T_{3})}_{Q,s}, \muv\rangle > 0$, then at $t = T_{4}^{-}$, for all $s\in[m_k]$, we have 
\begin{align}
    \label{eq:estimate_query_feature_at_T_4^-}
    & \langle \wv^{(t)}_{Q,s}, \muv\rangle = (t-T_{3})\eta\norm{\muv} + \langle \wv^{(T_{3})}_{Q,s}, \muv\rangle
    = t\eta\norm{\muv}(1\pm o(1)), \\
    \label{eq:estimate_key_feature_at_T_4^-}
    & \langle \wv^{(t)}_{K,s}, \muv\rangle = -(t-T_{3})\eta\norm{\muv} + \langle \wv^{(T_{3})}_{K,s}, \muv\rangle
    = -t\eta\norm{\muv}(1\pm o(1)),
\end{align}
where for query signal the first step is by Lemma~\ref{lemma:query_feature_dynamics_from_T_3_to_t^*} 
and the second step is by the bound Eq.~\ref{eq:upper_and_lower_bound_for_query_feature_for_T_3_to_T_3^+}. 
The proof of statement for key signal is similar.
\end{remark}

\subsubsection{Stage IV.b}
\label{sec:stage_iv_b}

This stage studies the dynamics in the transition interval $[T_{4}^{-}, T_{4}^{+}]$.
For $T_{4}^{-} \leq t \leq T_{4}^{+}$, 
given neuron $s$, for those samples where query/key noise align with the query/key signal, query/key noise dynamics keep unchanged, which will be defined later.
On the other hands, for those samples where query/key noise do not align with query/key signal, query noise dynamics keep unchanged.
However, key noise start to decrease but are still greater than zero at $T_{4}^{+}$.

Let $T_{4}^{+}$ be the first time the following condition holds
\begin{align}
    \label{eq:def_T_4^+}
    s_{i,21}^{(t)}s_{i,22}^{(t)} \abs{\langle \wv^{(t)}_{Q,s}, y_i\xiv_i\rangle}
    \leq 
    \frac{1}{2}
    s_{i,11}^{(t)}s_{i,12}^{(t)} \abs{\langle \wv^{(t)}_{Q,s}, \muv\rangle},
\end{align}
for all $s\in[m_k]$ and $i\in[n]$. Note that at $T_{4}^{+}$, we have
\begin{align*}
    \sum_{i=1}^{n}
    s_{i,21}^{(t)}s_{i,22}^{(t)} \langle \wv^{(t)}_{Q,s}, y_i\xiv_i\rangle
    & \leq 
    \sum_{i=1}^{n}
    s_{i,21}^{(t)}s_{i,22}^{(t)} \abs{\langle \wv^{(t)}_{Q,s}, y_i\xiv_i\rangle} 
    \leq
    \frac{1}{2}
    \sum_{i=1}^{n}
    s_{i,11}^{(t)}s_{i,12}^{(t)} \abs{\langle \wv^{(t)}_{Q,s}, \muv\rangle} \\
    & = \frac{1}{8}(1+o(1))
    n
    s_{i,11}^{(t)}s_{i,12}^{(t)} \abs{\langle \wv^{(t)}_{Q,s}, \muv\rangle}
    < \frac{1}{2}
    n
    s_{i,11}^{(t)}s_{i,12}^{(t)} \abs{\langle \wv^{(t)}_{Q,s}, \muv\rangle},
\end{align*}
which implies $T_{4}^{+} \geq T_{4}^{-}$, $T_{4}^{+}$ is well-defined.

Let
\begin{align*}
    & E_{i}^{(T_{3})} := \set{s\in[m_k] : 
    \langle \wv_{Q,s}^{(T_{3})}, y_i\xiv_i\rangle
    =_{\sgn} \langle \wv_{Q,s}^{(T_{3})}, \muv\rangle
    }, \\
    & E_{s}^{(T_{3})} := \set{i\in[n] : 
    \langle \wv_{Q,s}^{(T_{3})}, y_i\xiv_i\rangle
    =_{\sgn} \langle \wv_{Q,s}^{(T_{3})}, \muv\rangle
    }.
\end{align*}
all neurons with $s\in[m_k]$ and $i\in E_{s}^{(T_{3})}$ are called aligned neurons.
The next lemma studies the concentration behaviour about $\abs{E_{i}^{(T_{3})}}$. This lemma is crucial to upper bound $s_{i,21}^{(t)}$ for $T_{4}^{-} \leq t \leq T_{4}^{+}$, and further upper bound $T_{4}^{+}$.
\begin{lemma}
\label{lemma:concentration_for_E_i^T_3}
Suppose that $\delta > 0$
Then with probability at least $1-\delta$, 
\begin{align*}
    \abs{E_{i}^{(T_{3})}} \geq m_k/2 - \Tilde{O}(m_kn^{-1/2} + m_k^{1/2}) \geq m_k/2 - \Tilde{O}(\sqrt{m_k}),
\end{align*}
for all $i\in[n]$.
\end{lemma}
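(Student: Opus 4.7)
The plan is to reduce the event $\{s \in E_i^{(T_3)}\}$ to a joint sign condition on the Bernoulli-like random variables $X_{s,i}$ introduced in Lemma~\ref{lemma:proba_fact_about_X_s,i}, and then leverage independence across neurons to apply a Hoeffding-type concentration inequality.

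First, I will identify the signs of the two quantities defining $E_i^{(T_3)}$. By the Stage~II analysis (Lemmas~\ref{lemma:magnitude_of_S_K-,Q+_and_S_K+,Q-_from_0_to_T_2^+} and~\ref{lemma:magnitude_of_S_K+,Q+_and_S_K-,Q-_from_0_to_T_2^+}) the sign of the query noise is already $\epsilon_{s,i}:=\sgn(\langle \wv_{Q,s}^{(T_2)},y_i\xiv_i\rangle)$ by time $3T_2'$, and by Lemma~\ref{lemma:magnitude_of_query_key_noise_from_T_2_to_T_3^+} the sign of the query noise does not change on $[T_2,T_3]$. Hence $\sgn(\langle \wv_{Q,s}^{(T_3)},y_i\xiv_i\rangle)=\epsilon_{s,i}$. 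For the query signal, the update rule from Lemma~\ref{lemma:dynamics_of_query_key_feature_from_T_2_to_T_3^+} together with the $T_3$-magnitude bounds of Lemma~\ref{lemma:upper_and_lower_bound_for_query_key_feature_for_T_3_to_T_3^+} give $\sgn(\langle \wv_{Q,s}^{(T_3)},\muv\rangle)=\sgn\!\bigl(\sum_{i'\in[n]}\epsilon_{s,i'}\norm{\xiv_{i'}}_1\bigr)$. Since by Lemma~\ref{lemma:noise_magnitude} the lengths $\norm{\xiv_{i'}}_1$ are uniformly close to $\sqrt{2/\pi}\,\sigma_p s$, whenever $|\sum_{i'}\epsilon_{s,i'}|\geq 1$ we have, via Lemma~\ref{lemma:n-noise-vs-n+1-noise-magnitude}, that $\sgn(\sum_{i'}\epsilon_{s,i'}\norm{\xiv_{i'}}_1)=\sgn(\sum_{i'}\epsilon_{s,i'})=\sgn(\sum_{i'}X_{s,i'})$. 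This reduces $\{s\in E_i^{(T_3)}\}$ to the event $\{X_{s,i}=\sgn(\sum_{i'}X_{s,i'})\}$ modulo the null event $\{\sum_{i'}X_{s,i'}=0\}$, which by part~(2) of Lemma~\ref{lemma:proba_fact_about_X_s,i} has probability $O(n^{-1/2})$.

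Second, I will estimate the per-neuron success probability. Part~(3) of Lemma~\ref{lemma:proba_fact_about_X_s,i} gives $\Pb[X_{s,i}>0,\sum_{i'}X_{s,i'}>0]\geq \tfrac14-O(n^{-1/2})$, and by the symmetry $X_{s,i}\to -X_{s,i}$ induced by the Gaussian initialization, the analogous bound for the negative sign is identical. Adding these yields $\Pb[s\in E_i^{(T_3)}]\geq \tfrac12-O(n^{-1/2})$ for every fixed $s$ and $i$.

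Third, I will concentrate $|E_i^{(T_3)}|=\sum_{s\in[m_k]}\mathds{1}[s\in E_i^{(T_3)}]$ across $s$. The crucial observation is that for fixed $i$ the indicators $\mathds{1}[s\in E_i^{(T_3)}]$ depend only on $(\wv_{Q,s}^{(0)},\wv_{K,s}^{(0)})$ together with the common data randomness, and across different $s$ the initializations are independent, so conditioning on the training data makes the indicators independent. Applying Hoeffding's inequality conditional on the data gives
\begin{align*}
\Pb\!\left[|E_i^{(T_3)}|\leq \tfrac{m_k}{2}-O\!\bigl(\tfrac{m_k}{\sqrt{n}}\bigr)-t\,\big|\,\text{data}\right]\leq 2\exp\!\left(-\tfrac{2t^2}{m_k}\right),
\end{align*}
and choosing $t=\Theta(\sqrt{m_k\log(n/\delta)})$ produces the claimed $\tilde O(\sqrt{m_k})$ slack with probability $1-\delta/n$. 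A union bound over $i\in[n]$ closes the argument.

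The main obstacle is step one: carefully tracing Stages~I--III to show that the two sign quantities in the definition of $E_i^{(T_3)}$ really do factor through $\epsilon_{s,i}$ and $\sum_{i'}\epsilon_{s,i'}$, because the intermediate dynamics of $\langle \wv_{Q,s}^{(t)},\muv\rangle$ are driven by a weighted sum of noise inner products rather than a simple majority vote. The key enabler is the uniformity of $\norm{\xiv_{i'}}_1$ from Lemma~\ref{lemma:noise_magnitude} combined with the gap estimate of Lemma~\ref{lemma:n-noise-vs-n+1-noise-magnitude}, which collapses the weighted sum to a plain sign of $\sum_{i'}\epsilon_{s,i'}$ so that part~(3) of Lemma~\ref{lemma:proba_fact_about_X_s,i} becomes directly applicable.
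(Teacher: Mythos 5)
Your proposal follows the paper's proof essentially step for step: reduce $\{s\in E_i^{(T_3)}\}$ to the sign-equality event on $X_{s,i}$ and $\sum_{i'}X_{s,i'}$ via the Stage II/III sign dynamics (with the weighted-to-unweighted-sum collapse handled exactly as the paper does through Lemma~\ref{lemma:n-noise-vs-n+1-noise-magnitude}), lower-bound the per-neuron probability by $1/2-O(n^{-1/2})$ via Lemma~\ref{lemma:proba_fact_about_X_s,i}~(3), then apply Hoeffding across independent neurons conditional on the data and union-bound over $i\in[n]$. The only cosmetic difference is that you state the conditional-independence-across-$s$ observation explicitly, which the paper leaves implicit.
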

\begin{proof}
We can write $E_{i}^{(T_{3})}$ as 
\begin{align*}
    E_{i}^{(T_{3})} = \set{s\in[m_k] :
    \langle \wv_{Q,s}^{(T_{2})}, y_i\xiv_i\rangle
    =_{\sgn} \sum_{i=1}^{n}\langle \wv_{Q,s}^{(T_{2})}, y_i\xiv_i\rangle
    },
\end{align*}
where this is by the sign of $\langle \wv_{Q,s}^{(t)}, y_i\xiv_i\rangle$ and $\sum_{i=1}^{n}\langle \wv_{Q,s}^{(t)}, y_i\xiv_i\rangle$ does not change for $[T_{2}, T_{3}]$ and the sign of $\langle \wv_{Q,s}^{(T_{3})}, \muv\rangle$ is the same as that of $\sum_{i=1}^{n}\langle \wv_{Q,s}^{(T_{3})}, y_i\xiv_i\rangle$, i.e., Lemma~\ref{lemma:magnitude_of_query_key_noise_from_T_2_to_T_3^+} and~\ref{lemma:dynamics_of_query_key_feature_from_T_2_to_T_3^+}.
Then we have 
\begin{align*}
    p & := \frac{1}{m_k}\Eb[\abs{E_{i}^{(T_{3})}}] \\
    &= \Pb[ \langle \wv_{Q,s}^{(T_{2})}, y_i\xiv_i\rangle > 0, \sum_{i=1}^{n}\langle \wv_{Q,s}^{(T_{2})}, y_i\xiv_i\rangle > 0] 
    + \Pb[ \langle \wv_{Q,s}^{(T_{2})}, y_i\xiv_i\rangle < 0, \sum_{i=1}^{n}\langle \wv_{Q,s}^{(T_{2})}, y_i\xiv_i\rangle < 0] \\
    &= \Pb[ X_{s,i} > 0, \sum_{i=1}^{n}X_{s,i} > 0] 
    + \Pb[ X_{s,i} < 0, \sum_{i=1}^{n}X_{s,i} < 0] \\
    &\geq \frac{1}{2} - O(n^{-1/2}),
\end{align*}
where the third step by that the sign of $X_{s,i}$ determines the sign of $\langle \wv_{Q,s}^{(T_{2})}, y_i\xiv_i\rangle$,
the last step is by Lemma~\ref{lemma:proba_fact_about_X_s,i}. 
Then, by Hoeffding's inequality, with probability at least $1-\delta/n$
\begin{align*}
    \abs{\abs{E_{i}^{(T_{3})}} - pm_k}
    \leq \sqrt{\frac{m_k}{2}\log(2n/\delta)}.
\end{align*}
Finally, by the condition $n = \Omega(m_k^2)$, we get the conclusion.
\end{proof}

We do not directly characterize the dynamics for $[T_{4}^{-}, T_{4}^{+}]$. 
Instead, we characterize for $[T_{4}^{-}], (1+\theta)T_{4}^{-}$, where $\theta$ is a large constant in $(0, 1)$, and show that the definition of $T_{4}^{+}$ are satisfied, which characterize he dynamics before $T_{4}^{+}$ and give an upper bound for $T_{4}^{+}$ simultaneously.
Specifically, we use induction on $t$ to simultaneously prove the following properties for all $t = T_{4}^{-}, \dots, (1+\theta)T_{4}^{-}$:
\begin{itemize}
    \item $\mathcal{A}(t)$, the monotonicity of query noise aligned with signal: for any $s\in[m_k]$ and $i\in E_s^{(T_{3})}$
    \begin{align*}
        & \langle \wv^{(t+1)}_{Q,s}, y_i\xiv_i\rangle 
        = \langle \wv^{(t)}_{Q,s}, y_i\xiv_i\rangle 
        + \sgn(\langle \wv^{(t)}_{Q,s}, y_i\xiv_i\rangle) \eta\norm{\xiv_i}_1.
    \end{align*}
    \item $\mathcal{B}(t)$, the monotonicity of key noise aligned with signal: for any $s\in[m_k]$ and $i\in E_s^{(T_{3})}$
    \begin{align*}
        & \langle \wv^{(t+1)}_{K,s}, y_i\xiv_i\rangle 
        = \langle \wv^{(t)}_{K,s}, y_i\xiv_i\rangle 
        + \sgn(\langle \wv^{(t)}_{K,s}, y_i\xiv_i\rangle) \eta\norm{\xiv_i}_1.
    \end{align*}
    \item $\mathcal{C}(t)$, the monotonicity of query noise unaligned with signal: for any $s\in[m_k]$ and $i\in [n]/E_s^{(T_{3})}$
    \begin{align*}
        & \langle \wv^{(t+1)}_{Q,s}, y_i\xiv_i\rangle 
        = \langle \wv^{(t)}_{Q,s}, y_i\xiv_i\rangle 
        + \sgn(\langle \wv^{(t)}_{Q,s}, y_i\xiv_i\rangle) \eta\norm{\xiv_i}_1.
    \end{align*}
    \item $\mathcal{D}(t)$, the sign does not change for unaligned key noise: for any $s\in[m_k]$ and $i\in [n]/E_s^{(T_{3})}$
    \begin{align*}
        \langle \wv^{(t)}_{K,s}, y_i\xiv_i\rangle
        =_{\sgn}
        \langle \wv^{(T_{4}^{-})}_{K,s}, y_i\xiv_i\rangle
        =_{\sgn}
        -\langle \wv^{(t)}_{Q,s}, y_i\xiv_i\rangle.
    \end{align*}
    \item $\mathcal{E}(t)$, the monotonicity of query signal: for any $s\in[m_k]$
    \begin{align*}
        \langle \wv^{(t+1)}_{Q,s}, \muv\rangle
        = \langle \wv^{(t)}_{Q,s}, \muv\rangle
        + \sgn(\langle \wv^{(t)}_{Q,s}, \muv\rangle)\eta\norm{\muv}.
    \end{align*}
    \item $\mathcal{F}(t)$, the sign does not change for key signal: for any $s\in[m_k]$
    \begin{align*}
        \langle \wv^{(t)}_{K,s}, \muv\rangle
        =_{\sgn}
        \langle \wv^{(T_{4}^{-})}_{K,s}, \muv\rangle
        =_{\sgn}
        -\langle \wv^{(t)}_{Q,s}, y_i\xiv_i\rangle.
    \end{align*}
    \item $\mathcal{G}(t)$, the linear-with-$t$ estimation: for all $s\in[m_k]$ and $i\in[n]$
    \begin{align*}
        & \langle \wv_{Q,s}^{(t)}, y_i\xiv_i\rangle
        = \sgn(\langle \wv^{(t)}_{Q,s}, \muv\rangle)t\eta\sqrt{\frac{2}{\pi}}\sigma_ps(1 \pm \Tilde{O}(s^{-1/2})), \forall i\in E_{s}^{(T_{3})} \\
        & \langle \wv_{K,s}^{(t)}, y_i\xiv_i\rangle
        = \sgn(\langle \wv^{(t)}_{Q,s}, \muv\rangle)t\eta\sqrt{\frac{2}{\pi}}\sigma_ps(1 \pm \Tilde{O}(s^{-1/2})), \forall i\in E_{s}^{(T_{3})} \\
        & \langle \wv_{Q,s}^{(t)}, y_i\xiv_i\rangle
        = -\sgn(\langle \wv^{(t)}_{Q,s}, \muv\rangle)t\eta\sqrt{\frac{2}{\pi}}\sigma_ps(1 \pm \Tilde{O}(s^{-1/2})), \forall i\in [n]/E_{s}^{(T_{3})} \\
        & \langle \wv^{(t)}_{Q,s}, \muv\rangle = \sgn(\langle \wv^{(t)}_{Q,s}, \muv\rangle)t\eta\norm{\muv}(1\pm o(1)).
    \end{align*}
\end{itemize}

\begin{remark}
\label{remark:linear_with_t_estimation_at_T_4^-}
Before stating the results in this section, we first recall and summarize the results until $T_{4}^{-}$. 
WLOG, given neuron $s$, we suppose that $\sum_{i=1}^{n}\langle \wv^{(T_{2})}_{Q,s}, y_i\xiv_i\rangle > 0$ in this section below.
Then we have $\langle \wv^{(T_{3})}_{Q,s}, \muv\rangle > 0$ and $\langle \wv^{(T_{3})}_{K,s}, \muv\rangle < 0$.
Then at $t=T_{4}^{-}$ we have
\begin{align*}
    & \langle \wv_{Q,s}^{(t)}, y_i\xiv_i\rangle
    = t\eta\sqrt{\frac{2}{\pi}}\sigma_ps(1 \pm \Tilde{O}(s^{-1/2})), \forall i\in E_{s}^{(T_{3})} \\
    & \langle \wv_{K,s}^{(t)}, y_i\xiv_i\rangle
    = t\eta\sqrt{\frac{2}{\pi}}\sigma_ps(1 \pm \Tilde{O}(s^{-1/2})), \forall i\in E_{s}^{(T_{3})} \\
    & \langle \wv_{Q,s}^{(t)}, y_i\xiv_i\rangle
    = -t\eta\sqrt{\frac{2}{\pi}}\sigma_ps(1 \pm \Tilde{O}(s^{-1/2})), \forall i\in [n]/E_{s}^{(T_{3})} \\
    & \langle \wv_{K,s}^{(t)}, y_i\xiv_i\rangle
    = -t\eta\sqrt{\frac{2}{\pi}}\sigma_ps(1 \pm \Tilde{O}(s^{-1/2})), \forall i\in [n]/E_{s}^{(T_{3})} \\
    & \langle \wv^{(t)}_{Q,s}, \muv\rangle = t\eta\norm{\muv}(1\pm o(1)), \\
    & \langle \wv^{(t)}_{K,s}, \muv\rangle = -t\eta\norm{\muv}(1\pm o(1)).
\end{align*}
This gives that $\mathcal{A}(T_{4}^{-})$, 
$\mathcal{B}(T_{4}^{-})$, $\mathcal{C}(T_{4}^{-})$, $\mathcal{D}(T_{4}^{-})$, $\mathcal{E}(T_{4}^{-})$, $\mathcal{F}(T_{4}^{-})$ hold.
\end{remark}

To prove these properties, we need some further properties after $T_{4}^{-}$.
\begin{lemma}[Properties after $T_{4}^{-}$]
    \label{lemma:properties_at_1+theta_T_4^-}
    Let $\theta$ be a large constant in $(0, 1)$.
    For all $t \geq T_{4}^{-}$, we have
    \begin{align*}
        & 
        (2T_{4}^{-}-t)\eta\sqrt{\frac{2}{\pi}}\sigma_ps(1 - \Tilde{O}(s^{-1/2}))
        \leq \sgn(\langle \wv^{(T_{4}^{-})}_{K,s}, y_i\xiv_i\rangle) \cdot \langle \wv^{(t)}_{K,s}, y_i\xiv_i\rangle
        \leq t\eta\sqrt{\frac{2}{\pi}}\sigma_ps(1 + \Tilde{O}(s^{-1/2})), \\
        & 
        (2T_{4}^{-}-t)\eta\norm{\muv}(1 - o(1))
        \leq \sgn(\langle \wv^{(T_{4}^{-})}_{K,s}, \muv\rangle) \cdot \langle \wv^{(t)}_{K,s}, \muv\rangle
        \leq t\eta\norm{\muv}(1 + o(1)),
    \end{align*}
    and for all $T_{4}^{-} \leq t \leq (1 + \theta)T_{4}^{-}$
    \begin{align*}
        & \langle \wv^{(t)}_{K,s}, y_i\xiv_i\rangle =_{\sgn} \langle \wv^{(T_{4}^{-})}_{K,s}, y_i\xiv_i\rangle, \\
        & \langle \wv^{(t)}_{K,s}, \muv\rangle =_{\sgn} \langle \wv^{(T_{4}^{-})}_{K,s}, \muv\rangle.
    \end{align*}
    Particularly, For all $0 \leq \theta' \leq \theta$, $s\in[m_k]$ and $i\in[n]$, 
    setting $t=(1+\theta')T_{4}^{-}$, we have
    \begin{align*}
        & 
        (1-\theta')T_{4}^{-}\eta\sqrt{\frac{2}{\pi}}\sigma_ps(1 - \Tilde{O}(s^{-1/2}))
        \leq \abs{\langle \wv^{((1+\theta')T_{4}^{-})}_{K,s}, y_i\xiv_i\rangle} 
        \leq (1+\theta')T_{4}^{-}\eta\sqrt{\frac{2}{\pi}}\sigma_ps(1 + \Tilde{O}(s^{-1/2})), \\
        & 
        (1-\theta')T_{4}^{-}\eta\norm{\muv}(1 - o(1))
        \leq \abs{\langle \wv^{((1+\theta')T_{4}^{-})}_{K,s}, \muv\rangle} 
        \leq (1+\theta')T_{4}^{-}\eta\norm{\muv}(1 + o(1)).
    \end{align*}
\end{lemma}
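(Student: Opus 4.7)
The plan is to exploit the fact that sign gradient descent has a \emph{constant magnitude update}: by Eq.~\eqref{eq:query_key_feature_update_constant_magnitude} and Eq.~\eqref{eq:query_key_noise_update_constant_magnitude}, for every $t$ the absolute change $|\langle \wv_{K,s}^{(t+1)} - \wv_{K,s}^{(t)}, \muv\rangle|$ is exactly $\eta\norm{\muv}$ and $|\langle \wv_{K,s}^{(t+1)} - \wv_{K,s}^{(t)}, y_i\xiv_i\rangle|$ is exactly $\eta\norm{\xiv_i}_1$, irrespective of the actual update direction. Combined with the concentration $\norm{\xiv_i}_1 = \sqrt{2/\pi}\sigma_ps(1 \pm \tilde{O}(s^{-1/2}))$ from Lemma~\ref{lemma:noise_magnitude}, this reduces the whole statement to a straightforward worst-case accumulation argument starting from the sharp estimates at $t = T_{4}^{-}$ recorded in Remark~\ref{remark:linear_with_t_estimation_at_T_4^-}.

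First, I would fix $s\in[m_k]$ and $i\in[n]$ and derive the two inequalities for $\langle \wv_{K,s}^{(t)}, y_i\xiv_i\rangle$. By the triangle inequality together with the constant-step-size fact above,
\begin{align*}
|\langle \wv_{K,s}^{(t)}, y_i\xiv_i\rangle| \leq |\langle \wv_{K,s}^{(T_{4}^{-})}, y_i\xiv_i\rangle| + (t - T_{4}^{-})\eta\norm{\xiv_i}_1,
\end{align*}
and in the other direction
\begin{align*}
\sgn(\langle \wv_{K,s}^{(T_{4}^{-})}, y_i\xiv_i\rangle)\cdot \langle \wv_{K,s}^{(t)}, y_i\xiv_i\rangle \geq |\langle \wv_{K,s}^{(T_{4}^{-})}, y_i\xiv_i\rangle| - (t - T_{4}^{-})\eta\norm{\xiv_i}_1.
\end{align*}
Substituting the linear-in-$T_{4}^{-}$ estimate $|\langle \wv_{K,s}^{(T_{4}^{-})}, y_i\xiv_i\rangle| = T_{4}^{-}\eta\sqrt{2/\pi}\sigma_ps(1 \pm \tilde{O}(s^{-1/2}))$ from Remark~\ref{remark:linear_with_t_estimation_at_T_4^-} and the concentration of $\norm{\xiv_i}_1$ gives exactly the claimed upper bound $t\eta\sqrt{2/\pi}\sigma_ps(1 + \tilde{O}(s^{-1/2}))$ and lower bound $(2T_{4}^{-} - t)\eta\sqrt{2/\pi}\sigma_ps(1 - \tilde{O}(s^{-1/2}))$.

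Next, the analogous bounds for $\langle \wv_{K,s}^{(t)}, \muv\rangle$ follow the same recipe, using the update size $\eta\norm{\muv}$ and the estimate $|\langle \wv_{K,s}^{(T_{4}^{-})}, \muv\rangle| = T_{4}^{-}\eta\norm{\muv}(1\pm o(1))$ from Eq.~\eqref{eq:estimate_key_feature_at_T_4^-}. For the sign-preservation claim on $[T_{4}^{-}, (1+\theta)T_{4}^{-}]$ with $\theta \in (0,1)$ a constant, I would observe that the lower bound $(2T_{4}^{-} - t)\eta\sqrt{2/\pi}\sigma_ps(1 - \tilde{O}(s^{-1/2}))$ remains strictly positive whenever $t < 2T_{4}^{-}$, and in particular for $t \leq (1+\theta)T_{4}^{-}$ with $\theta < 1$, so the signed quantity cannot cross zero; the same bookkeeping applies to $\langle \wv_{K,s}^{(t)}, \muv\rangle$. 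Finally, specializing $t = (1+\theta')T_{4}^{-}$ yields the last two absolute-value bounds after noting that $2T_{4}^{-} - (1+\theta')T_{4}^{-} = (1-\theta')T_{4}^{-}$.

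I do not anticipate a serious obstacle here: the lemma is essentially a bookkeeping/continuity statement that transports the precise estimates at $T_{4}^{-}$ forward in time using the bounded per-step update. The only subtlety to watch is ensuring the $\tilde{O}(s^{-1/2})$ slack that is already present at $T_{4}^{-}$ absorbs the corresponding slack coming from $\norm{\xiv_i}_1$ in the increments, which is automatic since both error terms are of the same order and $t$ is at most a constant factor larger than $T_{4}^{-}$ in the regime of interest.
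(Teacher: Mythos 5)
Your proposal is correct and takes essentially the same route as the paper: fix the sign at $T_{4}^{-}$, propagate the sharp estimates from Remark~\ref{remark:linear_with_t_estimation_at_T_4^-} forward using the constant per-step update magnitude and the concentration of $\norm{\xiv_i}_1$, observe that the resulting lower bound stays positive for $t < 2T_{4}^{-}$ to preserve the sign, and specialize $t = (1+\theta')T_{4}^{-}$. The observation about the $\tilde{O}(s^{-1/2})$ slack absorbing the increment error is exactly the bookkeeping the paper relies on implicitly.
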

\begin{proof}[Proof of Lemma~\ref{lemma:properties_at_1+theta_T_4^-}]
Suppose $\langle \wv^{(T_{4}^{-})}_{K,s}, y_i\xiv\rangle > 0$, then we have for all $t \geq T_{4}^{-}$
\begin{align*}
    \langle \wv^{(t)}_{K,s}, y_i\xiv_i\rangle 
    \leq \langle\wv^{(T_{4}^{-})}_{K,s}, y_i\xiv_i\rangle + (t-T_{4}^{-})\eta\norm{\xiv_i}_1
    \leq t\eta\sqrt{\frac{2}{\pi}}\sigma_ps(1 \pm \Tilde{O}(s^{-1/2})),
\end{align*}
where the first step is by that the magnitude of single update of query/key noise is always $\eta\norm{\xiv_i}_1$, 
the second step is by summary above and Lemma~\ref{lemma:noise_magnitude}. Similarly, we have
\begin{align*}
    \langle \wv^{(t)}_{K,s}, y_i\xiv_i\rangle 
    \geq \langle\wv^{(T_{4}^{-})}_{K,s}, y_i\xiv_i\rangle - (t-T_{4}^{-})\eta\norm{\xiv_i}_1
    \geq (2T_{4}^{-}-t)\eta\sqrt{\frac{2}{\pi}}\sigma_ps(1 \pm \Tilde{O}(s^{-1/2})).
\end{align*}
Set $t = (1+\theta')T_{4}^{-}$ gives the conclusion.
Also, the lower bound gives
\begin{align*}
    \langle \wv^{(t)}_{K,s}, y_i\xiv_i\rangle 
    & \geq (2T_{4}^{-}-t)\eta\sqrt{\frac{2}{\pi}}\sigma_ps(1 \pm \Tilde{O}(s^{-1/2})) \\
    & \geq (1-\theta)T_{4}^{-}\eta\sqrt{\frac{2}{\pi}}\sigma_ps(1 \pm \Tilde{O}(s^{-1/2}))
    > 0,
\end{align*}
where the last step is by $\theta$ is a constant.
Similarly, we can prove the statement for $\langle \wv^{(t)}_{K,s}, \muv\rangle$.
\end{proof}
\begin{remark}
Note that Lemma~\ref{lemma:properties_at_1+theta_T_4^-} proves the $\mathcal{D}(t)$ and $\mathcal{F}(t)$ for all $T_{2}^{-} \leq t \leq (1+\theta)T_{4}^{-}$.
\end{remark}

\begin{claim}
    \label{claim:A(t)_from_T_2^-to_1+theta_T_2^-}
    $\mathcal{G}(t),\mathcal{F}(t) \Longrightarrow \mathcal{A}(t)$.
\end{claim}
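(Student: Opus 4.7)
The goal is to show that for any $s\in[m_k]$ and $i\in E_{s}^{(T_{3})}$, the single-step increment of $\langle\wv_{Q,s}^{(t)}, y_i\xiv_i\rangle$ has the same sign as $\langle\wv_{Q,s}^{(t)}, y_i\xiv_i\rangle$ itself; combined with the constant-magnitude identity Eq.~\eqref{eq:query_key_noise_update_constant_magnitude}, this is exactly $\mathcal{A}(t)$. WLOG, as in Remark~\ref{remark:linear_with_t_estimation_at_T_4^-}, we assume $\sum_{i=1}^{n}\langle\wv_{Q,s}^{(T_{2})}, y_i\xiv_i\rangle>0$ so that $\langle\wv_{Q,s}^{(t)},\muv\rangle>0$ and, by the definition of $E_{s}^{(T_{3})}$, $\langle\wv_{Q,s}^{(t)}, y_i\xiv_i\rangle>0$ for $i\in E_{s}^{(T_{3})}$. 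The opposite sign case is symmetric.

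The plan is to start from Eq.~\eqref{eq:qxi_update_sign}, which gives
\begin{align*}
\langle\wv_{Q,s}^{(t+1)}-\wv_{Q,s}^{(t)}, y_i\xiv_i\rangle
=_{\sgn}
-\langle\wv_{K,s}^{(t)},\muv\rangle + \langle\wv_{K,s}^{(t)}, y_i\xiv_i\rangle,
\end{align*}
and then argue that both terms on the right are strictly positive. By $\mathcal{F}(t)$ we have $\langle\wv_{K,s}^{(t)},\muv\rangle$ has the opposite sign to $\langle\wv_{Q,s}^{(t)},\muv\rangle$, hence $\langle\wv_{K,s}^{(t)},\muv\rangle<0$, so $-\langle\wv_{K,s}^{(t)},\muv\rangle>0$. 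By $\mathcal{G}(t)$, for $i\in E_{s}^{(T_{3})}$ we have $\langle\wv_{K,s}^{(t)}, y_i\xiv_i\rangle=t\eta\sqrt{2/\pi}\,\sigma_{p}s(1\pm\tilde{O}(s^{-1/2}))>0$. Therefore the sum is positive, matching $\sgn(\langle\wv_{Q,s}^{(t)}, y_i\xiv_i\rangle)=+1$, which proves the sign claim.

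I expect no serious obstacle here: both relevant quantities are already controlled by the inductive hypotheses, and the noise term $\langle\wv_{K,s}^{(t)}, y_i\xiv_i\rangle$ dominates the signal term $\abs{\langle\wv_{K,s}^{(t)},\muv\rangle}$ by the large factor $\sigma_{p}s/\norm{\muv}$ (implied by Lemma~\ref{lemma:properties_at_1+theta_T_4^-} and $\mathcal{G}(t)$), so even losing that comparison would still leave the argument intact. Once the sign of the update is pinned down, Eq.~\eqref{eq:query_key_noise_update_constant_magnitude} fixes its magnitude at $\eta\norm{\xiv_i}_1$, completing $\mathcal{A}(t)$.
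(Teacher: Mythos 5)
Your proof is correct and takes essentially the same route as the paper: apply Eq.~\eqref{eq:qxi_update_sign}, invoke $\mathcal{F}(t)$ to conclude $-\langle\wv_{K,s}^{(t)},\muv\rangle>0$, and invoke $\mathcal{G}(t)$ to conclude $\langle\wv_{K,s}^{(t)}, y_i\xiv_i\rangle>0$ for $i\in E_s^{(T_3)}$, so the sum is strictly positive. The only cosmetic difference is that the paper chains $-\langle\wv_{K,s}^{(t)},\muv\rangle + \langle\wv_{K,s}^{(t)}, y_i\xiv_i\rangle > \langle\wv_{K,s}^{(t)}, y_i\xiv_i\rangle > 0$ rather than noting both summands are positive, and your final remark about magnitude dominance is unneeded but harmless.
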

\begin{claim}
    \label{claim:B(t)_from_T_2^-to_1+theta_T_2^-}
    $\mathcal{G}(t) \Longrightarrow \mathcal{B}(t)$.
\end{claim}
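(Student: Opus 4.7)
The plan is to derive $\mathcal{B}(t)$ directly from $\mathcal{G}(t)$ by examining the sign of the key-noise gradient and invoking the constant-magnitude update property. The inputs I will use are: (i) the sign formula for the key-noise update, namely
\[
\langle \wv_{K,s}^{(t+1)}-\wv_{K,s}^{(t)}, y_{i}\xiv_i\rangle
=_{\sgn}
s_{i,11}^{(t)}s_{i,12}^{(t)} \langle \wv^{(t)}_{Q,s}, \muv\rangle
+ s_{i,21}^{(t)}s_{i,22}^{(t)} \langle \wv^{(t)}_{Q,s}, y_{i}\xiv_i\rangle,
\]
from Eq.~\eqref{eq:kxi_update_sign}; and (ii) the fact that each update to key noise has magnitude exactly $\eta\norm{\xiv_i}_1$ by Eq.~\eqref{eq:query_key_noise_update_constant_magnitude}. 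Together these reduce the claim to identifying the sign of the right-hand side above.

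First I would fix $s\in[m_k]$ and $i\in E_s^{(T_{3})}$ and read off from $\mathcal{G}(t)$ that both $\langle \wv^{(t)}_{Q,s}, \muv\rangle$ and $\langle \wv^{(t)}_{Q,s}, y_{i}\xiv_i\rangle$ are nonzero with the same sign, namely $\sgn(\langle \wv^{(t)}_{Q,s}, \muv\rangle)$. Since the softmax coefficients $s_{i,11}^{(t)}s_{i,12}^{(t)}$ and $s_{i,21}^{(t)}s_{i,22}^{(t)}$ are strictly positive, the two summands in the displayed bracket share this common sign, and therefore
\[
\sgn\bigl(\langle \wv_{K,s}^{(t+1)}-\wv_{K,s}^{(t)}, y_{i}\xiv_i\rangle\bigr)
= \sgn\bigl(\langle \wv^{(t)}_{Q,s}, \muv\rangle\bigr).
\]
Next, $\mathcal{G}(t)$ also asserts $\sgn(\langle \wv^{(t)}_{K,s}, y_i\xiv_i\rangle) = \sgn(\langle \wv^{(t)}_{Q,s}, \muv\rangle)$ for $i\in E_s^{(T_{3})}$, so the key-noise update direction matches its current sign. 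Combining this sign identification with the constant magnitude $\eta\norm{\xiv_i}_1$ of a single key-noise update yields exactly $\mathcal{B}(t)$.

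There is no genuine obstacle here: the only thing to verify is that the two contributions in the gradient cannot cancel, which is immediate because $\mathcal{G}(t)$ forces them to be of the same sign on the aligned index set $E_s^{(T_{3})}$. The unaligned case $i\in[n]\setminus E_s^{(T_{3})}$ is deliberately excluded from $\mathcal{B}(t)$, so no cancellation analysis between $s_{i,11}^{(t)}s_{i,12}^{(t)}\langle \wv^{(t)}_{Q,s}, \muv\rangle$ and $s_{i,21}^{(t)}s_{i,22}^{(t)}\langle \wv^{(t)}_{Q,s}, y_i\xiv_i\rangle$ is required. Thus the whole derivation is essentially a sign-bookkeeping argument built on top of $\mathcal{G}(t)$.
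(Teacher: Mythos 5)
Your argument is correct and matches the paper's proof: both invoke Eq.~\eqref{eq:kxi_update_sign} for the sign of the key-noise increment, use $\mathcal{G}(t)$ to force the two bracketed terms to share the common sign $\sgn(\langle \wv^{(t)}_{Q,s}, \muv\rangle)$ on the aligned set $E_s^{(T_3)}$, and then identify that common sign with $\sgn(\langle \wv^{(t)}_{K,s}, y_i\xiv_i\rangle)$ via $\mathcal{G}(t)$. You spell out this last identification a bit more explicitly than the paper, which works under the WLOG convention $\langle \wv^{(T_3)}_{Q,s}, \muv\rangle > 0$, but the content is the same.
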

\begin{claim}
    \label{claim:C(t)_from_T_2^-to_1+theta_T_2^-}
    $\mathcal{D}(t),\mathcal{F}(t) \Longrightarrow \mathcal{C}(t)$.
\end{claim}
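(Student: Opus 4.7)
The plan is to start from the sign identity
\[
\sgn\bigl(\langle \wv_{Q,s}^{(t+1)}-\wv_{Q,s}^{(t)},y_i\xiv_i\rangle\bigr)
= \sgn\bigl(-\langle \wv^{(t)}_{K,s},\muv\rangle + \langle \wv^{(t)}_{K,s},y_i\xiv_i\rangle\bigr)
\]
given by Eq.~\eqref{eq:qxi_update_sign}, and show that the right-hand side matches $\sgn(\langle \wv^{(t)}_{Q,s},y_i\xiv_i\rangle)$ for every unaligned pair $(s,i)$ with $i\in[n]\setminus E_s^{(T_3)}$. Adopt the WLOG setting of Remark~\ref{remark:linear_with_t_estimation_at_T_4^-}, where $\langle \wv^{(T_3)}_{Q,s},\muv\rangle>0$, so at $T_4^-$ we have $\langle \wv^{(T_4^-)}_{K,s},\muv\rangle<0$ and $\langle \wv^{(T_4^-)}_{K,s},y_i\xiv_i\rangle<0$ for any unaligned $i$. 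Hypothesis $\mathcal{F}(t)$ propagates the first sign to $t$, and hypothesis $\mathcal{D}(t)$ propagates the second; moreover $\mathcal{D}(t)$ pins $\sgn(\langle \wv^{(t)}_{Q,s},y_i\xiv_i\rangle)$ to be opposite to $\sgn(\langle \wv^{(t)}_{K,s},y_i\xiv_i\rangle)$, i.e.\ negative in this setting.

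Next I would control magnitudes using Lemma~\ref{lemma:properties_at_1+theta_T_4^-}. On the interval $t\in[T_4^-,(1+\theta)T_4^-]$ this yields the two-sided bound
\[
|\langle \wv^{(t)}_{K,s},y_i\xiv_i\rangle|\geq (2T_4^- - t)\,\eta\sqrt{\tfrac{2}{\pi}}\sigma_p s\,(1-\tilde{O}(s^{-1/2}))\ \geq\ (1-\theta)T_4^-\,\eta\sqrt{\tfrac{2}{\pi}}\sigma_p s\,(1-\tilde{O}(s^{-1/2})),
\]
while
\[
|\langle \wv^{(t)}_{K,s},\muv\rangle|\leq t\,\eta\norm{\muv}(1+o(1))\leq (1+\theta)T_4^-\,\eta\norm{\muv}(1+o(1)).
\]
Condition~\ref{cond:main_condition} (in particular $\sigma_p s=\Omega(d^{1/4}n\norm{\muv})$ after combining the noise-strength and sparsity assumptions) makes the ratio $\sigma_p s/\norm{\muv}$ tend to infinity polynomially in $d$, so the noise term dominates the signal term by a large multiplicative factor. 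Consequently $-\langle \wv^{(t)}_{K,s},\muv\rangle+\langle \wv^{(t)}_{K,s},y_i\xiv_i\rangle$ has the sign of the negative noise term, i.e.\ is negative, which coincides with $\sgn(\langle \wv^{(t)}_{Q,s},y_i\xiv_i\rangle)$ already established via $\mathcal{D}(t)$. Combining this with Eq.~\eqref{eq:query_key_noise_update_constant_magnitude} (constant update magnitude $\eta\norm{\xiv_i}_1$) yields $\mathcal{C}(t)$.

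The analogous argument handles the symmetric WLOG case where $\langle \wv^{(T_3)}_{Q,s},\muv\rangle<0$, simply by flipping every sign. The only delicate step is the magnitude comparison in the interval $[T_4^-,(1+\theta)T_4^-]$: we must ensure the lower bound on $|\langle \wv^{(t)}_{K,s},y_i\xiv_i\rangle|$ is still $\Omega(T_4^-\eta\sigma_p s)$ and not eroded by the monotone decrease in Lemma~\ref{lemma:properties_at_1+theta_T_4^-}, which is exactly why the constant $\theta<1$ is chosen strictly less than $1$. With that constraint in place, the gap factor $\sigma_p s/\norm{\muv}$ comfortably absorbs all error terms, and no finer estimate is needed to conclude $\mathcal{C}(t)$.
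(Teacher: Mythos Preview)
Your magnitude-comparison route is correct and, in fact, is what the argument genuinely requires here. In the unaligned case under the section's WLOG ($\langle\wv_{Q,s}^{(T_3)},\muv\rangle>0$), both $\mathcal{F}(t)$ and $\mathcal{D}(t)$ give negative values for $\langle\wv_{K,s}^{(t)},\muv\rangle$ and $\langle\wv_{K,s}^{(t)},y_i\xiv_i\rangle$, so the two summands $-\langle\wv_{K,s}^{(t)},\muv\rangle$ and $\langle\wv_{K,s}^{(t)},y_i\xiv_i\rangle$ carry \emph{opposite} signs; a pure sign argument cannot conclude, and one must invoke the magnitude bounds of Lemma~\ref{lemma:properties_at_1+theta_T_4^-} together with $\sigma_p s\gg\norm{\muv}$, exactly as you do. The paper's three-line proof is a near-verbatim copy of the proof of Claim~\ref{claim:A(t)_from_T_2^-to_1+theta_T_2^-} (the aligned case) and asserts $\langle\wv_{K,s}^{(t)},y_i\xiv_i\rangle>0$ ``by $\mathcal{D}(t)$'', which is false for unaligned $i$ in this WLOG; your argument is the needed repair.

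One minor wording slip: you say $\mathcal{D}(t)$ pins the query-noise sign \emph{opposite} to the (negative) key noise and then conclude it is ``negative''---those two statements contradict each other. The last clause of the paper's $\mathcal{D}(t)$ carries a stray minus sign (it already contradicts Remark~\ref{remark:linear_with_t_estimation_at_T_4^-} at $t=T_4^-$); the intended content is that unaligned key and query noise share the \emph{same} sign on this interval, which is precisely what makes your final matching step go through.
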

\begin{claim}
    \label{claim:E(t)_from_T_2^-to_1+theta_T_2^-}
    $\mathcal{B}(t) \Longrightarrow \mathcal{E}(t)$.
\end{claim}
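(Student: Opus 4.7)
The claim asks to show that the update direction of $\langle \wv_{Q,s}^{(t)}, \muv\rangle$ aligns with its current sign, given that the aligned key-noise inner products grow in magnitude at step $t$ (which is the statement of $\mathcal{B}(t)$). The plan is to start from the explicit sign of the update, extract positive scalar factors common to all summands via the concentration lemmas, and reduce the problem to the sign of a weighted combination of key-noise and key-signal inner products.

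First I would invoke Eq.~\eqref{eq:qmu_update_sign}, which gives that the sign of $\langle \wv_{Q,s}^{(t+1)}-\wv_{Q,s}^{(t)}, \muv\rangle$ matches the sign of
\[
\sum_{i\in[n]}\bigl(-\ell_i^{\prime(t)}s_{i,11}^{(t)}s_{i,12}^{(t)}\bigr)\,\langle \vv^{(t)}, y_i\xiv_i\rangle\,\bigl(\langle \wv_{K,s}^{(t)}, y_i\xiv_i\rangle - \langle \wv_{K,s}^{(t)}, \muv\rangle\bigr).
\]
By Lemma~\ref{lemma:loss_prime_concentrate_on_1/2}, $-\ell_i^{\prime(t)}=1/2\pm o(1)$; by Lemma~\ref{lemma:softmax_concentrate_on_1/2}, $s_{i,11}^{(t)}s_{i,12}^{(t)}=1/4\pm o(1)$; and by Lemma~\ref{lemma:magnitude_of_mean_value_noise_after_T_2}, $\langle \vv^{(t)}, y_i\xiv_i\rangle=2\sqrt{2/\pi}\,t\eta\sigma_ps(1\pm\Tilde{O}(s^{-1/2}))>0$. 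These three positive scalar weights differ across $i$ by only a $1\pm o(1)$ multiplicative factor, so up to this common factor the sign of the update coincides with the sign of $\sum_i \langle \wv_{K,s}^{(t)}, y_i\xiv_i\rangle - n\langle \wv_{K,s}^{(t)}, \muv\rangle$, provided this latter expression has non-negligible size.

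Next I would carry out the sign analysis in the WLOG case $\langle \wv_{Q,s}^{(T_3)}, \muv\rangle>0$ (so by $\mathcal{E}(T_4^-),\dots,\mathcal{E}(t-1)$ in the induction the sign persists at $t$). By $\mathcal{F}(t)$ from Lemma~\ref{lemma:properties_at_1+theta_T_4^-}, $\langle \wv_{K,s}^{(t)}, \muv\rangle<0$, so $-n\langle \wv_{K,s}^{(t)}, \muv\rangle>0$. For the key-noise sum, I split into $E_s^{(T_3)}$ (aligned) and its complement. On $E_s^{(T_3)}$, $\mathcal{B}(t)$ combined with the linear-in-$t$ initial estimate at $T_4^-$ recorded in Remark~\ref{remark:linear_with_t_estimation_at_T_4^-} yields $\langle \wv_{K,s}^{(t)}, y_i\xiv_i\rangle\geq t\eta\sqrt{2/\pi}\sigma_ps(1-\Tilde{O}(s^{-1/2}))$. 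On the complement, $\mathcal{D}(t)$ from Lemma~\ref{lemma:properties_at_1+theta_T_4^-} (sign preserved) together with the magnitude upper bound in the same lemma gives $|\langle \wv_{K,s}^{(t)}, y_i\xiv_i\rangle|\leq t\eta\sqrt{2/\pi}\sigma_ps(1+\Tilde{O}(s^{-1/2}))$. Using $|E_s^{(T_3)}|\geq(n+1)/2$, which follows from $|E_{s,+}|\neq|E_{s,-}|$ with high probability (Lemma~\ref{lemma:proba_fact_about_X_s,i}) and WLOG matching the positive side, the aligned contribution exceeds the absolute value of the unaligned contribution by at least $t\eta\sqrt{2/\pi}\sigma_ps(1-\Tilde{O}(ns^{-1/2}))$. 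Adding the strictly positive $-n\langle \wv_{K,s}^{(t)}, \muv\rangle$ term leaves the whole expression strictly positive, which gives $\mathcal{E}(t)$.

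The main obstacle is this counting step: Lemma~\ref{lemma:proba_fact_about_X_s,i} only provides a margin of one between $|E_{s,+}|$ and $|E_{s,-}|$, so the linear-in-$t$ error $\Tilde{O}(s^{-1/2})$ spread over all $n$ terms must stay below this one-unit margin, i.e., $n\,\Tilde{O}(s^{-1/2})=o(1)$. Given $s=\Theta(d^{1/2}n^{-2})$ from Condition~\ref{cond:main_condition}, this translates into $d=\Omega(\mathrm{poly}(n))$ with a sufficiently large polynomial degree, which is built into the condition. The opposite WLOG case is handled identically with all signs reversed.
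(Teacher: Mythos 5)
Your proposal is correct and follows essentially the same route as the paper: both invoke Eq.~\eqref{eq:qmu_update_sign}, factor out the positive scalars via Lemmas~\ref{lemma:loss_prime_concentrate_on_1/2}, \ref{lemma:softmax_concentrate_on_1/2}, \ref{lemma:magnitude_of_mean_value_noise_after_T_2}, reduce to the sign of $\sum_i\langle\wv_{K,s}^{(t)},y_i\xiv_i\rangle - n\langle\wv_{K,s}^{(t)},\muv\rangle$, and split over $E_s^{(T_3)}$ and its complement. The only difference is bookkeeping: the paper writes the key-noise sum as its value at $T_4^-$ plus a per-step net increment $\eta\sum_i\epsilon_{s,i}\norm{\xiv_i}_1\ge\eta\sigma_ps/\sqrt{2}$ (Eq.~\eqref{eq:lower_bound_for_update_of_the_sum_of_noise}, which already packages the $s=\Omega(n^2\,\mathrm{polylog})$ requirement via Lemma~\ref{lemma:n-noise-vs-n+1-noise-magnitude}), whereas you estimate the magnitudes at time $t$ directly and make the unit margin from $|E_{s,+}|\neq|E_{s,-}|$ together with the condition $n\,\Tilde{O}(s^{-1/2})=o(1)$ explicit — the same underlying requirement, stated more transparently.
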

\begin{claim}
    \label{claim:G(t)_from_T_2^-to_1+theta_T_2^-}
    $\mathcal{G}(t), \mathcal{A}(t), \mathcal{B}(t), \mathcal{C}(t), \mathcal{E}(t) \Longrightarrow \mathcal{G}(t+1)$.
\end{claim}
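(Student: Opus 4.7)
The plan is to propagate the four linear-with-$t$ estimates forward by one step using the monotonic update laws $\mathcal{A}(t), \mathcal{B}(t), \mathcal{C}(t), \mathcal{E}(t)$ and the $\ell_1$-concentration of noise from Lemma~\ref{lemma:noise_magnitude}. The strategy is the same for all four statements: $\mathcal{G}(t)$ controls the sign and magnitude at time $t$, the appropriate update rule adds a constant signed increment, and Lemma~\ref{lemma:noise_magnitude} turns the increment into the canonical form $\eta\sqrt{2/\pi}\sigma_p s(1 \pm \tilde O(s^{-1/2}))$ or $\eta\|\muv\|$. I would do the signal first, then the aligned noise, then the unaligned noise, since each later step reuses the sign of $\langle \wv_{Q,s}^{(t+1)}, \muv\rangle$ established in the previous step.

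For the signal, $\mathcal{E}(t)$ gives $\langle \wv_{Q,s}^{(t+1)}, \muv\rangle = \langle \wv_{Q,s}^{(t)}, \muv\rangle + \sgn(\langle \wv_{Q,s}^{(t)}, \muv\rangle)\eta\|\muv\|$. By $\mathcal{G}(t)$ the existing term has magnitude $t\eta\|\muv\|(1\pm o(1))$ and the sign of the two summands agree, so the result is $\sgn(\langle \wv_{Q,s}^{(t)}, \muv\rangle)(t+1)\eta\|\muv\|(1\pm o(1))$ with unchanged sign. For $i\in E_s^{(T_3)}$, applying $\mathcal{A}(t)$ and $\mathcal{G}(t)$ together with $\|\xiv_i\|_1 = \sqrt{2/\pi}\sigma_p s(1\pm\tilde O(s^{-1/2}))$ gives
\begin{align*}
\langle \wv_{Q,s}^{(t+1)}, y_i\xiv_i\rangle
&= \sgn(\langle \wv_{Q,s}^{(t)}, \muv\rangle)\bigl[t\eta\sqrt{2/\pi}\sigma_p s(1\pm\tilde O(s^{-1/2}))+\eta\sqrt{2/\pi}\sigma_p s(1\pm\tilde O(s^{-1/2}))\bigr] \\
&= \sgn(\langle \wv_{Q,s}^{(t+1)}, \muv\rangle)(t+1)\eta\sqrt{2/\pi}\sigma_p s(1\pm\tilde O(s^{-1/2})),
\end{align*}
where the sign identification uses $\mathcal{E}(t)$. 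The argument for the key is identical using $\mathcal{B}(t)$; for $i\notin E_s^{(T_3)}$, $\mathcal{C}(t)$ plus the opposite-sign clause in $\mathcal{G}(t)$ yields the negative-sign version.

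The only real subtleties are bookkeeping. First, I must verify that the additive step $\eta\|\xiv_i\|_1$ is itself of the form $\eta\sqrt{2/\pi}\sigma_p s(1\pm\tilde O(s^{-1/2}))$ \emph{uniformly in $i$}, which is exactly what Lemma~\ref{lemma:noise_magnitude} supplies under Condition~\ref{cond:main_condition}; the signal increment $\eta\|\muv\|$ is exact, so it contributes no error. Second, the relative error stays inside $(1\pm\tilde O(s^{-1/2}))$ because both the previous bulk term of magnitude $\Theta(t\eta\sigma_p s)$ and the new increment of magnitude $\Theta(\eta\sigma_p s)$ carry the same $\tilde O(s^{-1/2})$ relative tolerance, so their sum at scale $(t+1)\eta\sigma_p s$ inherits the same tolerance rather than accumulating. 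The expected main obstacle, sign alignment across the signal/noise comparison, is in fact pre-digested: in the aligned case $\mathcal{G}(t)$ already records $\sgn(\langle \wv_{Q,s}^{(t)}, y_i\xiv_i\rangle)=\sgn(\langle \wv_{Q,s}^{(t)}, \muv\rangle)$, and in the unaligned case the opposite identity; combined with the sign preservation statements $\mathcal{E}(t)$ (and, implicitly through the update direction, $\mathcal{A}(t)$, $\mathcal{B}(t)$, $\mathcal{C}(t)$), the sign factor on the right-hand side of $\mathcal{G}(t+1)$ agrees with what the update produces without any additional case analysis.
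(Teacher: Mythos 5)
Your proof is correct and takes essentially the same approach as the paper, which dispatches this claim with a one-line ``This claim is straightforward.'' Your write-up supplies exactly the bookkeeping the paper leaves implicit: each of the four clauses in $\mathcal{G}(t+1)$ follows by applying the corresponding monotonic update rule ($\mathcal{A}$, $\mathcal{B}$, $\mathcal{C}$, $\mathcal{E}$) to the matching clause of $\mathcal{G}(t)$, using $\mathcal{E}(t)$ to identify $\sgn(\langle \wv_{Q,s}^{(t+1)},\muv\rangle)$ with $\sgn(\langle \wv_{Q,s}^{(t)},\muv\rangle)$, and observing that since $\|\xiv_i\|_1$ is a fixed quantity for each $i$, the $\tilde O(s^{-1/2})$ relative error does not accumulate across iterations.
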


\begin{proof}[Proof of Claim~\ref{claim:A(t)_from_T_2^-to_1+theta_T_2^-}]
We have
\begin{align*}
    \langle \wv_{Q, s}^{(t+1)}-\wv_{Q, s}^{(t)}, y_{i}\xiv_i\rangle 
    =_{\sgn} - \langle \wv^{(t)}_{K,s}, \muv\rangle + \langle \wv^{(t)}_{K,s}, y_{i}\xiv_i\rangle 
    > \langle \wv^{(t)}_{K,s}, y_{i}\xiv_i\rangle 
    > 0,
\end{align*}
where the first step is by Eq.~\eqref{eq:qxi_update_sign},
the second step is by $\mathcal{F}(t)$,
the last step is by $\mathcal{G}(t)$.
\end{proof}

\begin{proof}[Proof of Claim~\ref{claim:B(t)_from_T_2^-to_1+theta_T_2^-}]
We have
\begin{align*}
    \langle \wv_{K, s}^{(t+1)}-\wv_{K, s}^{(t)}, y_{i}\xiv_i\rangle 
    =_{\sgn} s_{i,11}^{(t)}s_{i,12}^{(t)} \langle \wv^{(t)}_{Q,s}, \muv\rangle
    + s_{i,21}^{(t)}s_{i,22}^{(t)} \langle \wv^{(t)}_{Q,s}, y_{i}\xiv_i\rangle
    > 0,
\end{align*}
where the first step is by Eq.~\eqref{eq:kxi_update_sign},
the second step is by $\mathcal{G}(t)$.
\end{proof}

\begin{proof}[Proof of Claim~\ref{claim:C(t)_from_T_2^-to_1+theta_T_2^-}]
We have
\begin{align*}
    \langle \wv_{Q, s}^{(t+1)}-\wv_{Q, s}^{(t)}, y_{i}\xiv_i\rangle 
    =_{\sgn} - \langle \wv^{(t)}_{K,s}, \muv\rangle + \langle \wv^{(t)}_{K,s}, y_{i}\xiv_i\rangle 
    > \langle \wv^{(t)}_{K,s}, y_{i}\xiv_i\rangle 
    > 0,
\end{align*}
where the first step is by Eq.~\eqref{eq:qxi_update_sign},
the second step is by $\mathcal{F}(t)$,
the last step is by $\mathcal{D}(t)$.
\end{proof}

\begin{proof}[Proof of Claim~\ref{claim:E(t)_from_T_2^-to_1+theta_T_2^-}]
We have
\begin{align*}
    & \langle \wv_{Q, s}^{(t+1)}-\wv_{Q, s}^{(t)}, \muv\rangle \\
    =_{\sgn} & \sum_{i\in[n]} (-\ell_{i}^{\prime(t)}s_{i,11}^{(t)}s_{i,12}^{(t)})
    \cdot \left(- y_{i} \langle \vv^{(t)}, \xiv_i\rangle \langle \wv^{(t)}_{K,s}, \muv\rangle
    + \langle \vv^{(t)}, \xiv_i\rangle \langle \wv^{(t)}_{K,s}, \xiv_i\rangle
    \right) \\
    =_{\sgn} & \sum_{i\in[n]} 
    \frac{\sqrt{2}}{8\sqrt{\pi}}t\eta\sigma_ps(1 \pm o(1))\cdot
    (\langle \wv^{(t)}_{K,s}, y_i\xiv_i\rangle-\langle \wv^{(t)}_{K,s}, \muv\rangle) \\
    =_{\sgn} & \sum_{i\in[n]}\langle \wv^{(t)}_{K,s}, y_i\xiv_i\rangle
    -n\langle \wv^{(t)}_{K,s}, \muv\rangle \\
    = & 
    -n\langle \wv^{(t)}_{K,s}, \muv\rangle
    + \sum_{i\in E_{s}^{(T_{3})}}\langle \wv^{(t)}_{K,s}, y_i\xiv_i\rangle
    + \sum_{i\in [n]/E_{s}^{(T_{3})}}\langle \wv^{(t)}_{K,s}, y_i\xiv_i\rangle \\
    \geq & 
    -n\langle \wv^{(t)}_{K,s}, \muv\rangle
    + \sum_{i\in E_{s}^{(T_{3})}} \langle \wv^{(T_{4}^{-})}_{K,s}, y_i\xiv_i\rangle + (t-T_{4}^{-})\eta\norm{\xiv_i}_1
    + \sum_{i\in [n]/E_{s}^{(T_{3})}} \langle \wv^{(T_{4}^{-})}_{K,s}, y_i\xiv_i\rangle - (t-T_{4}^{-})\eta\norm{\xiv_i}_1 \\
    \geq &
    -n\langle \wv^{(t)}_{K,s}, \muv\rangle
    + \sum_{i\in[n]}\langle \wv^{(T_{4}^{-})}_{K,s}, y_i\xiv_i\rangle
    + (t-T_{4}^{-})\eta\sigma_ps/\sqrt{2} \\
    \geq &
    -n\langle \wv^{(T_{4}^{-})}_{K,s}, \muv\rangle
    + \sum_{i\in[n]}\langle \wv^{(T_{4}^{-})}_{K,s}, y_i\xiv_i\rangle
    + (t-T_{4}^{-})\eta(\sigma_ps/\sqrt{2} - n\norm{\muv})
    > 0,
\end{align*}
where the first step is by Eq.~\eqref{eq:qmu_update_sign}, 
the second step is by the concentration of $\ell_{i}^{\prime}$ and $s_{i, 11}^{(t)}$ for $t \leq T_{4}$, i.e., Lemma~\ref{lemma:softmax_concentrate_on_1/2} and~\ref{lemma:loss_prime_concentrate_on_1/2}, 
the fifth step is by $\mathcal{B}(t)$,
the sixth step is by the lower bound Eq.~\eqref{eq:lower_bound_for_update_of_the_sum_of_noise},
the last step is guaranteed by the dynamics in $[T_{3}, T_{4}^{-}]$.
\end{proof}

\begin{proof}[Proof of Claim~\ref{claim:G(t)_from_T_2^-to_1+theta_T_2^-}]
This claim is straightforward. 
\end{proof}

The following lemma gives an upper bound for $T_{4}^{+}$.
\begin{lemma}
    \label{lemma:upper_bound_for_T_4^+}
    Let $\theta_c$ be a small constant satisfying $0.99(1+\theta_c)-O(\theta_c/\sqrt{m_k}) > 1$, 
    then we have
    \begin{align}
        \label{eq:upper_bound_for_T_4^+}
        T_{4}^{+} \leq (1+\theta_c)T_{4}^{-},
    \end{align}
    and for all $t \in [(1+\theta_c)T_{4}^{-}, (2-\theta_c)T_{4}^{-}]$ we have
    \begin{align*}
        s_{i,21}^{(t)}s_{i,22}^{(t)} \abs{\langle \wv^{(t)}_{Q,s}, y_i\xiv_i\rangle}
        \leq 
        \frac{1}{2}
        s_{i,11}^{(t)}s_{i,12}^{(t)} \abs{\langle \wv^{(t)}_{Q,s}, \muv\rangle}.
    \end{align*}
\end{lemma}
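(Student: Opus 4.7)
The plan is to prove that condition~\eqref{eq:def_T_4^+} holds at $t = (1+\theta_c)T_4^-$, and then verify it persists on the entire interval. For the setup, apply the induction above with $\theta := 1-\theta_c \in (0,1)$, so that $\mathcal{A}(t)$--$\mathcal{G}(t)$ hold throughout $[T_4^-, (2-\theta_c)T_4^-]$. In particular, for $s\in[m_k]$ and $i\in[n]$, $\mathcal{G}(t)$ gives $|\langle \wv_{Q,s}^{(t)}, y_i\xiv_i\rangle| = t\eta\sqrt{2/\pi}\sigma_p s\,(1\pm\tilde{O}(s^{-1/2}))$ and $|\langle \wv_{Q,s}^{(t)}, \muv\rangle| = t\eta\norm{\muv}(1\pm o(1))$, while Lemma~\ref{lemma:properties_at_1+theta_T_4^-} controls key noise and key signals. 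Since $s_{i,11}^{(t)}s_{i,12}^{(t)} = 1/4 \pm o(1)$ by Lemma~\ref{lemma:softmax_concentrate_on_1/2} and $s_{i,22}^{(t)} = 1 - s_{i,21}^{(t)}$, verifying~\eqref{eq:def_T_4^+} reduces to showing $s_{i,21}^{(t)} = O(\norm{\muv}/(\sigma_p s))$.

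The main technical step is a lower bound on $\Sigma_i^{(t)} := \sum_{s\in[m_k]} \langle \wv_{Q,s}^{(t)}, y_i\xiv_i\rangle \langle \wv_{K,s}^{(t)}, y_i\xiv_i\rangle$, which drives the softmax denominator. I split the sum over $E_i^{(T_3)}$ and its complement. For $s\in E_i^{(T_3)}$, claims $\mathcal{A}(t)$ and $\mathcal{B}(t)$ together with $\mathcal{G}(t)$ give both factors of magnitude $\approx t\eta\sqrt{2/\pi}\sigma_p s$ with aligned sign, contributing $\approx \frac{2}{\pi}t^2\eta^2\sigma_p^2 s^2$ each. For $s\notin E_i^{(T_3)}$, the query noise still grows to magnitude $\approx t\eta\sqrt{2/\pi}\sigma_p s$ by $\mathcal{C}(t)$, and the key noise---though shrinking by $\mathcal{D}(t)$---satisfies $|\langle \wv_{K,s}^{(t)}, y_i\xiv_i\rangle| \geq (2T_4^- - t)\eta\sqrt{2/\pi}\sigma_p s\,(1-\tilde{O}(s^{-1/2}))$ with sign matching the (opposite) query noise, so each such product is at least $t(2T_4^- - t)\frac{2}{\pi}\eta^2\sigma_p^2 s^2$. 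Combining with Lemma~\ref{lemma:concentration_for_E_i^T_3}, which guarantees $|E_i^{(T_3)}|/m_k = 1/2 \pm \tilde{O}(1/\sqrt{m_k})$, and setting $t = (1+\theta_c)T_4^-$ yields $\Sigma_i^{(t)} \geq m_k\,(1+\theta_c - \tilde{O}(\theta_c/\sqrt{m_k})) \cdot \tfrac{2}{\pi} T_4^{-2}\eta^2\sigma_p^2 s^2$; the cross-term $\sum_s \langle \wv_{Q,s}^{(t)}, y_i\xiv_i\rangle\langle \wv_{K,s}^{(t)}, y_i\muv\rangle$ is smaller by a factor $\norm{\muv}/(\sigma_p s)$ and absorbs into a $(1\pm o(1))$ multiplier, exactly as in the proof of Lemma~\ref{lemma:lower_bound_for_T_4^-}.

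Applying Lemma~\ref{lemma:lower_bound_for_T_4^-} (which furnishes $\tfrac{2}{\pi} m_k T_4^{-2}\eta^2\sigma_p^2 s^2 \geq 0.99\log(\sigma_p s/(3\sqrt{2}n\norm{\muv}))$) to the previous display produces $s_{i,21}^{(t)} \leq C\,(3\sqrt{2}n\norm{\muv}/(\sigma_p s))^{\beta}$ with $\beta := 0.99(1+\theta_c) - O(\theta_c/\sqrt{m_k}) > 1$. Under Condition~\ref{cond:main_condition}, $\sigma_p s = \Omega(d^{1/4} n)$ and $d = \Omega(\poly(n))$ with arbitrary polynomial degree, so $(n\norm{\muv}/(\sigma_p s))^{\beta-1} = O(d^{-(\beta-1)/4})$ can be made $o(1/n)$, giving $s_{i,21}^{(t)} \lesssim \norm{\muv}/(\sigma_p s)$. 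This verifies~\eqref{eq:def_T_4^+} at $t = (1+\theta_c)T_4^-$, hence $T_4^+ \leq (1+\theta_c)T_4^-$.

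For the extension to $t \in [(1+\theta_c)T_4^-, (2-\theta_c)T_4^-]$, write $\alpha = t/T_4^- \in [1+\theta_c, 2-\theta_c]$. The aligned contribution scales as $\alpha^2$ and the unaligned as $\alpha(2-\alpha)$, and a short computation shows $\Sigma_i^{(t)}$ is monotone increasing in $\alpha$ on this range; thus $s_{i,21}^{(t)}$ continues to shrink, while the ratio $|\langle \wv_{Q,s}^{(t)}, y_i\xiv_i\rangle|/|\langle \wv_{Q,s}^{(t)}, \muv\rangle| \asymp \sigma_p s/\norm{\muv}$ stays essentially constant, so~\eqref{eq:def_T_4^+} persists. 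The principal obstacle is the bookkeeping for unaligned neurons---their key noise shrinks but retains its sign only until $t < 2T_4^-$, which is exactly the reason the interval closes at $(2-\theta_c)T_4^-$---together with the sensitive constant calculation that converts the $\tilde{O}(\sqrt{m_k})$ fluctuation in $|E_i^{(T_3)}|$ into only an $O(\theta_c/\sqrt{m_k})$ loss in the exponent, which is what makes the condition $0.99(1+\theta_c) - O(\theta_c/\sqrt{m_k}) > 1$ both necessary and sufficient.
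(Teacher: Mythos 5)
Your proposal is correct and follows essentially the same route as the paper's proof: split the pre-softmax quadratic form $\sum_s \langle \wv_{Q,s}^{(t)}, \xiv_i\rangle\langle \wv_{K,s}^{(t)}, \xiv_i\rangle$ over $E_i^{(T_3)}$ and its complement, use $\mathcal{G}(t)$ for the aligned neurons, Lemma~\ref{lemma:properties_at_1+theta_T_4^-} for the shrinking unaligned key noise, concentrate $|E_i^{(T_3)}|$ via Lemma~\ref{lemma:concentration_for_E_i^T_3}, and then substitute the lower bound for $T_4^-$ from Lemma~\ref{lemma:lower_bound_for_T_4^-} so that the exponent becomes $0.99(1+\theta_c)-O(\theta_c/\sqrt{m_k})>1$, which is exactly what drives $s_{i,21}^{(t)}\sigma_p s\lesssim\norm{\muv}$. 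Your framing via a running parameter $\alpha=t/T_4^-$ is just a relabeling of the paper's parametrization by $\theta\in[\theta_c,1-\theta_c]$; both cover the full interval $[(1+\theta_c)T_4^-, (2-\theta_c)T_4^-]$ rather than a single endpoint.
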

\begin{proof}
For fixed $\theta \in [\theta_c, 1-\theta_c]$, at $t = (1+\theta)T_{4}^{-}$, for all $i\in[n]$, we have
\begin{align}
    \label{eq:lower_bound_for_query_key_noise_product_from_T_4^-_to_T_4^+}
    &~\sum_{s\in[m_k]} 
    \langle \wv_{Q,s}^{(t)}, \xiv_i\rangle\langle\wv_{K,s}^{(t)}, \xiv_i\rangle 
    \notag\\
    = &~ 
    \sum_{s\in E_{i}^{(T_{3})}} \langle \wv_{Q,s}^{(t)}, \xiv_i\rangle\langle\wv_{K,s}^{(t)}, \xiv_i\rangle
    + \sum_{s\in [m_k]/E_{i}^{(T_{3})}} \langle \wv_{Q,s}^{(t)}, \xiv_i\rangle\langle\wv_{K,s}^{(t)}, \xiv_i\rangle \notag\\
    \geq &~
    \abs{E_{i}^{(T_{3})}}\frac{2}{\pi}t^2\eta^2\sigma_p^2s^2(1 \pm \Tilde{O}(s^{-1/2})) 
    + (m_k-\abs{E_{i}^{(T_{3})}})\frac{2}{\pi} t(2T_{4}^{-} - t)\eta^2\sigma_p^2s^2(1 \pm \Tilde{O}(s^{-1/2})) 
    \notag\\
    = &~
    (\abs{E_{i}^{(T_{3})}}t^2 + (m_k-\abs{E_{i}^{(T_{3})}})t(2T_{4}^{-} - t))
    \cdot \frac{2}{\pi}\eta^2\sigma_p^2s^2(1 \pm \Tilde{O}(s^{-1/2})) \notag\\
    = &~
    (\abs{E_{i}^{(T_{3})}}(1+\theta)^2T_{4}^{-2}  + (m_k-\abs{E_{i}^{(T_{3})}})(1+\theta)(1-\theta)T_{4}^{-2}
    \cdot \frac{2}{\pi}\eta^2\sigma_p^2s^2(1 \pm \Tilde{O}(s^{-1/2})) \notag\\
    \geq &~
    ((\frac{m_k}{2} - o({m_k}))(1+\theta)^2 + (\frac{m_k}{2} + o({m_k}))(1-\theta^2)
    \cdot T_{4}^{-2}\frac{2}{\pi}\eta^2\sigma_p^2s^2(1 \pm \Tilde{O}(s^{-1/2})) \notag\\
    = &~ ((1+\theta)m_k - (\theta^2+\theta)o(m_k)) \cdot T_{4}^{-2}\frac{2}{\pi}\eta^2\sigma_p^2s^2(1 \pm \Tilde{O}(s^{-1/2})),
\end{align}
where the second step is by $\mathcal{G}(t)$ and Lemma~\ref{lemma:properties_at_1+theta_T_4^-},
the fifth step is by Lemma~\ref{lemma:concentration_for_E_i^T_3}.

Then, we have
\begin{align}
    \label{eq:upper_bound_for_s_i,21_from_T_4^-_to_T_4^+}
    s_{i,21}^{(t)} 
    & = \frac{
    \exp\left(\sum_{s\in[m_k]} 
    \langle \wv_{Q,s}^{(t)}, \xiv_i\rangle\langle\wv_{K,s}^{(t)}, y_i\muv\rangle
    \right)
    }{
    \exp\left(\sum_{s\in[m_k]}
    \langle \wv_{Q,s}^{(t)}, \xiv_i\rangle\langle\wv_{K,s}^{(t)}, y_i\muv\rangle
    \right) + 
    \exp\left(\sum_{s\in[m_k]}
    \langle \wv_{Q,s}^{(t)}, \xiv_i\rangle\langle\wv_{K,s}^{(t)}, \xiv_i\rangle
    \right)
    } \notag\\
    &\leq \frac{
    1 \pm o(1)
    }{
    1 \pm o(1) + 
    \exp\left(
    ((1+\theta)m_k - (\theta^2+\theta)O(\sqrt{m_k})) \cdot T_{4}^{-2}\frac{2}{\pi}\eta^2\sigma_p^2s^2(1 \pm \Tilde{O}(s^{-1/2}))
    \right)
    } \notag\\
    &= \frac{
    1 \pm o(1)
    }{
    1 \pm o(1) + 
    \exp(((1+\theta)m_k - (\theta^2+\theta)O(\sqrt{m_k})) \cdot T_{4}^{-2}\frac{2}{\pi}\eta^2\sigma_p^2s^2)
    \exp(\pm\Tilde{O}(\frac{2}{\pi}m_kT_{4}^{-2}\eta^2\sigma_p^2s^{3/2}))
    } \notag\\
    &= \frac{
    1 \pm o(1)
    }{
    1 \pm o(1) + 
    \exp(((1+\theta)m_k - (\theta^2+\theta)O(\sqrt{m_k})) \cdot T_{4}^{-2}\frac{2}{\pi}\eta^2\sigma_p^2s^2)
    (1 \pm o(1))
    } \notag\\
    &= \frac{
    1 \pm o(1)
    }{
    1 + 
    \exp(((1+\theta)m_k - (\theta^2+\theta)O(\sqrt{m_k})) \cdot T_{4}^{-2}\frac{2}{\pi}\eta^2\sigma_p^2s^2)
    } \notag\\
    &\leq \frac{
    1 \pm o(1)
    }{
    \exp(((1+\theta)m_k - (\theta^2+\theta)O(\sqrt{m_k})) \cdot T_{4}^{-2}\frac{2}{\pi}\eta^2\sigma_p^2s^2)
    },
\end{align}
where the first step is by definition, the second step is by Eq.~\eqref{eq:upper_bound_for_query_key_feature_T_4},~\eqref{eq:upper_bound_for_query_key_noise_T_4}, and~\eqref{eq:lower_bound_for_query_key_noise_product_from_T_4^-_to_T_4^+}, the fourth step is by $T_{4}^{-} \lesssim T_{4}$ and $m_kT_{4}^{2}\eta^2\sigma_p^2s^2 = o(1)$. 

Therefore, we have
\begin{align*}
    &~ s_{i,21}^{((1+\theta)T_{4}^{-})}s_{i,22}^{((1+\theta)T_{4}^{-})} \abs{\langle \wv^{((1+\theta)T_{4}^{-})}_{Q,s}, y_i\xiv_i\rangle} \\
    \leq &~
    \frac{1 + o(1)}
    {\exp(((1+\theta)m_k - (\theta^2+\theta)O(\sqrt{m_k})) \cdot T_{4}^{-2}\frac{2}{\pi}\eta^2\sigma_p^2s^2)} 
    \cdot 1
    \cdot (1+\theta)T_{4}^{-}\eta\sqrt{\frac{2}{\pi}}\sigma_ps(1 + \Tilde{O}(s^{-1/2})) \\
    \leq &~ (1+\theta)T_{4}^{-}\eta\sqrt{\frac{2}{\pi}}\sigma_ps(1 + \Tilde{O}(s^{-1/2})) 
    \cdot \left(\frac{3\sqrt{2}n\norm{\muv}}{\sigma_ps}\right)^{0.99(1+\theta)-O(\theta/\sqrt{m_k})} \\
    \leq &~ \frac{1}{9}(1+\theta)T_{4}^{-}\eta\norm{\muv} \\
    \leq &~ \frac{1}{2}s_{i,11}^{((1+\theta)T_{4}^{-})}s_{i,12}^{((1+\theta)T_{4}^{-})} \abs{\langle \wv^{((1+\theta)T_{4}^{-})}_{Q,s}, \muv\rangle},
\end{align*}
where the first step is by Eq.~\eqref{eq:upper_bound_for_s_i,21_from_T_4^-_to_T_4^+} and $\mathcal{G}(t)$,
the second step is by Lemma~\ref{lemma:lower_bound_for_T_4^-},
the third step is by $0.99(1+\theta)-O(\theta/\sqrt{m_k}) > 1$,
the last step is by $\mathcal{G}(t)$ and 
the concentration results in Lemma~\ref{lemma:softmax_concentrate_on_1/2},
thus the conclusion holds.
\end{proof}

\subsubsection{Stage IV.c}
\label{sec:stage_iv_c}
$T_{4}$ is a timestep where $s_{i,11}^{(t)} = 1/2 + o(1)$,
we characterize the dynamics in this interval.

WLOG, given neuron $s$, we suppose that $\sum_{i=1}^{n}\langle \wv^{(T_{2})}_{Q,s}, y_i\xiv_i\rangle > 0$ in this section below.

Specifically, we use induction on $t$ to simultaneously prove the following properties for all $t = (1+\theta_c)T_{4}^{-}, \dots, T_{4}$:
\begin{itemize}
    \item $\mathcal{A}(t)$, the monotonicity of query noise aligned with signal: for any $s\in[m_k]$ and $i\in E_s^{(T_{3})}$
    \begin{align*}
        & \langle \wv^{(t+1)}_{Q,s}, y_i\xiv_i\rangle 
        = \langle \wv^{(t)}_{Q,s}, y_i\xiv_i\rangle 
        + \sgn(\langle \wv^{(t)}_{Q,s}, y_i\xiv_i\rangle) \eta\norm{\xiv_i}_1.
    \end{align*}
    \item $\mathcal{B}(t)$, the monotonicity of key noise aligned with signal: for any $s\in[m_k]$ and $i\in E_s^{(T_{3})}$
    \begin{align*}
        & \langle \wv^{(t+1)}_{K,s}, y_i\xiv_i\rangle 
        = \langle \wv^{(t)}_{K,s}, y_i\xiv_i\rangle 
        + \sgn(\langle \wv^{(t)}_{K,s}, y_i\xiv_i\rangle) \eta\norm{\xiv_i}_1.
    \end{align*}
    \item $\mathcal{C}(t)$, the fate of query noise unaligned with signal: for any $s\in[m_k]$ and $i\in [n]/E_s^{(T_{3})}$, 
    for $t\leq (2-\theta_c)T_{4}^{-}$ we have
    \begin{align*}
        \langle \wv^{(t+1)}_{Q,s}, y_i\xiv_i\rangle 
        = \langle \wv^{(t)}_{Q,s}, y_i\xiv_i\rangle 
        + \sgn(\langle \wv^{(T_{4}^{+})}_{Q,s}, y_i\xiv_i\rangle) \eta\norm{\xiv_i}_1, 
    \end{align*}
    and for $t \geq (2+3\theta_c)T_{4}^{-}$ we have
    \begin{align*}
        \langle \wv^{(t+1)}_{Q,s}, y_i\xiv_i\rangle 
        = \langle \wv^{(t)}_{Q,s}, y_i\xiv_i\rangle 
        - \sgn(\langle \wv^{(T_{4}^{+})}_{Q,s}, y_i\xiv_i\rangle) \eta\norm{\xiv_i}_1.
    \end{align*}
    \item $\mathcal{D}(t)$, the monotonicity of key noise unaligned with signal: for any $s\in[m_k]$ and $i\in [n]/E_s^{(T_{3})}$
    \begin{align*}
        & \langle \wv^{(t+1)}_{K,s}, y_i\xiv_i\rangle 
        = \langle \wv^{(t)}_{K,s}, y_i\xiv_i\rangle 
        + \sgn(\langle \wv^{(t)}_{Q,s}, \muv\rangle) \eta\norm{\xiv_i}_1.
    \end{align*}
    \item $\mathcal{E}(t)$, the monotonicity of query signal: for any $s\in[m_k]$
    \begin{align*}
        \langle \wv^{(t+1)}_{Q,s}, \muv\rangle
        = \langle \wv^{(t)}_{Q,s}, \muv\rangle
        + \sgn(\langle \wv^{(t)}_{Q,s}, \muv\rangle)\eta\norm{\muv}.
    \end{align*}
    \item $\mathcal{F}(t)$: the monotonicity of key signal: for any $s\in[m_k]$
    \begin{align*}
        \langle \wv^{(t+1)}_{K,s}, \muv\rangle
        = \langle \wv^{(t)}_{K,s}, \muv\rangle
        + \sgn(\langle \wv^{(t)}_{K,s}, \muv\rangle)\eta\norm{\muv}.
    \end{align*}
    \item $\mathcal{G}(t)$, the linear-with-$t$ estimation: for all $s\in[m_k]$ and $i\in[n]$
    \begin{align*}
        & \langle \wv_{Q,s}^{(t)}, y_i\xiv_i\rangle
        = \sgn(\langle \wv^{(t)}_{Q,s}, \muv\rangle)t\eta\sqrt{\frac{2}{\pi}}\sigma_ps(1 \pm \Tilde{O}(s^{-1/2})), \forall i\in E_{s}^{(T_{3})} \\
        & \langle \wv_{K,s}^{(t)}, y_i\xiv_i\rangle
        = \sgn(\langle \wv^{(t)}_{Q,s}, \muv\rangle)t\eta\sqrt{\frac{2}{\pi}}\sigma_ps(1 \pm \Tilde{O}(s^{-1/2})), \forall i\in E_{s}^{(T_{3})} \\
        & \langle \wv^{(t)}_{Q,s}, \muv\rangle = \sgn(\langle \wv^{(t)}_{Q,s}, \muv\rangle)t\eta\norm{\muv}(1\pm o(1)).
    \end{align*}
    \item $\mathcal{H}(t)$: for any $s\in[m_k]$ and $i\in[n]$
    \begin{align*}
        s_{i,21}^{(t)}s_{i,22}^{(t)} \abs{\langle \wv^{(t)}_{Q,s}, y_i\xiv_i\rangle}
        \leq 
        \frac{1}{2}
        s_{i,11}^{(t)}s_{i,12}^{(t)} \abs{\langle \wv^{(t)}_{Q,s}, \muv\rangle}.
    \end{align*}
\end{itemize}

\begin{remark}
By Lemma~\ref{lemma:upper_bound_for_T_4^+}, we have $\mathcal{H}(t)$ holds for $t \in [(1+\theta_c)T_{4}^{-}, (2-\theta_c)T_{4}^{-}]$.
The $\mathcal{A}(t)$, $\mathcal{B}(t)$, $\mathcal{E}(t)$ and $\mathcal{G}(t)$ for $t \in [(1+\theta_c)T_{4}^{-}, (2-\theta_c)T_{4}^{-}]$ are already shown in the induction statement in the last section. Actually, the proof of $\mathcal{A}(t)$, $\mathcal{B}(t)$, $\mathcal{E}(t)$ and $\mathcal{G}(t)$ for all subsequent $t$ are same as those in section~\ref{sec:stage_iv_b}, thus we omit them in this section.
\end{remark}
\begin{claim}
    \label{claim:D(t)_F(t)_from_1+theta_c_T_2^-to_T_2^++}
    $\mathcal{H}(t) \Longrightarrow \mathcal{D}(t), \mathcal{F}(t)$.
\end{claim}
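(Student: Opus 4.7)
The plan is to reduce both $\mathcal{D}(t)$ and $\mathcal{F}(t)$ to sign calculations in which $\mathcal{H}(t)$ forces the query-signal term to outweigh the query-noise term in the update direction.

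First I would handle $\mathcal{D}(t)$. For any $s\in[m_k]$ and $i\in[n]/E_s^{(T_3)}$, Eq.~\eqref{eq:kxi_update_sign} gives
\begin{align*}
\langle \wv_{K,s}^{(t+1)}-\wv_{K,s}^{(t)}, y_i\xiv_i\rangle
=_{\sgn} s_{i,11}^{(t)}s_{i,12}^{(t)}\langle \wv_{Q,s}^{(t)},\muv\rangle
+ s_{i,21}^{(t)}s_{i,22}^{(t)}\langle \wv_{Q,s}^{(t)},y_i\xiv_i\rangle.
\end{align*}
Applying $\mathcal{H}(t)$ pointwise, the magnitude of the second summand is at most half that of the first, so the sign of the right-hand side equals $\sgn(\langle \wv_{Q,s}^{(t)},\muv\rangle)$, yielding $\mathcal{D}(t)$.

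For $\mathcal{F}(t)$, I would start from Eq.~\eqref{eq:kmu_update_sign}, pull out the positive scalars $-\ell_i^{\prime(t)}$ and $\langle \vv^{(t)},y_i\xiv_i\rangle$ (both positive by Lemma~\ref{lemma:loss_prime_concentrate_on_1/2} and Lemma~\ref{lemma:magnitude_of_mean_value_noise_after_T_2}, respectively), and reduce the sign of the update to
\begin{align*}
\sgn\!\left(-\sum_{i\in[n]}\!\left[s_{i,11}^{(t)}s_{i,12}^{(t)}\langle \wv_{Q,s}^{(t)},\muv\rangle + s_{i,21}^{(t)}s_{i,22}^{(t)}\langle \wv_{Q,s}^{(t)},y_i\xiv_i\rangle\right]\right).
\end{align*}
Summing the pointwise inequality in $\mathcal{H}(t)$ over $i$, the magnitude of the second sum is at most $\tfrac12$ of the first, so the overall sign equals $-\sgn(\langle \wv_{Q,s}^{(t)},\muv\rangle)$. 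Since by the Stage III analysis (Lemma~\ref{lemma:dynamics_of_query_key_feature_from_T_2_to_T_3^+} and Lemma~\ref{lemma:upper_and_lower_bound_for_query_key_feature_for_T_3_to_T_3^+}) the sign of $\langle \wv_{K,s}^{(t)},\muv\rangle$ has already stabilized opposite to that of $\langle \wv_{Q,s}^{(t)},\muv\rangle$, this is exactly $\sgn(\langle \wv_{K,s}^{(t)},\muv\rangle)$, proving $\mathcal{F}(t)$.

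The main technical care required is just the sign bookkeeping: that $-\ell_i^{\prime(t)}>0$, $\langle \vv^{(t)}, y_i\xiv_i\rangle>0$, and $\sgn(\langle \wv_{K,s}^{(t)},\muv\rangle) = -\sgn(\langle \wv_{Q,s}^{(t)},\muv\rangle)$. Once these are in hand, both conclusions follow directly by comparing magnitudes via $\mathcal{H}(t)$, with no further dynamical argument required; this is why the claim is short and mechanical compared to the preceding lemmas.
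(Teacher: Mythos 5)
Your proof is correct and follows essentially the same route as the paper's: for $\mathcal{D}(t)$ you apply $\mathcal{H}(t)$ pointwise to Eq.~\eqref{eq:kxi_update_sign}, and for $\mathcal{F}(t)$ you use $\mathcal{H}(t)$ to dominate the noise contribution in Eq.~\eqref{eq:kmu_update_sign} and fall back on the (previously established) opposite-sign relation $\sgn(\langle\wv_{K,s}^{(t)},\muv\rangle)=-\sgn(\langle\wv_{Q,s}^{(t)},\muv\rangle)$. One small imprecision: you say you ``pull out the positive scalars $-\ell_i^{\prime(t)}$ and $\langle\vv^{(t)},y_i\xiv_i\rangle$,'' but these depend on $i$ and cannot literally be factored out of the sum; the correct (and equivalent) phrasing is the one the paper uses, namely that each bracketed term already has the uniform sign $\sgn(\langle\wv_{Q,s}^{(t)},\muv\rangle)$ by pointwise application of $\mathcal{H}(t)$, so multiplying by positive $i$-dependent weights and summing cannot flip the sign. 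Your later sentence about ``summing the pointwise inequality'' implicitly supplies this, so the argument goes through; just be careful that the written reduction to the unweighted sum is a heuristic, not an identity.
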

\begin{proof}[Proof of Claim~\ref{claim:D(t)_F(t)_from_1+theta_c_T_2^-to_T_2^++}]
We have 
\begin{align*}
    &~ \langle \wv_{K, s}^{(t+1)}-\wv_{K, s}^{(t)}, y_{i}\xiv_i\rangle \\
    =_{\sgn} &~ s_{i,11}^{(t)}s_{i,12}^{(t)} \langle \wv^{(t)}_{Q,s}, \muv\rangle
    + s_{i,21}^{(t)}s_{i,22}^{(t)} \langle \wv^{(t)}_{Q,s}, y_{i}\xiv_i\rangle \\
    =_{\sgn} &~ \langle \wv^{(t)}_{Q,s}, \muv\rangle,
\end{align*}
where the first step is by Eq.~\eqref{eq:kxi_update_sign}, 
the second step is by $\mathcal{H}(t)$ and
\begin{align*}
    &~ \langle \wv_{K, s}^{(t+1)}-\wv_{K, s}^{(t)}, \muv\rangle \\
    =_{\sgn} &~ \sum_{i\in[n]} (-l_{i}^{\prime(t)}) \cdot \left(
    -s_{i,11}^{(t)}s_{i,12}^{(t)} \langle \vv^{(t)}, y_{i}\xiv_i\rangle \langle \wv^{(t)}_{Q,s}, \muv\rangle
    - s_{i,21}^{(t)}s_{i,22}^{(t)} \langle \vv^{(t)}, y_{i}\xiv_i\rangle \langle \wv^{(t)}_{Q,s}, y_{i}\xiv_i\rangle
    \right) \\
    =_{\sgn} &~ \sum_{i\in[n]} (-l_{i}^{\prime(t)}) \cdot 
    (-s_{i,11}^{(t)}s_{i,12}^{(t)} \langle \vv^{(t)}, y_{i}\xiv_i\rangle \langle \wv^{(t)}_{Q,s}, \muv\rangle) \\
    =_{\sgn} &~ -\langle \wv^{(t)}_{Q,s}, \muv\rangle
    =_{\sgn} \langle \wv^{(t)}_{K,s}, \muv\rangle,
\end{align*}
where the first step is by Eq.~\eqref{eq:kmu_update_sign}, 
the second step is by $\mathcal{H}(t)$.
\end{proof}

The next lemma gives an one side bound for unaligned key noise for $t \geq (1+\theta_c)T_{4}^{-}$.
\begin{lemma}
    \label{lemma:upper_bound_for_unaligned_key_noise_from_1+theta_c_T_4^-_to_2-theta_c_T_4^-}
    Let $t \geq (1+\theta_c)T_{4}^{-}$, suppose $\mathcal{D}(t)$ holds for $t = (1+\theta_c)T_{4}^{-}, \dots, \Tilde{t}$, 
    then for all $s\in[m_k]$, $i\in[n]/E_{s}^{(T_{3})}$ we have
    \begin{align*}
        \sgn(\langle \wv^{(T_{4}^{-})}_{K,s}, y_i\xiv_i\rangle) \cdot \langle \wv^{(t)}_{K,s}, y_i\xiv\rangle
        \leq (2(1+\theta_c)T_{4}^{-} - t)\eta\sqrt{\frac{2}{\pi}}\sigma_ps(1 + \Tilde{O}(s^{-1/2})).
    \end{align*}
\end{lemma}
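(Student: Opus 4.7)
The plan is to bound the specified quantity by first pinning down the signs, then applying $\mathcal{D}(t)$ iteratively to the update, and finally combining with the upper bound already available at $t = (1+\theta_c)T_4^-$ from Lemma~\ref{lemma:properties_at_1+theta_T_4^-}.

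First, I would establish the relevant sign pattern. Fix $s \in [m_k]$ and $i \in [n]\setminus E_s^{(T_3)}$. Without loss of generality (as in Remark~\ref{remark:linear_with_t_estimation_at_T_4^-}) assume $\sum_{i'} \langle \wv^{(T_2)}_{Q,s}, y_{i'}\xiv_{i'}\rangle > 0$, so that $\langle \wv^{(t)}_{Q,s}, \muv\rangle > 0$ on the interval of interest. By definition of $E_s^{(T_3)}$, $\langle \wv^{(T_3)}_{Q,s}, y_i\xiv_i\rangle < 0$; combined with the stage IV.a characterization that $\langle \wv^{(T_4^-)}_{Q,s}, y_i\xiv_i\rangle =_{\sgn} \langle \wv^{(T_4^-)}_{K,s}, y_i\xiv_i\rangle$ and the linear-with-$t$ estimates summarized at $T_4^-$, we get $\sgn(\langle \wv^{(T_4^-)}_{K,s}, y_i\xiv_i\rangle) = -1 = -\sgn(\langle \wv^{(t)}_{Q,s}, \muv\rangle)$.

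Next, I would unroll $\mathcal{D}(t)$. By hypothesis $\mathcal{D}(t)$ holds for $t=(1+\theta_c)T_4^-,\dots,\tilde t$, hence for all such $t$,
\begin{align*}
\sgn(\langle \wv^{(T_4^-)}_{K,s}, y_i\xiv_i\rangle)\cdot\langle \wv^{(t+1)}_{K,s}, y_i\xiv_i\rangle
&= \sgn(\langle \wv^{(T_4^-)}_{K,s}, y_i\xiv_i\rangle)\cdot\langle \wv^{(t)}_{K,s}, y_i\xiv_i\rangle \\
&\quad {}+ \sgn(\langle \wv^{(T_4^-)}_{K,s}, y_i\xiv_i\rangle)\sgn(\langle \wv^{(t)}_{Q,s}, \muv\rangle)\eta\norm{\xiv_i}_1 \\
&= \sgn(\langle \wv^{(T_4^-)}_{K,s}, y_i\xiv_i\rangle)\cdot\langle \wv^{(t)}_{K,s}, y_i\xiv_i\rangle - \eta\norm{\xiv_i}_1.
\end{align*}
Telescoping from $(1+\theta_c)T_4^-$ to the current $t$ yields
\[
\sgn(\langle \wv^{(T_4^-)}_{K,s}, y_i\xiv_i\rangle)\cdot\langle \wv^{(t)}_{K,s}, y_i\xiv_i\rangle
= \sgn(\langle \wv^{(T_4^-)}_{K,s}, y_i\xiv_i\rangle)\cdot\langle \wv^{((1+\theta_c)T_4^-)}_{K,s}, y_i\xiv_i\rangle - (t-(1+\theta_c)T_4^-)\eta\norm{\xiv_i}_1.
\]

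Finally, I would plug in the quantitative bounds. Lemma~\ref{lemma:properties_at_1+theta_T_4^-} (with $\theta'=\theta_c$) gives
\[
\sgn(\langle \wv^{(T_4^-)}_{K,s}, y_i\xiv_i\rangle)\cdot\langle \wv^{((1+\theta_c)T_4^-)}_{K,s}, y_i\xiv_i\rangle \leq (1+\theta_c)T_4^-\eta\sqrt{\tfrac{2}{\pi}}\sigma_ps(1+\tilde O(s^{-1/2})),
\]
and Lemma~\ref{lemma:noise_magnitude} gives $\norm{\xiv_i}_1 \geq \sqrt{2/\pi}\sigma_ps(1-\tilde O(s^{-1/2}))$. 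Combining the two displays and collecting error terms,
\[
\sgn(\langle \wv^{(T_4^-)}_{K,s}, y_i\xiv_i\rangle)\cdot\langle \wv^{(t)}_{K,s}, y_i\xiv_i\rangle
\leq \bigl(2(1+\theta_c)T_4^- - t\bigr)\eta\sqrt{\tfrac{2}{\pi}}\sigma_ps(1+\tilde O(s^{-1/2})),
\]
which is exactly the claimed bound.

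There is no real obstacle here: the argument is a one-step telescoping once the sign of the key-noise update is nailed down. The only subtle point worth double-checking is the sign identification in the first paragraph, specifically that the condition $i \notin E_s^{(T_3)}$ really does force the $T_4^-$ key noise to carry the opposite sign of the query signal; this hinges on the facts, already established in stages III and IV.a, that the signs of $\langle \wv^{(t)}_{Q,s}, y_i\xiv_i\rangle$ and $\langle \wv^{(t)}_{K,s}, y_i\xiv_i\rangle$ stabilize and agree throughout $[T_3, T_4^-]$, and that $\langle \wv^{(t)}_{Q,s}, \muv\rangle$ keeps the sign of $\sum_{i'} \langle \wv^{(T_2)}_{Q,s}, y_{i'}\xiv_{i'}\rangle$ throughout the same window.
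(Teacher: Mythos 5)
Your proposal is correct and uses the same ingredients as the paper's proof — the sign identification from the stage III/IV.a analysis, telescoping of $\mathcal{D}(t)$, and the bounds from Lemma~\ref{lemma:properties_at_1+theta_T_4^-} and Lemma~\ref{lemma:noise_magnitude}. The only cosmetic difference is that you apply Lemma~\ref{lemma:properties_at_1+theta_T_4^-}'s upper bound directly at $(1+\theta_c)T_4^-$, whereas the paper first steps back to $T_4^-$ (via the constant per-step update magnitude over $\theta_c T_4^-$ iterations) and then invokes the linear-with-$t$ estimate there; the two paths give the same final estimate.
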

\begin{proof}
Since $\sum_{i=1}^{n}\langle \wv^{(T_{2})}_{Q,s}, y_i\xiv_i\rangle > 0$, 
we have $\langle\wv^{(T_{4}^{-})}_{K,s}, y_i\xiv_i\rangle < 0$ for $s\in[m_k]$, $i\in[n]/E_{s}^{(T_{3})}$.
We can show
\begin{align*}
    \langle \wv^{(t)}_{K,s}, y_i\xiv\rangle 
    &= \langle \wv^{(1+\theta_c)T_{4}^{-}}_{K,s}, y_i\xiv\rangle + (t-(1+\theta_c)T_{4}^{-})\eta\norm{\xiv_i}_1 \\
    &\geq \langle \wv^{(T_{4}^{-})}_{K,s}, y_i\xiv\rangle + (t-(1+\theta_c)T_{4}^{-} - \theta_cT_{4}^{-})\eta\norm{\xiv_i}_1 \\
    &= -T_{4}^{-}\eta\sqrt{\frac{2}{\pi}}\sigma_ps(1 \pm \Tilde{O}(s^{-1/2})) + (t-(1+\theta_c)T_{4}^{-} - \theta_cT_{4}^{-})\eta\norm{\xiv_i}_1 \\
    &\geq -(2(1+\theta_c)T_{4}^{-} - t)\eta\sqrt{\frac{2}{\pi}}\sigma_ps(1 + \Tilde{O}(s^{-1/2})),
\end{align*}
where the first step is by $\mathcal{D}(t)$ for $t \in [(1+\theta_c)T_{4}^{-}, \Tilde{t}]$,
the second step is by Lemma~\ref{lemma:properties_at_1+theta_T_4^-},
the third step is by the argument in Remark~\ref{remark:linear_with_t_estimation_at_T_4^-},
the last step is by Lemma~\ref{lemma:noise_magnitude}.
\end{proof}

Now we only need to and are ready to show $\mathcal{H}(t)$ holds for $t = (2-\theta_c)T_{4}^{-}+1, \dots, T_{4}$. 
Suppose $\mathcal{A}(t'), \dots, \mathcal{H}(t')$ hold for $t' \leq t$, we want to show $\mathcal{H}(t+1)$ holds.
We split to two stages to prove where the first one is $[(2-\theta_c)T_{4}^{-}, (2+3\theta_c)T_{4}^{-}]$. In this stage, we can bound the magnitude of unaligned key noise $\langle\wv_{K,s}^{(t)}, \xiv_i\rangle$ by Lemma~\ref{lemma:properties_at_1+theta_T_4^-} and~\ref{lemma:upper_bound_for_unaligned_key_noise_from_1+theta_c_T_4^-_to_2-theta_c_T_4^-}.
Note that 
\begin{align}
    \label{eq:aligned_query_key_noise_increment}
    &~ \sum_{s\in E_{i}^{(T_{3})}} 
    \langle \wv_{Q,s}^{(t+1)}, \xiv_i\rangle\langle\wv_{K,s}^{(t+1)}, \xiv_i\rangle
    -\langle \wv_{Q,s}^{(t)}, \xiv_i\rangle\langle\wv_{K,s}^{(t)}, \xiv_i\rangle \notag\\
    = &~ \sum_{s\in E_{i}^{(T_{3})}} 
    (\abs{\langle \wv_{Q,s}^{(t)}, \xiv_i\rangle} + \eta\norm{\xiv_i}_1)
    (\abs{\langle\wv_{K,s}^{(t)}, \xiv_i\rangle} + \eta\norm{\xiv_i}_1)
    -\abs{\langle \wv_{Q,s}^{(t)}, \xiv_i\rangle}\abs{\langle\wv_{K,s}^{(t)}, \xiv_i\rangle} \notag\\
    = &~ \sum_{s\in E_{i}^{(T_{3})}} 
    \eta^2\norm{\xiv_i}_1^2 + \eta\norm{\xiv_i}_1(\abs{\langle\wv_{Q,s}^{(t)}, \xiv_i\rangle} + \abs{\langle\wv_{K,s}^{(t)}, \xiv_i\rangle}) \notag\\
    = &~ \abs{E_{i}^{(T_{3})}} (\eta^2\norm{\xiv_i}_1^2 + 2t\norm{\xiv_i}_1^2\eta^2(1 \pm o(1))),
\end{align}
where the last step is by $\mathcal{G}(t)$,
and
\begin{align*}
    &~ \sum_{s\in [n]/E_{i}^{(T_{3})}} 
    \langle \wv_{Q,s}^{(t+1)}, \xiv_i\rangle\langle\wv_{K,s}^{(t+1)}, \xiv_i\rangle
    -\langle \wv_{Q,s}^{(t)}, \xiv_i\rangle\langle\wv_{K,s}^{(t)}, \xiv_i\rangle \\
    = &~ \sum_{t \geq T_{4,s,i}^{Q,\text{flip}}} 
    \langle \wv_{Q,s}^{(t+1)}, \xiv_i\rangle\langle\wv_{K,s}^{(t+1)}, \xiv_i\rangle
    -\langle \wv_{Q,s}^{(t)}, \xiv_i\rangle\langle\wv_{K,s}^{(t)}, \xiv_i\rangle 
    + \sum_{t < T_{4,s,i}^{Q,\text{flip}}} 
    \langle \wv_{Q,s}^{(t+1)}, \xiv_i\rangle\langle\wv_{K,s}^{(t+1)}, \xiv_i\rangle
    -\langle \wv_{Q,s}^{(t)}, \xiv_i\rangle\langle\wv_{K,s}^{(t)}, \xiv_i\rangle \\
    = &~ 
    \sum_{t < T_{4,s,i}^{Q,\text{flip}}, \langle \wv_{Q,s}^{(t)}, \muv\rangle > 0}
    -\eta^2\norm{\xiv_i}_1^2 
    -\eta\norm{\xiv_i}_1(-\langle\wv_{Q,s}^{(t)}, \xiv_i\rangle + \langle\wv_{K,s}^{(t)}, \xiv_i\rangle) \\
    &~ +
    \sum_{t < T_{4,s,i}^{Q,\text{flip}}, \langle \wv_{Q,s}^{(t)}, \muv\rangle < 0}
    -\eta^2\norm{\xiv_i}_1^2 
    -\eta\norm{\xiv_i}_1(\langle\wv_{Q,s}^{(t)}, \xiv_i\rangle - \langle\wv_{K,s}^{(t)}, \xiv_i\rangle) \\
    &~ +
    \sum_{t \geq T_{4,s,i}^{Q,\text{flip}}, \langle \wv_{Q,s}^{(t)}, \muv\rangle > 0}
    \eta^2\norm{\xiv_i}_1^2 
    +\eta\norm{\xiv_i}_1(\langle\wv_{Q,s}^{(t)}, \xiv_i\rangle + \langle\wv_{K,s}^{(t)}, \xiv_i\rangle) \\
    &~ +
    \sum_{t \geq T_{4,s,i}^{Q,\text{flip}}, \langle \wv_{Q,s}^{(t)}, \muv\rangle < 0}
    \eta^2\norm{\xiv_i}_1^2 
    -\eta\norm{\xiv_i}_1(\langle\wv_{Q,s}^{(t)}, \xiv_i\rangle + \langle\wv_{K,s}^{(t)}, \xiv_i\rangle) \\
    \geq &~ \sum_{s\in [n]/E_{i}^{(T_{3})}} 
    -\eta^2\norm{\xiv_i}_1^2 - \eta\norm{\xiv_i}_1(\abs{\langle\wv_{Q,s}^{(t)}, \xiv_i\rangle} + \abs{\langle\wv_{K,s}^{(t)}, \xiv_i\rangle}).
\end{align*}
Furthermore, when $(2-\theta_c)T_{4}^{-} \leq t \leq (2+3\theta_c)T_{4}^{-}$, for $s\in[m_k]$ and $i\in[n]/E_{s}^{(T_{3})}$, we have 
\begin{align}
    \label{eq:upper_and_lower_bound_for_unaligned_key_noise_from_2-theta_c_T_4^-_to_2+3theta_c_T_4^-}
    -3\theta_cT_{4}^{-}\eta\sqrt{\frac{2}{\pi}}\sigma_ps(1 + \Tilde{O}(s^{-1/2}))
    \leq \sgn(\langle \wv^{(T_{4}^{-})}_{K,s}, y_i\xiv\rangle) \cdot \langle \wv^{(t)}_{K,s}, y_i\xiv\rangle
    \leq 3\theta_cT_{4}^{-}\eta\sqrt{\frac{2}{\pi}}\sigma_ps(1 + \Tilde{O}(s^{-1/2})),
\end{align}
where the upper bound is by Lemma~\ref{lemma:upper_bound_for_unaligned_key_noise_from_1+theta_c_T_4^-_to_2-theta_c_T_4^-}, and the lower bound is by Lemma~\ref{lemma:properties_at_1+theta_T_4^-}. 
Therefore, for $(2-\theta_c)T_{4}^{-} t \leq (2+3\theta_c)T_{4}^{-}$ we have
\begin{align*}
    &~ \sum_{s\in [n]} 
    \langle \wv_{Q,s}^{(t+1)}, \xiv_i\rangle\langle\wv_{K,s}^{(t+1)}, \xiv_i\rangle
    -\langle \wv_{Q,s}^{(t)}, \xiv_i\rangle\langle\wv_{K,s}^{(t)}, \xiv_i\rangle \\
    \geq &~ 
    \sum_{s\in E_{i}^{(T_{3})}} 
    \eta^2\norm{\xiv_i}_1^2 + \eta\norm{\xiv_i}_1(\abs{\langle\wv_{Q,s}^{(t)}, \xiv_i\rangle} + \abs{\langle\wv_{K,s}^{(t)}, \xiv_i\rangle}) \\
    &~ -\sum_{s\in [n]/E_{i}^{(T_{3})}} 
    \eta^2\norm{\xiv_i}_1^2 + \eta\norm{\xiv_i}_1(\abs{\langle\wv_{Q,s}^{(t)}, \xiv_i\rangle} + \abs{\langle\wv_{K,s}^{(t)}, \xiv_i\rangle}) \\
    = &~ \abs{E_{i}^{(T_{3})}} (\eta^2\norm{\xiv_i}_1^2 + 2t\norm{\xiv_i}_1^2\eta^2(1 \pm o(1)))
    - (m_k-\abs{E_{i}^{(T_{3})}}) (\eta^2\norm{\xiv_i}_1^2 + (t+3\theta_cT_{4}^{-})\norm{\xiv_i}_1^2\eta^2(1 \pm o(1))) \\
    \geq &~ (\frac{m_k}{2} - O(\sqrt{m_k})) (\eta^2\norm{\xiv_i}_1^2 + 2t\norm{\xiv_i}_1^2\eta^2(1 \pm o(1)))
    - (\frac{m_k}{2} + O(\sqrt{m_k})) (\eta^2\norm{\xiv_i}_1^2 + (t+3\theta_cT_{4}^{-})\norm{\xiv_i}_1^2\eta^2(1 \pm o(1))) \\
    \geq &~ - O(\sqrt{m_k})\eta^2\norm{\xiv_i}_1^2 + 
    (\frac{m_k}{2}(t-3\theta_cT_{4}^{-}) - O(\sqrt{m_k})(t+\theta_cT_{4}^{-}))\eta^2\norm{\xiv_i}_1^2(1 \pm o(1)))
    > 0,
\end{align*}
where the second step is by Eq.~\eqref{eq:aligned_query_key_noise_increment} for aligned part,
Lemma~\ref{lemma:properties_at_1+theta_T_4^-} for upper bound of unaligned query noise and
Eq.~\eqref{eq:upper_and_lower_bound_for_unaligned_key_noise_from_2-theta_c_T_4^-_to_2+3theta_c_T_4^-} for upper bound of unaligned key noise,
the third step is by concentration for $\abs{E_{i}^{(T_{3})}}$ in Lemma~\ref{lemma:concentration_for_E_i^T_3}.
Combined with $\mathcal{H}(t')$ for $(1+\theta_c)T_{4}^{-} \leq t' \leq t$, we have 
\begin{align*}
    s_{i,21}^{(t+1)}s_{i,22}^{(t+1)}\abs{\langle \wv^{(t+1)}_{Q,s}, y_i\xiv_i\rangle}
    & \leq s_{i,21}^{((1+\theta)T_{4}^{-})}\abs{\langle \wv^{(t+1)}_{Q,s}, y_i\xiv_i\rangle} \\
    & \leq \frac{1}{9}(1+t)\eta\norm{\muv}
    \leq \frac{1}{2}s_{i,11}^{(t+1)}s_{i,12}^{(t+1)}\abs{\langle \wv^{(t+1)}_{Q,s}, \muv\rangle},
\end{align*}
where the first step is by induction, the second step is similar to the proof in Lemma~\ref{lemma:upper_bound_for_T_4^+},
which implies $\mathcal{H}(t+1)$ holds.

Note that now we have $\mathcal{D}(t')$, $\mathcal{F}(t')$ hold for $t = (2-\theta_c)T_{4}^{-}, \dots, (2+3\theta_c)T_{4}^{-}$. Then
for $t \geq (2+3\theta_c)T_{4}^{-}$, 
suppose $\mathcal{F}(t)$ and $\mathcal{D}(t)$ hold for $t' \leq t$,
for all $s\in[m_k]$ and $i\in[n]/E_{s}^{(T_{3})}$, we have
\begin{align*}
    \langle \wv_{Q, s}^{(t+1)}-\wv_{Q, s}^{(t)}, y_{i}\xiv_i\rangle 
    & =_{\sgn} - \langle \wv^{(t)}_{K,s}, \muv\rangle + \langle \wv^{(t)}_{K,s}, y_{i}\xiv_i\rangle \\
    & \geq \langle \wv^{(t)}_{K,s}, y_{i}\xiv_i\rangle \\
    & \geq (t - 2(1+\theta_c)T_{4}^{-})\eta\sqrt{\frac{2}{\pi}}\sigma_ps(1 + \Tilde{O}(s^{-1/2})) > 0,
\end{align*}
where the first step is by Eq.~\eqref{eq:qxi_update_sign},
the second step is by $\mathcal{F}(t)$,
the last step is by Lemma~\ref{lemma:upper_bound_for_unaligned_key_noise_from_1+theta_c_T_4^-_to_2-theta_c_T_4^-}.
Combined with $\mathcal{D}(t)$, when $\langle \wv_{Q, s}^{(t+1)}, y_{i}\xiv_i\rangle < 0$ (note that we suppose $\langle \wv_{Q, s}^{(T_{3})}, \muv\rangle > 0$, in more general case this is $\langle \wv_{Q, s}^{(t+1)}, y_{i}\xiv_i\rangle =_{\sgn} -\langle \wv_{Q, s}^{(t+1)}, \muv\rangle$), this gives 
\begin{align}
    \label{eq:query_key_noise_sum_tricky_equal}
    \abs{\langle \wv_{Q, s}^{(t+1)}, y_{i}\xiv_i\rangle} 
    + \abs{\langle \wv_{K, s}^{(t+1)}, y_{i}\xiv_i\rangle} 
    = 
    \abs{\langle \wv_{Q, s}^{(t)}, y_{i}\xiv_i\rangle}
    + \abs{\langle \wv_{K, s}^{(t)}, y_{i}\xiv_i\rangle}.
\end{align}
Note that at $t = (2+3\theta_c)T_{4}^{-}$, we have
\begin{align}
    \label{eq:upper_bound_for_query_key_noise_sum_at_2+3theta_T_4^-}
    \abs{\langle \wv_{Q, s}^{((2+3\theta_c)T_{4}^{-})}, y_{i}\xiv_i\rangle}
    + \abs{\langle \wv_{K, s}^{((2+3\theta_c)T_{4}^{-})}, y_{i}\xiv_i\rangle}
    \leq (2+6\theta_c)T_{4}^{-}\norm{\xiv_i}_1^2\eta^2(1 \pm o(1)),
\end{align}
where the upper bound for query noise is by Lemma~\ref{lemma:properties_at_1+theta_T_4^-} and the upper bound for key noise is by Lemma~\ref{lemma:upper_bound_for_unaligned_key_noise_from_1+theta_c_T_4^-_to_2-theta_c_T_4^-}.
Then we have
\begin{align*}
    &~ \sum_{s\in [n]} 
    \langle \wv_{Q,s}^{(t+1)}, \xiv_i\rangle\langle\wv_{K,s}^{(t+1)}, \xiv_i\rangle
    -\langle \wv_{Q,s}^{(t)}, \xiv_i\rangle\langle\wv_{K,s}^{(t)}, \xiv_i\rangle \\
    \geq &~ 
    \sum_{s\in E_{i}^{(T_{3})}} 
    \eta^2\norm{\xiv_i}_1^2 + \eta\norm{\xiv_i}_1(\abs{\langle\wv_{Q,s}^{(t)}, \xiv_i\rangle} + \abs{\langle\wv_{K,s}^{(t)}, \xiv_i\rangle}) \\
    &~ -\sum_{s\in [n]/E_{i}^{(T_{3})}, \langle \wv_{Q,s}^{(t+1)}, y_i\xiv_i\rangle =_{\sgn} -\langle \wv_{Q,s}^{(t+1)}, \muv\rangle}
    \eta^2\norm{\xiv_i}_1^2 + \eta\norm{\xiv_i}_1(\abs{\langle\wv_{Q,s}^{(t)}, \xiv_i\rangle} + \abs{\langle\wv_{K,s}^{(t)}, \xiv_i\rangle}) \\
    = &~ \abs{E_{i}^{(T_{3})}} (\eta^2\norm{\xiv_i}_1^2 + 2t\norm{\xiv_i}_1^2\eta^2(1 \pm o(1)))
    - (m_k-\abs{E_{i}^{(T_{3})}}) (\eta^2\norm{\xiv_i}_1^2 + (2+6\theta_c)T_{4}^{-}\norm{\xiv_i}_1^2\eta^2(1 \pm o(1))) \\
    \geq &~ (\frac{m_k}{2} - O(\sqrt{m_k})) (\eta^2\norm{\xiv_i}_1^2 + 2t\norm{\xiv_i}_1^2\eta^2(1 \pm o(1)))
    - (\frac{m_k}{2} + O(\sqrt{m_k})) (\eta^2\norm{\xiv_i}_1^2 + (2+6\theta_c)T_{4}^{-}\norm{\xiv_i}_1^2\eta^2(1 \pm o(1))) \\
    \geq &~ - O(\sqrt{m_k})\eta^2\norm{\xiv_i}_1^2 + 
    (m_k(t-(1+3\theta_c)T_{4}^{-}) - O(\sqrt{m_k})(t+(1+3\theta_c)T_{4}^{-}))\eta^2\norm{\xiv_i}_1^2(1 \pm o(1)))
    > 0,
\end{align*}
where the second step is by Eq.~\eqref{eq:aligned_query_key_noise_increment} for aligned part,
Eq.~\eqref{eq:query_key_noise_sum_tricky_equal} and~\eqref{eq:upper_bound_for_query_key_noise_sum_at_2+3theta_T_4^-} for unaligned part,
the third step is by concentration for $\abs{E_{i}^{(T_{3})}}$ in Lemma~\ref{lemma:concentration_for_E_i^T_3}.
Combined with $\mathcal{H}(t)$, this implies $\mathcal{H}(t+1)$ holds. 
Then we have $\mathcal{H}(t)$ holds for all $(1+\theta_c)T_{4}^{-} \leq t \leq T_{4}$.

\begin{remark}
\label{remark:final_dynamics}
When $t = (4+7\theta_c)T_{4}^{-}$, we have that for all $s\in[m_k]$ and $i\in[n]$ 
\begin{align*}
    \langle \wv_{Q,s}^{(t)}, y_i\xiv_i\rangle 
    =_{\sgn} \langle \wv_{K,s}^{(t)}, y_i\xiv_i\rangle 
    =_{\sgn} \langle \wv_{Q,s}^{(t)}, \muv\rangle.
\end{align*}

Finally, for all $s\in[m_k]$, $i\in[n]$ and $t \geq T_{4}$, we have
\begin{align*}
    & \langle \wv_{Q,s}^{(t+1)}, y_i\xiv_i\rangle
    = \langle\wv_{Q,s}^{(t)}, y_i\xiv_i\rangle + \sgn(\langle\wv_{Q,s}^{(t)}, y_i\xiv_i\rangle)\eta\norm{\xiv_i}_1, \\
    & \langle \wv_{K,s}^{(t+1)}, y_i\xiv_i\rangle
    = \langle\wv_{K,s}^{(t)}, y_i\xiv_i\rangle + \sgn(\langle\wv_{K,s}^{(t)}, y_i\xiv_i\rangle)\eta\norm{\xiv_i}_1, \\
    & \langle \wv_{Q,s}^{(t+1)}, \muv\rangle
    = \langle\wv_{Q,s}^{(t)}, \muv\rangle + \sgn(\langle\wv_{Q,s}^{(t)}, \muv\rangle)\eta\norm{\muv}, \\
    & \langle \wv_{K,s}^{(t+1)}, \muv\rangle
    = \langle\wv_{K,s}^{(t)}, \muv\rangle + \sgn(\langle\wv_{K,s}^{(t)}, \muv\rangle)\eta\norm{\muv}, \\
    & \langle \wv_{Q,s}^{(t)}, y_i\xiv_i\rangle
    =_{\sgn} 
    \langle \wv_{K,s}^{(t)}, y_i\xiv_i\rangle
    =_{\sgn} 
    \langle \wv_{Q,s}^{(t)}, \muv\rangle
    =_{\sgn} 
    -\langle \wv_{K,s}^{(t)}, \muv\rangle.
\end{align*}
\end{remark}

\subsection{Proof of Theorem~\ref{thm:main_results_generalization}}

In this section, we aim to prove the Theorem~\ref{thm:main_results_generalization}. 
\begin{proof}[Proof of Theorem~\ref{thm:main_results_generalization}]
    By Lemma~\ref{lemma:loss} and~\ref{lemma:final_attention}, we complete the proof.
\end{proof}

\label{sec:proof_training_and_test_loss}
\begin{lemma}[Training and Logistic Test Loss]
    \label{lemma:loss}
    For any fixed $\epsilon > 0$,
    let $T = 2\log(\epsilon^{-1})\eta^{-1}\sigma_p^{-1}s^{-1}$, we have $L_{S}(\Wv^{(T)}) \leq \epsilon$ and $L_{\mathcal{D}}(\Wv^{(T)}) \geq 0.1$.
\end{lemma}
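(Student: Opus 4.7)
The plan is to use the post-Stage-IV description of the dynamics (Lemma~\ref{lemma:main_end_stage_iv}, Remark~\ref{remark:final_dynamics}) to produce an explicit margin lower bound on every training sample, and to contrast it with a direct computation at a fresh test point whose noise has, with high probability, support disjoint from every training noise vector.

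\textbf{Training loss.} Expanding definition~\eqref{def:softmaxattn_linearact_2layer_model},
\begin{align*}
y_i f(\Wv^{(t)}, \Xv_i) = (s_{i,11}^{(t)} + s_{i,21}^{(t)})\langle \vv^{(t)}, \muv\rangle + (s_{i,12}^{(t)} + s_{i,22}^{(t)})\langle \vv^{(t)}, y_i\xiv_i\rangle.
\end{align*}
Both inner products are nonnegative for $t \geq T_1$ by the value dynamics of Sec.~\ref{sec:proof_value_dynamics}, and the monotone extension of Lemma~\ref{lemma:main_stage_i}(1) gives $\langle\vv^{(t)}, y_i\xiv_i\rangle \geq \sqrt{2}\,t\eta\sigma_p s$. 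Lemma~\ref{lemma:main_b_stage_ii}(2) together with Lemma~\ref{lemma:main_softmax_decay_stage_iv}(2) yields $s_{i,11}^{(t)}, s_{i,21}^{(t)} \leq 1/2 + o(1)$, hence $s_{i,12}^{(t)}+s_{i,22}^{(t)} \geq 1-o(1)$. Setting $t = T$ gives $y_i f(\Wv^{(T)}, \Xv_i) \geq (1-o(1))\cdot 2\sqrt{2}\log(\epsilon^{-1})$; since $\ell(z) \leq e^{-z}$, we obtain $\ell(y_i f(\Wv^{(T)}, \Xv_i)) \leq \epsilon^{2\sqrt{2}(1-o(1))} \leq \epsilon$ for any fixed $\epsilon \in (0,1)$ and $d$ large enough, so $L_S(\Wv^{(T)}) \leq \epsilon$.

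\textbf{Test loss.} Draw $(\Xv, y) \sim \mathcal{D}$ and let $\mathcal{E}$ be the event that $\supp(\xiv) \cap \mathcal{B}_i = \emptyset$ for every training $i$; a union bound as in Lemma~\ref{lemma:sparse_data_disjoint_support} gives $\Pb(\mathcal{E}) \geq 1 - O(ns^2/d) = 1 - o(1)$. On $\mathcal{E}$, every gradient in Sec.~\ref{sec:gradient} is supported on $\{1\} \cup \bigcup_i \mathcal{B}_i$, so $\langle \wv_{V,j,r}^{(T)}, \xiv\rangle$, $\langle \wv_{Q,s}^{(T)}, \xiv\rangle$, $\langle \wv_{K,s}^{(T)}, \xiv\rangle$ all equal their initializations, each of order $\Tilde{O}(\sigma_0\sigma_p\sqrt{s})$. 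The signal--signal pre-softmax sum $\sum_s \langle \wv_{Q,s}^{(T)}, \muv\rangle\langle \wv_{K,s}^{(T)}, \muv\rangle = O(m_k T^2\eta^2\norm{\muv}^2) = o(1)$ by the separation $\sigma_p s \gg m_k^{1/2}\norm{\muv}$ built into Condition~\ref{cond:main_condition}, and the other three pre-softmax terms are $o(1)$ by the $\sigma_0 = o(\sigma_p^{-1}s^{-1}m_k^{-1/2})$ bound. A Lemma~\ref{lemma:network_initialization_softmax_magnitude}-style calculation then yields $s_{l,a}^{(T)} = 1/2 \pm o(1)$ on the test point, while $\abs{\langle \vv^{(T)}, \muv\rangle} \leq T\eta\norm{\muv} = o(1)$ and $\abs{\langle \vv^{(T)}, \xiv\rangle} = o(1)$. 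Combining, $\abs{y f(\Wv^{(T)}, \Xv)} = o(1)$, so $\ell(y f(\Wv^{(T)}, \Xv)) \geq \log 2 - o(1)$; integrating over $\mathcal{E}$ gives $L_{\mathcal{D}}(\Wv^{(T)}) \geq (1-o(1))(\log 2 - o(1)) \geq 0.1$.

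The main obstacle will be the softmax-concentration step at the test point. The disjoint-support property trivially kills any pre-softmax term containing a $\xiv$-factor, but the signal--signal sum is determined by the trained query and key signals, and its smallness requires the quantitative gap $T \ll T_{\mathrm{attn}}$ built into Condition~\ref{cond:main_condition}. In other words, the reason the test softmax stays near $1/2$ at $t = T$ is precisely that training-loss convergence happens well before the attention memorization captured by the $T_{\mathrm{attn}}$ bound in Theorem~\ref{thm:main_results_generalization}.
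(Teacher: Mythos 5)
Your proof takes essentially the same route as the paper's: lower-bound the training margin via the (large) mean value noise, and observe that a fresh noise vector is, with high probability, disjoint from everything the model has memorized, so the test logit stays $O(1)$. Two minor points. \textbf{(i)} For the training loss you bound $s_{i,12}^{(T)}+s_{i,22}^{(T)}\geq 1-o(1)$ via $s_{i,11}^{(T)},s_{i,21}^{(T)}\leq 1/2+o(1)$, citing Lemmas~\ref{lemma:main_b_stage_ii}(2) and~\ref{lemma:main_softmax_decay_stage_iv}(2); these only cover $t\leq T_4^{-}$ while $T=\Theta(\log(\epsilon^{-1})\eta^{-1}\sigma_p^{-1}s^{-1})$ exceeds $T_4=\Tilde\Theta(m_k^{-1/2}\eta^{-1}\sigma_p^{-1}s^{-1})$. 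The paper avoids this by using only $s_{i,22}^{(T)}\geq 0.9$, which follows directly from the monotone post-alignment decay of $s_{i,21}$ (Remark~\ref{remark:final_dynamics}) — your bound is fine in substance but needs the post-$T_4$ dynamics, not the cited stage lemmas. \textbf{(ii)} The test-loss step, which you flag as the main obstacle, is over-engineered: concentration of the test-time softmax at $1/2$ is not needed. Since each softmax output is trivially in $[0,1]$, one immediately gets $|yf(\Wv^{(T)},\Xv)|\leq 2|\langle\vv^{(T)},\muv\rangle|+2|\langle\vv^{(T)},y\xiv\rangle|$, which on the disjoint-support event is already $o(1)$ by the value-signal upper bound and the initialization bound on $\langle\vv^{(0)},\xiv\rangle$ — and that suffices for $\ell\geq\log(1+e^{-1})>0.1$. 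This is exactly the paper's (simpler) route.
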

\begin{proof}
Note that 
\begin{align*}
    \langle\vv^{(T)}, \muv\rangle 
    & = \langle\vv^{(0)}, \muv\rangle + \eta T\norm{\muv} \\
    & = \pm\Tilde{O}(\sigma_0\norm{\muv}m_v^{-1/2}) + 2\log(\epsilon^{-1})\norm{\muv}\sigma_p^{-1}s^{-1} \\
    & = \Theta(\log(\epsilon^{-1})\norm{\muv}\sigma_p^{-1}s^{-1}),
\end{align*}
where the last step is by $\Tilde{O}(\sigma_0\norm{\muv}m_v^{-1/2}) = o(\eta T\norm{\muv})$.
Then, for all $i\in[n]$, we have 
\begin{align*}
    y_if(\Wv^{(T)}, \Xv_i)
    & = 
    (s_{i,11}^{(T)} + s_{i,21}^{(T)})
    \langle\vv^{(T)}, \muv\rangle
    + (s_{i,12}^{(T)} + s_{i,22}^{(T)})
    \langle\vv^{(T)}, y_i\xiv_i\rangle \\
    & \geq (s_{i,12}^{(T)} + s_{i,22}^{(T)})
    \langle\vv^{(T)}, y_i\xiv_i\rangle \\
    & \geq 0.9\langle\vv^{(T)}, y_i\xiv_i\rangle \\
    & \geq \log(\epsilon^{-1}),
\end{align*}
where the first step is by the definition of $f$,
the second step is by $s_{i,11}^{(T)} \geq 0$, $s_{i,21}^{(T)} \geq 0$ and $\langle\vv^{(T)}, \muv\rangle \geq 0$,
the third step is by $s_{i,12}^{(T)} \geq 0$ $s_{i,22}^{(T)} \geq 0.9$,
the last step is by Lemma~\ref{lemma:magnitude_of_mean_value_noise_after_T_2} and the definition of $T$.
This implies $\ell_i^{(T)} = \log(1 + \exp(-y_if_i(\Wv^{(T)}, \xv_i)))\leq \epsilon$ 
and thus $L_{S}(\Wv^{(T)}) \leq \epsilon$.

On the other hand, for a new data point $\xv = (y\muv, \xiv)^{\top}$, let event $\mathcal{E}$ be the event that $\xiv$ has disjoint support with $\xiv_1, \dots, \xiv_n$. Then when $\mathcal{E}$ holds, we have
\begin{align*}
    yf(\Wv^{(T)}, \Xv)
    & = 
    (s_{11}^{(T)} + s_{21}^{(T)})
    \langle\vv^{(T)}, \muv\rangle
    + (s_{12}^{(T)} + s_{22}^{(T)})
    \langle\vv^{(T)}, y\xiv\rangle \\
    & \leq 
    2\langle\vv^{(T)}, \muv\rangle + 2\langle\vv^{(T)}, y\xiv\rangle \\
    & \leq \Tilde{O}(\log(\epsilon^{-1})\norm{\muv}\sigma_p^{-1}s^{-1} + \sigma_0\sigma_ps^{1/2}m_v^{-1/2}) \\
    & \leq 1,
\end{align*}
where the second step is by $s_{11}^{(T)},s_{12}^{(T)},s_{21}^{(T)},s_{22}^{(T)} \leq 1$,
the last step is by $\sigma_0\sigma_ps^{1/2} = o(1)$.
And when $\mathcal{E}$ does not hold, we have
\begin{align*}
    yf(\Wv^{(T)}, \Xv)
    & = 
    (s_{11}^{(T)} + s_{21}^{(T)})
    \langle\vv^{(T)}, \muv\rangle
    + (s_{12}^{(T)} + s_{22}^{(T)})
    \langle\vv^{(T)}, y\xiv\rangle \\
    & \leq 
    2\langle\vv^{(T)}, \muv\rangle + 2\langle\vv^{(T)}, y\xiv\rangle \\
    & \leq O(\log(\epsilon^{-1})\norm{\muv}\sigma_p^{-1}s^{-1} + \log(\epsilon^{-1})) \\
    & \leq O(\log(\epsilon^{-1})).
\end{align*}
Note that $\Pb[\mathcal{E}] \geq 1 - n^{-2}$, which gives 
\begin{align*}
\Eb[\ell(yf(\Wv^{(T)}, \Xv))] 
&=
\Eb[\mathds{1}(\mathcal{E})\ell(yf(\Wv^{(T)}, \xv))] + \Eb[\mathds{1}(\mathcal{E}^c)\ell(yf(\Wv^{(T)}, \xv))] \\
&\geq (1-n^{-2}) \log(1 + e^{-1}) \geq 0.1,
\end{align*}
which completes the proof.
\end{proof}

\begin{lemma}[Attention layer all attends to noise patch]
    \label{lemma:final_attention}
    Let 
    $T_{\mathrm{attn}} = \Theta(\log(\sigma_ps/\norm{\muv})\eta^{-1}m_k^{-1/2}\sigma_p^{-1/2}s^{-1/2}\norm{\muv}^{-1/2})$, we have 
    $s_{i,11}^{(T_{\mathrm{attn}})} = o(1)$, $s_{i,21}^{(T_{\mathrm{attn}})} = o(1)$, for all $i\in[n]$.
\end{lemma}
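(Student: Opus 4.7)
The plan is to reduce the claim to a computation of the two exponents appearing in $s_{i,11}^{(t)}$ and $s_{i,21}^{(t)}$, evaluated at $t = T_{\mathrm{attn}}$, and to show that in each case the denominator's exponent exceeds the numerator's by $\omega(1)$. All needed dynamical information is already available from Remark~\ref{remark:final_dynamics}: for every $t \geq T_4$, the updates are coordinate-wise linear with constant step size, and the four sign-alignments
\[
\sgn\langle\wv_{Q,s}^{(t)}, y_i\xiv_i\rangle = \sgn\langle\wv_{K,s}^{(t)}, y_i\xiv_i\rangle = \sgn\langle\wv_{Q,s}^{(t)}, \muv\rangle = -\sgn\langle\wv_{K,s}^{(t)}, \muv\rangle
\]
hold. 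A preliminary check shows $T_{\mathrm{attn}} \gg T_4$ under Condition~\ref{cond:main_condition} (their ratio is $\Theta((\sigma_p s / \norm{\muv})^{1/2})$, which is $\omega(1)$), so the post-$T_4$ dynamics apply throughout the relevant window.

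Next I would compute the linear-with-$t$ magnitudes at $t = T_{\mathrm{attn}}$. By the updates after $T_4$ and by Lemma~\ref{lemma:noise_magnitude}, for every $s\in[m_k]$ and $i\in[n]$,
\begin{align*}
&|\langle\wv_{Q,s}^{(t)},\muv\rangle|, |\langle\wv_{K,s}^{(t)},\muv\rangle| = t\eta\norm{\muv}(1\pm o(1)),\\
&|\langle\wv_{Q,s}^{(t)},\xiv_i\rangle|, |\langle\wv_{K,s}^{(t)},\xiv_i\rangle| = t\eta\sqrt{\tfrac{2}{\pi}}\sigma_p s(1\pm\tilde O(s^{-1/2})),
\end{align*}
with the sign pattern above. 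Summing over $s$ and using the sign pattern to decide whether each product is positive or negative (together with the perfect or near-perfect alignment of signs across neurons from Stages~III--IV), the four sums entering the softmax exponents become
\begin{align*}
&\sum_s \langle\wv_{Q,s}^{(t)},\xiv_i\rangle\langle\wv_{K,s}^{(t)},\xiv_i\rangle = \tfrac{2}{\pi} m_k t^2\eta^2\sigma_p^2 s^2 (1\pm o(1)),\\
&\sum_s \langle\wv_{Q,s}^{(t)},\xiv_i\rangle\langle\wv_{K,s}^{(t)},y_i\muv\rangle = -\sqrt{\tfrac{2}{\pi}}\, m_k t^2\eta^2\sigma_p s\norm{\muv}(1\pm o(1)),\\
&\sum_s \langle\wv_{Q,s}^{(t)},\muv\rangle\langle\wv_{K,s}^{(t)},\muv\rangle = -m_k t^2\eta^2\norm{\muv}^2(1\pm o(1)),\\
&\sum_s \langle\wv_{Q,s}^{(t)},\muv\rangle\langle\wv_{K,s}^{(t)},\xiv_i\rangle = \sqrt{\tfrac{2}{\pi}}\, m_k t^2\eta^2 \sigma_p s\norm{\muv}(1\pm o(1)).
\end{align*}

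From these expressions, the denominator-minus-numerator exponent gap in $s_{i,21}^{(t)}$ is $\Theta(m_k t^2\eta^2\sigma_p^2 s^2)$, and in $s_{i,11}^{(t)}$ it is $\Theta(m_k t^2\eta^2\sigma_p s\norm{\muv})$ (the $\norm{\muv}^2$ piece being negligible). Plugging in $t = T_{\mathrm{attn}} = \Theta(\log(\sigma_p s/\norm{\muv})\,\eta^{-1}m_k^{-1/2}\sigma_p^{-1/2}s^{-1/2}\norm{\muv}^{-1/2})$ gives, respectively, $\Theta(\log^2(\sigma_p s/\norm{\muv}) \cdot \sigma_p s/\norm{\muv})$ and $\Theta(\log^2(\sigma_p s/\norm{\muv}))$, both of which are $\omega(1)$ under Condition~\ref{cond:main_condition}. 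Hence $s_{i,21}^{(T_{\mathrm{attn}})}, s_{i,11}^{(T_{\mathrm{attn}})} \leq \exp(-\omega(1)) = o(1)$, uniformly over $i$.

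The main subtlety will be the second sum above: individual neurons may not all share the same sign for $\langle\wv_{Q,s}^{(t)},\muv\rangle$, and the sign of $\langle\wv_{Q,s}^{(t)},\xiv_i\rangle$ is tied to that of $\langle\wv_{Q,s}^{(t)},\muv\rangle$ via Remark~\ref{remark:final_dynamics}, so it is essential to verify that the product $\langle\wv_{Q,s}^{(t)},\xiv_i\rangle\langle\wv_{K,s}^{(t)},y_i\muv\rangle$ has a fixed sign (negative) across all $s$, so that the sum does not enjoy cancellation that might flip the intended direction. This is a direct consequence of the four-way sign equality in Remark~\ref{remark:final_dynamics}, but deserves an explicit line. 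Once this is confirmed, the remaining work is algebraic substitution and bounding error terms using Condition~\ref{cond:main_condition}, which is routine.
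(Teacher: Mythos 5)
Your proof is correct and takes essentially the same route as the paper's: both reduce the claim to estimating the two exponents in each softmax formula and showing the denominator–numerator gap is $\omega(1)$, using the post-$T_4$ monotone dynamics of Remark~\ref{remark:final_dynamics} and the linear-with-$t$ growth rates from Stages~III--IV. The only cosmetic quibble is that in your second and fourth sums the signal should enter as $y_i\muv$ (equivalently, pair $\langle\wv_{Q,s}^{(t)},\muv\rangle$ with $\langle\wv_{K,s}^{(t)},y_i\xiv_i\rangle$ rather than with $\langle\wv_{K,s}^{(t)},\xiv_i\rangle$), which is what guarantees a fixed sign across all $s$ independent of $y_i$; as you anticipated in your final remark, once stated this way the four-way sign equality gives exactly the non-cancellation you need, and the rest is the same algebra the paper performs.
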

\begin{proof}
By the previous analysis, we already have $s_{i,21}^{(t)} = o(1)$ for $t \geq T_{4}^{-}$.
For $t \geq T_{\mathrm{attn}}$
\begin{align*}
    s_{i,11}^{(t)} & = \frac{
        \exp\left(\sum_{s\in[m_k]} 
        \langle \wv_{Q,s}^{(t)}, \muv\rangle\langle\wv_{K,s}^{(t)}, \muv\rangle
        \right)
        }{
        \exp\left(\sum_{s\in[m_k]}
        \langle \wv_{Q,s}^{(t)}, \muv\rangle\langle\wv_{K,s}^{(t)}, \muv\rangle
        \right) + 
        \exp\left(\sum_{s\in[m_k]}
        \langle \wv_{Q,s}^{(t)}, \muv\rangle\langle\wv_{K,s}^{(t)}, y_i\xiv_i\rangle
        \right)
        } \\
    & \leq \frac{
        \exp\left(
        2m_kt^2\eta^2\norm{\muv}^2
        \right)
        }{
        \exp\left(
        2m_kt^2\eta^2\norm{\muv}^2
        \right) + 
        \exp\left(
        m_kt^2\eta^2\sigma_ps\norm{\muv}/4
        \right)
        } \\
    & \leq \frac{
        1
        }{
        \exp\left(
        \Omega(m_kt^2\eta^2\sigma_ps\norm{\muv})
        \right)
        } \\
    & \leq o(1),
\end{align*}
where the first step is by definition,
the second step is by $\mathcal{H}(t)$ in Stage IV.c, Remark.~\ref{remark:final_dynamics}, and the concentration in Lemma~\ref{lemma:concentration_for_E_i^T_3},
the last step is by $t \geq T_{\mathrm{attn}}$.
\end{proof}

\newpage
\section{Discussion on Extensions}
\label{app:discussion_on_extensions}

\subsection{0-1 Test Loss}
\label{sec:discussion_0_1_test_loss}

In this section, we talk about the magntiude of 0-1 test loss.
In binary classification,
constant logistic test loss doesn't necessarily mean constant 0-1 test loss.
And actually, we find it depends on the network initialization parameter $\sigma_0$. 

The 0-1 Test Loss is defined as 
\begin{align*}
    L_{\mathcal{D}}^{0-1}(\Wv) 
    := \Eb[\mathds{1}(yf(\Wv, \Xv) > 0)]
    = \Pb[yf(\Wv, \Xv) > 0].
\end{align*}
The value of $yf(\Wv, \Xv)$ depends on two components, the signal part $\langle \vv, \muv\rangle$, which continuously increases during training, and the noise part $\langle \vv, \xiv\rangle$, which is random for the unseen noise $\xiv_i$. With probability at least $1-n^{-2}$, the unseen $\xiv$ is disjoint with all training data $\xiv_i$.

If $\sigma_0$ is small enough, i.e., 
$\sigma_0 = o(\sigma_p^{-2}s^{-3/2}m_v^{1/2})$, then the noise part can be ignored:
By the disjoint property, we can use the estimate at initialization for unseen noise.
By Lemma~\ref{lemma:network_initialization_value_inner_product_magnitude}, with probability at least $1-\delta$, we have
\begin{align*}
    \langle\vv^{(T)}, \muv\rangle 
    = \Theta(\log(\epsilon^{-1})\norm{\muv}\sigma_p^{-1}s^{-1})
    \gg
    O(m_v^{-1/2}\sigma_0\sigma_ps^{1/2}) = 
    |\langle\vv^{(T)}, \xiv\rangle|.
\end{align*}
This is similar for 
$\langle\wv_{Q,s}^{(T)}, \muv\rangle$ v.s. $\langle\wv_{Q,s}^{(T)}, \xiv\rangle$, and 
$\langle\wv_{K,s}^{(T)}, \muv\rangle$ v.s. $\langle\wv_{K,s}^{(T)}, \xiv\rangle$,
which gives $s_{11}^{(T)}, s_{21}^{(T)} = \Omega(1)$. Then, we have
\begin{align*}
    yf(\Wv^{(T)}, \Xv)
    & = 
    (s_{11}^{(T)} + s_{21}^{(T)})
    \langle\vv^{(T)}, \muv\rangle
    + (s_{12}^{(T)} + s_{22}^{(T)})
    \langle\vv^{(T)}, y\xiv\rangle \\
    & \geq \Theta(\log(\epsilon^{-1})\norm{\muv}\sigma_p^{-1}s^{-1})
    - \Tilde{O}(m_v^{-1/2}\sigma_0\sigma_ps^{1/2}) \\
    & > 0,
\end{align*}
which implies a small 0-1 loss
$\Pb[yf(\Wv^{(T)}, \Xv) > 0] \geq 1 - n^{-O(1)}$. 

On the other hand, if $\sigma_0$ is large, i.e.,  $\sigma_0 = \Omega(\sigma_p^{-2}s^{-3/2}m_v^{1/2})$ but still satisfies Condition 4.1, then 
we have
\begin{align*}
    \langle\vv^{(T)}, \muv\rangle 
    = \Theta(\log(\epsilon^{-1})\norm{\muv}\sigma_p^{-1}s^{-1})
    \ll
    \Omega(m_v^{-1/2}\sigma_0\sigma_ps^{1/2}) = 
    |\langle\vv^{(T)}, \xiv\rangle|.
\end{align*}
This is similar for 
$\langle\wv_{Q,s}^{(T)}, \muv\rangle$ v.s. $\langle\wv_{Q,s}^{(T)}, \xiv\rangle$, and 
$\langle\wv_{K,s}^{(T)}, \muv\rangle$ v.s. $\langle\wv_{K,s}^{(T)}, \xiv\rangle$,
which gives $s_{11}^{(T)}, s_{21}^{(T)} = 1/2 \pm o(1)$ since the symmetry of unseen noise. This means that the softmax outputs are random. Then, 
\begin{align*}
    yf(\Wv^{(T)}, \Xv)
    & = 
    (s_{11}^{(T)} + s_{21}^{(T)})
    \langle\vv^{(T)}, \muv\rangle
    + (s_{12}^{(T)} + s_{22}^{(T)})
    \langle\vv^{(T)}, y\xiv\rangle \\
    & = \Omega(1) \langle\vv^{(T)}, y\xiv\rangle,
\end{align*}
which implies $|\Pb[yf(\Wv^{(T)}, \Xv) > 0] - 1/2| \leq 1 - n^{-O(1)}$.
Note that the condition regarding $\sigma_0$ here is not included in our Condition 4.1.

\subsection{Extension to Longer Contexts}
\label{sec:discussion_longer_contexts}

In this section, we talk about how to extend our current theory to longer context lengths with additional assumptions.

We consider two case of extension:

{\bf Noise patches are the same within one sample.}\newline
Note that many works use similar settings, e.g.,~\cite{TLTO23, VDT24, SCWZ24} assume all non-optimal tokens are the same in some way. 
If the data $\Xv$ has $L$ patches $\Xv = [\xv^{(1)}, \dots, \xv^{(L)}]$ where the first $L/2$ patches are signal vectors $\muv$ and the latter $L/2$ patches are noise vectors $\xiv^{(l)}$, and all noise patches are the same, i.e., $\xiv^{(l)} = \xiv^{(l')}$ for $L/2 < l,l' \leq  L$, then the training dynamics are the same as in the $L = 2$ setting in the paper. If we change the ratio of number of signal patches to noise patches, e.g., 1 signal patch and $L-1$ noise patches, it remains essentially the same except that the signal/noise-signal softmax is around $1/L$ while (the sum of) signal/noise-noise softmax is around $(L-1)/L$ at initialization. Note that in this case the softmax of all noise patches have the same value (no single patch becomes dominant).

{\bf Noise patches are different within one sample.}\newline
We remark that WLOG we can consider $1$ $\muv$ and $(L-1)$ $\xiv$ vectors.  With high probability, the $n(L-1)$ noise patches are disjoint. The main difficulty here is the {\bf competition between noise patches}, which can be seen in the gradient and increase speed. To address or avoid this issue, one solution is to assume some form of sparsity in the dimension of context length. For example,~\cite{TLTO23, VDT24, SCWZ24} assume there is only one optimal token in one sample, which has a large gap from the non-optimal ones. 
\citet{JHZ+24} assumes the second noise vector $\xiv_2$ is greater than remain noise $\xiv_3, \dots, \xiv_n$ in one data point.
In our case, we need to assume for each sample there is a noise patch $L$ (WLOG, we assume it is the last patch) such that $\sum_{l'= 2}^{L-1} \|\xiv^{(l')}\|_{1} = o(\|\xiv^{(l')}\|_{L})$. We can achieve this by applying large sparsity $s$ and/or large std $\sigma_p$ on this patch. Note that this large patch across different samples can be in different positions. We provide a detailed example here: First, we still have stage I since the gradient of value does not change. The gradient of query is 
\begin{align*}
    \nabla_{\wv_{Q,s}} L_S ({\bf W}) = \frac{1}{n}\sum_{i=1}^{n}\ell_{i}^{\prime}y_i\sum_{l=1}^{L} {\bf x}_{i}^{(l)} \sum_{a < b}s_{i,la}s_{i,lb}\langle \bar{\wv}_{V,j}, {\bf x}_{i}^{(a)}-{\bf x}_{i}^{(b)}\rangle \langle \wv_{K,s},{\bf x}_{i}^{(a)}-{\bf x}_{i}^{(b)} \rangle,
\end{align*}
and the inner product is
\begin{align*}
\langle \sgn(\nabla_{\wv_{Q,s}} L_S), y_i\xiv_i^{(l)} \rangle 
& = 
-\|\xiv_i^{(l)}\|_1
\sgn(\sum_{a < b}s_{i,la}s_{i,lb}\langle \vv, {\bf x}_{i}^{(a)}-{\bf x}_{i}^{(b)}\rangle \langle \wv_{K,s},{\bf x}_{i}^{(a)}-{\bf x}_{i}^{(b)} \rangle) \\
& \approx -\|\xiv_i^{(l)}\|_1
\sgn(\langle \wv_{K,s},y_i\xiv_{i}^{(L)}- \sum_{a < L} {\bf x}_{i}^{(a)} \rangle). 
\end{align*}
We approximate the inner sum with only one term by 1) concentration of softmax to $1/L$, 2) the assumption $\sum_{l'= 2}^{L-1} \|\xiv^{(l')}\|_{1} = o(\|\xiv^{(l')}\|_{L})$, which implies the dominance of $\langle\vv, \xiv_i^{(L)}\rangle$ after stage I, and the dominance of $\langle\wv_{K,s}^{(0)}, \xiv_i^{(L)}\rangle$. The approximation for the update of key noise is essentially the same, thus ensuring the alignment between the largest query and the largest key, thus extending our result for \(L=2\). Subsequently, due to the gap in the $L_1$ norm, the softmax at the position of the largest noise patch will converge to 1, while the others will go to zero.

\subsection{Extension to Joint Training of Linear Layer}
\label{sec:discussion_joint_training}

In this section, we talk about how to extend our main theoretical results, which fix the second linear layer, to the joint training of softmax attention layer and linear layer.

Firstly, we would like to remark that the value matrix in our model setting already acts as a linear layer while many works~\cite{TLTO23, TLZO23} on analyzing training dynamics of transformers fix the value matrix. 
In our specific case, we can extend the current theory to the joint training of the softmax attention layer and linear layer. 

We can write the gradient with respect to the parameters in the second layer as follows:
\begin{align*}
\nabla_{\theta_{j,r}} L_S(W) = \frac{1}{n}\sum_{i=1}^{n} y_i\ell_{i}^{\prime}j [ (s_{i,11} + s_{i,21}) \langle \wv_{V,j,r}, y_i\muv \rangle +  (s_{i,12} + s_{i,22}) \langle \wv_{V,j,r}, \xiv_i \rangle],
\end{align*}
where $\theta_{j,r}$ denotes the parameter for the second layer $F_j$ at the $r$-th entry, initialized with $j/m_v$, for $1\leq r\leq m_v$. During stage I, it moves at most $T_1\eta$ which is negligible compared to the initialization $1/m_v$, and will not affect our analysis for value in stage I. After stage I, the terms $\langle \wv_{V,j,r}, jy_i\xiv_i\rangle$ become positive for all $1\leq i\leq n$ and larger than the value signal term $\langle \wv_{V,j,r}, \muv\rangle$, which leads to that 
$\sgn(\nabla_{\theta_{j,r}} L_S(\Wv)) = -j$. Consequently, all $\theta_{1,r}$ keep increasing while all $\theta_{-1,r}$ keep decreasing. 
Until then, all $\theta_{j,r}$'s sign are the same as initialization and will not change during the rest of training. 
Therefore, it will not affect the analysis for other parameters.

In summary, 
the key to the joint training extension lies in the different magnitudes of initialization between the second layer weights and the value matrix, which is $1/m_v$ and $\sigma_0$, respectively. When value converges to pattern specified in stage I, the second layer weights have almost no change. 
But after stage I, the sign of gradient of second layer become fixed and aligned with its initialized sign, so that it will have no effect on other parameters.

{
\subsection{Extension to Multi-head Attention}
\label{sec:discussion_mha}

In this section, we discuss how to extend our main theoretical results, which focus on a single-head attention layer, to a multi-head attention layer. We note that our theoretical results can be easily extended to multi-head attention.

For a multi-head attention layer, let the parameters be
$\Wv := \left\{\Wv_{Q,h}, \Wv_{K,h}, \Wv_{V,j,h} \right\}_{h=1}^H$, where $\Wv_{Q,h}, \Wv_{K,h} \in \Rb^{m_k \times d}$ and $\Wv_{V,j,h} \in \Rb^{m_v \times d}$ for $j \in \set{\pm 1}$ and $h \in [H]$.
Here, the $H$ is the number of attention head, assumed to be a fixed constant.
Then, the network can be written as $f(\Wv, \Xv) := F_{1}(\Wv, \Xv) - F_{-1}(\Wv, \Xv)$, 
where $F_{1}(\Wv, \Xv)$ and $F_{-1}(\Wv, \Xv)$ are defined as:
\begin{align*}
    F_j(\Wv, \Xv) 
    & := \sum_{h=1}^{H} F_{j,h}(\Wv, \Xv), \\
    F_{j,h}(\Wv, \Xv) 
    & := \frac{1}{m_v}\sum_{l=1}^{L}\textbf{1}^\top_{m_v} 
    \Wv_{V,j,h}\Xv\softmax{\Xv^{\top}\Wv_{K,h}^{\top}\Wv_{Q,h}\xv^{(l)}}.
\end{align*}

{\bf Gradients in Multi-Head Attention.}
Regarding the gradients, the partial derivatives of the single-head model outputs with respect to the parameters, i.e.,
$\frac{\partial F_{j,h}}{\partial \Wv_{Q,h}}$, $\frac{\partial F_{j,h}}{\partial \Wv_{K,h}}$, $\frac{\partial F_{j,h}}{\partial \Wv_{V,h}}$, remain unchanged.
However, the gradient of the loss with respect to the single-head model outputs, i.e.,
$\frac{\partial \ell}{\partial F_{j,h}}$, does change. 
This change, however, is linear and straightforward to analyze. 
Intuitively, the model outputs increase approximately $H$ times, causing the magnitude of the loss derivatives $\ell^{\prime}$ to decrease accordingly.

Formally, our theory shows that in single-head attention, the loss derivative are close to initialization up to $t = 4T_{4}^{-}$, where $4T_{4}^{-}$ is the time when the sign alignment of negative query noise completes. Specifically, for all $i\in[n]$, $t \leq 4T_{4}^{-}$, we have
\begin{align*}
\ell_{i}^{\prime(t)} := \frac{\partial \ell}{\partial f(\Wv^{(t)}, \Xv_i)} = 1/2 \pm o(1),
\end{align*}
This implies $f(\Wv^{(4T_4^-)}, \Xv_i) = o(1)$. Then, the $H$-fold increase in the multi-head model outputs does not alter this result, so the effect of changes in $\frac{\partial \ell}{\partial F_{j,h}}$ can be neglected.

Consequently, the behavior of signals and noise still follows the four-stage dynamics in the single-head case, with the dynamics of all attention heads being the same.

{\bf Experiments.}
In Figure~\ref{fig:full_dynamics_4head}, we plot the full dynamics of our simplified transformer but with 4 attention heads. We can see that the dynamics of query noise, key noise, query signals, and key signals are identicalto those in the single-head model (Figure~\ref{fig:full_dynamics_fig1}). Additionally, the dynamics of the softmax outputs in each head are consistent.
These empirical evidences further support that our theory holds in the multi-head attention models.
}
{

\newpage
\section{More Detailed Explanations for the Four-Stage Dynamics}
}

\begin{figure}[H]
    \centering
    \includegraphics[width=1.0\linewidth]{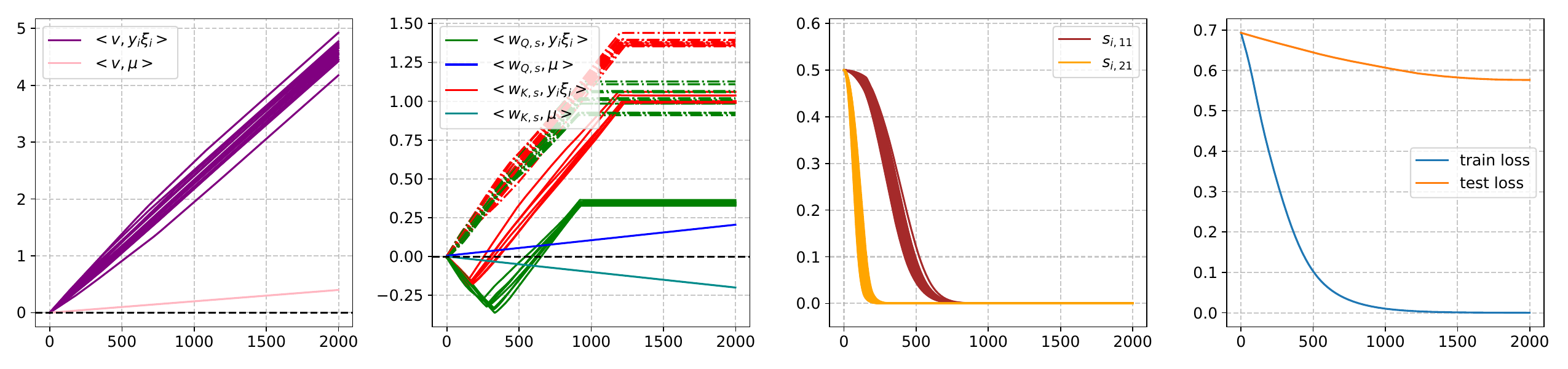}
    \caption{
    {\bf The full training dynamics of two-layer transformers with SignGD.} This figure shows the dynamics of value, query, and key signals and noise, as well as softmax outputs and loss across all iterations. 
    {\bf (a)} Dynamics of value noise, and value signals.
    {\bf (b)} Dynamics of query noise, key noise, query signals, and key signals.
    {\bf (c)} Dynamics of softmax outputs.
    {\bf (d)} Training and test loss curve.
    Omitted notations are the same as Fig.~\ref{fig:main_updated}.
    }
    \label{fig:full_dynamics_fig1}
\end{figure}

\begin{figure}[H]
    \centering
    \includegraphics[width=1.0\linewidth]{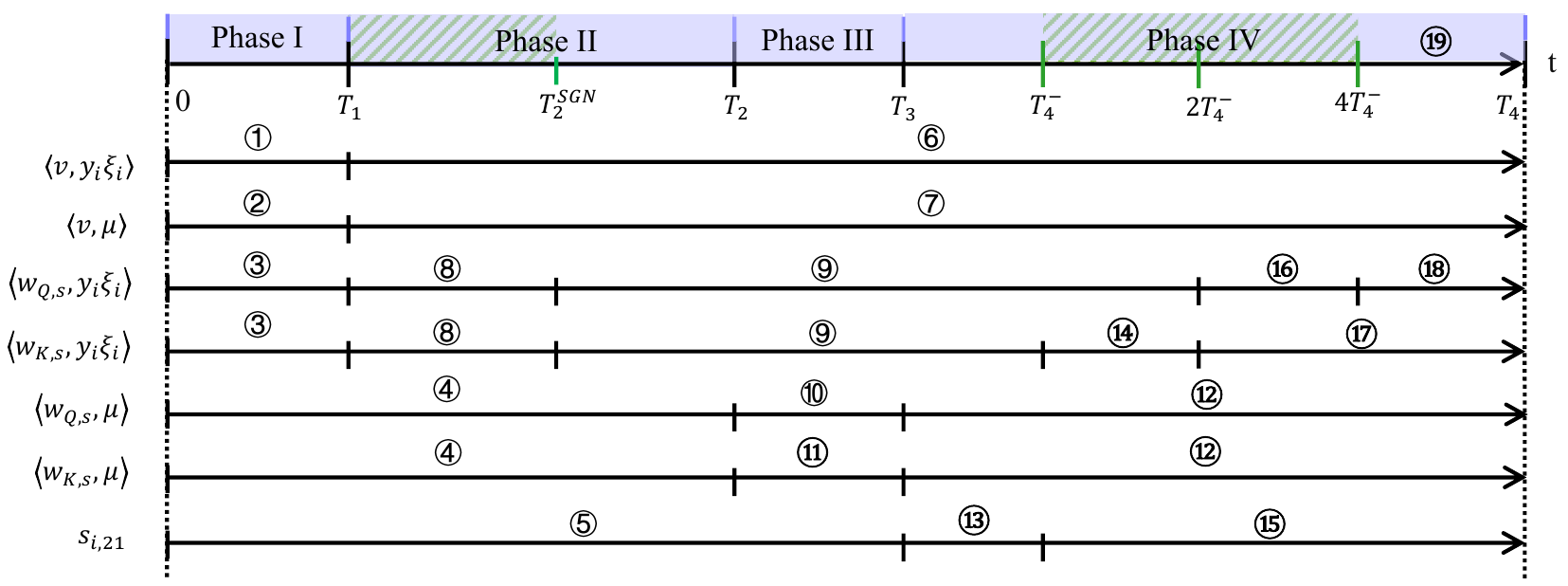}
    \caption{
    {\bf Timeline of the four-stage training dynamics.}
    We mark all key points, including $T_1$, $T_2^{\text{SGN}}$, $T_2$, $T_3$, $T_4^-$, $T_4$, along with their corresponding key behaviors in the dynamics:
    \circlednumber{1}: Mean value noise increases rapidly, becomes positive, and grows linearly with $t$.
    \circlednumber{2}: Mean value signals remain stuck at their initialization.
    \circlednumber{3}: Query and key noise are stuck at initialization.
    \circlednumber{4}: Query and key signals are stuck at initialization.
    \circlednumber{5}: Noise-signal softmax outputs are stuck at initialization.
    \circlednumber{6}: Mean value noise continuously increases.
    \circlednumber{7}: Mean value signals are negligible compared to mean value noise.
    \circlednumber{8}: Query noise and key noise align their sign with each other.
    \circlednumber{9}: Sign alignment completes, query noise and key noise continuously increase, and grow linearly with $t$.
    \circlednumber{\scriptsize{10}}: Query signals leave the initialization, align their sign with the majority of query noise, and grow linearly with $t$.
    \circlednumber{\scriptsize{11}}: Key signals leave the initialization, align their sign with the minority of query noise, and grow linearly with $t$.
    \circlednumber{\scriptsize{12}}: Query signals and key signals continuously increase.
    \circlednumber{\scriptsize{13}}: Noise-signal softmax outputs decrease exponentially.
    \circlednumber{\scriptsize{14}}: 
    Positive key noise continuously increases.
    Negative key noise flip the update direction, and align with query signals.
    \circlednumber{\scriptsize{15}}: 
    Noise-signal softmax outputs reach $o(1)$ and remain sufficiently small.
    \circlednumber{\scriptsize{16}}: 
    Positive query noise continuously increase;
    Negative query noise flips update direction and aligns with query signals.
    \circlednumber{\scriptsize{17}}:
    Negative key noise completes sign alignment, and all key noise continuously increases.
    \circlednumber{\scriptsize{18}}:
    Negative query noise completes sign alignment, and all query noise continuously increases.
    \circlednumber{\scriptsize{19}}:
    All signals and noise continuously increase, and the loss decreases exponentially.
    Note that by "increase", we refer to an increase in magnitude.
    }
    \label{fig:full_dynamics_illustration}
\end{figure}

\newpage
\section{Discussion on Differences 
from Real-World Setup
}
\label{app:more_limitations}

We would like to discuss the following key differences between our theoretical setups and the transformers used in real-world applications:

\paragraph{The gap in the data noise structure.}

\begin{itemize}
    \item The data model considered in our study is simplified --- we assume Gaussian noise that is i.i.d. across data samples and independent of the true features. All these simplifications inevitably create a gap from real-world datasets. Additionally, our data settings are motivated by image datasets. For language data, since language tokens have more dense semantic information, and since the noise must be discrete, the noise in language data could be more structured compared to the image data.
    \item These differences are acknowledged in the Limitation section of our initial manuscript, and make it difficult to compare the noise in our data model and real-world language or image datasets. However, we would like to highlight again that even in this simplified data model, significant challenges present in the theoretical analysis.
\end{itemize}

\paragraph{The gap in tasks regarding optimization.}

\begin{itemize}
    \item The sensitivity of SignGD to noise, as demonstrated in our findings, is relative to GD and is observed in a task where both optimizers achieve perfect convergence. In contrast, real-world language and vision tasks using transformers often reveal that GD struggles to optimize effectively, with a significant training loss gap compared to Adam~\citep{ZCD+24}.
    \item This highlights a gap between our task and real-world tasks in terms of optimization. This gap in tasks regarding optimization may also affect the generalization performance and robustness of the optimization methods. Additionally, as GD cannot optimize effectively in real-world transformers, it is difficult to fairly compare the generalization properties of GD and SignGD. Understanding why GD fails to optimize transformers on real-world tasks should be an important prerequisite.
    \item However, we would like to emphasize that, although our task optimizes easily, our work provides a clear example where SignGD, and by extension Adam, fails in learning transformers, and hence is not always more effective than GD.
    \item The reason behind this gap: This gap in the optimization may stem from factors such as simplified data structures and models in our setups. However, since our experiments on two-layer and three-layer transformers demonstrate consistent results regarding training dynamics (See Appendix B.6), we suspect this gap is more attributable to differences in data.
\end{itemize}

\paragraph{How to define robustness in the real-world transformers, particularly language models.}

The concept of robustness in real-world language models requires careful consideration. In practice, when training transformers on language tasks with Adam, training instability and loss spikes are often observed, usually due to low-quality data batches~\citep{CND+23}. Given that transformer training typically involves billions of tokens, the generalization performance is often closely tied to the training performance. Thus, from this viewpoint, training instability and loss spikes can reasonably be viewed as a form of non-robustness. The message we aim to convey is: To fairly connect our theoretically motivated findings to real-world applications, it is crucial to carefully define what robustness means in practical transformers like language models.

In the above, we discuss the gap from the perspectives of data noise, optimization complexity, and the definition of robustness. We hope this discussion helps clarify the differences between our theoretical findings and real-world scenarios.


\end{document}